\documentclass{article}

\usepackage{microtype}
\usepackage{graphicx}
\usepackage{subcaption}
\usepackage{booktabs} 
\usepackage{xcolor}

\usepackage{amsmath,amsfonts,bm}

\def\eqref#1{equation~\ref{#1}}

\def\1{\bm{1}}

\def\va{{\bm{a}}}

\def\vc{{\bm{c}}}

\def\vs{{\bm{s}}}

\def\vu{{\bm{u}}}
\def\vv{{\bm{v}}}

\def\vx{{\bm{x}}}
\def\vy{{\bm{y}}}
\def\vz{{\bm{z}}}
\def\vPhi{{\bm{\Phi}}}

\def\mA{{\bm{A}}}

\def\mC{{\bm{C}}}

\def\mI{{\bm{I}}}

\def\mSigma{{\bm{\Sigma}}}

\DeclareMathAlphabet{\mathsfit}{\encodingdefault}{\sfdefault}{m}{sl}
\SetMathAlphabet{\mathsfit}{bold}{\encodingdefault}{\sfdefault}{bx}{n}

\def\gA{{\mathcal{A}}}

\def\gF{{\mathcal{F}}}
\def\gG{{\mathcal{G}}}
\def\gH{{\mathcal{H}}}

\def\gM{{\mathcal{M}}}
\def\gN{{\mathcal{N}}}
\def\gO{{\mathcal{O}}}
\def\gP{{\mathcal{P}}}

\def\gR{{\mathcal{R}}}
\def\gS{{\mathcal{S}}}

\def\gX{{\mathcal{X}}}

\def\gZ{{\mathcal{Z}}}

\def\sB{{\mathbb{B}}}

\def\sI{{\mathbb{I}}}

\def\sP{{\mathbb{P}}}

\def\sR{{\mathbb{R}}}

\def\sZ{{\mathbb{Z}}}

\newcommand{\E}{\mathbb{E}}

\DeclareMathOperator*{\argmax}{arg\,max}
\DeclareMathOperator*{\argmin}{arg\,min}

\newcommand{\vepsilon}{\bm\varepsilon}

\newcommand{\inner}[2]{\langle #1, #2 \rangle}
\newcommand{\pp}[3]{\frac{\partial^{#1}{#2}}{\partial{#3}^{#1}}}

\usepackage{hyperref}

\renewcommand{\eqref}[1]{\textup{(\ref{#1})}}

\definecolor{zaffre}{RGB}{0, 20, 168}

\usepackage[accepted]{icml2026}

\usepackage{amsmath}
\usepackage{amssymb}
\usepackage{mathtools}
\usepackage{amsthm}
\usepackage{microtype}
\usepackage{mathtools}

\newcommand\eqd{\mathrel{\overset{\makebox[0pt]{\mbox{\normalfont\tiny\sffamily d}}}{=}}}

\usepackage[capitalize,noabbrev]{cleveref}

\theoremstyle{plain}
\newtheorem{theorem}{Theorem}[section]

\newtheorem{lemma}[theorem]{Lemma}
\newtheorem{corollary}[theorem]{Corollary}
\theoremstyle{definition}
\newtheorem{definition}[theorem]{Definition}
\newtheorem{assumption}[theorem]{Assumption}
\theoremstyle{remark}

\usepackage[textsize=tiny]{todonotes}

\icmltitlerunning{Posterior Sampling Reinforcement Learning with Gaussian Processes for Continuous Control}

\begin{document}

\twocolumn[
  \icmltitle{
  Posterior Sampling Reinforcement Learning with Gaussian Processes for Continuous Control:
  Sublinear Regret Bounds for Unbounded State Spaces
}



  \icmlsetsymbol{equal}{*}

  \begin{icmlauthorlist}
    \icmlauthor{Hamish Flynn}{cmu}
    \icmlauthor{Joe Watson}{ox,tud,dfki}
    \icmlauthor{Ingmar Posner}{ox}
    \icmlauthor{Jan Peters}{tud,dfki,hes,rig}
  \end{icmlauthorlist}

  \icmlaffiliation{ox}{University of Oxford}
  \icmlaffiliation{cmu}{Carnegie Mellon University}
  \icmlaffiliation{tud}{Technical University of Darmstadt}
  \icmlaffiliation{dfki}{DFKI}
  \icmlaffiliation{hes}{hessian.AI}
  \icmlaffiliation{rig}{Robotics Institute Germany}

  \icmlcorrespondingauthor{Hamish Flynn}{hamishflynn.gm@gmail.com}
  \icmlcorrespondingauthor{Joe Watson}{joewatson@robots.ox.ac.uk}

  \icmlkeywords{Machine Learning, ICML}

  \vskip 0.3in
]



\printAffiliationsAndNotice{}  

\begin{abstract}
We analyze the Bayesian regret of the Gaussian process posterior sampling reinforcement learning (GP-PSRL) algorithm. Posterior sampling is a heuristic for decision-making under uncertainty that has been used to develop successful algorithms for a variety of continuous control problems. However, theoretical work on GP-PSRL is limited. All known regret bounds either have a sub-optimal growth rate, require strong smoothness assumptions, or fail to properly account for the fact that the set of possible system states is unbounded. Through a recursive application of the Borell-Tsirelson-Ibragimov-Sudakov inequality, we show that, with high probability, the states actually visited by the algorithm are contained within a ball of near-constant radius. We then use the chaining method to control the regret suffered by GP-PSRL under weak smoothness conditions. Our main result is a Bayesian regret bound of the order $\widetilde{\gO}(H\sqrt{\gamma_TT})$, where $H$ is the horizon, $T$ is the number of time steps and $\gamma_T$ is the expected information gain. With this result, we resolve the limitations with prior theoretical work on PSRL, and provide the theoretical foundation and tools for analyzing PSRL in complex settings.
\end{abstract}

\section{Introduction}\label{sec:intro}
The study of decision-making under uncertainty deals with the exploration-exploitation trade-off using statistical techniques.
Posterior sampling, also known as Thompson sampling or probability matching \citep{thompson1933likelihood}, is an approach based on random sampling from a belief over optimal decisions. Despite its heuristic origins, posterior sampling is now understood to be theoretically sound \citep{russo2014learning}.
Compared to other principled methods, such as ``optimistic'' approaches, posterior sampling usually performs better and is often simpler to implement, especially when conjugate priors are available \citep{osband2017posterior}.

In the posterior sampling reinforcement learning (PSRL) algorithm, a belief is maintained over Markov decision processes (MDPs) and an optimal control oracle is used to compute an optimal policy for a sampled MDP in an episodic fashion, resulting in random exploration over the space of possible optimal policies.
For MDPs with continuous states and actions, Gaussian processes can be used as a tractable yet versatile prior belief over MDPs.
However, while posterior sampling has been shown to enjoy strong theoretical performance guarantees across a range of decision-making problems, theoretical work on PSRL with Gaussian processes is limited. We identify several limitations with previous results. Individually, some of these problems have been (partially) addressed in previous work (cf. Appendix \ref{sec:additional_related}).

\begin{table*}[t]
\begin{center}
\begin{small}
\begin{tabular}{lllll}
\toprule
 & Algorithm & Regret Bound & Regret Type & Prior/Model Class\\
\midrule
Theorem 1 in \citet{osband2014model} & PSRL & $\widetilde{\gO}(L\Gamma_T\sqrt{T})$ & Bayesian & RKHS ball\\
Theorem 1 in \citet{pmlr-v89-chowdhury19a} & UCB & $\widetilde{\gO}(L\Gamma_{d_sT}\sqrt{T})$ & Worst-case & RKHS ball\\
Theorem 2 in \citet{pmlr-v89-chowdhury19a} & PSRL & $\widetilde{\gO}(L\Gamma_{d_sT}\sqrt{T})$ & Bayesian & RKHS ball\\
Theorem 3 in \citet{pmlr-v89-chowdhury19a} & UCB & $\widetilde{\gO}(Le^{\Gamma_{d_sH}}\sqrt{\Gamma_{d_sT} T})$ & Bayesian & GP\\
Theorem 4 in \citet{pmlr-v89-chowdhury19a} & PSRL & $\widetilde{\gO}(Le^{\Gamma_{d_sH}}\sqrt{\Gamma_{d_sT} T})$ & Bayesian & GP\\
Theorem 3.2 in \citet{kakade2020information} & UCB & $\widetilde{\gO}(H\gamma_T\sqrt{T})$ & Worst-case  & RKHS ball\\
Theorem 3 in \citet{curi2020efficient} & UCB & $\widetilde{\gO}(H^{3/2}\Gamma_T^{(H+1)/2}\sqrt{T})$ & Worst-case & RKHS ball\\
Theorem 1 in \citet{fan2021model} & PSRL & $\widetilde{\gO}(H^{3/2}\Gamma_N\sqrt{T})$ & Bayesian & GP\\
Theorem \ref{thm:regret_bound} (ours) & PSRL & $\widetilde{\gO}(H\sqrt{\gamma_{T} T})$ & Bayesian & GP\\
\bottomrule
\end{tabular}
\end{small}
\end{center}
\caption{State-of-the-art regret bounds for PSRL and UCB algorithms in the setting that we consider (cf. Section \ref{sec:prelims}). We only show the dependence of the regret on $H$ and $T$, and suppress all polylogarithmic factors. Here, $L$ is a problem-dependent quantity which hides dependence on $H$ (cf. Equation 3 in \citealp{osband2014model}). Since the eluder dimension and the maximum information gain ($\Gamma_T$) are equivalent for RKHSs \citep{huang2021short}, we state the regret bound from Theorem 1 of \citet{osband2014model} in terms of the maximum information gain. The expected information gain $\gamma_T$ is an instance-dependent quantity with a growth rate that is never worse than that of $\Gamma_T$ (cf. Section \ref{sec:info_gain}). For any of the Bayesian regret bounds, the prior ``RKHS ball'' indicates that the prior is allowed to be any distribution with support contained within a ball of an RKHS.}
\vskip -0.1in
\label{tab:}
\end{table*}

\textbf{Problem one: unbounded state spaces.} In the setting that we consider, the system states are corrupted by Gaussian noise, which means that the set of possible states is unbounded. If this is not properly accounted for, kernel-dependent quantities such as the maximum information gain (cf. Section \ref{sec:info_gain}) can grow linearly with the number of time steps. In addition, arguments used to control suprema of Gaussian processes fail when the Gaussian process is defined on an unbounded domain. To obtain rigorous theoretical guarantees, it is necessary to show that the states actually encountered by PSRL lie within a bounded subset of the state space with high probability.

\textbf{Problem two: sub-optimal rates.} Most regret bounds for GP-PSRL use an argument due to \citet{osband2013more}, in which one constructs confidence sets that contain the true MDP and all the sampled MDPs. One can then show that the Bayesian regret of PSRL is as good as the worst-case regret of any optimistic algorithm. If tight confidence sets are available, this argument can yield near-optimal regret bounds. However, due to the difficulty of constructing confidence sets for functions in reproducing kernel Hilbert spaces (RKHSs) \citep{lattimore2023lower}, when applied to GP-PSRL, this approach results in regret bounds that have at least linear dependence on the maximum information gain.

\textbf{Problem three: limited priors.} Many existing regret bounds allow for limited choices of the prior. Approaches based on the ``PSRL is as good as optimism'' argument mentioned previously only allow for priors with support contained within a ball of the RKHS associated with the kernel function. In particular, this does not include Gaussian process priors. Other works have used bounds for suprema of Gaussian processes to upper bound the regret of GP-PSRL (cf. Theorem 4 in \citealp{pmlr-v89-chowdhury19a}). While these approaches do allow one to use a Gaussian process prior, they typically impose strong smoothness assumptions on the covariance kernel. For instance, Theorem 4 of \citet{pmlr-v89-chowdhury19a} requires that the kernel function is four times differentiable.

\textbf{Contributions.} Our work offers a regret analysis for GP-PSRL that simultaneously addresses all three of the problems described above. We show that, with high-probability, the states visited by GP-PSRL (or indeed, any algorithm) are contained within a Euclidean ball whose radius grows only logarithmically with the total number of time steps $T$. We prove this using a recursive application of a tail bound for suprema of Gaussian processes, which is known as the Borell-Tsirelson-Ibragimov-Sudakov inequality.

We then prove a Bayesian regret bound with square root dependence on the expected information gain and linear dependence on the horizon $H$ (cf. Table \ref{tab:}). The expected information gain is an instance-dependent quantity that never has a worse growth rate than the maximum information gain (cf. Section \ref{sec:info_gain}). Our regret analysis accommodates Gaussian process priors, and only requires that the kernel function is bounded and H\"{o}lder continuous. We achieve this by replacing the confidence sets or discretization techniques used in most previous works by a form of the chaining method (cf. Section \ref{sec:bounding_model_error}).

\textbf{Outline.} Section \ref{sec:related} summarizes some known results about posterior sampling and reinforcement learning with GPs.
In Section \ref{sec:prelims}, we formally introduce the problem, we state our modeling assumptions and we describe the GP-PSRL algorithm. Section \ref{sec:regret_analysis} contains our main results. In Section \ref{sec:experiment}, we present some empirical results that support our theoretical results. We summarize our findings in Section \ref{sec:conclusion}. Some further discussion can be found in Appendix \ref{sec:discussion}.

\textbf{Notation.}
For any positive integer $m$, $[m] := \{1, \dots, m\}$. For any set $\gZ$, any metric $d$ and any $\varepsilon > 0$, $\mathsf{N}(\gZ, d, \varepsilon)$ is the $\varepsilon$-covering number of $\gZ$ w.r.t. $d$.
We define $d_2(\vx, \vy) := \|\vx - \vy\|_2$ to be the Euclidean metric.
For any point $\vx \in \sR^{d}$, $\sB_{\vx}^d(R) := \{\vy \in \sR^d: d_2(\vx, \vy) \leq R\}$ is the $d$-dimensional (closed) Euclidean ball of radius $R$, centered at $\vx$. We define $\sB^d(R) := \sB_{0}^d(R)$ to be the ball around the origin.
For any event $A$, we define $\sI\{A\}$ to be the indicator function that equals $1$ if $A$ occurs and $0$ otherwise.

\section{Related Work}
\label{sec:related}

We give a broad overview of some related work. A more detailed discussion about related work and the three problems listed in the introduction can be found in Appendix \ref{sec:additional_related}.

Posterior sampling originated as a heuristic for decision-making under uncertainty \citep{thompson1933likelihood}.
However, it has since been shown to satisfy (near-)optimal regret bounds for a variety of multi-armed bandit problems \citep{agrawal2012analysis, kaufmann2012thompson}, including linear bandits \citep{agrawal2013thompson, russo2013eluder, abeille2017linear} and Gaussian process bandits, which is also known as kernelized bandits \citep{russo2014learning, chowdhury2017kernelized}.

Posterior sampling was extended to reinforcement learning by \citet{strens2000bayesian}. It was later shown that PSRL satisfies sublinear regret bounds in tabular MDPs \citep{osband2013more, osband2017posterior} and in MDPs with bounded eluder dimension \citep{osband2014model}. Several works have applied PSRL to linear quadratic control problems \citep{abeille2017thompson, abeille18a, ouyang2019posterior, faradonbeh2020adaptive}, as well as other parametric control problems \citep{abbasi2015bayesian}, often focusing on the infinite-horizon setting. \citet{pmlr-v89-chowdhury19a} proved Bayesian regret bounds for PSRL in the setting that we consider. However, it is assumed that the state space is compact and that the prior either has bounded support or is a Gaussian process with a four times differentiable kernel. \citet{fan2021model} proved Bayesian regret bounds for PSRL in MDPs with linear dynamics with respect to a fixed feature space. \citet{kakade2020information} and \citet{curi2020efficient} developed optimistic algorithms that satisfy worst-case regret bounds for MDPs with dynamics governed by a fixed function with bounded RKHS norm.

In our regret analysis, we use a version of the chaining method \citep{dudley1967sizes, talagrand2014upper} to control a sum of model estimation errors (cf. Section \ref{sec:bounding_model_error}). Various forms of the chaining method have previously been used to control regret or estimation errors in Gaussian process bandits \citep{contal2015optimization, contal2016stochastic} and sequential level set estimation problems \citep{shekhar2019multiscale}.

\section{Preliminaries}
\label{sec:prelims}
We provide the problem statement and the general definitions that appear throughout the paper, and we introduce the GP-PSRL algorithm.
\subsection{Markov Decision Processes}
A finite-horizon MDP is a tuple $\gM = (\gS, \gA, r, P, \rho, H)$, where $\gS$ is the state space, $\gA$ is the action space, $r: \gS \times \gA \to \sR$ is the (deterministic) reward function, $P: \gS \times \gA \to \gP(\gS)$ is the transition kernel, $\rho \in \gP(\gS)$ is the initial state distribution and $H$ is the horizon. A policy $\pi: \gS \times [H] \to \gP(\gA)$ is a time-dependent behavior rule that maps any state and any step to a probability distribution on the set of actions. For any MDP $\gM$, any policy $\pi$ and any time step $h$, the value function $V_{\pi, h}^{\gM}: \gS \to \sR$ is defined by
\begin{align*}
V_{\pi, h}^{\gM}(\vs) := \E_{\gM,\pi}
\left[
\textstyle
\sum_{j=h}^{H}r(\vs_j, \va_j)
\,
\big|
\,
\vs_h = \vs
\right]\,,
\end{align*}
where $\E_{\gM, \pi}[\cdot]$ denotes the expectation with respect to the random sequence of states and actions $\vs_1, \va_1, \dots, \vs_H, \va_H$ generated by the policy $\pi$, the transition kernel $P$ and the initial state distribution $\rho$ over $H$ steps. For $h = H+1$, define $V_{\pi, H+1}^{\gM}(\vs) := 0$. We say that a policy $\pi$ is optimal for an MDP $\gM$ if for all states $\vs \in \gS$,
\begin{equation*}
V_{\pi, 1}^{\gM}(\vs) \geq \max_{\pi^{\prime}}V_{\pi^{\prime}, 1}^{\gM}(\vs)\,.
\end{equation*}
For any MDP $\gM$, any policy $\pi$, any time step $h$ and any function $V:\gS \to \sR$, we define the Bellman operator $\mathsf{T}_{\pi, h}^{\gM}: \sR^{\gS} \to \sR^{\gS}$ by,
\begin{equation*}
\mathsf{T}_{\pi,h}^{\gM}V(\vs) := {\textstyle \int_{\gA}}r(\vs,\va) + \inner{P(\vs,\va)}{V}\mathrm{d}\pi(\va|\vs,h)\,,
\end{equation*}
where for any (signed) measure $P$ on $\gS$ and any function $V: \gS \to \sR$, $\inner{P}{V} := \int_{\gS}V(\vs)\mathrm{d}P(\vs)$. From this definition, for any state $\vs \in \gS$ we have
\begin{equation}
\mathsf{T}_{\pi,h}^{\gM}V_{\pi,h+1}^{\gM}(\vs) = V_{\pi,h}^{\gM}(\vs)\,.\label{eqn:bellman}
\end{equation}

\subsection{Regret Minimization in MDPs}
\label{sec:regret_minimisation}

We consider the following episodic interaction protocol. At the start of the interaction, the true MDP ${\gM_{\star} = (\gS, \gA, r, P^{\star}, \rho, H)}$ is drawn randomly from a known prior distribution (which will be specified in Section \ref{sec:modeling}). The interaction then proceeds for a total of $T$ time steps, which is divided into $N$ episodes, each of length $H$ (so $T = NH$).
At the start of each episode $n = 1, \dots, N$, the agent chooses a policy $\pi_n$, and an initial state $\vs_{n,1}$ is drawn from the initial state distribution $\rho$. Then for each time step $h = 1, \dots, H$: (a) the agent observes the state $\vs_{n, h}$; (b) the agent draws an action $\va_{n, h} \sim \pi_n(\vs_{n,h}, h)$ from a policy $\pi_n$; (c) the next state $\vs_{n, h+1} \sim P^{\star}(\vs_{n,h}, \va_{n,h})$ is drawn from the transition kernel $P^{\star}$ (unless $h = H$). The agent is therefore defined by the sequence of policies $\pi_1, \dots, \pi_N$, which we will also refer to as ``the algorithm''.
The interaction between the agent and the MDP produces a random sequence of outcomes $\vs_{1,1}, \va_{1,1}, \dots, \vs_{N,H}, \va_{N,H}$. The goal of the agent is to minimize the regret, which is the loss of reward caused by playing the polices $\pi_1, \dots, \pi_N$ instead of the optimal policy. We define the regret after $T$ steps as
\begin{equation*}
\gR_{T} := \E\left[\textstyle\sum_{n=1}^{N}V_{\pi^{\star}, 1}^{\gM_{\star}}(\vs_{n,1}) - V_{\pi_n, 1}^{\gM_{\star}}(\vs_{n,1})\right]\,.
\end{equation*}
The expectation is taken with respect to all sources of randomness, including the random draw of $\gM_{\star}$ at the start of the interaction and any randomness involved in the agent's selection of the policies $\pi_1, \dots, \pi_N$. This notion of regret is usually referred to as the Bayesian regret.

\subsection{Modeling Assumptions}
\label{sec:modeling}
We consider the continuous setting, where $\gS = \sR^{d_s}$ and $\gA \subseteq \sR^{d_a}$. Note that the state space is the entirety of $\sR^{d_s}$, whereas the action set is only a subset of $\sR^{d_a}$. In fact, we assume that $\gA$ is contained within a Euclidean ball.
\begin{assumption}
\label{ass:bounded_actions}
There exists a positive constant $R_a$ such that $\gA \subseteq \sB^{d_a}(R_a)$.
\end{assumption}
We assume that the reward function is a known and uniformly (on $\gS \times \gA$) bounded function.
\begin{assumption}
\label{ass:bounded_rewards}
There exists a positive constant $R_{\max}$ such that for all $(\vs, \va) \in \gS \times \gA$,
$
|r(\vs, \va)| \leq R_{\max}\,.
$
\end{assumption}

The transition kernel $P$ is assumed to be of the form
\begin{equation*}
P(\vs, \va) = \gN(f(\vs, \va), \sigma^2\mI)\,,
\end{equation*}
where $f: \gS \times \gA \to \gS$ is a function that maps state-action pairs to states and $\sigma$ is the (known) standard deviation of the noise. We sometimes refer to $f$ as the dynamics of the MDP.
We assume that the initial state distribution is $\rho = \gN(0, \sigma^2\mI)$.
We write this as $\vs_{n,1} = \vepsilon_{n,1}$ and
\begin{equation*}
\vs_{n,h+1} = f^{\star}(\vs_{n,h}, \va_{n,h}) + \vepsilon_{n,h+1}\,,
\end{equation*}
where for all $n \in [N]$ and $h \in [H]$, $\vepsilon_{n,h} \sim \gN(0, \sigma^2\mI)$. We will frequently use $\gX = \gS \times \gA \subset \sR^{d_s+d_a}$ to denote the set of possible state action pairs. We will also use the notations $\vx = (\vs, \va)$, $r(\vx) = r(\vs, \va)$ and $f(\vx) = f(\vs,\va)$ interchangeably. We will typically use $\gZ$ to denote an arbitrary subset of $\sR^{d_s+d_a}$. For each $i \in [d_s]$, we use $f_i: \gX \to \sR$ to denote the $i$\textsuperscript{th} component of $f$, so $f = (f_1, \dots, f_{d_s})$.

We model each component $f_i^{\star}$ of the dynamics $f^{\star}$ of the true $\gM_{\star}$ as an independent random draw from a zero-mean Gaussian process with covariance kernel $c: \sR^{d_s+d_a} \times \sR^{d_s+d_a} \to \sR$. We will write this as $f_1^{\star}, \dots, f_{d_s}^{\star} \sim \gG\gP(0, c(\vx, \vy))$, where $\vx, \vy$ are understood to be elements of $\gX$. This is the prior on MDPs described in Section \ref{sec:regret_minimisation}. Let us define $\gF_{n} := \sigma(\vs_{1,1}, \va_{1,1}, \dots, \vs_{n,H}, \va_{n,H})$ to be the $\sigma$-algebra generated by the interaction history of the agent up to the end of episode $n$. Conditioned on any history $\gF_n$, the Bayesian posterior on each component $f_i^{\star}$ is another Gaussian process, with mean and covariance
\begin{align}
\mu_{n,i}(\vx) &= \vc_n^{\top}(\vx)(\mC_n + \sigma^2\mI)^{-1}\vy_{n,i}\,,\label{eqn:post_mean}\\
c_{n}(\vx, \vy) &= c(\vx, \vy) - \vc_n^{\top}(\vx)(\mC_n + \sigma^2\mI)^{-1}\vc_n(\vy)\,,\label{eqn:post_covar}
\end{align}
where $\vy_{n,i} \in\sR^{n(H{-}1)}$ is the vector containing the $i$\textsuperscript{th} component of every observed next state from the first $n$ episodes, $\vc_n(\vx) := [c(\vx, \vx_{1,1}), \dots, c(\vx, \vx_{n,H-1})]^{\top} \in \sR^{n(H-1)}$ and $\mC_n$ is the $n(H-1){\,\times\,}n(H-1)$ kernel matrix constructed from all state-action pairs in the first $H-1$ steps of the first $n$ episodes. Let us define $\sigma_{n}^2(\vx) = c_{n}(\vx, \vx)$ to be the posterior predictive variance at the state-action pair $\vx$.

To facilitate our theoretical analysis, we will impose the following assumptions on the kernel function of the prior. First, we assume that the kernel is bounded.

\begin{assumption}
\label{ass:kernel_boundedness}
There exists a positive constant $C$ such that for all $\vx \in \sR^{d_s + d_a}$,
$0 < c(\vx, \vx) \leq C\,$.
\end{assumption}

The assumption that $c(\vx,\vx)$ is positive is just for convenience. It ensures that the prior and posterior variance at every point is non-zero, so we don't have to be careful about dividing by the variance. We will also assume that the kernel is H\"{o}lder continuous.

\begin{assumption}
\label{ass:kernel_smoothness}
There exist positive constants $L > 0$ and $\alpha \in (0, 1]$ such that for all $\vx, \vy, \vz \in \sR^{d_s + d_a}$,
\begin{equation*}
|c(\vx, \vy) - c(\vx, \vz)| \leq L\|\vy - \vz\|_2^{\alpha}\,.
\end{equation*}
\end{assumption}
We consider these assumptions on the kernel function (especially the smoothness assumption) to be quite mild. They are satisfied by the commonly used squared exponential and Mat\'{e}rn kernels. We introduce two quite fundamental objects associated with the kernel functions of Gaussian processes. First, we define the natural distance $d_c:\sR^{d_s+d_a}\times \sR^{d_s+d_a} \to \sR$ associated with the kernel $c$ by
\begin{equation}
d_c(\vx, \vy) := \sqrt{c(\vx, \vx) - 2c(\vx, \vy) + c(\vy, \vy)}\,.\label{eqn:natty_dist}
\end{equation}
The natural distance arises naturally when working with Gaussian processes, since if $f \sim \gG\gP(0, c(\vx, \vy))$, then
\begin{equation*}
\E[(f(\vx) - f(\vy))^2] = d_c^2(\vx, \vy)\,.
\end{equation*}
This will appear in the chaining arguments in Section \ref{sec:regret_analysis}.

\subsection{Information Gain}
\label{sec:info_gain}

We define two kernel-dependent quantities that appear in regret bounds for reinforcement learning with Gaussian processes. For any radius $R \geq 0$, let $A = \{\sup_{n \in [N], h \in [H]}\|\vx_{n,h}\|_2 \leq R\}$ be the event that all observed state action pairs have norm at most $R$. We define the maximum information gain as
\begin{equation*}
\Gamma_T(\sigma^2, R) := \sup_{\vx_{1,1}, \dots, \vx_{N,H}}\sI\{A\}\frac{1}{2}\log\det\bigg(\frac{1}{\sigma^2}\widetilde{\mC}_T + \mI\bigg)\,,
\end{equation*}
where $\widetilde{\mC}_T$ is the $T \times T$ kernel matrix constructed from the state-action pairs $\vx_{1,1}, \dots, \vx_{N,H}$. The maximum information gain can be interpreted as the maximum amount of information that a sequence of $T$ state action pairs can provide about the model $f^{\star}$. The presence of the indicator variable in the definition of $\Gamma_{T}(\sigma^2, R)$ is somewhat non-standard. The maximum information gain is usually defined as a supremum over the entire state-action space. However, if the set over which the supremum is taken is unbounded, then the maximum information gain is usually linear in $T$. For instance, if the kernel is either the squared exponential or the Mat\'{e}rn kernel, then
\begin{equation*}
\lim_{R \to \infty}\Gamma_T(\sigma^2, R) = \frac{T}{2}\log(1 + 1/\sigma^2)\,.
\end{equation*}
The maximum information gain is a worst-case quantity in the sense that it contains a supremum over all possible (bounded) sequences of state-action pairs. Regret bounds that depend on the maximum information gain may therefore be overly pessimistic--even if they have the ``correct'' dependence on $\Gamma_T$. We would like to have regret bounds that depend on a less pessimistic quantity. For example, we could replace the supremum inside the maximum information gain with an expectation. We define the expected information gain as
\begin{equation}
\gamma_T(\sigma^2, R) := \E\bigg[\sI\{A\}\frac{1}{2}\log\det\bigg(\frac{1}{\sigma^2}\widetilde{\mC}_T + \mI\bigg)\bigg]\,.\label{eqn:def_info}
\end{equation}
The expected information gain can be interpreted as the average amount of information that a sequence of $T$ state-action pairs collected by a given algorithm provides about the model. For large $T$, we can expect $\gamma_T(\sigma^2, R)$ to be considerably smaller than $\Gamma_T(\sigma^2, R)$ because an algorithm that has sublinear regret will eventually stop collecting informative data. In any case, the expected information gain is never larger than the maximum information gain. Therefore, everything else being equal, one should always prefer a regret bound that depends on the expected information gain instead of the maximum information gain.

\begin{algorithm}[tb]
\caption{GP-PSRL}
\label{alg:example}
\begin{algorithmic}
\STATE {\bfseries Input:} Prior covariance $c$, noise $\sigma^2$, reward function $r$.
\FOR{$n=1$ {\bfseries to} $N$}
\STATE Sample $\gM_n$ from the posterior over MDPs.
\STATE Compute an optimal policy $\pi_n$ for $\gM_n$.
\FOR{$h=1$ {\bfseries to} $H$}
\STATE Observe the state $\vs_{n,h}$.
\STATE Draw an action $\va_{n,h} \sim \pi_n(\vs_{n,h}, h)$.
\ENDFOR
\STATE Update the posterior over MDPs.
\ENDFOR
\end{algorithmic}
\end{algorithm}

\subsection{Posterior Sampling Reinforcement Learning}
\label{sec:psrl}

PSRL begins with a prior distribution on (the dynamics of) MDPs, which is updated at the end of each episode as new data are collected. At the start of the $n$\textsuperscript{th} episode, PSRL samples a function $f^{(n)}$ from the Bayesian posterior distribution of $f^{\star}$ conditioned on the history $\gF_{n-1}$. Let us define $P^{(n)}$ to be the transition kernel with dynamics $f^{(n)}$ and $\gM_n := (\gS, \gA, r, P^{(n)}, \rho, H)$ to be the MDP sampled by PSRL at the beginning of episode $n$. PSRL then plays any policy $\pi_n$ which is an optimal policy for the sampled MDP $\gM_n$. For the model we consider with a conjugate Gaussian process prior, we refer to PSRL as GP-PSRL.

As discussed in Section \ref{sec:modeling}, the posterior of each component $f^{\star}_i$ of $f^{\star}$ conditioned on $\gF_{n-1}$ is a Gaussian process with the mean $\mu_{n-1,i}$ and the covariance $c_{n-1}$ given in \eqref{eqn:post_mean} and \eqref{eqn:post_covar}. Therefore, conditioned on $\gF_{n-1}$, we have $f_{i}^{(n)} \sim \gG\gP(\mu_{n-1,i}(\vx), c_{n-1}(\vx, \vy))$ and $f^{(n)} = (f_1^{(n)}, \dots, f_{d_s}^{(n)})$.

\begin{figure}[t]
\centering
\includegraphics[width=\linewidth]{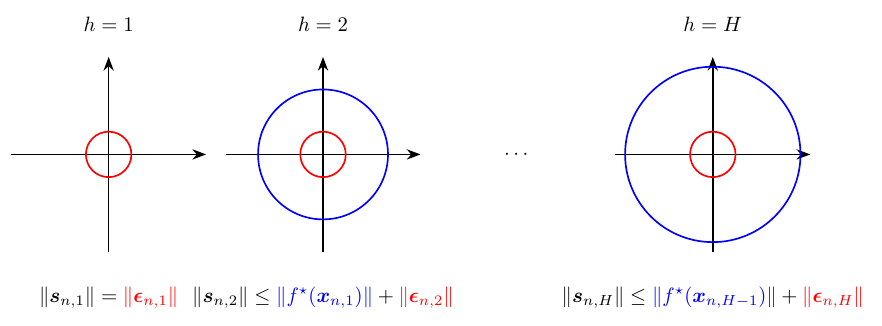}
\caption{The first state in each episode is drawn from an isotropic Gaussian, and so its norm is sub-Gaussian. As long as the norm of the current state is bounded, the next state is sub-Gaussian, and so its norm is also sub-Gaussian. The bound on the norm of the state at step $h$ will grow with $h$. The challenge is to show that this bound does not grow too quickly.}
\label{fig:state_dist_diagram}
\end{figure}

\section{Regret Analysis}
\label{sec:regret_analysis}

We state and sketch the proof of our regret bound for GP-PSRL. The method of proof can be split into two parts. In the first part, we show that with high probability, all the states observed by the algorithm are contained within a Euclidean ball with a radius that grows only logarithmically with $T$. In the second part, under the event that all the observed states are bounded, we prove an upper bound for the Bayesian regret of GP-PSRL.

The general idea behind the first part of the proof is illustrated in Figure \ref{fig:state_dist_diagram}. We exploit the fact that as long as the norm of the current state is bounded, then the norm of the next state has sub-Gaussian tail behavior, which means it can be bounded with high probability. In particular, since $\vs_{n,h+1}$ can be written as
\begin{equation*}
\vs_{n,h+1} = f^{\star}(\vs_{n,h}, \va_{n,h}) + \vepsilon_{n,h+1}\,,
\end{equation*}
the norm of $\vs_{n,h+1}$ is sub-Gaussian as long as the norm of $f^{\star}(\vs_{n,h}, \va_{n,h})$ is sub-Gaussian. If the norm of $\vs_{n,h}$ is bounded, then for some finite radius $R$, the norm of $f^{\star}(\vs_{n,h}, \va_{n,h})$ can be upper bounded by the supremum $\sup_{\vx \in \sB^{d_s+d_s}(R)}\|f^{\star}(\vx)\|_2$. To show that this supremum is sub-Gaussian, we use a version of the Borell-Tsirelson-Ibragimov-Sudakov (BTIS) inequality \citep{borell1975brunn, tsirelson1976norms}, which states that the supremum of a Gaussian process is sub-Gaussian (cf.\, Lemma \ref{lem:borell_tis}). Within each episode, the bounds on the norms of the states will increase as the step $h$ increases. It remains to show that this increase is not too fast (cf.\, Section \ref{sec:state_norm_tail}).

The second part of the proof can be split into two parts. We show that the regret of GP-PSRL can be upper bounded by a sum of model estimation errors (cf.\, Section \ref{sec:regret_to_model_error}) and then we upper bound the model estimation errors (cf.\, Section \ref{sec:bounding_model_error}). The main novelty in this part of the proof is in how we obtain an upper bound for the estimation errors that depends on the expected information gain (as opposed to the maximum information gain) and holds under weak smoothness assumptions on the kernel.

\subsection{Suprema of Gaussian Processes}
\label{sec:gp_suprema}

We describe the tools that we use to control suprema of Gaussian processes. First, we focus on the supremum $\sup_{\vx \in \gZ}f(\vx)$, where $\gZ$ is some subset of $\sR^{d_s+d_a}$ and $f \sim \gG\gP(0, c(\vx, \vy))$. For any (sufficiently large) threshold $u$, we would like to have an upper bound for the tail probability $\sP(\sup_{\vx \in \gZ}f(\vx) > u)$.
We use the BTIS inequality to obtain such a tail bound (Lemma \ref{lem:borell_tis}).
\begin{lemma}[Borell-Tsirelson-Ibragimov-Sudakov]
\label{lem:borell_tis}
Let $\gZ$ be a subset of $\sR^{d_s+d_a}$, let $f \sim \gG\gP(0, c(\vx, \vy))$ and suppose that $\E[\sup_{\vx \in \gZ}f(\vx)] < \infty$. Let $v := \sup_{\vx \in \gZ}\E[f(\vx)^2]$. For every $u > 0$,
\begin{equation*}
\sP\Big(\sup_{\vx \in \gZ}f(\vx) - \E\Big[\sup_{\vx \in \gZ}f(\vx)\Big] \geq u\Big) \leq e^{-\frac{u^2}{2v}}\,.
\end{equation*}
\end{lemma}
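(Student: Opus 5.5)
The plan is to reduce the statement to the Gaussian concentration inequality for Lipschitz functions of a standard Gaussian vector, first over finite index sets and then over $\gZ$ by separability and monotone convergence. Concretely, for a standard Gaussian vector $\vz \in \sR^k$ and any $G: \sR^k \to \sR$ that is $\ell$-Lipschitz for $d_2$, we have $\sP(G(\vz) - \E[G(\vz)] \geq u) \leq e^{-u^2/(2\ell^2)}$. I would quote this as the Tsirelson--Ibragimov--Sudakov inequality rather than reprove it. If a proof is wanted, the standard route is the Gaussian log-Sobolev inequality with Herbst's argument, or Maurey--Pisier interpolation; these give the sharp constant $1/2$.

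\textbf{Step 1 (finite sets).} Fix a finite set $F = \{\vx_1, \dots, \vx_k\} \subseteq \gZ$. Let $\mSigma$ be the covariance matrix with entries $\mSigma_{ij} = c(\vx_i, \vx_j)$, and let $\mA$ satisfy $\mA\mA^{\top} = \mSigma$. Then $(f(\vx_1), \dots, f(\vx_k)) \eqd \mA\vz$ with $\vz \sim \gN(0, \mI)$. Define $G(\vz) := \max_i (\mA\vz)_i$. Since $|(\mA\vz)_i - (\mA\vz')_i| \leq \|\mA_{i,:}\|_2 \|\vz - \vz'\|_2$, the map $G$ is $\ell$-Lipschitz with $\ell = \max_i \|\mA_{i,:}\|_2$. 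Moreover $\|\mA_{i,:}\|_2^2 = \mSigma_{ii} = \E[f(\vx_i)^2] \leq v$, so $\ell^2 \leq v$. The concentration inequality then gives $\sP(\max_F f - \E[\max_F f] \geq u) \leq e^{-u^2/(2v)}$.

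\textbf{Step 2 (passage to $\gZ$).} This is the main technical obstacle. The supremum over an uncountable set need not be measurable unless we assume a separable version of $f$. I would interpret $\sup_{\gZ} f$ in the standard way, either as the lattice supremum $\sup_{F \text{ finite}} \max_F f$ or via separability. Under Assumption \ref{ass:kernel_smoothness}, $f$ is mean-square continuous, so it admits a separable version, and $\sup_{\gZ} f = \sup_{\gZ \cap D} f$ almost surely for some countable set $D$.

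Now take finite sets $F_m \uparrow \gZ \cap D$. Then $M_m := \max_{F_m} f$ increases to $M := \sup_{\gZ} f$ almost surely. The hypothesis $\E[M] < \infty$ and $\E[M_1] > -\infty$ allow monotone convergence, so $\E[M_m] \uparrow \E[M]$.

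For fixed $u > 0$ and any $\delta \in (0, u)$, choose $m_0$ so that $\E[M] - \E[M_m] < \delta$ for all $m \geq m_0$. For such $m$,
\begin{equation*}
\{M_m - \E[M] \geq u - \delta\} \subseteq \{M_m - \E[M_m] \geq u - \delta\}.
\end{equation*}
The events on the left increase in $m$ to a set containing $\{M - \E[M] > u - \delta\}$. Applying continuity from below together with Step 1 gives $\sP(M - \E[M] > u - \delta) \leq e^{-(u-\delta)^2/(2v)}$. Since $\{M - \E[M] \geq u\} \subseteq \{M - \E[M] > u - \delta\}$, letting $\delta \downarrow 0$ yields the claimed bound.
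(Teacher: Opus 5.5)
Your proposal is correct and follows the paper's route. The paper only cites the scalar BTIS inequality, but it proves the vector-valued analogue (Lemma \ref{lem:borell_tis_vec}) exactly as you do: write the finite-dimensional marginals as a linear image of a standard Gaussian vector, check that the maximum is $\sqrt{v}$-Lipschitz (you read this off the row norms $\sqrt{\mSigma_{ii}}$; the paper computes gradients through the symmetric square root), apply Lemma \ref{lem:lip_gaussian}, and pass to $\gZ$ by separability and monotone convergence.

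Your $\delta$-slack in Step 2 is more careful than the paper's limit step, which tacitly ignores the event that the supremum equals $\E[\sup] + u$ without being attained. One aside is inaccurate: the classical Maurey--Pisier rotation argument only gives sub-Gaussian parameter $\pi\ell/2$. It is therefore the log-Sobolev/Herbst route, not Maurey--Pisier, that delivers the stated $e^{-u^2/(2v)}$.
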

If Assumption \ref{ass:kernel_boundedness} is satisfied, then $\sup_{\vx \in \gZ}\E[f(\vx)^2] \leq C$. For $u$ much larger than $\E[\sup_{\vx \in \gZ}f(\vx)]$, we therefore have the estimate
\begin{equation}
\sP\Big(\sup_{\vx \in \gZ}f(\vx) \geq u\Big) \sim e^{-\frac{u^2}{2C}}\,.\label{eqn:tail_estimate}
\end{equation}
To determine how large $u$ needs to be for this to be a reasonable estimate of the tail probability, we need to upper bound the expected supremum of $f$. To do so, we can use the chaining method \citep{dudley1967sizes}.
\begin{lemma}[Dudley]
\label{lem:dudley}
Let $\gZ$ be a subset of $\sR^{d_s+d_a}$. If $f \sim \gG\gP(0, c(\vx, \vy))$, then
\begin{equation*}
\E\Big[\sup_{\vx \in \gZ}f(\vx)\Big] \leq 12\int_0^{\infty}\sqrt{\log\mathsf{N}(\gZ, d_c, \varepsilon)}\mathrm{d}\varepsilon\,.
\end{equation*}
\end{lemma}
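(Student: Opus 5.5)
I would prove Dudley's bound by the classical chaining argument, which uses only that $f$ is a centered Gaussian process with increments satisfying $\E[(f(\vx)-f(\vy))^2]=d_c^2(\vx,\vy)$.

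\textbf{Reductions.} If the entropy integral is infinite there is nothing to prove, so assume it is finite. Then $\mathsf{N}(\gZ,d_c,\varepsilon)$ is finite for every $\varepsilon>0$, and since the integrand vanishes once $\mathsf{N}=1$, $\gZ$ has finite $d_c$-diameter $D$. The supremum of an uncountable family is interpreted as the supremum over finite subsets, $\E[\sup_{\vx\in\gZ}f(\vx)] := \sup_{F\subseteq\gZ\text{ finite}}\E[\max_{\vx\in F}f(\vx)]$; this is the standard separability convention. It therefore suffices to prove the bound for a finite $F\subseteq\gZ$, with a constant independent of $F$. Covering numbers of $F$ at scale $\varepsilon$ are at most those of $\gZ$ at scale $\varepsilon/2$ (centers can be moved into $F$), and I will keep track of this factor.

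\textbf{Chaining construction.} Set $\varepsilon_k=D2^{-k}$ for $k\ge 0$. Let $T_k\subseteq F$ be an $\varepsilon_k$-net of $F$ of minimal size $N_k$, with $T_0=\{\vx_0\}$ a single point. Since $F$ is finite, $T_K=F$ for $K$ large. Let $\pi_k(\vx)$ be a nearest point of $T_k$ to $\vx$. Because $\E f(\vx_0)=0$, telescoping gives
\[
\E\max_{\vx\in F}f(\vx)=\E\max_{\vx\in F}\sum_{k=1}^K\big(f(\pi_k\vx)-f(\pi_{k-1}\vx)\big)\le\sum_{k=1}^K\E\max_{\vx}\big(f(\pi_k\vx)-f(\pi_{k-1}\vx)\big).
\]
At level $k$ there are at most $N_kN_{k-1}\le N_k^2$ distinct links. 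Each link is a centered Gaussian with standard deviation at most $d_c(\pi_k\vx,\vx)+d_c(\vx,\pi_{k-1}\vx)\le 3\varepsilon_k$.

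\textbf{Bounding each level.} The standard maximal inequality for $M$ centered Gaussians with standard deviation at most $s$ gives $\E\max\le s\sqrt{2\log M}$. This is proved via Jensen on $\E e^{\lambda\max}$ and optimizing $\lambda$. Applying it at level $k$ yields a contribution of at most $3\varepsilon_k\sqrt{4\log N_k}=6\varepsilon_k\sqrt{\log N_k}$. Finally convert the sum to an integral: $\varepsilon_k=2(\varepsilon_k-\varepsilon_{k+1})$, and $\log N_k\le\log\mathsf{N}(F,d_c,\varepsilon)$ for $\varepsilon\le\varepsilon_k$. Hence
\[
\sum_k 6\varepsilon_k\sqrt{\log N_k}\le 12\sum_k\int_{\varepsilon_{k+1}}^{\varepsilon_k}\sqrt{\log\mathsf{N}(F,d_c,\varepsilon)}\,\mathrm{d}\varepsilon\le 12\int_0^{\infty}\sqrt{\log\mathsf{N}(F,d_c,\varepsilon)}\,\mathrm{d}\varepsilon.
\]

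\textbf{Main obstacle.} The difficulty is bookkeeping the constant so that it comes out as exactly $12$ with the covering numbers of $\gZ$ rather than $F$. There are two sources of slack:
\begin{itemize}
\item If the nets are taken with centers in $\gZ$ instead of $F$, the covering numbers of $\gZ$ appear directly and the factor $2$ from re-centering is avoided. The process is defined on all of $\sR^{d_s+d_a}$, so using centers in $\gZ\setminus F$ is legitimate. The extra final link from $F$ to the finest net can be removed by letting $K\to\infty$ and using a.s. continuity in $L^2$: the links have variance at most $\varepsilon_K^2\to 0$, and $F$ is finite.
\item The crude bound $N_kN_{k-1}\le N_k^2$ can be sharpened by noting that the link $(\pi_k\vx,\pi_{k-1}\vx)$ is determined by $\pi_k\vx$ alone if $\pi_{k-1}$ is chosen as a function of $\pi_k\vx$. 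This gives only $N_k$ links, at the cost of the standard deviation bound becoming $\varepsilon_k+\varepsilon_{k-1}=3\varepsilon_k$.
\end{itemize}
With this refinement, $3\varepsilon_k\sqrt{2\log N_k}\le 6\varepsilon_k\sqrt{\log N_k}$ still holds, and the constant $12$ follows.
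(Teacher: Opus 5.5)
This is correct and is essentially the paper's approach. The paper only cites Dudley for this lemma, but it proves the vector-valued analogue, Lemma~\ref{lem:chaining}, by exactly this chaining argument: minimal $2^{-k}$-covers of $\gZ$ in $d_c$, links of length at most $3\cdot 2^{-k}$, and at most $|M_k|^2$ pairs per level, controlled by the Jensen/exponential-moment bound to give $6\cdot 2^{-k}\sqrt{\log|M_k|}$, followed by $2^{-k}=2(2^{-k}-2^{-k-1})$ to reach the constant $12$. The one point to tighten is that the paper's $\mathsf{N}(\gZ,d_c,\varepsilon)$ allows centres anywhere in $\sR^{d_s+d_a}$ (Definition~\ref{def:cover}), so your first bullet should take centres there rather than in $\gZ$ (your own justification already permits this); with that change, your crude $N_k^2$ count alone gives $12$, and the nested-link refinement is unnecessary.
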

We recall that $\mathsf{N}(\gZ, d_c, \varepsilon)$ is the $\varepsilon$-covering number of $\gZ$ with respect to the natural distance $d_c$ (cf. \eqref{eqn:natty_dist}). The quantity $\log\mathsf{N}(\gZ, d_c, \varepsilon)$ is known as the metric entropy of $\gZ$ (w.r.t. $d_c$), and consequently, the integral in Lemma \ref{lem:dudley} is sometimes called the entropy integral. For any metric $d$, let us define the diameter of $\gZ$ as
\begin{equation*}
\mathrm{diam}_{d}(\gZ) := \textstyle\sup_{\vx, \vy \in \gZ}d(\vx, \vy)\,.
\end{equation*}
Notice that for any $\varepsilon \geq \mathrm{diam}_{d_c}(\gZ)$, any point in $\gZ$ forms an $\varepsilon$-covering of $\gZ$ w.r.t. $d_c$. This means that the upper limit of the entropy integral can be replaced by $\mathrm{diam}_{d_c}(\gZ)$. Therefore, explicit upper bounds for the expected supremum can be obtained by plugging in upper bounds for the covering number and the diameter of $\gZ$ with respect to $d_c$.

\subsection{Suprema of Vector-Valued Gaussian Processes}
\label{sec:gp_supremum_tail}

We turn our attention to suprema of vector-valued Gaussian processes. In particular, let $f_1, \dots, f_{d_s} \sim \gG\gP(0, c(\vx, \vy))$ be independent GPs, and let $f = (f_1, \dots, f_{d_s})$. For any radius $R > 0$, we would like to have a bound for the tail probability $\sP(\sup_{\vx \in \sB^{d_s+d_a}(R)}\|f(\vx)\|_2 > u)$. Due to the following Gaussian concentration inequality, the supremum of $\|f(\vx)\|_2$ can be dealt with using tools from Section \ref{sec:gp_suprema}.

\begin{lemma}[Theorem 2.26 in \citealp{wainwright2019high}]
\label{lem:lip_gaussian}
Let $(X_1, \dots, X_n)$ be a vector of i.i.d. standard Gaussian random variables, and let $g: \sR^n \to \sR$ be $L$-Lipschitz with respect to the Euclidean metric. Then the random variable $g(X_1, \dots, X_n)$ is $L$-sub-Gaussian.
\end{lemma}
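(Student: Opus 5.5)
The plan is to prove the moment generating function bound $\E[\exp(\lambda(g(X) - \E g(X)))] \leq \exp(\lambda^2L^2/2)$ for all $\lambda \in \sR$, where $X = (X_1, \dots, X_n)$. This is what $L$-sub-Gaussian means. I would use the Herbst argument, driven by the Gaussian logarithmic Sobolev inequality. The interpolation argument of Maurey--Pisier is simpler but only yields the constant $\pi L/2$, so it would not give the sharp constant $L$.

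\textbf{Step 1 (reduction to smooth $g$).} First I would reduce to the case where $g$ is continuously differentiable with $\|\nabla g\|_2 \leq L$ everywhere. Given a general $L$-Lipschitz $g$, convolve it with a Gaussian mollifier to get $g_\delta = g * \varphi_\delta$. Then $g_\delta$ is smooth and still $L$-Lipschitz, so $\|\nabla g_\delta\|_2 \leq L$. Moreover $|g_\delta - g| \leq L\,\E\|\delta Z\|_2 \to 0$ uniformly as $\delta \to 0$. Since $g$ is Lipschitz, it has Gaussian tails, so all the relevant exponential moments are finite. Hence $\E[e^{\lambda g_\delta(X)}] \to \E[e^{\lambda g(X)}]$ and $\E g_\delta(X) \to \E g(X)$, and the bound passes to the limit.

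\textbf{Step 2 (log-Sobolev inequality).} I would take as given the Gaussian log-Sobolev inequality: for smooth $h$,
\begin{equation*}
\mathrm{Ent}(h^2(X)) \leq 2\,\E\|\nabla h(X)\|_2^2,
\end{equation*}
where $\mathrm{Ent}(Y) := \E[Y\log Y] - \E Y \log \E Y$. If a self-contained proof is wanted, there are two standard routes. One is to prove the inequality in one dimension via a central limit argument on the Bernoulli log-Sobolev inequality, then tensorize using subadditivity of entropy. The other is the Ornstein--Uhlenbeck semigroup computation $\mathrm{Ent}(F) = \int_0^\infty \E[\|\nabla P_tF\|^2/P_tF]\,\mathrm{d}t$ combined with $\nabla P_t = e^{-t}P_t\nabla$. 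Establishing this inequality is the real analytic content of the lemma, and I expect it to be the main obstacle if it may not simply be cited.

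\textbf{Step 3 (Herbst argument).} Apply the log-Sobolev inequality with $h = e^{\lambda g/2}$. Then $\|\nabla h\|_2^2 = \frac{\lambda^2}{4}\|\nabla g\|_2^2 e^{\lambda g} \leq \frac{\lambda^2L^2}{4}e^{\lambda g}$. Writing $F(\lambda) := \E[e^{\lambda g(X)}]$, this gives
\begin{equation*}
\lambda F'(\lambda) - F(\lambda)\log F(\lambda) \leq \frac{\lambda^2L^2}{2}F(\lambda).
\end{equation*}
Now set $K(\lambda) := \lambda^{-1}\log F(\lambda)$. The inequality above is exactly $K'(\lambda) \leq L^2/2$ for $\lambda > 0$, and $K(\lambda) \to \E g(X)$ as $\lambda \downarrow 0$. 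Integrating from $0$ to $\lambda$ gives $\log F(\lambda) \leq \lambda\E g(X) + \lambda^2L^2/2$, which is the claimed bound for $\lambda > 0$. For $\lambda < 0$, apply the same argument to $-g$, which is also $L$-Lipschitz.
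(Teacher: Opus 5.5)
There is nothing in the paper to compare your route against: the paper does not prove this lemma, it imports it by citation and uses it as a black box. Your argument is correct, and it is the standard way to obtain the sharp parameter $L$. You mollify, apply the Gaussian log-Sobolev inequality, and then run the Herbst integration of $K(\lambda)=\lambda^{-1}\log F(\lambda)$. The computations check out: $\mathrm{Ent}(e^{\lambda g}) = \lambda F'(\lambda) - F(\lambda)\log F(\lambda)$, $K' = (\lambda F' - F\log F)/(\lambda^2 F) \le L^2/2$, and $K(0^+) = \E g(X)$. Two routine points are left implicit. The log-Sobolev inequality must be applied to the unbounded function $e^{\lambda g/2}$, so state it for the Gaussian Sobolev space or truncate. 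Differentiating $F$ under the expectation needs the same exponential-moment domination as your Step 1.

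Compared with the bare citation, your proof shows why the constant is $L$ rather than $\pi L/2$, and the paper as written needs the sharp constant. The tail $e^{-u^2/(2C)}$ in Lemma~\ref{lem:borell_tis_vec}, the factor $\lambda^2 d_c^2(\vx,\vy)/2$ in Corollary~\ref{cor:sub_gaussian_norm}, and the parameter $\sqrt{2}$ in Lemma~\ref{lem:cond_sub_gaussian} all depend on it. The Maurey--Pisier interpolation you set aside is also the short proof given alongside the cited theorem in Wainwright's Chapter~2. It would inflate these constants by $\pi/2$. No rate in the paper would change, only numerical constants such as the $84$ and $168$.

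Your proof does not make the paper self-contained in substance. It moves the external dependence from the Lipschitz concentration theorem to the Gaussian log-Sobolev inequality, which, as you note, carries all the analytic weight.
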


Since the Euclidean norm is 1-Lipschitz w.r.t. the Euclidean metric, Lemma \ref{lem:lip_gaussian} tells us that for any $\vx$, the $\|f(\vx)\|_2$ exhibits concentration similar to each of the components $f_i(\vx)$. Using Lemma \ref{lem:lip_gaussian}, one can show that a version of the BTIS inequality holds for the supremum $\sup_{\vx}\|f(\vx)\|_2$.
\begin{lemma}
\label{lem:borell_tis_vec}
Suppose that Assumption \ref{ass:kernel_boundedness} is satisfied. Let $f_1, \dots, f_{d_s} \sim \gG\gP(0, c(\vx, \vy))$ be independent GPs and let $f = (f_1, \dots, f_{d_s})$. For any $R > 0$, let $\gZ = \sB^{d_s+d_a}(R)$. For any $u > 0$,
\begin{equation*}
\sP\Big(\sup_{\vx \in \gZ}\|f(\vx)\|_2 - \E\Big[\sup_{\vx \in \gZ}\|f(\vx)\|_2\Big] \geq u\Big) \leq e^{-\frac{u^2}{2C}}\,.
\end{equation*}
\end{lemma}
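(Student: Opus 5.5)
The natural route is to rewrite $\sup_{\vx\in\gZ}\|f(\vx)\|_2$ as the supremum of a single real-valued Gaussian process on an enlarged index set, and then apply Lemma \ref{lem:borell_tis}. By duality of the Euclidean norm,
\begin{equation*}
\sup_{\vx \in \gZ}\|f(\vx)\|_2 = \sup_{(\vx, \vu) \in \gZ \times \sB^{d_s}(1)} g(\vx, \vu)\,, \qquad g(\vx, \vu) := \inner{\vu}{f(\vx)} = \textstyle\sum_{i=1}^{d_s} u_i f_i(\vx)\,.
\end{equation*}
Since the $f_i$ are independent centered GPs with common kernel $c$, the process $g$ is a centered Gaussian process indexed by $\gZ \times \sB^{d_s}(1)$, with covariance $\E[g(\vx,\vu)g(\vy,\vv)] = \inner{\vu}{\vv}c(\vx,\vy)$. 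Its maximal variance is
\begin{equation*}
v = \sup_{\vx,\vu}\|\vu\|_2^2\, c(\vx,\vx) \leq C
\end{equation*}
by Assumption \ref{ass:kernel_boundedness}. Once the hypotheses of Lemma \ref{lem:borell_tis} are checked, it gives exactly $\sP(\sup g - \E[\sup g] \geq u) \leq e^{-u^2/(2v)} \leq e^{-u^2/(2C)}$, which is the claim.

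Lemma \ref{lem:borell_tis} is stated for GPs on subsets of $\sR^{d_s+d_a}$ with kernel $c$, so I would remark that BTIS holds for any centered Gaussian process on an arbitrary index set, provided it is separable and its supremum is a.s. finite (equivalently, has finite expectation). The enlarged index set is $\sB^{d_s}(1)$ times a compact subset of Euclidean space, so the form of the statement carries over. For separability, I would restrict both suprema to a countable dense subset of $\gZ\times\sB^{d_s}(1)$. This is legitimate because Assumption \ref{ass:kernel_smoothness} makes $d_c(\vx,\vy)\le\sqrt{2L}\|\vx-\vy\|_2^{\alpha/2}$, so the natural metric of $g$ is dominated by a power of the Euclidean distance, and one may work with a continuous version.

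The main obstacle is verifying $\E[\sup_{\vx\in\gZ}\|f(\vx)\|_2]<\infty$. I would handle it through $\sup_{\vx}\|f(\vx)\|_2 \leq \sum_i \sup_{\vx}|f_i(\vx)|$ and, for each component,
\begin{equation*}
\E\Big[\sup_{\vx}|f_i(\vx)|\Big] \leq \E|f_i(\vx_0)| + \E\Big[\sup_{\vx,\vy}\big(f_i(\vx)-f_i(\vy)\big)\Big] \leq \sqrt{C} + 2\,\E\Big[\sup_{\vx} f_i(\vx)\Big]\,.
\end{equation*}
Then apply Lemma \ref{lem:dudley}. Covering the ball $\sB^{d_s+d_a}(R)$ in Euclidean distance and using the H\"older bound on $d_c$ gives $\log\mathsf{N}(\gZ,d_c,\varepsilon)\lesssim (d_s+d_a)\log(1+ R(2L)^{1/\alpha}/\varepsilon^{2/\alpha})$. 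The upper limit of the entropy integral is $\mathrm{diam}_{d_c}(\gZ)\le 2\sqrt{C}$, so the integral is finite. Alternatively, Lemma \ref{lem:lip_gaussian} could be used pointwise with a union bound over a finite net, but the Dudley route seems cleanest. If one prefers to avoid the duality trick, a direct alternative is to approximate $\gZ$ by finite subsets $\gZ_k\uparrow$ dense and apply Lemma \ref{lem:lip_gaussian} to the map $\xi\mapsto\max_{\vx\in\gZ_k}\|\mA_k \xi\|$. Writing the Gaussian vector $(f(\vx))_{\vx\in\gZ_k}$ as $\mA_k\xi$ with $\xi$ standard, this map is Lipschitz with constant $\max_{\vx}\sqrt{c(\vx,\vx)}\le\sqrt{C}$, which again yields the tail $e^{-u^2/(2C)}$; monotone convergence then passes to the limit $k\to\infty$.
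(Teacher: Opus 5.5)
Your main argument is correct, but it is not the paper's route; your closing fallback is. The paper never invokes the scalar BTIS inequality. For a finite $\gZ=\{\vx_1,\dots,\vx_n\}$ it writes $(f_i(\vx_j))_{i,j}$ as $(\mI_{d_s}\otimes\mC^{1/2})$ applied to a standard Gaussian vector. A gradient computation then shows that each map giving $\|f(\vx_j)\|_2$ is $\sqrt{C_{j,j}}$-Lipschitz. The paper takes the maximum via Lemma \ref{lem:max_lip}, applies Lemma \ref{lem:lip_gaussian}, and passes to the ball by separability and monotone convergence, so in effect it reproves BTIS directly for the vector process.

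Your dual-norm linearization $\sup_{\vx}\|f(\vx)\|_2=\sup_{(\vx,\vu)}\inner{\vu}{f(\vx)}$ instead reduces everything to a scalar Gaussian process. This makes the variance bound $\|\vu\|_2^2c(\vx,\vx)\le C$ a one-liner, and the Lipschitz constant becomes transparent as a supremum of linear functionals. It also avoids the paper's gradient formula, which strictly holds only away from the set where the relevant block vanishes, not merely away from $\vz=0$.

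The price is that you need BTIS in its general form, for a new process on $\gZ\times\sB^{d_s}(1)$ with covariance $\inner{\vu}{\vv}c(\vx,\vy)$, together with its hypotheses. To check those hypotheses you lean on Assumption \ref{ass:kernel_smoothness}, which the lemma does not assume. That detour is unnecessary. Separability is a standing convention in the paper (Definition \ref{def:separable}). If $\E[\sup_{\vx\in\gZ}\|f(\vx)\|_2]=\infty$ the stated inequality is void, and if it is finite BTIS applies as is. So you can drop the Dudley--H\"older step entirely. The paper's finite-set route has the mild advantage of needing no a priori integrability of the supremum, since the tail bound comes straight from Lemma \ref{lem:lip_gaussian}.
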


The proof of Lemma \ref{lem:borell_tis_vec} can be found in Appendix \ref{sec:gp_concentration}, and is inspired by the proof of the standard BTIS inequality from \citet{handel2016probability} (cf. Lemma 6.12). The idea is to establish that for any finite subset $\gZ \subset \sB^{d_s+d_a}(R)$, the supremum $\sup_{\vx \in \gZ}\|f(\vx)\|_2$ is equal in distribution to a $\sqrt{C}$-Lipschitz function of a finite-dimensional standard Gaussian random vector. By Lemma \ref{lem:lip_gaussian}, this means that the supremum is $\sqrt{C}$-sub-Gaussian. One can then use separability (cf. Definition \ref{def:separable}) to show that this holds even when $\gZ = \sB^{d_s+d_a}(R)$. As one might expect, the chaining argument can be applied to $\sup_{\vx}\|f(\vx)\|_2$ as well.
\begin{lemma}
\label{lem:chaining}
Let $f_1, \dots, f_{d_s} \sim \gG\gP(0, c(\vx, \vy))$ be independent Gaussian processes and let $f = (f_1, \dots, f_{d_s})$. For any $R > 0$, let $\gZ = \sB^{d_s+d_a}(R)$. If Assumption \ref{ass:kernel_boundedness} is satisfied,
\begin{align*}
\E\Big[\sup_{\vx \in \gZ}\|f(\vx)\|_2\Big] &\leq 12\int_0^{\infty}\sqrt{\log\mathsf{N}(\gZ, d_c, \varepsilon)}\mathrm{d}\varepsilon\\
&\quad+ (12 + \sqrt{d_s})\sqrt{C}\,.
\end{align*}
\end{lemma}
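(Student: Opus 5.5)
Our plan is to reduce the vector-valued supremum to a scalar Gaussian process indexed by an enlarged set, and then apply Lemma \ref{lem:dudley}. The norm has the dual representation $\|f(\vx)\|_2 = \sup_{\vu \in \mathbb{S}^{d_s-1}} \inner{\vu}{f(\vx)}$, so
\begin{equation*}
\sup_{\vx \in \gZ}\|f(\vx)\|_2 = \sup_{(\vx,\vu) \in \gZ \times \mathbb{S}^{d_s-1}} g(\vx,\vu)\,, \qquad g(\vx,\vu) := \textstyle\sum_{i=1}^{d_s} u_i f_i(\vx)\,.
\end{equation*}
Since the $f_i$ are independent, $g$ is a centred Gaussian process on $\gZ\times\mathbb{S}^{d_s-1}$ with covariance $\inner{\vu}{\vv}\, c(\vx,\vy)$.

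The obvious route, Dudley applied directly to $g$, gives an entropy integral over $\gZ\times\mathbb{S}^{d_s-1}$ that mixes the two factors. It would yield something like $\sqrt{d_s}$ inside a log-factor, not the clean additive form stated. We would therefore split instead. Fix a reference point $\vx_0 \in \gZ$ (for example the origin) and write
\begin{equation*}
\sup_{\vx}\|f(\vx)\|_2 \le \|f(\vx_0)\|_2 + \sup_{\vx}\|f(\vx) - f(\vx_0)\|_2\,.
\end{equation*}
The first term is handled by Jensen: $\E\|f(\vx_0)\|_2 \le \sqrt{d_s c(\vx_0,\vx_0)} \le \sqrt{d_s C}$. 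This produces the $\sqrt{d_s}\sqrt{C}$ term.

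For the increment term we run the chaining argument behind Lemma \ref{lem:dudley} directly on the vector process. Take a sequence of $2^{-k}D$-nets $\gZ_k$ of $\gZ$ with respect to $d_c$, where $D = \mathrm{diam}_{d_c}(\gZ) \le 2\sqrt{C}$, and let $\pi_k(\vx)$ be the nearest point. Telescope $f(\vx) - f(\vx_0) = \sum_k \big(f(\pi_k\vx) - f(\pi_{k-1}\vx)\big)$, with $\gZ_0=\{\vx_0\}$, and use the triangle inequality for $\|\cdot\|_2$ at each link. Each link $\|f(\vy)-f(\vz)\|_2$ is the Euclidean norm of a Gaussian vector with i.i.d.\ coordinates of variance $d_c^2(\vy,\vz)$. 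By Lemma \ref{lem:lip_gaussian}, its deviation above its mean is sub-Gaussian with parameter $d_c(\vy,\vz)$. Here the dimension enters only through the mean $\le \sqrt{d_s}\,d_c(\vy,\vz)$.

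To avoid picking up a $\sqrt{d_s}$ factor at every scale, we would prefer a cleaner route. Note that $f(\vx) - f(\vx_0)$ has the same structure as before, so $\sup_{\vx}\|f(\vx)-f(\vx_0)\|_2 = \sup_{\vx,\vu} \big(g(\vx,\vu) - g(\vx_0,\vu)\big)$. The increment of $h(\vx,\vu):=g(\vx,\vu)-g(\vx_0,\vu)$ in $\vx$ at fixed $\vu$ has standard deviation $d_c(\vx,\vy)$, uniformly in $\vu$, so the metric in the $\vx$ direction is exactly $d_c$. The remaining approach is to bound $\sup_{\vu}$ of the chained maxima. We would union-bound over the finite nets $\gZ_k\times\gZ_{k-1}$ and the unit vector $\vu$ simultaneously. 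The max over $\vu$ of a single link is the norm of that link, whose expectation we must control without paying $\sqrt{d_s}$. This is where the main obstacle lies.

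Our fallback is to accept the mean term and absorb it differently. We would write $\E\sup_{\vx}\|f(\vx)\|_2 \le \E\sup_{\vx} \|f(\vx)\|_2 \mathbb{1}$ and compare to the scalar case via the BTIS-type Lemma \ref{lem:borell_tis_vec} together with $\E\|f(\vx_0)\|_2\le\sqrt{d_sC}$. The extra $12\sqrt{C}$ would then come from the top chaining scale (the $\varepsilon \approx D \le 2\sqrt C$ regime, or a discretisation of the sphere at constant mesh contributing $O(\sqrt{C})$). Concretely, we would cover $\mathbb{S}^{d_s-1}$ by a $1/2$-net $\gN$, use $\|\vv\|_2 \le 2\max_{\vu\in\gN}\inner{\vu}{\vv}$, and apply Lemma \ref{lem:dudley} to the scalar processes $\inner{\vu}{f}$, each with kernel $c$. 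We expect that bookkeeping the $\log|\gN|\approx d_s$ term so that it appears additively as $\sqrt{d_s C}$ rather than multiplicatively is the delicate point. We would handle it by splitting $\sqrt{a+b}\le\sqrt a+\sqrt b$ inside the entropy integral over $[0,D]$, with $D\le 2\sqrt C$.
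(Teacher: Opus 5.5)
Your proposal does not reach the inequality as stated. It ends with two partial routes, and neither can deliver the constants $12$ and $(12+\sqrt{d_s})\sqrt{C}$.

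The reference-point split loses in an essential way. $\E\sup_{\vx}\|f(\vx)-f(\vx_0)\|_2\ge\sup_{\vx}\E\|f(\vx)-f(\vx_0)\|_2$, and the right side is of order $\sqrt{d_s}\,\sup_{\vx}d_c(\vx,\vx_0)$. Adding $\E\|f(\vx_0)\|_2\approx\sqrt{d_s\,c(\vx_0,\vx_0)}$, the route carries $\sqrt{d_s}\big(\sqrt{c(\vx_0,\vx_0)}+\sup_{\vx}d_c(\vx,\vx_0)\big)$ up to lower-order terms. Take a kernel whose entropy integral does not grow with $d_s$, e.g.\ $c(\vx,\vy)=C\cos(x_1-y_1)$ with $R\ge\pi/2$. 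Then the quantity $\E\|f(\vx_0)\|_2+\E\sup_{\vx}\|f(\vx)-f(\vx_0)\|_2$ itself exceeds the right-hand side once $d_s$ is large.

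The fallback, carried out in full, uses $\|\vv\|_2\le 2\max_{\vu\in\gN}\inner{\vu}{\vv}$ with $|\gN|\le 5^{d_s}$. It then applies Dudley to the scalar process $(\vx,\vu)\mapsto\inner{\vu}{f(\vx)}$ on $\gZ\times\gN$, whose covering numbers are at most $5^{d_s}\mathsf{N}(\gZ,d_c,\varepsilon)$ and whose diameter is at most $2\sqrt{C}$. This gives
\begin{equation*}
\E\Big[\sup_{\vx\in\gZ}\|f(\vx)\|_2\Big]\le 24\int_0^{\infty}\sqrt{\log\mathsf{N}(\gZ,d_c,\varepsilon)}\,\mathrm{d}\varepsilon+48\sqrt{\log 5}\,\sqrt{d_sC}\,.
\end{equation*}
The net factor $2$ multiplies the whole supremum, entropy integral included, and no splitting $\sqrt{a+b}\le\sqrt a+\sqrt b$ removes it. A finer $\delta$-net only replaces $2$ by $(1-\delta)^{-1}$, while the coefficient of $\sqrt{d_sC}$ stays above $24\sqrt{\log 3}$. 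Lemma~\ref{lem:borell_tis_vec} does not help here: it is concentration around the mean and cannot bound the mean itself.

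The missing idea is a Gaussian comparison rather than chaining on a product set. Let $X_{\vx,\vu}:=\inner{\vu}{f(\vx)}$ and $Y_{\vx,\vu}:=\sqrt{C}\inner{\vu}{g}+\tilde f(\vx)$. Here $g$ is a standard Gaussian vector in $\sR^{d_s}$, independent of a scalar $\tilde f\sim\gG\gP(0,c)$. For unit $\vu,\vv$,
\begin{equation*}
\E\big[(Y_{\vx,\vu}-Y_{\vy,\vv})^2\big]-\E\big[(X_{\vx,\vu}-X_{\vy,\vv})^2\big]=2\big(1-\inner{\vu}{\vv}\big)\big(C-c(\vx,\vy)\big)\ge 0\,.
\end{equation*}
Apply Sudakov--Fernique (i.e.\ Chevet's inequality) on finite subsets of $\gZ\times\mathbb{S}^{d_s-1}$, then pass to the limit by monotone convergence and separability as in the paper. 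With Lemma~\ref{lem:dudley} this gives
\begin{equation*}
\E\Big[\sup_{\vx}\|f(\vx)\|_2\Big]\le\sqrt{C}\,\E\|g\|_2+\E\Big[\sup_{\vx}\tilde f(\vx)\Big]\le\sqrt{d_sC}+12\int_0^\infty\sqrt{\log\mathsf{N}(\gZ,d_c,\varepsilon)}\,\mathrm{d}\varepsilon\,,
\end{equation*}
which is stronger than the statement.

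The paper instead chains directly on the vector process, as you started to do. It splits each link $\|f(\pi_k(\vx))-f(\pi_{k-1}(\vx))\|_2$ into its mean and a centred part, handling the centred part with Lemma~\ref{lem:lip_gaussian} and a finite maximal inequality. Your concern about the link means is justified. The paper bounds $\sup_{\vx}\E\|f(\pi_k(\vx))-f(\pi_{k-1}(\vx))\|_2$ by $3\cdot 2^{-k}$, but $\E\|f(\vy)-f(\vz)\|_2^2=d_s\,d_c^2(\vy,\vz)$ only gives $3\sqrt{d_s}\,2^{-k}$. That argument as written therefore yields $13\sqrt{d_sC}$ rather than $(12+\sqrt{d_s})\sqrt{C}$. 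The comparison argument is what actually secures the stated form.
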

The proof of Lemma \ref{lem:chaining} can be found in Appendix \ref{sec:expected_suprema}, and is inspired by the proof of the chaining argument from \citet{handel2016probability} (cf. Theorem 5.24). Compared to the bound in Lemma \ref{lem:dudley}, the bound in Lemma \ref{lem:chaining} has an extra term. This is because $\|f(\vx)\|_2$ is not zero-mean. In Appendix \ref{sec:expected_suprema}, we derive bounds on the covering number and diameter of $\sB^{d_s+d_a}(R)$ w.r.t. $d_c$, where $c$ is any kernel that satisfies our smoothness and boundedness assumptions. As described in Section \ref{sec:gp_suprema}, these bounds can be combined with Lemma \ref{lem:chaining} to obtain an explicit bound for the supremum (cf. Lemma \ref{lem:exp_sup_bound}).
Lemma \ref{lem:gp_norm_tail_bound} shows that when $u$ is at least twice as big as this upper bound, $\sup_{\vx \in \sB^{d_s+d_a}(R)}\|f(\vx)\|_2$ satisfies a tail bound similar to the one in \eqref{eqn:tail_estimate}.

\begin{lemma}
\label{lem:gp_norm_tail_bound}
Suppose that Assumption \ref{ass:kernel_boundedness} and Assumption \ref{ass:kernel_smoothness} are satisfied. Let $f_1, \dots, f_{d_s} \sim \gG\gP(0, c(\vx, \vy))$ be independent GPs and let $f = (f_1, \dots, f_{d_s})$. For any $R > 0$ and any $u$ such that
\begin{equation*}
u \geq 84\alpha^{-1/2}\sqrt{C(d_s+d_a)\log(5+5R^{\alpha}L/C)}\,,
\end{equation*}
we have
\begin{equation}
\sP\big(\textstyle\sup_{\vx \in \sB^{d_s+d_a}(R)}\|f(\vx)\|_2 \geq u\big) \leq e^{-\frac{u^2}{8C}}\,.\label{eqn:main_tail}
\end{equation}
\end{lemma}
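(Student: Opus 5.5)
The plan is to combine Lemma \ref{lem:borell_tis_vec} with an explicit upper bound on $M := \E[\sup_{\vx \in \gZ}\|f(\vx)\|_2]$, where $\gZ = \sB^{d_s+d_a}(R)$. The goal is to show that the stated lower bound on $u$ guarantees $M \leq u/2$. Once that is in hand, we have
\begin{equation*}
\sP\big(\textstyle\sup_{\vx\in\gZ}\|f(\vx)\|_2 \geq u\big) \leq \sP\big(\textstyle\sup_{\vx\in\gZ}\|f(\vx)\|_2 - M \geq u/2\big) \leq e^{-\frac{(u/2)^2}{2C}} = e^{-\frac{u^2}{8C}}\,,
\end{equation*}
which is exactly \eqref{eqn:main_tail}. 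So everything reduces to bounding $M$ via Lemma \ref{lem:chaining}.

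\textbf{Step 1 (diameter and covering numbers w.r.t.\ $d_c$).} By Assumption \ref{ass:kernel_boundedness}, $d_c(\vx,\vy)^2 \leq c(\vx,\vx)+c(\vy,\vy)+2|c(\vx,\vy)| \leq 4C$, so $\mathrm{diam}_{d_c}(\gZ) \leq 2\sqrt{C}$. The entropy integral can therefore be truncated at $2\sqrt{C}$.

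For the covering numbers, write $d_c^2(\vx,\vy) = [c(\vx,\vx)-c(\vx,\vy)] + [c(\vy,\vy)-c(\vy,\vx)]$ and apply Assumption \ref{ass:kernel_smoothness} to each bracket (using symmetry of $c$). This gives $d_c(\vx,\vy) \leq \sqrt{2L}\,\|\vx-\vy\|_2^{\alpha/2}$. Hence a Euclidean $\delta$-net with $\delta = (\varepsilon^2/(2L))^{1/\alpha}$ is an $\varepsilon$-net for $d_c$. With $D := d_s+d_a$ and the standard volumetric bound $\mathsf{N}(\sB^D(R), d_2, \delta) \leq (1+2R/\delta)^D$, we obtain
\begin{equation*}
\log\mathsf{N}(\gZ,d_c,\varepsilon) \leq D\log\big(1+2R(2L/\varepsilon^2)^{1/\alpha}\big)\,.
\end{equation*}

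\textbf{Step 2 (entropy integral).} Substitute $\varepsilon = \sqrt{C}\,t$ with $t\in(0,2]$. Then $1+2R(2L/(Ct^2))^{1/\alpha} \leq (1+2^{1/\alpha}\cdot 2R\,(L/C)^{1/\alpha})\,t^{-2/\alpha}$ for $t\le 1$, with a similar estimate on $[1,2]$. This makes the log at most $\alpha^{-1}\big[\log(c_0(1+R^\alpha L/C)) + 2\log(1/t)\big]$ up to absolute constants. Using $\sqrt{a+b}\leq\sqrt a+\sqrt b$ and $\int_0^2\sqrt{\log(1/t)_+}\,dt<\infty$, the integral is bounded by an absolute constant times $\sqrt{C D\alpha^{-1}\log(5+5R^\alpha L/C)}$.

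The additive term $(12+\sqrt{d_s})\sqrt C$ from Lemma \ref{lem:chaining} is absorbed into the same expression, because $\log 5 > 1$, $\alpha\le 1$ and $D \geq d_s \geq 1$.

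\textbf{Main obstacle.} The delicate part is the constant bookkeeping in Step 2. We need $12\times(\text{integral}) + (12+\sqrt{d_s})\sqrt C \leq 42\,\alpha^{-1/2}\sqrt{CD\log(5+5R^\alpha L/C)}$, so that $u\geq 84(\cdots)$ yields $M\le u/2$. I would first try to package the argument of the log as $\big(5(1+R^\alpha L/C)\big)^{1/\alpha}t^{-2/\alpha}$, using $2^{1/\alpha}\cdot 2 \le 5^{1/\alpha}$ for $\alpha\le1$ and $(1+x)^{1/\alpha}\ge 1+x^{1/\alpha}$-type inequalities. If the constants do not close, the fallback is a crude bound on $\int_0^2\sqrt{\log(e/t)}\,dt$ together with loose absorption of the additive term. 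Only the existence of an absolute constant matters structurally; $84$ is simply what such bookkeeping produces.
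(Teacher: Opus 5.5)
Your route is the paper's. There, Lemma~\ref{lem:exp_sup_bound} bounds $\E[\sup_{\vx\in\sB^{d_s+d_a}(R)}\|f(\vx)\|_2]$ by $42\alpha^{-1/2}\sqrt{C(d_s+d_a)\log(5+5R^{\alpha}L/C)}$ exactly as in your Steps 1--2 (Lemma~\ref{lem:chaining}, the $2\sqrt{C}$ diameter, and the covering bound from $d_c\le\sqrt{2L}\,d_2^{\alpha/2}$). Lemma~\ref{lem:borell_tis_vec} is then applied with $u\ge 2M$, so that $u-M\ge u/2$.

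\textbf{The gap is the explicit constant.} The statement asserts $84$, not just ``some absolute constant'', so your bookkeeping has to reach $42$, and as proposed it does not. Write $D=d_s+d_a$ and $\Lambda=\log(5+5R^{\alpha}L/C)\ge\log 5$.
\begin{itemize}
\item Inflating the log argument to $(5(1+R^{\alpha}L/C))^{1/\alpha}t^{-2/\alpha}$ and then splitting with $\sqrt{a+b}\le\sqrt a+\sqrt b$ gives $12$ times the entropy integral at most $\sqrt{CD/\alpha}\,(24\sqrt{\Lambda}+12\sqrt{\pi/2})$. That is about $35.9\,\alpha^{-1/2}\sqrt{CD\Lambda}$.
\item Absorbing $(12+\sqrt{d_s})\sqrt C$ using only $D\ge 1$ adds $13/\sqrt{\log 5}\approx 10.2$.
\end{itemize}
The total is roughly $46$, and still about $43.3$ with $D\ge 2$. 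Your ``crude fallback'' would be worse. As written, you would prove the tail bound only with a constant near $90$.

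\textbf{The fix is small.}
\begin{itemize}
\item The paper substitutes $\varepsilon=2\sqrt{C}\,\delta$. The constant inside the log becomes $2R(L/(2C))^{1/\alpha}\le(R^{\alpha}L/C)^{1/\alpha}$, so no factor $5$ is spent there.
\item It recombines $\sqrt{\log(1+x)}+\sqrt{\pi/2}\le\sqrt{2\log(e^{\pi/2}(1+x))}$ and uses $e^{\pi/2}\le 5$. The $5$ in the final log thus pays for $\int_0^1\sqrt{\log(1/\delta)}\,\mathrm{d}\delta=\sqrt{\pi}/2$. This gives $24\sqrt2\approx 33.9$ in units of $\alpha^{-1/2}\sqrt{CD\Lambda}$.
\item It uses $D\ge 2$ to get $12\sqrt C+\sqrt{Cd_s}\le(6\sqrt2+1)\sqrt{CD}$, which contributes about $7.5$. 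The total is about $41.4\le 42$.
\end{itemize}
Alternatively, keep your substitution and packaging, but bound the $t\in(0,1]$ piece by Jensen instead of splitting: $\int_0^1\sqrt{\Lambda+2\log(1/t)}\,\mathrm{d}t\le\sqrt{\Lambda+2}$. This gives about $30+10.2\le 42$ even with only $D\ge1$.
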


The proof of Lemma \ref{lem:gp_norm_tail_bound} can be found in Appendix \ref{sec:gp_tail_bound_norm}.

\subsection{Tail Bound for the Norm of the Largest State}
\label{sec:state_norm_tail}

We can now show that the states generated by running the algorithm remain close to the origin. We begin by formalizing the argument described at the beginning of this section. Let $R_1, \dots, R_H$ be a sequence of radii and let us define $A_{n,h}$ to be the event that the state observed at step $h$ of episode $n$ has norm less than $R_h$, i.e.,
\begin{equation*}
A_{n,h} := \{\|\vs_{n,h}\|_2 \leq R_h\}\,.
\end{equation*}
In Appendix \ref{sec:indicator_tricks} (see also Lemma \ref{lem:fail_prob_bound}), we prove that
\begin{equation*}
\sI\{\cup_{n=1}^{N}\cup_{h=1}^{H}A_{n,h}^{\mathsf{c}}\} \leq \textstyle\sum_{n=1}^{N}\textstyle\sum_{h=1}^{H}\sI\{A_{n,h}^{\mathsf{c}}\}\sI\{A_{n,h-1}\}\,.
\end{equation*}
From the definition of $A_{n,h}$, this means that
\begin{align}
&\sP(\cup_{n=1}^{N}\cup_{h=1}^{H}\{\|\vs_{n,h}\|_2 > R_h\})\label{eqn:sequential_prob_bound}\\
&\leq \sum_{n=1}^{N}\sum_{h=1}^{H}\E[\sI\{\|\vs_{n,h}\|_2 > R_h\}\sI\{\|\vs_{n, h-1}\|_2 \leq R_{h-1}\}]\,.\nonumber
\end{align}
This inequality tells us that if for all $n \in [N]$ and $h \in [H]$ we can control the probability that $\|\vs_{n,h}\|_2$ exceeds $R_h$, given that the $\|\vs_{n,h-1}\|_2 \leq R_{h-1}$, then this automatically controls the probability that the norm of any state, say $\vs_{n,h}$, exceeds the value $R_h$. It remains to determine how large $R_1, \dots, R_H$ need to be to ensure that the right-hand side of \eqref{eqn:sequential_prob_bound} is of the order $1/T$. In Appendix \ref{sec:bad_event_tail}, we use the tail bound in Lemma \ref{lem:gp_norm_tail_bound} to upper bound each term on the right-hand side of \eqref{eqn:sequential_prob_bound}. We then determine specific values of $R_1, \dots, R_H$ that ensure that this bound is at most $2/T$. Finally, we prove by induction that if $T$ is sufficiently large, then these values of $R_1, \dots, R_H$ satisfy $R_h \leq R$ for all $h \in [H]$, where $R$ is given in \eqref{eqn:max_state_norm}. Since
\begin{equation*}
\sP\big({\textstyle\sup_{n \in [N], h \in [H]}}\|\vs_{n,h}\|_2 > R\big) \leq \sP\big(\cup_{n=1}^{N}\cup_{h=1}^{H}A_{n,h}^{\mathsf{c}}\big)\,,
\end{equation*}
this leads to the following result.
\begin{lemma}
\label{lem:bounded_states}
Suppose that Assumption \ref{ass:bounded_actions}, Assumption \ref{ass:kernel_boundedness} and Assumption \ref{ass:kernel_smoothness} are satisfied. Let us define
\begin{equation*}
D := 168\alpha^{-1/2}\sqrt{\max(C, \sigma^2)(d_s+d_a)}\,,
\end{equation*}
and
\begin{equation}
R := D\sqrt{\log(10(T + R_a)\max(1,L/C))}\,.\label{eqn:max_state_norm}
\end{equation}
If the total number of rounds $T$ satisfies
\begin{equation}
T \geq D\sqrt{2\log(10D\max(1,L/C)(R_a+1))}\,,\label{eqn:min_T}
\end{equation}
then
\begin{equation*}
\sP\big({\textstyle\sup_{n \in [N], h \in [H]}}\|\vs_{n,h}\|_2 > R\big) \leq \frac{2}{T}\,.
\end{equation*}
\end{lemma}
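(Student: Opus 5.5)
We follow the route sketched before the statement: combine the union-type bound \eqref{eqn:sequential_prob_bound} with the tail bound of Lemma \ref{lem:gp_norm_tail_bound}, applied to the \emph{true} dynamics $f^{\star}$. The key observation is that $f^{\star}$ is a single draw from the prior, independent of the noise. Under the event $\{\|\vs_{n,h-1}\|_2 \leq R_{h-1}\}$, Assumption \ref{ass:bounded_actions} gives $\|\vx_{n,h-1}\|_2 \leq R_{h-1} + R_a$. Hence, whatever policy is played,
\begin{equation*}
\|\vs_{n,h}\|_2 \leq \sup_{\vx \in \sB^{d_s+d_a}(R_{h-1}+R_a)}\|f^{\star}(\vx)\|_2 + \|\vepsilon_{n,h}\|_2\,.
\end{equation*}
Each summand in \eqref{eqn:sequential_prob_bound} is therefore at most
\begin{equation*}
\sP\Big(\sup_{\vx \in \sB^{d_s+d_a}(R_{h-1}+R_a)}\|f^{\star}(\vx)\|_2 \geq R_h/2\Big) + \sP\big(\|\vepsilon_{n,h}\|_2 \geq R_h/2\big)\,,
\end{equation*}
with $A_{n,0}$ the sure event, so that for $h=1$ only the noise term appears since $\vs_{n,1}=\vepsilon_{n,1}$. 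The first probability is bounded by Lemma \ref{lem:gp_norm_tail_bound} with $u = R_h/2$, giving $e^{-R_h^2/(32C)}$. This holds provided
\begin{equation*}
R_h/2 \geq 84\alpha^{-1/2}\sqrt{C(d_s+d_a)\log(5+5(R_{h-1}+R_a)^{\alpha}L/C)}\,.
\end{equation*}
The second probability is handled by Lemma \ref{lem:lip_gaussian} together with $\E\|\vepsilon\|_2 \leq \sigma\sqrt{d_s}$, giving $e^{-(R_h/2-\sigma\sqrt{d_s})^2/(2\sigma^2)}$.

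Next we choose the radii recursively, with $R_1$ large enough for the noise term alone. For $h \geq 2$ we set
\begin{equation*}
R_h := D\sqrt{\log\big(10(T+R_{h-1}+R_a)\max(1,L/C)\big)}\,,
\end{equation*}
with $D$ as in the statement. The constant $168 = 2\cdot 84$ absorbs the factor $1/2$, and $\max(C,\sigma^2)$ covers both tail terms. With this choice, each exponential is at most $1/T^2$, using $D^2/(32\max(C,\sigma^2)) \gg 2$. Summing over the $NH = T$ terms of \eqref{eqn:sequential_prob_bound} then gives at most $2/T$; the constant $2$ accounts for the two pieces. We also need $\log(5+5x^{\alpha}L/C) \leq \log(10(1+x)\max(1,L/C))$ to verify the precondition of Lemma \ref{lem:gp_norm_tail_bound}, which follows from $x^{\alpha} \leq 1+x$.

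The main obstacle is showing that this recursion does not blow up, namely $R_h \leq R$ for all $h$, uniformly in $H$. We proceed by induction. If $R_{h-1} \leq R$, then
\begin{equation*}
R_h \leq D\sqrt{\log\big(10(T+R+R_a)\max(1,L/C)\big)}\,,
\end{equation*}
and it suffices to show that $R \leq T$ roughly, so that the extra $R$ inside the log costs at most a constant. More precisely, we want a self-consistency inequality of the form $T + R + R_a \leq$ something absorbed into the definition \eqref{eqn:max_state_norm}. A cleaner route is to define $R$ as the fixed point of the map $x \mapsto D\sqrt{\log(10(T+x+R_a)\max(1,L/C))}$. Since this map is concave and grows like $\sqrt{\log x}$, it has a unique fixed point. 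Condition \eqref{eqn:min_T} is exactly what guarantees $R \leq T$, since it is the condition making $D\sqrt{2\log(\cdot)} \leq T$. Given $R \leq T$, the doubled argument $2T$ only contributes $\log 2$, which the constants (e.g.\ $5 \to 10$) absorb, so the bound closes at the stated $R$. Checking these constant manipulations carefully is the delicate part; if they do not close exactly, I would enlarge $D$ by a constant factor.

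Finally, since $R_h \leq R$ for all $h$, the event $\{\sup_{n,h}\|\vs_{n,h}\|_2 > R\}$ is contained in $\cup_{n,h}A_{n,h}^{\mathsf{c}}$. The bound above then yields the claimed probability at most $2/T$.
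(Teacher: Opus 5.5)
Your skeleton is the paper's. It uses the indicator identity behind \eqref{eqn:sequential_prob_bound} and the bound $\|\vs_{n,h}\|_2\le\sup_{\vx\in\sB^{d_s+d_a}(R_{h-1}+R_a)}\|f^{\star}(\vx)\|_2+\|\vepsilon_{n,h}\|_2$ on $A_{n,h-1}$. It then applies Lemma~\ref{lem:gp_norm_tail_bound} with $u=R_h/2$ and a concentration tail for the noise, giving $NH=T$ terms each at most $2/T^2$.

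The gap is exactly at the step you flag as delicate. With your recursion $R_h:=D\sqrt{\log(10(T+R_{h-1}+R_a)\max(1,L/C))}$, the claim $R_h\le R$ is false at the stated $R$. Because $R_{h-1}>0$, every $R_h$ with $h\ge 2$ strictly exceeds $D\sqrt{\log(10(T+R_a)\max(1,L/C))}=R$. So the last step, which needs $\{\|\vs_{n,h}\|_2>R\}\subseteq A_{n,h}^{\mathsf{c}}$, does not go through. Your remedies do not rescue it. Redefining $R$ as the fixed point $R^{\ast}$ of your map, or enlarging $D$, proves a different statement. The factor-$2$ slack you hope to get from ``$5\to 10$'' is not available, because your recursion has already spent the $10$. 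Also, \eqref{eqn:min_T} is calibrated to give $R\le T$ for the stated $R$, not for $R^{\ast}$.

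The missing observation is that Lemma~\ref{lem:gp_norm_tail_bound} only needs $\log(5+5\widetilde R_{h-1}^{\alpha}L/C)$ with $\widetilde R_{h-1}=\sqrt{R_{h-1}^2+R_a^2}$. There is no reason to add $T$ to the radius inside the logarithm. The paper therefore decouples the two requirements and sets $R_h=\max(Z_1,Z_2)$:
\begin{itemize}
\item $Z_1=\sqrt{64\max(C,\sigma^2)\log T}$ makes both exponentials at most $1/T^2$;
\item $Z_2=168\alpha^{-1/2}\sqrt{\max(C,\sigma^2)(d_s+d_a)\log(5+5\widetilde R_{h-1}^{\alpha}L/C)}$ is exactly the lemma's precondition.
\end{itemize}
The induction then only uses $R_{h-1}\le R\le T$. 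This gives $\widetilde R_{h-1}\le T+R_a$ and $5+5(T+R_a)^{\alpha}L/C\le 10(T+R_a)\max(1,L/C)$, so $Z_2\le R$ with nothing self-referential inside the logarithm. Your own form would also close if you put $5$ instead of $10$ in the recursion (for $T\ge 2$). Then $R_{h-1}\le T$ gives $5(T+R_{h-1}+R_a)\le 5(2T+R_a)\le 10(T+R_a)$.

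Finally, $R\le T$ needs an actual argument rather than an appeal to \eqref{eqn:min_T} being the right condition. The paper's Lemma~\ref{lem:T_condition} squares \eqref{eqn:min_T} to get $T^2\ge \tfrac12 T^2+D^2\log(10D\max(1,L/C)(R_a+1))$. It then uses $T^2/D^2\ge\log(T^2/D^2)$ to conclude $T^2\ge D^2\log(10T(R_a+1)\max(1,L/C))\ge R^2$.
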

The proof can be found in Appendix \ref{sec:bounded_states_proof}. Roughly speaking, whenever $T$ is larger than $\sqrt{d_s+d_a}$, we have
\begin{equation*}
\sP\big({\textstyle\sup_{n \in [N], h \in [H]}}\|\vs_{n,h}\|_2 \geq \sqrt{(d_s+d_a)\log(T)}\big) \sim \frac{1}{T}\,.
\end{equation*}
Since $\|\vx_{n,h}\|_2^2 = \|\vs_{n,h}\|_2^2 + \|\va_{n,h}\|_2^2$, Lemma \ref{lem:bounded_states} also gives us a bound on the norm of the largest state-action pair. In particular, let us define
\begin{equation}
\widetilde{R} := \sqrt{R^2 + R_a^2}\,,\label{eqn:max_state_action_norm}
\end{equation}
where $R$ is given in \eqref{eqn:max_state_norm}. Then
\begin{equation*}
\sP\big({\textstyle\sup_{n \in [N], h \in [H]}}\|\vx_{n,h}\|_2 > \widetilde{R}\big) \leq \frac{2}{T}\,.
\end{equation*}
\subsection{Bounding Regret by Value Estimation Error}
We turn to the second part of the proof of our regret bound. The first step is to re-write the Bayesian regret as a sum of value estimation errors $V_{\pi_n, 1}^{\gM_n}(\vs_{n,1}) - V_{\pi_n, 1}^{\gM^{\star}}(\vs_{n,1})$. The $n$\textsuperscript{th} term in the sum is the difference between the value of the policy $\pi_n$ in the sampled MDP $\gM_n$ and the true MDP $\gM_{\star}$. The following lemma is a re-statement of Lemma 1 in \citet{osband2013more}.
\begin{lemma}
\label{lem:tower_rule}
If $\pi_n$ is an optimal policy for $\gM_n$, then
\begin{align*}
\gR_{T} = \E\left[\textstyle\sum_{n=1}^{N}V_{\pi_n, 1}^{\gM_n}(\vs_{n,1}) - V_{\pi_n, 1}^{\gM^{\star}}(\vs_{n,1})\right]\,.
\end{align*}
\end{lemma}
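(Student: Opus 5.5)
The plan is to follow the standard posterior sampling argument of \citet{osband2013more}. We add and subtract $V_{\pi_n,1}^{\gM_n}(\vs_{n,1})$ inside the regret, which gives
\begin{align*}
\gR_T = \E\Big[\textstyle\sum_{n=1}^{N} V_{\pi^\star,1}^{\gM_\star}(\vs_{n,1}) - V_{\pi_n,1}^{\gM_n}(\vs_{n,1})\Big] + \E\Big[\textstyle\sum_{n=1}^{N} V_{\pi_n,1}^{\gM_n}(\vs_{n,1}) - V_{\pi_n,1}^{\gM_\star}(\vs_{n,1})\Big]\,.
\end{align*}
The second expectation is exactly the claimed right-hand side, so it suffices to show that each term of the first sum has zero expectation.

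\textbf{Step 1 (identify both terms as the same map).} Write $\pi^\star = \pi^\star(\gM_\star)$ for an optimal policy of $\gM_\star$, chosen via a fixed measurable selection of optimal policies. Since $\pi_n$ is optimal for $\gM_n$, we have $V_{\pi_n,1}^{\gM_n}(\vs) = \max_{\pi} V_{\pi,1}^{\gM_n}(\vs)$ for every $\vs$. Hence $V_{\pi_n,1}^{\gM_n}(\vs) = G(\gM_n,\vs)$ and $V_{\pi^\star,1}^{\gM_\star}(\vs) = G(\gM_\star,\vs)$, where $G(\gM,\vs) := \max_\pi V_{\pi,1}^{\gM}(\vs)$ is one fixed function. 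This holds regardless of which optimal policy the oracle returns.

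\textbf{Step 2 (posterior sampling identity).} By construction of GP-PSRL, conditional on $\gF_{n-1}$ the sampled dynamics $f^{(n)}$ have the same law as $f^\star$, so $\gM_n \eqd \gM_\star$ given $\gF_{n-1}$. The initial state $\vs_{n,1} = \vepsilon_{n,1}$ is independent of both $\gF_{n-1}$ and the pair $(\gM_n,\gM_\star)$, with the same law $\rho$. Therefore, for any measurable $g$ (here $g = G$),
\[
\E[g(\gM_\star,\vs_{n,1})\mid\gF_{n-1}] = \E[g(\gM_n,\vs_{n,1})\mid\gF_{n-1}]\,.
\]
Taking expectations and applying the tower rule makes each summand of the first sum vanish. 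Summing over $n$ then gives the claimed identity.

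\textbf{Main obstacle: integrability and measurability.} The tower rule needs $G$ to be integrable and measurable, and these technical points are the only real work. Integrability is immediate because Assumption \ref{ass:bounded_rewards} gives $|V_{\pi,1}^{\gM}| \le HR_{\max}$, so every quantity is bounded and all expectations are finite. Measurability of $\gM \mapsto \max_\pi V_{\pi,1}^{\gM}(\vs)$ over the GP sample space is the delicate part. I would handle it by assuming, as is implicit in the statement, that the optimal control oracle is a measurable map. Alternatively, I would note that $G$ is defined through the Bellman recursion \eqref{eqn:bellman} with suprema over actions, which are measurable under the separability of the GPs.
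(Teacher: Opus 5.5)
Your proposal is correct and is essentially the paper's argument: the paper also adds and subtracts $V_{\pi_n,1}^{\gM_n}(\vs_{n,1})$ and uses the tower rule to make the first difference vanish, since $f^{(n)}$ and $f^{\star}$ have the same conditional law given $\gF_{n-1}$. Your extra care fills in details the paper leaves implicit: writing both optimal values as one function $G$ so the choice of optimal policy is irrelevant, the independence of $\vs_{n,1}$, boundedness of the values, and measurability.
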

The reason that the regret can be re-written in this way is that given any history $\gF_{n-1}$, $f^{\star}$ and  $f^{(n)}$ have the same conditional distribution.

\subsection{Bounding Regret by Model Estimation Error}
\label{sec:regret_to_model_error}

Let us now define the good event $A$ as
\begin{equation*}
A := \{{\textstyle\sup_{n \in [N], h \in [H]}}\|\vs_{n,h}\|_2 \leq R\}\,.
\end{equation*}

Having established that the bad event $A^{\mathsf{c}}$ occurs with low probability, we conduct the rest of the regret analysis under the good event, which ensures that $\sup_{n \in [N], h \in [H]}\|\vx_{n,h}\|_2 \leq \widetilde{R}$ (cf. \eqref{eqn:max_state_action_norm}). The next step is a well-established idea in the analysis of model-based reinforcement learning algorithms. We show that the sum of the value estimation errors can be controlled by a sum of model estimation errors. Results of this type are sometimes referred to as simulation lemmas \citep{kearns2002near}.

\begin{lemma}
\label{lem:value_est_to_f_est}
Suppose that Assumption \ref{ass:bounded_rewards} is satisfied. For any event $A$,
\begin{align*}
&\E\bigg[\sI\{A\}\sum_{n=1}^{N}\big(V_{\pi_n,1}^{\gM_n}(\vs_{n,1}) - V_{\pi_n,1}^{\gM_{\star}}(\vs_{n,1})\big)\bigg]\\
&\leq \frac{HR_{\max}}{\sigma}\E\bigg[\sI\{A\}\sum_{n=1}^{N}\sum_{h=1}^{H-1}\|f^{(n)}(\vx_{n,h}) - f^{\star}(\vx_{n,h})\|_2\bigg]\\
&\quad+ 2R_{\max}H\sqrt{2\pi T}\,.
\end{align*}
\end{lemma}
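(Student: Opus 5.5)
The plan is a telescoping (simulation-lemma) decomposition of each value estimation error. Each episode's error splits into a "model mismatch" term that is pointwise controlled by $\|f^{(n)}(\vx_{n,h}) - f^{\star}(\vx_{n,h})\|_2$, plus a martingale-difference term. Because the indicator $\sI\{A\}$ is attached to an arbitrary event, we cannot use that the martingale has mean zero. Instead we bound $\E[\sI\{A\}M] \leq \E|M|$ with Azuma--Hoeffding, and this produces the additive $2R_{\max}H\sqrt{2\pi T}$.

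\textbf{Step 1 (telescoping).} Fix an episode $n$ and write $V^{n}_h := V^{\gM_n}_{\pi_n,h}$ and $V^{\star}_h := V^{\gM_{\star}}_{\pi_n,h}$. Both functions take values in $[-(H-h+1)R_{\max}, (H-h+1)R_{\max}]$ by Assumption \ref{ass:bounded_rewards}. Use \eqref{eqn:bellman} for both MDPs; the reward parts and the policy are shared and cancel. This gives, for each $h \leq H-1$,
\begin{align*}
V^{n}_h(\vs_{n,h}) - V^{\star}_h(\vs_{n,h}) &= \inner{P^{(n)}(\vx_{n,h}) - P^{\star}(\vx_{n,h})}{V^{n}_{h+1}} + \big(V^{n}_{h+1}(\vs_{n,h+1}) - V^{\star}_{h+1}(\vs_{n,h+1})\big) + \xi_{n,h}\,.
\end{align*}
Here $\xi_{n,h}$ collects two centred fluctuations. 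One is from the action draw: the Bellman-operator integral over $\va$ minus its value at the realised $\va_{n,h}$. The other is from the next state: $\inner{P^{\star}(\vx_{n,h})}{V^{n}_{h+1} - V^{\star}_{h+1}}$ minus its realised value at $\vs_{n,h+1}$. At $h=H$ both value functions equal $r$ averaged over $\pi_n(\cdot\mid\vs_{n,H},H)$, so only an action-sampling fluctuation remains and there is no model term. Unrolling from $h=1$ to $H$ gives
\begin{equation*}
V^{n}_1(\vs_{n,1}) - V^{\star}_1(\vs_{n,1}) = \sum_{h=1}^{H-1}\inner{P^{(n)}(\vx_{n,h}) - P^{\star}(\vx_{n,h})}{V^{n}_{h+1}} + \sum_{h=1}^{H}\xi_{n,h}\,.
\end{equation*}

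\textbf{Step 2 (model terms).} For two Gaussians with common covariance $\sigma^2\mI$, Pinsker's inequality gives $\mathrm{TV} \leq \sqrt{\mathrm{KL}/2} = \|f^{(n)}(\vx) - f^{\star}(\vx)\|_2/(2\sigma)$. Since $\|V^{n}_{h+1}\|_\infty \leq HR_{\max}$, we have $|\inner{P - Q}{V}| \leq 2\|V\|_\infty \mathrm{TV}$. Each model term is therefore at most $\frac{HR_{\max}}{\sigma}\|f^{(n)}(\vx_{n,h}) - f^{\star}(\vx_{n,h})\|_2$. This bound is nonnegative, so multiplying by $\sI\{A\}$ and taking expectations yields exactly the first term of the lemma.

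\textbf{Step 3 (martingale terms, the main obstacle).} Order the $T$ fluctuations $\xi_{n,h}$ chronologically (finer, by action-then-next-state if needed). Include $f^{\star}$, $f^{(n)}$, $\pi_n$ and the current history in the filtration. Then, conditionally on the past, $\xi_{n,h}$ has zero mean: the action is drawn from $\pi_n$, and the next state is drawn from $P^{\star}$, which is known given $f^{\star}$. The delicate point is that $V^{n}_{h+1}$ depends on the sampled model. It must therefore be measurable at the time of the step, which holds because $f^{(n)}$ is drawn at the start of the episode. Each increment is bounded by $c := 2HR_{\max}$, since every fluctuation is a difference between a mean and a realisation of a function with range of size at most $2HR_{\max}$.

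Azuma--Hoeffding then gives, for the martingale $M := \sum_{n,h}\xi_{n,h}$, the bound $\sP(|M| \geq t) \leq 2e^{-t^2/(2Tc^2)}$. Integrating this tail yields $\E|M| \leq c\sqrt{2\pi T} = 2R_{\max}H\sqrt{2\pi T}$. If the action and state fluctuations are split into separate increments, they should be grouped per step so that the range per step stays $2HR_{\max}$ and the count stays $T$. Finally, $\E[\sI\{A\}M] \leq \E|M|$, and summing Steps 2 and 3 over $n$ gives the lemma.
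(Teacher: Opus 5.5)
Steps 1 and 2 are correct. Putting the realised-action model terms directly into the telescoping sum, with one martingale absorbing both the action and the next-state fluctuations, is a legitimate streamlining of the paper's argument. The paper instead first obtains action-averaged model terms $\int_{\gA}\inner{P^{(n)}(\vs_{n,h},\va)-P^{\star}(\vs_{n,h},\va)}{V^{n}_{h+1}}\,\pi_n(\va\mid\vs_{n,h},h)\,\mathrm{d}\va$ and converts them to realised-action terms with a second martingale.

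The gap is in Step 3, which is exactly where the explicit constant $2R_{\max}H\sqrt{2\pi T}$ comes from. Your claim that each increment satisfies $|\xi_{n,h}|\le 2HR_{\max}$ is false.

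\begin{itemize}
\item You take mean-minus-realisation of two functions: $g(\vs_{n,h},\cdot)$, where $g(\vs,\va)=\inner{P^{(n)}(\vs,\va)}{V^{n}_{h+1}}-\inner{P^{\star}(\vs,\va)}{V^{\star}_{h+1}}$, and $V^{n}_{h+1}-V^{\star}_{h+1}$. Both take values in $[-2(H-h)R_{\max},2(H-h)R_{\max}]$. Their ranges can therefore have length up to $4(H-h)R_{\max}$, which exceeds $2HR_{\max}$ once $h=1$ and $H\ge 3$; the figure $2(H-h)R_{\max}$ is the range of $V^{n}_{h+1}$ alone.
\item Grouping the two fluctuations per step makes it worse. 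Write $\xi_{n,h}=\E[W_{n,h}\mid\cdot\,]-W_{n,h}$ with $W_{n,h}=\inner{P^{(n)}(\vx_{n,h})-P^{\star}(\vx_{n,h})}{V^{n}_{h+1}}+(V^{n}_{h+1}-V^{\star}_{h+1})(\vs_{n,h+1})\in[-4HR_{\max},4HR_{\max}]$. The only available almost-sure bound is then $|\xi_{n,h}|\le 8HR_{\max}$.
\item With that bound, your symmetric Azuma inequality gives $\E|M|\le 8R_{\max}H\sqrt{2\pi T}$.
\item The sharper Hoeffding-lemma form uses an interval of length $8HR_{\max}$, so the increments are conditionally $4HR_{\max}$-sub-Gaussian. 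Even this gives only $\E|M|\le 4R_{\max}H\sqrt{2\pi T}$.
\end{itemize}
As written, your argument therefore proves the lemma with a constant two to four times too large.

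The repair is the device the paper uses. Since $\sI\{A\}M\le\max(M,0)$, you only need the upper tail: $\E[\sI\{A\}M]\le\int_0^\infty\sP(M\ge x)\,\mathrm{d}x$. With $4HR_{\max}$-sub-Gaussian increments, the one-sided bound is $\sP(M\ge x)\le\exp(-x^2/(32H^2R_{\max}^2T))$, and the integral equals exactly $2R_{\max}H\sqrt{2\pi T}$. The paper reaches the same number by keeping the two martingales separate. Each is built from a quantity confined to $[-2HR_{\max},2HR_{\max}]$, hence $2HR_{\max}$-sub-Gaussian by Hoeffding's lemma, and each is bounded by $R_{\max}H\sqrt{2\pi T}$ through the same one-sided integral.
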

Lemma \ref{lem:value_est_to_f_est} is more or less a combination of two known results. The first is a result from Section 5.1 of \citet{osband2013more}, which expresses the sum of the value estimation errors as a sum of Bellman errors. In Appendix \ref{sec:regret_to_model_error_proofs}, we prove a slight extension of this result (cf.\, Lemma \ref{lem:simulation_event}), which accounts for the fact that both the value estimation errors and the Bellman errors are multiplied by a random indicator variable. The second half of Lemma \ref{lem:value_est_to_f_est} is essentially the same as Lemma 1 in \citet{fan2021model}, and allows us to upper bound the sum of the Bellman errors by the sum of the model estimation errors. In Appendix \ref{sec:regret_to_model_error_proofs} we give a simpler (but less general) proof of this result using Pinsker's inequality (cf.\, Lemma \ref{lem:pinsker}).

\subsection{Bounding the Model Estimation Error}
\label{sec:bounding_model_error}

The final step is to upper bound the sum of the model estimation errors. One way to do this is to construct confidence sets that contain both $f^{\star}$ and $f^{(n)}$ with high probability. The sum of the estimation errors can then be upper bounded by the sum of the widths of these confidence sets. We believe that this idea was first used in the analysis of PSRL by \citet{osband2013more} (see also \citealp{osband2014model, osband2017posterior}). The problem with this approach, at least if one is interested in GP-PSRL, is that it tends to give bounds on the model estimation error that have linear dependence on $\Gamma_T$ (or related quantities such as the Eluder dimension \citealp{russo2013eluder}), which would lead to a final regret bound with linear dependence on $\Gamma_T$.

A second way to upper bound the sum of the estimation errors is to exploit the fact that, conditioned on any history $\gF_{n-1}$, $f^{(n)}$ and $f^{\star}$ are both Gaussian processes, and use bounds for suprema of Gaussian processes. This approach is commonly used in proofs of Bayesian regret bounds for Gaussian process bandits \citep{srinivas2012information}. However, even in recent work (e.g.\,, \citealp{takeno2023randomized, takeno2023posterior}), it is typically assumed that the kernel satisfies much stronger smoothness conditions than the H\"{o}lder continuity condition in Assumption \ref{ass:kernel_smoothness}. Namely, \citet{srinivas2012information} and \citet{takeno2023randomized, takeno2023posterior} assume that samples from the GP are differentiable, and that the partial derivatives are uniformly bounded with high probability. This is satisfied for stationary kernels that are four times differentiable \citep{ghosal2006posterior, srinivas2012information}, which rules out Mat\'{e}rn kernels with $\nu \leq 2$. \citet{contal2016stochastic} improved upon this idea by replacing the single-step discretization techniques used in other works with a form of the chaining method, which allows similar results to be proved under much weaker smoothness assumptions (similar to our Assumption \ref{ass:kernel_smoothness}). We adopt this approach in our setting, and show that under weak smoothness conditions, one can derive bounds on the estimation error that have square root dependence on the expected information gain.

\begin{lemma}
\label{lem:est_error_bound}
Suppose that assumptions \ref{ass:bounded_actions}, \ref{ass:kernel_boundedness} and \ref{ass:kernel_smoothness} are satisfied. Then
\begin{align*}
&\E\left[\sI\{A\}\textstyle\sum_{n=1}^{N}\textstyle\sum_{h=1}^{H-1}\|f^{(n)}(\vx_{n,h}) - f^{\star}(\vx_{n,h})\|_2\right]=\\
&\hspace{7.5em} \gO\left(\sqrt{(d_s+d_a)\gamma_{T}(\sigma^2, \widetilde{R})T\log(T)}\right)\,.
\end{align*}
\end{lemma}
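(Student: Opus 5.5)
The plan is to bound each error term by posterior standard deviations at the visited points, using a uniform (chaining-based) control of the normalized error, and then sum via the information gain. Write $\mu_{n-1} = (\mu_{n-1,1},\dots,\mu_{n-1,d_s})$. By the triangle inequality,
\begin{equation*}
\|f^{(n)}(\vx_{n,h}) - f^{\star}(\vx_{n,h})\|_2 \leq \|f^{(n)}(\vx_{n,h}) - \mu_{n-1}(\vx_{n,h})\|_2 + \|f^{\star}(\vx_{n,h}) - \mu_{n-1}(\vx_{n,h})\|_2\,,
\end{equation*}
and conditioned on $\gF_{n-1}$ both $f^{(n)} - \mu_{n-1}$ and $f^{\star} - \mu_{n-1}$ are distributed as $d_s$ independent copies of $\gG\gP(0, c_{n-1})$. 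The point $\vx_{n,h}$, however, depends on $f^{\star}$ (and on $f^{(n)}$ through $\pi_n$), so it cannot be treated as fixed. I would therefore bound each term by $\sigma_{n-1}(\vx_{n,h})\,Z_n$, where
\begin{equation*}
Z_n := \sup_{\vx \in \sB^{d_s+d_a}(\widetilde{R})}\frac{\|g(\vx)\|_2}{\sigma_{n-1}(\vx)}
\end{equation*}
and $g$ is either $f^{(n)}-\mu_{n-1}$ or $f^{\star}-\mu_{n-1}$. On $A$ every $\vx_{n,h}$ lies in this ball, which is exactly where the indicator is used. By Assumption \ref{ass:kernel_boundedness}, $\sigma_{n-1}>0$, so the ratio is well defined.

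The main obstacle is controlling $\E[Z_n^2 \mid \gF_{n-1}]$ uniformly in $n$ by $\gO((d_s+d_a)\log T)$. The normalized process $g_i(\vx)/\sigma_{n-1}(\vx)$ is a GP with unit variance. Its natural distance, however, is not obviously controlled by the prior Hölder constant, because the posterior variance can be tiny near data points and blow up the normalized increments. To handle this I would follow \citet{contal2016stochastic} and avoid normalizing inside the chaining. Concretely, I would chain directly on $g$ with respect to the posterior natural distance $d_{c_{n-1}} \leq d_c$, whose covering numbers of $\sB(\widetilde R)$ are bounded through Assumption \ref{ass:kernel_smoothness} as in Appendix \ref{sec:expected_suprema}. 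At each visited point I would write $g(\vx_{n,h}) = g(\pi_k(\vx_{n,h})) + \sum_{j\ge k}(\text{chain increments})$. The increments are then handled at scale $\varepsilon_j = 2^{-j}$, and the base term is handled at a finite net of resolution $\varepsilon_k \approx 1/T$. At the net points, a union bound over $\mathsf{N}$ Gaussians of variance $\sigma_{n-1}^2$ gives $\|g(\vy)\|_2 \lesssim \sigma_{n-1}(\vy)\sqrt{(d_s+d_a)\log(T\widetilde R L)}$. Replacing $\sigma_{n-1}(\pi_k(\vx))$ by $\sigma_{n-1}(\vx)$ costs at most $d_c(\vx,\pi_k(\vx)) \le 1/T$, since $\sigma_{n-1}$ is $1$-Lipschitz in $d_{c_{n-1}}\le d_c$. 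The chain tail, by Lemma \ref{lem:chaining}-type bounds restricted to balls of radius $\varepsilon_k$, contributes $\gO(\sqrt{d_s+d_a}\,\varepsilon_k\log T)$ in expectation. Summed over the $T$ pairs, these correction terms give $\gO(\sqrt{d_s+d_a}\,\mathrm{polylog}\,T)$, which is negligible. Since $\widetilde R = \gO(\sqrt{\log T})$ by \eqref{eqn:max_state_norm}, the logarithm of the covering number is $\gO((d_s+d_a)\log T)$.

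Combining these pieces, on $A$ the total error is at most $\sum_{n,h} \sigma_{n-1}(\vx_{n,h})\,W_n + \text{lower order}$, where $\E[W_n^2\mid\gF_{n-1}] = \gO((d_s+d_a)\log T)$. Applying Cauchy--Schwarz to the expectation gives
\begin{equation*}
\E\Big[\textstyle\sum \sigma_{n-1} W_n \sI\{A\}\Big] \le \sqrt{\E\big[\sI\{A\}\textstyle\sum\sigma_{n-1}^2(\vx_{n,h})\big]}\;\sqrt{\E\big[\textstyle\sum_{n,h} W_n^2\big]}\,.
\end{equation*}
The second factor is $\gO(\sqrt{T(d_s+d_a)\log T})$. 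For the first factor, the episode-level (batched) posterior needs a standard comparison with the per-step posterior. I would use $\sigma_{n-1}^2 \le C/\log(1+C/\sigma^2)\cdot\log(1+\sigma_{n-1}^2/\sigma^2)$, and bound the delayed-update variances by sequential ones up to a factor depending on $H$. Alternatively, I could use the determinant identity over episodes directly: $\sum_h \log(1+\sigma_{n-1}^2(\vx_{n,h})/\sigma^2)$ versus $\log\det$ of the episode block, which is handled by the inequality $\log\det(I+M)\ge$ a sum relative to $\log\det$ of the diagonal, together with an extra $\log\det$ term that is controlled. This yields $\gO(\gamma_T(\sigma^2,\widetilde R))$ with the indicator $\sI\{A\}$ matching definition \eqref{eqn:def_info}. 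Multiplying the two factors gives the claimed $\gO(\sqrt{(d_s+d_a)\gamma_T T\log T})$. The batched-variance step is the second place I expect friction.
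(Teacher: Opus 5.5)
\textbf{Verdict: genuine gap.} Your route loses a factor $\sqrt{H}$ in the leading term, at the step where you combine the net maximum with the batched posterior variances.

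\textbf{What works.} Centering at $\mu_{n-1}$ is fine. So is using deterministic $d_c$-nets, which makes the union bound hold conditionally on $\gF_{n-1}$. The estimate $|\sigma_{n-1}(\vx)-\sigma_{n-1}(\vy)|\le d_{c_{n-1}}(\vx,\vy)\le d_c(\vx,\vy)$ is also correct. Together these give a valid reduction to $\sum_{n,h}\sigma_{n-1}(\vx_{n,h})W_n$ plus $\gO(\sqrt{(d_s+d_a)\log T})$. This part even avoids the paper's detour through the discontinuous kernel $\widetilde{c}$ and proper covers.

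\textbf{Where it fails.} Your Cauchy--Schwarz puts the squared \emph{batched} variances $\E[\sI\{A\}\sum_{n,h}\sigma_{n-1}^2(\vx_{n,h})]$ into one factor. This quantity is not $\gO(\gamma_T(\sigma^2,\widetilde{R}))$ with an $H$-free constant. An episode that keeps revisiting (nearly) one fresh point contributes about $HC$ to it, but only about $\frac12\log(1+HC/\sigma^2)$ to the information gain.
\begin{itemize}
\item The comparison you suggest is Lemma~\ref{lem:elliptical_idx}. It gives $\sigma_{n-1}^2(\vx_{n,h})\le(1+HC/\sigma^2)\,\widetilde{\sigma}_{t-1}^2(\vz_t)$.
\item The argument of \citet{fan2021model} (Lemma~\ref{lem:elliptical_potential_delay}) gives a bound of order $H\gamma_N$.
\item Either way you obtain $\gO(\sqrt{H(d_s+d_a)\gamma_T T\log T})$.
\item Your log-det alternative runs the wrong way. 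For the within-episode posterior covariance matrix $K_n$, Hadamard's inequality gives $\log\det(\mI+\sigma^{-2}K_n)\le\sum_h\log(1+\sigma_{n-1}^2(\vx_{n,h})/\sigma^2)$. The correction term you would need to control is exactly the within-episode redundancy in the example above.
\end{itemize}
The extra $\sqrt{H}$ is harmless only if $H$ is absorbed into the constant. Otherwise, through Lemma~\ref{lem:value_est_to_f_est}, it turns the $H\sqrt{\gamma_T T}$ of Theorem~\ref{thm:regret_bound} into $H^{3/2}\sqrt{\gamma_T T}$.

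\textbf{The missing idea.} Keep the batching penalty out of the leading term. To do this, sum batched standard deviations (first powers), so that only \emph{sequential} variances are paired with $\gamma_T$. The paper does this in two steps.
\begin{enumerate}
\item It first removes the random factor. Take $X$ to be the net maximum (your $W_n$) and $Y=\sigma_{n-1}\le\sqrt{C}$. Then $XY\le uY+\sqrt{C}\,X\sI\{X\ge u\}$ for a deterministic $u\asymp\sqrt{(d_s+d_a)\log T}$. The conditional sub-Gaussian tail (Lemma~\ref{lem:cond_tail}) makes the second part sum to $\gO(1)$.
\item It then applies Lemma~\ref{lem:elliptical_idx} and Cauchy--Schwarz, as in \citet{vakili2024kernel}:
\begin{equation*}
\sum_{t=1}^{T}\widetilde{\sigma}_{\lfloor\frac{t-1}{H}\rfloor H}(\vz_t)\le\sqrt{\sum_{t=1}^{T}\widetilde{\sigma}_t^2(\vz_t)}\,\sqrt{T+\frac{H}{\sigma^2}\sum_{t=1}^{T}\widetilde{\sigma}_{\lfloor\frac{t-1}{H}\rfloor H}^2(\vz_t)}\,.
\end{equation*}
The first factor is $\gO(\sqrt{\gamma_T})$. $H$ enters only through the additive term $H^2\gamma_N$ in the second factor, which is lower order because $\gamma_N=o(N)$ (Lemma~\ref{lem:sublinear_info}).
\end{enumerate}
You can keep your Cauchy--Schwarz if you reorder the steps. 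First bound each batched standard deviation by the sequential one times $\sqrt{1+S_n/\sigma^2}$, where $S_n$ is the within-episode sum of batched variances. Then handle $\E[S_nW_n^2]$ with the same truncation at $u$. Truncation is needed because $S_n\le HC$ is bounded but not $\gF_{n-1}$-measurable.
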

The proof of Lemma \ref{lem:est_error_bound} is in Appendix \ref{sec:est_error_bound}. The general idea is to separate the estimation error into a discretized estimation error and two discretization error terms. For some $\varepsilon \in (0, \widetilde{R}]$, let $B_{\varepsilon}$ be a minimal $\varepsilon$-cover of $\sB^{d_s+d_a}(\widetilde{R})$ w.r.t. $d_2$ and let $\omega: \sB^{d_s + d_a}(\widetilde{R}) \to B_{\varepsilon}$ be any function that maps points in $\sB^{d_s + d_a}(\widetilde{R})$ to the closest point in $B_{\varepsilon}$. We upper bound the discretized estimation error by the supremum of a normalized estimation error (over $B_{\varepsilon}$) multiplied by the posterior variance. Since the kernel function is assumed to be bounded, the posterior variance is a bounded random variable. We show that the supremum of the normalized estimation error is conditionally sub-Gaussian (cf. Lemma \ref{lem:cond_sub_gaussian}). We then use a trick from \citet{schwartz2025optimistic} (cf. their Lemma 18), which allows us to upper bound the expected value of the product of a sub-Gaussian random variable and a bounded, non-negative random variable (cf. Lemma \ref{lem:est_to_var}). At this point, we have reduced the problem of bounding the discretized estimation error to the problem of bounding the expectation of a sum of posterior variances. Using a version of the elliptical potential lemma from \citet{vakili2024kernel} (cf. Lemma \ref{lem:elliptical_main}), we can upper bound this sum by a quantity of order $\sqrt{(d_s+d_a)\gamma_TT\log(1/\varepsilon)}$. It remains to upper bound the discretization errors. Each of the discretization error terms is at most
\begin{equation}
\E\left[\sup_{\vx \in \sB^{d_s+d_a}(R)}\sum_{n=1}^{N}\sum_{h=1}^{H-1}\|f(\vx) - f(\omega(\vx))\|_2\right]\,,\label{eqn:disc_error_sup}
\end{equation}
where $f_1, \dots, f_{d_s} \sim \gG\gP(0, c(\vx, \vy))$ are independent GPs and $f = (f_1, \dots, f_{d_s})$. The challenge is to show that if $\varepsilon$ is sufficiently small, then (under weak smoothness conditions) the discretization error is negligible. We notice that each component of $\widetilde{f}(\vx) := f(\vx) - f(\omega(\vx))$ is another Gaussian process with the kernel
\begin{align*}
\widetilde{c}(\vx, \vy) &:= c(\omega(\vx), \omega(\vy)) - c(\omega(\vx), \vy)\\
&\quad\;\; - c(\vx, \omega(\vy)) + c(\vx, \vy)\,.
\end{align*}
While this kernel function is not continuous, it is still piecewise H\"{o}lder continuous. This allows us to suitably upper bound the covering number of $\sB^{d_s+d_a}(\widetilde{R})$ w.r.t. the distance $d_{\widetilde{c}}$ (cf.\, Lemma \ref{lem:dct_ball_cover}). It can also be shown that the diameter of $\sB^{d_s+d_a}(\widetilde{R})$ w.r.t. $d_{\widetilde{c}}$ can be made arbitrarily small by decreasing $\varepsilon$ (cf.\, Lemma \ref{lem:dct_diameter}). Therefore, when we use the chaining argument to upper bound the supremum in \eqref{eqn:disc_error_sup}, the resulting entropy integral can be made arbitrarily small as well (cf.\, Lemma \ref{lem:disc_error_chain}). The proof of Lemma \ref{lem:est_error_bound} then boils down to choosing a suitable value of $\varepsilon$.

\begin{figure}[tb]
    \centering
    \includegraphics[width=\linewidth]{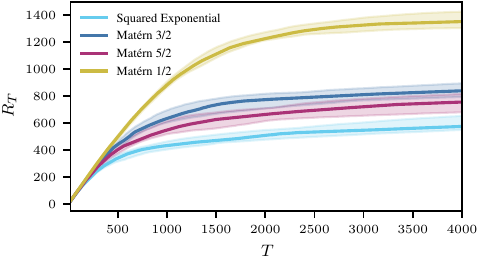}
    \caption{An empirical validation of the GP-PSRL algorithm, showing the Bayesian regret for GP-PSRL over 20 seeds and across four different GP priors.
    As our regret analysis would suggest, GP-PSRL has sublinear regret with all of these priors, and smoother priors result in lower regret.}
    \label{fig:gp_psrl_R_T}
    \vspace{-1em}
\end{figure}

\subsection{Main Result}
We can now state our main result.

\begin{theorem}
\label{thm:regret_bound}
Suppose that assumptions \ref{ass:bounded_actions}-\ref{ass:kernel_smoothness} are satisfied. Define $\widetilde{R}$ as in \eqref{eqn:max_state_action_norm}. For all $T$ that satisfy \eqref{eqn:min_T},
\begin{equation*}
\gR_{T} = \gO\left(H\sqrt{(d_s+d_a)\gamma_T(\sigma^2, \widetilde{R})T\log(T)}\right)\,.
\end{equation*}
\end{theorem}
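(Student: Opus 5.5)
The proof assembles the results of this section. Start from Lemma \ref{lem:tower_rule}, which writes the regret as the expected sum of value estimation errors,
\begin{equation*}
\gR_T=\E\big[\textstyle\sum_{n=1}^N \Delta_n\big], \qquad \Delta_n:=V_{\pi_n,1}^{\gM_n}(\vs_{n,1})-V_{\pi_n,1}^{\gM_\star}(\vs_{n,1}).
\end{equation*}
Let $A$ be the good event from Section \ref{sec:regret_to_model_error}, and split
\begin{equation*}
\textstyle\sum_n\Delta_n=\sI\{A\}\sum_n\Delta_n+\sI\{A^{\mathsf{c}}\}\sum_n\Delta_n .
\end{equation*}

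\textbf{Bad-event term.} By Assumption \ref{ass:bounded_rewards}, every value function lies in $[-HR_{\max},HR_{\max}]$, so $|\Delta_n|\le 2HR_{\max}$ deterministically. Hence
\begin{equation*}
\big|\sI\{A^{\mathsf{c}}\}\textstyle\sum_n\Delta_n\big|\le 2NHR_{\max}\,\sI\{A^{\mathsf{c}}\}=2TR_{\max}\,\sI\{A^{\mathsf{c}}\}.
\end{equation*}
Since $T$ satisfies \eqref{eqn:min_T}, Lemma \ref{lem:bounded_states} gives $\sP(A^{\mathsf{c}})\le 2/T$. The contribution of this term is therefore at most $4R_{\max}=\gO(1)$.

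\textbf{Good-event term.} Apply Lemma \ref{lem:value_est_to_f_est} with the event $A$. This bounds the good-event term by
\begin{equation*}
\frac{HR_{\max}}{\sigma}\,\E\Big[\sI\{A\}\textstyle\sum_{n,h}\|f^{(n)}(\vx_{n,h})-f^\star(\vx_{n,h})\|_2\Big]+2R_{\max}H\sqrt{2\pi T}.
\end{equation*}
On $A$, all state-action pairs lie in $\sB^{d_s+d_a}(\widetilde R)$. Lemma \ref{lem:est_error_bound} then bounds the expectation by $\gO\big(\sqrt{(d_s+d_a)\gamma_T(\sigma^2,\widetilde R)\,T\log T}\big)$. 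Multiplying by $H$ gives the claimed leading term.

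\textbf{Absorbing the lower-order terms.} The additive term $2R_{\max}H\sqrt{2\pi T}$ and the constant $4R_{\max}$ must be absorbed into the main term. I would argue that $\gamma_T(\sigma^2,\widetilde R)$ is bounded below by a positive constant for the relevant $T$. Indeed, on $A$ the log-determinant is at least $\tfrac12\log(1+c(\vx_{1,1},\vx_{1,1})/\sigma^2)$. Assumption \ref{ass:kernel_boundedness} makes $c(\vx,\vx)>0$, and $\sP(A)\ge 1-2/T$ is at least $1/2$ for the admissible $T$. This gives $H\sqrt{T}=\gO\big(H\sqrt{\gamma_T T\log T}\big)$, with constants depending on $\sigma$ and the kernel.

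The only potential obstacle is this last absorption, which needs a uniform positive lower bound on $c(\vx,\vx)$ over the ball. If such a bound is unavailable, the $\gO$ statement would keep the $H\sqrt{T}$ term explicitly; it is of the same order anyway. All the substantive difficulty sits inside Lemmas \ref{lem:bounded_states} and \ref{lem:est_error_bound}, which we take as given.
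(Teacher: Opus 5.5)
Your proposal is correct and follows the paper's proof essentially step for step: Lemma \ref{lem:tower_rule}, then the split on $A$ with the bad-event term bounded by $2R_{\max}T\,\sP(A^{\mathsf{c}})\le 4R_{\max}$, then Lemma \ref{lem:value_est_to_f_est} and Lemma \ref{lem:est_error_bound}. Your absorption step is a small addition, since the paper simply asserts that the first term dominates; the obstacle you flag does not arise, because Assumption \ref{ass:kernel_smoothness} makes $\vx\mapsto c(\vx,\vx)$ continuous while $\vx_{1,1}$ lies with probability at least $3/4$ (Markov) in the fixed compact ball of radius $\sqrt{4\sigma^2 d_s+R_a^2}$, so $\gamma_T(\sigma^2,\widetilde R)$ has a $T$-independent positive lower bound for large $T$.
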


The proof is in Appendix \ref{sec:main_proof}, though it is mainly just a matter of combining all the steps described previously.

\begin{figure}[tb]
    \centering
    \includegraphics[width=\linewidth]{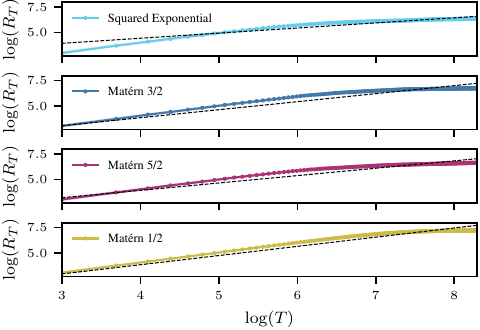}
    \caption{A log-log plot of cumulative regret against steps. 
    The dotted line shows the best fit of a 1/2 slope to verify our proposed $\sqrt{T}$ rate for the squared exponential kernel, and slopes of 9/10, 11/14 and 13/18 for the Mat\'ern 1/2, 3/2 and 5/2 kernels respectively, following the specialized rates in Section \ref{sec:special_rates}.}
    \label{fig:gp_psrl_log_R_log_T}
    \vspace{-1em}
\end{figure}

\subsection{Specialization to Mat\'{e}rn Kernels}
\label{sec:special_rates}

Using a bound on the maximum information gain from \citet{vakili2023kernelized}, we can specialize our main result to Mat\'{e}rn kernels.
\begin{lemma}[Lemma 2 in \citet{vakili2023kernelized}]
\label{lem:matern_infogain}
For the Mat\'{e}rn kernel with parameter $\nu > 0$,
\begin{equation*}
\Gamma_T(\sigma^2, \widetilde{R}) = \gO\left(T^{\frac{d_s + d_a}{2\nu + d_s + d_a}}\log^{\frac{2\nu}{2\nu + d_s + d_a}}(T)\widetilde{R}^{\frac{2\nu(d_s + d_a)}{2\nu + d_s + d_a}}\right)\,.
\end{equation*}
\vspace{-1em}
\end{lemma}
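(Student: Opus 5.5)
Since this is Lemma 2 of \citet{vakili2023kernelized}, it could simply be cited. To check that it applies with the explicit dependence on the radius $\widetilde{R}$, I would reprove it with the standard finite-rank projection argument. Write $d := d_s + d_a$, let $\vx_1,\dots,\vx_T \in \sB^{d}(\widetilde{R})$ be arbitrary, and let $\widetilde{\mC}_T$ be their kernel matrix. The plan is to find, for every integer $D$, a kernel $c_D$ of rank at most $D$ on $\sB^{d}(\widetilde{R})$ with $c - c_D$ positive semidefinite and
\begin{equation*}
\sup_{\vx \in \sB^{d}(\widetilde{R})}\big(c(\vx,\vx) - c_D(\vx,\vx)\big) \leq \delta_D .
\end{equation*}
Let $\mC_D$ be the kernel matrix of $c_D$ at the same points. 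Concavity of $\log\det$ (equivalently, $\log\det(\mI + \mA + \mathbf{E}) \leq \log\det(\mI+\mA) + \mathrm{tr}(\mathbf{E})$ for positive semidefinite $\mA$ and $\mathbf{E}$) then gives
\begin{equation*}
\tfrac{1}{2}\log\det\big(\sigma^{-2}\widetilde{\mC}_T + \mI\big) \leq \tfrac{1}{2}\log\det\big(\sigma^{-2}\mC_D + \mI\big) + \tfrac{T\delta_D}{2\sigma^2}.
\end{equation*}
Since $\mC_D$ has rank at most $D$ and trace at most $CT$, AM--GM on its nonzero eigenvalues bounds the first term by $\tfrac{D}{2}\log\big(1 + CT/(\sigma^2 D)\big) = \gO(D\log T)$. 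Thus $\Gamma_T(\sigma^2,\widetilde{R}) = \gO(D\log T + T\delta_D)$ uniformly over admissible sequences.

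The key step is to construct $c_D$ with $\delta_D \lesssim \widetilde{R}^{2\nu}D^{-2\nu/d}$. I would partition the ball into $M \asymp (\widetilde{R}/\varepsilon)^d$ cubes of side $\varepsilon$. On each cube I would project the RKHS feature map onto the span of the polynomials of degree less than $\lceil \nu\rceil$ centred in that cube. Near the diagonal, the Matérn kernel equals a polynomial of degree $2\lfloor\nu\rfloor$ in $\|\vx-\vy\|_2$ plus a remainder of order $\|\vx-\vy\|_2^{2\nu}$; this is the origin of the H\"older-type smoothness $2\nu$ of the spectral density tail. It should give a per-cube residual variance of order $\varepsilon^{2\nu}$, with the lengthscale treated as a constant. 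Summing the blockwise projections gives a positive semidefinite residual of rank $D \asymp M\binom{d+\lceil\nu\rceil}{d} \asymp (\widetilde{R}/\varepsilon)^d$, so $\delta_D \asymp \varepsilon^{2\nu} \asymp \widetilde{R}^{2\nu}D^{-2\nu/d}$. An alternative is to rescale the ball to the unit ball, which turns the lengthscale $\ell$ into $\ell/\widetilde{R}$, and then use the known Mercer eigenvalue decay $m^{-(2\nu+d)/d}$ while tracking the lengthscale dependence of its constant; this should give the same $\widetilde{R}^{2\nu}$ factor.

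Finally I would balance the two terms $D\log T$ and $T\widetilde{R}^{2\nu}D^{-2\nu/d}$. Setting $D^{(2\nu+d)/d} = T\widetilde{R}^{2\nu}/\log T$, that is $D = (T\widetilde{R}^{2\nu}/\log T)^{d/(2\nu+d)}$, yields
\begin{equation*}
\Gamma_T(\sigma^2,\widetilde{R}) = \gO\Big(T^{\frac{d}{2\nu+d}}\log^{\frac{2\nu}{2\nu+d}}(T)\,\widetilde{R}^{\frac{2\nu d}{2\nu+d}}\Big),
\end{equation*}
which is exactly the claimed rate.

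I expect the main obstacle to be the local approximation step: proving the uniform bound $\varepsilon^{2\nu}$ on the residual variance for non-integer $\nu$. This needs the correct fractional smoothness of the Matérn kernel at the diagonal, not merely the H\"older condition of Assumption~\ref{ass:kernel_smoothness}, and it needs the resulting blockwise residual kernel to be positive semidefinite. If the direct Taylor argument is awkward, I would fall back on the spectral route, rescaling and invoking the eigenvalue decay with lengthscale-explicit constants, as in \citet{vakili2023kernelized}.
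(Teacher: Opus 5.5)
The paper gives no argument for this statement; it simply cites \citet{vakili2023kernelized}. The usual proof of such bounds is the lengthscale-rescaling plus Mercer-eigendecay route that you list as a fallback. Your main route is genuinely different and self-contained, and its outer layer is correct. The three outer steps all check out:
the perturbation bound $\log\det(\mI+\mA+\mathbf{E})\le\log\det(\mI+\mA)+\mathrm{tr}(\mathbf{E})$; AM--GM on the rank-$D$ part; and the balancing, including the factor $\widetilde{R}^{2\nu d/(2\nu+d)}$ that comes from counting $(\widetilde{R}/\varepsilon)^{d}$ cubes.

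Compared with the eigendecay route, yours gives a direct uniform bound on $c(\vx,\vx)-c_D(\vx,\vx)$. You therefore need neither lengthscale-explicit eigenvalue constants nor uniform control of an eigenfunction tail $\sup_{\vx}\sum_{m>D}\lambda_m\phi_m^2(\vx)$. It also handles a ball whose radius grows like $\sqrt{(d_s+d_a)\log T}$ without rescaling, which is exactly the regime the paper needs. Two of your steps, however, fail as stated.

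\textbf{Positive semidefiniteness of the residual.} ``Summing the blockwise projections'' does not give $c-c_D\succeq 0$. A summed kernel $\sum_j\inner{P_{V_j}c_{\vx}}{P_{V_j}c_{\vy}}_{\gH}$ leaves an indefinite residual in general, and so does a block-diagonal one: the latter leaves the full $c(\vx,\vy)$ on pairs in adjacent cubes. You need $c-c_D\succeq0$ both for the trace term and for $\mathrm{tr}(\mC_D)\le CT$. The fix is to project globally. Set $c_{\vx}:=c(\cdot,\vx)$, $V:=\sum_jV_j$ and $c_D(\vx,\vy):=\inner{P_Vc_{\vx}}{P_Vc_{\vy}}_{\gH}$. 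Then:
\begin{itemize}
\item the residual is $\inner{(\mI_{\gH}-P_V)c_{\vx}}{(\mI_{\gH}-P_V)c_{\vy}}_{\gH}\succeq0$;
\item the rank is at most $\sum_j\dim V_j$;
\item $c(\vx,\vx)-c_D(\vx,\vx)\le\|(\mI_{\gH}-P_{V_j})c_{\vx}\|_{\gH}^2$ for $\vx$ in cube $j$, so the local bounds transfer.
\end{itemize}

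\textbf{The local estimate at integer $\nu$.} Polynomials of degree $<\lceil\nu\rceil$ break down at \emph{integer} $\nu$; non-integer $\nu$ is the easy case, contrary to what you flagged. Derivative features $\partial^{\alpha}c_{\vx_j}$ lie in $\gH$ only for $|\alpha|<\nu$. Obtaining $\varepsilon^{2\nu}$ from frequencies $\|\omega\|_2\lesssim1/\varepsilon$, however, needs a degree $\ell$ with $\ell+1>\nu$. For non-integer $\nu$, $\ell=\lceil\nu\rceil-1$ satisfies both; for integer $\nu$ no degree does, and you only get $\varepsilon^{2\nu}\log(1/\varepsilon)$.

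Your expansion claim is in fact false there. For $\nu=1$ and unit variance, the Mat\'{e}rn kernel is $1-\kappa r^2\log(1/r)+O(r^2)$ near the diagonal. Hence the residual $c(\vx,\vx)-c(\vx,\vx_j)^2/c(\vx_j,\vx_j)$ really is of order $r^2\log(1/r)$. Since $\log(1/\varepsilon)\asymp\log T$ at the optimum, this inflates $\log^{2\nu/(2\nu+d)}(T)$ to $\log T$.

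The fix is to span $V_j$ by kernel translates $c_{\vz}$ at a tensor grid of $(\lfloor\nu\rfloor+1)^d$ Lagrange nodes in cube $j$. The Lagrange weights $u_{\vz}(\vx)$ reproduce polynomials of degree $\le\lfloor\nu\rfloor$ with a scale-free Lebesgue constant. Write $c(\vx,\vy)=\int e^{i\omega^{\top}(\vx-\vy)}\widehat{c}(\omega)\,\mathrm{d}\omega$ with $\widehat{c}(\omega)\asymp(1+\|\omega\|_2^2)^{-\nu-d/2}$, and Taylor-expand $e^{i\omega^{\top}\vz}$ about $\vx$. Splitting the integral at $\|\omega\|_2=1/\varepsilon$ and using $\lfloor\nu\rfloor+1>\nu$ gives, for every $\nu>0$,
\begin{equation*}
\Big\|c_{\vx}-\sum_{\vz}u_{\vz}(\vx)c_{\vz}\Big\|_{\gH}^2\lesssim\int\min\big\{1,(\varepsilon\|\omega\|_2)^{2(\lfloor\nu\rfloor+1)}\big\}(1+\|\omega\|_2^2)^{-\nu-d/2}\,\mathrm{d}\omega\lesssim\varepsilon^{2\nu}.
\end{equation*}
This closes the step you identified as the main obstacle.

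\textbf{A smaller caveat.} Your balancing needs rank $D\ge1$, i.e.\ $T\widetilde{R}^{2\nu}\gtrsim\log T$. The statement genuinely fails for very small $\widetilde{R}$; for instance $\Gamma_T(\sigma^2,0)=\tfrac12\log(1+Tc(0,0)/\sigma^2)$. This is harmless in the paper's regime, where $\widetilde{R}\gtrsim\sqrt{\log T}$.
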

Since in Theorem \ref{thm:regret_bound}, $\widetilde{R} = \gO(\sqrt{(d_s+d_a)\log(T)})$, we obtain the following regret bound for Mat\'{e}rn kernels.
\begin{corollary}
\label{cor:matern_regret_bound}
Suppose that $c$ is the Mat\'{e}rn kernel with parameter $\nu > 0$. Suppose that assumptions \ref{ass:bounded_actions}-\ref{ass:bounded_rewards} are satisfied. Let $d := d_s + d_a$. For all $T$ that satisfy \eqref{eqn:min_T},
\begin{equation*}
\gR_T = \gO\big(Hd^{\frac{\nu+1}{2}}T^{\frac{\nu+d}{2\nu+d}}\log^{\max(\frac{\nu+1}{2},\frac{3\nu+1}{2\nu+2})}(T)\big)\,.
\end{equation*}
\vspace{-2em}
\end{corollary}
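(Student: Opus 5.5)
The corollary follows by plugging the Mat\'{e}rn information gain bound into Theorem \ref{thm:regret_bound}; the only real work is tracking the powers of $d$ and $\log T$. First I check that the Mat\'{e}rn kernel satisfies Assumptions \ref{ass:kernel_boundedness} and \ref{ass:kernel_smoothness}. It is stationary with $c(\vx,\vx)$ equal to the variance, so it is bounded. It is H\"{o}lder continuous with exponent $\alpha=\min(1,2\nu)$ or thereabouts: for any $\nu>0$ it is at least $\min(2\nu,1)$-H\"{o}lder, since $1-k(r)\lesssim r^{\min(2\nu,2)}$ near $0$ and $k$ is Lipschitz away from $0$. Hence Theorem \ref{thm:regret_bound} applies for every $T$ satisfying \eqref{eqn:min_T}. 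This gives
\[
\gR_T=\gO\big(H\sqrt{d\,\gamma_T(\sigma^2,\widetilde R)\,T\log T}\big).
\]

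Next I bound the expected information gain by the maximum information gain. Since the indicator in \eqref{eqn:def_info} forces all points into $\sB^{d}(\widetilde R)$, the quantity inside the expectation is pointwise at most $\Gamma_T(\sigma^2,\widetilde R)$, so $\gamma_T\le\Gamma_T$. I then invoke Lemma \ref{lem:matern_infogain}. From \eqref{eqn:max_state_norm} and \eqref{eqn:max_state_action_norm}, with $D=\gO(\sqrt d)$ treating $C,\sigma,L,\alpha,R_a$ as constants, we have $\widetilde R=\gO(\sqrt{d\log T})$. Substituting gives
\[
\Gamma_T=\gO\Big(T^{\frac{d}{2\nu+d}}\,(\log T)^{\frac{2\nu}{2\nu+d}}\,(d\log T)^{\frac{\nu d}{2\nu+d}}\Big).
\]

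The last step is the exponent bookkeeping, which is the only fiddly part. For the power of $T$, the $T$-factor inside the root is $T^{1+\frac{d}{2\nu+d}}$, and its square root is $T^{\frac{\nu+d}{2\nu+d}}$, as claimed. For the power of $d$, we get $d^{\frac12+\frac{\nu d}{2(2\nu+d)}}$. Since $\frac{d}{2\nu+d}\le1$, this is at most $d^{\frac{\nu+1}{2}}$. For the power of $\log T$, we collect $\frac12\big(1+\frac{2\nu+\nu d}{2\nu+d}\big)$. The function $d\mapsto\frac{2\nu+\nu d}{2\nu+d}$ is monotone in $d$ between its value $\frac{3\nu}{2\nu+1}$ at $d=1$ and its limit $\nu$ as $d\to\infty$. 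Hence the log exponent is at most $\frac12\max\big(1+\nu,\,1+\frac{3\nu}{2\nu+1}\big)$.

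I expect the main obstacle to be reconciling this with the stated exponent $\max(\frac{\nu+1}{2},\frac{3\nu+1}{2\nu+2})$. I would first try to recompute with $d\ge2$, since $d_s,d_a\ge1$. At $d=2$ the ratio is $\frac{4\nu}{2\nu+2}$, so the exponent becomes $\frac12\big(1+\frac{2\nu}{\nu+1}\big)=\frac{3\nu+1}{2\nu+2}$. This matches the paper's second term, and monotonicity in $d$ then yields exactly the stated maximum. Constants hidden in $\gO$ absorb $C,\sigma,L,\alpha,R_a,R_{\max}$.
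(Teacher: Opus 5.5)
Your proposal is correct and follows the same route as the paper. You check that the Mat\'{e}rn kernel satisfies Assumptions \ref{ass:kernel_boundedness}--\ref{ass:kernel_smoothness}, use $\gamma_T(\sigma^2,\widetilde R)\le\Gamma_T(\sigma^2,\widetilde R)$, and insert Lemma \ref{lem:matern_infogain} with $\widetilde R=\gO(\sqrt{d\log T})$ into Theorem \ref{thm:regret_bound}; your exponent bookkeeping is right, including the observation that $d=d_s+d_a\ge 2$ is what produces the $\frac{3\nu+1}{2\nu+2}$ branch of the maximum, which the paper leaves implicit in a one-line remark.
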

Up to logarithmic factors, this recovers the best known rate in $T$--even in the special case of Gaussian process bandits.

\section{Experiments}
\label{sec:experiment}
We perform an experimental study to check whether our regret bound agrees with the empirical performance of GP-PSRL. 
We consider a 2D navigation task with a 2-dimensional state space, a 2-dimensional action space and a known reward.
Figure \ref{fig:gp_psrl_R_T} shows the Bayesian regret over 100 episodes with a horizon of 20. 
Consistent with Theorem \ref{thm:regret_bound}, smoother kernels exhibited lower regret. 
Figure \ref{fig:gp_psrl_log_R_log_T} verifies that the rates given by the regret bound in Corollary \ref{cor:matern_regret_bound} match the actual growth rates quite closely.
Due to the necessity of approximations in both the GP and optimal control, a mismatch between theory and practice is expected, but minimal.
Appendix \ref{sec:experimental-results} describes the details of the experiments in more detail and contains a plot of the Bayesian regret against $H$. We find that the empirical growth rate of the Bayesian regret is approximately linear in $H$.

\section{Conclusion}
\label{sec:conclusion}

We have established a regret bound for GP-based reinforcement learning that has an improved growth-rate in $T$, and accommodates both unbounded state spaces and weak smoothness assumptions. We provide some further discussion about our work in Appendix \ref{sec:discussion}.

\section*{Acknowledgements}

We would like to thank Gergely Neu, Antoine Moulin, Ludovic Schwartz, Lorenzo Croissant and Aad van der Vaart for helpful discussions and correspondence. We would also like to thank our anonymous reviewers for their helpful feedback.
Hamish was funded by the European Research Council (ERC), under the European Union’s Horizon 2020 research and innovation programme (grant agreement 950180).
Joe and Ingmar are supported by an EPSRC Programme Grant (EP/V000748/1).
Joe and Jan are supported by the grant “Einrichtung eines Labors
des Deutschen Forschungszentrum für Künstliche Intelligenz (DFKI) an der Technischen Universität
Darmstadt” 
and partially supported by the German Federal Ministry of Research, Technology and Space (BMFTR) under the Robotics Institute Germany (RIG).

\section*{Impact Statement}

This paper presents work whose goal is to advance the field of Machine
Learning. There are many potential societal consequences of our work, none of which we feel must be specifically highlighted here.

\bibliography{main}
\bibliographystyle{icml2023}

\newpage
\appendix
\onecolumn

\section{Discussion}
\label{sec:discussion}

We provide some discussion about our work below.

\textbf{Optimality of the regret bound.} The regret bound in Theorem \ref{thm:regret_bound} has an improved growth rate in $T$ compared to the other regret bounds listed in Table \ref{tab:}. However, it is not clear whether it has the best possible growth rate in $T$ in the sense that it matches the growth rate of a lower bound of some sort. We are not aware of any lower bounds for the Bayesian regret in the setting that we consider, but some lower bounds are available for the Bayesian regret in Gaussian process bandits. This corresponds to the special case where $H = 1$ and there is no state (or a single fixed state). \citet{scarlett2018tight} showed that for Gaussian process bandits with one-dimensional actions, the Bayesian regret behaves as $\Omega(\sqrt{T})$ and $\gO(\sqrt{T\log(T)}$. Even when the actions are one-dimensional, the information gain can grow faster than logarithmically in $T$, suggesting that the rate of $\sqrt{\gamma_TT}$ is not tight in general. However, \citet{rusmevichientong2010linearly} showed that dimension one is something of a special case when it comes to the Bayesian regret. In particular, the Bayesian regret for one-dimensional linear bandits behaves as $\gO(\log(T))$, whereas in dimension greater than one, the Bayesian regret behaves as $\Theta(\sqrt{T})$.  In summary, the rate of $\sqrt{\gamma_TT}$ may or may not be optimal.

\textbf{Worst-case regret bounds.} It is often desirable to have worst-case regret bounds, which hold for any fixed MDP in some pre-specified set of MDPs. For instance, one could design an algorithm that works well whenever the dynamics of the MDP are given by a function in a ball of an RKHS \citep{pmlr-v89-chowdhury19a, kakade2020information, curi2020efficient}. It would be interesting to investigate whether GP-PSRL satisfies a worst-case regret bound of the order $H\sqrt{\gamma_T T}$. One of the main difficulties that would need to be resolved is that Lemma \ref{lem:tower_rule} no longer holds, since $f^{\star}$ and $f^{(n)}$ no longer have the same conditional distribution. A potential solution is to use an ``optimistic'' or ``feel-good'' version of posterior sampling \citep*{zhang2022feel, neu2024optimistic}, in which one replaces $f^{(n)}$, a random draw from the Bayesian posterior, with a random draw from an optimistic version of the posterior. This would fix the previous problem (i.e. an inequality similar to the identity in Lemma \ref{lem:tower_rule} now holds). However, the optimistic posterior is not a Gaussian process, which complicates the matter of bounding the estimation errors. Nevertheless, the regret analyses of \citet{zhang2022feel} and \citet*{neu2024optimistic} both work by decoupling the standard likelihood part of the optimistic posterior from the optimistic correction, and then dealing with two parts separately, which means there is a bit of hope that some of the techniques we used to bound the estimation error might also be useful for proving worst-case regret bounds.

\textbf{Locally bounded kernel functions.} The assumption that the kernel function is uniformly bounded rules out some commonly used kernels, such as the linear kernel, which are only locally bounded. By locally bounded, we mean that there is a known function $C: (0, \infty) \to \sR$ such that for all $R > 0$, the kernel function is bounded by $C(R)$ on a ball of radius $R$. While our analysis does not break completely when the uniform boundedness assumption is relaxed to local boundedness, our tail bound for the norm of the largest state in Lemma \ref{lem:bounded_states} would become much worse. In particular, the value of $R$ in \eqref{eqn:max_state_norm} may pick up exponential dependence on $H$. To obtain a result comparable to Lemma \ref{lem:bounded_states} with locally bounded kernels, we expect that it is necessary to restrict the algorithm to play stabilizing policies.

\section{Additional Related Work}
\label{sec:additional_related}

We describe some related work in more detail. In this section, we only show the dependence of the regret on $H$, $T$ and any problem-dependent quantities. As well as this, $\widetilde{\gO}(\cdot)$ suppresses all polylogarithmic factors.

\textbf{Bayesian regret bounds.} Several works have studied the Bayesian regret in the setting described in Section \ref{sec:prelims}, or in very similar settings. \citet{osband2014model} assume that the transition kernel has a mean function ($f^{\star}$) that belongs to a set of functions $\gF$ that has bounded eluder dimension \citep{russo2013eluder}. In addition, it is assumed that the prior used by PSRL has support contained in $\gF$. \citet{osband2014model} prove a regret bound of the order $\widetilde{\gO}(L\sqrt{d_{K}(\gF)d_{E}(\gF)T})$, where $L$ is a problem-dependent quantity that hides dependence on $H$ (cf. Equation 3 in \citealp{osband2014model}), $d_{K}(\gF)$ is the Kolmogorov dimension of $\gF$ (more or less the metric entropy/log-covering number of $\gF$), and $d_{E}(\gF)$ is the eluder dimension of $\gF$ at precision $1/T$. Most relevant to our setting is the case where $\gF$ is a ball in an RKHS. In this case, it turns out that both the Kolmogorov dimension and the eluder dimension are roughly proportional to the maximum information gain (cf. \citealp{huang2021short}). For the sake of comparison, we can think of this bound as being roughly of the order $\widetilde{\gO}(L\Gamma_T\sqrt{T})$. Note, however, that the eluder dimension of an RKHS ball is typically only sublinear in $T$ when the state space is bounded.

In the same setting that we consider, \citet{pmlr-v89-chowdhury19a} proved Bayesian regret bounds for PSRL with two kinds of priors. If the prior is any distribution with support contained within an RKHS ball, then Theorem 2 in \citet{pmlr-v89-chowdhury19a} gives a Bayesian regret bound of the order $\widetilde{\gO}(L\Gamma_{d_sT}\sqrt{T})$, where $L$ is the same problem-dependent quantity introduced in \citet{osband2014model}. If the prior is a Gaussian process, then Theorem 4 in \citet{pmlr-v89-chowdhury19a} gives a Bayesian regret bound of the order $\widetilde{\gO}(Le^{\Gamma_{d_sH}}\sqrt{\Gamma_{d_sT}T})$. All of the regret bounds in \citet{pmlr-v89-chowdhury19a} assume that the state space is bounded, which is incompatible with the assumption that the prior is a Gaussian process and the assumption that the states are subject to Gaussian noise.

\citet{fan2021model} studied the Bayesian regret of GP-PSRL in the same setting that we consider. They removed the quantity $L$ that appeared in previous regret bounds using a so-called ``property derived from noises with symmetric
probability distribution''. We use the same idea in our proof of Lemma \ref{lem:value_est_to_f_est}. Although the result is stated for linear kernels, Theorem 1 in \citet{fan2021model} would give a Bayesian regret bound for general GP priors of the order $\widetilde{\gO}(H^{3/2}\Gamma_N\sqrt{T})$. However, the proof of Lemma 3 in \citet{fan2021model} does not account for the fact that there is dependence between $f^{\star}$ and the state-action pairs visited by the algorithm. In particular, $f^{\star}(\vx_{n,h})$ is treated as a (conditionally on $\gF_{n-1}$) Gaussian random variable, which is not necessarily the case.

\textbf{Worst-case regret bounds.} Several works have studied the worst-case regret in a ``frequentist'' version of the problem in Section \ref{sec:prelims}, in which the mean function $f^{\star}$ is a fixed function in an RKHS ball. Theorem 1 in \citet{pmlr-v89-chowdhury19a} gives a worst-case regret bound for an optimistic UCRL algorithm, which is of the order $\widetilde{\gO}(L\Gamma_{d_sT}\sqrt{T})$. \citet{kakade2020information} developed confidence sets for the dynamics using a regularized least squares estimate and designed an algorithm that plays optimistically with respect to this confidence set.
This algorithm satisfies a worst-case regret bound of the order $\widetilde{\gO}(H\widetilde{\gamma}_T\sqrt{T})$, where $\widetilde{\gamma}_T$ is a different version of the expected information gain. Unlike $\gamma_T(\sigma^2, R)$ defined in \eqref{eqn:def_info}, $\widetilde{\gamma}_T$ is maximized with respect to the algorithm used to generate the states and actions. In addition, $\widetilde{\gamma}_T$ does not contain an indicator variable of the form $\sI\{\sup_{n \in [N], h \in [H]}\|\vx_{n,h}\|_2 \leq R\}$ inside the expectation. This means that standard upper bounds for the maximum information gain over a bounded domain (see e.g. \citealp{srinivas2012information, vakili2021information}) cannot be used as upper bounds for $\widetilde{\gamma}_{T}$. As a result, \citet{kakade2020information} are only able to give fully explicit growth rates for special cases where the underlying RKHS is finite-dimensional.

\citet{curi2020efficient} proposed another optimistic algorithm for this setting, which introduces ``hallucinated controls'' into the predictions of the model and then performs a greedy policy search in this augmented system. The resulting algorithm is shown to satisfy a worst-case regret bound of the order $\widetilde{\gO}(H^{3/2}\Gamma_T^{(H+1)/2}\sqrt{T})$. The issue of the unbounded state space is explicitly addressed by \citet{curi2020efficient}. \citet{curi2020efficient} derived a tail bound for the norm of the largest state encountered by their algorithm (cf. their Lemma 26) and used it to prove another worst-case regret bound (cf. their Theorem 5) that accounts for the fact that the state space is unbounded. However, the tail bound in Lemma 26 in \citet{curi2020efficient} only ensures that, with high probability, all states are contained within a ball with a radius that grows exponentially in $T$. Recall that Lemma \ref{lem:bounded_states} states that, when $f^{\star}$ is a random draw from a GP, the norm of the largest state grows as $\sqrt{(d_s+d_s)\log(T)}$.

\textbf{Posterior sampling with multi-output GPs.} Recent work has applied posterior sampling to MDPs in which the reward function and the dynamics are drawn from a multi-output GP. In this setting, correlations between the state-dimensions and the reward function can be exploited to reduce the regret suffered. \citet{bayrooti2025efficient} propose an optimistic posterior sampling algorithm, but do not provide a regret bound. \citet{bayrooti2025noregret} establish a regret bound for PSRL with multi-output GPs, which captures correlations between the reward function and the dynamics. However, the regret bound only applies to MDPs with deterministic dynamics, so it is not comparable to ours. Moreover, it is implicitly assumed that the state space is bounded, and all value functions are assumed to have bounded first and second derivatives.

\section{Additional Definitions}

\subsection{Separable Gaussian Processes}

For the chaining arguments to go through, we need to make a technical assumption, which is that the Gaussian processes we work with are separable. Fortunately, all the Gaussian processes that we consider are separable, so this is not a problem. Any random process is separable if it satisfied the property in the definition below.

\begin{definition}
\label{def:separable}
A random process $(f(\vx))_{\vx \in \gZ}$ is called separable if there is a countable set $\gZ_0 \subseteq \gZ$ such that
\begin{equation*}
f(\vx) \in \lim_{\vz \to \vx, \vz \in \gZ_0}f(\vz) ~~\text{for all} ~\vx \in \gZ ~\text{a.s.}\,.
\end{equation*}
\end{definition}

A consequence of $f$ being separable is that there exists a countable subset $\gZ_0 \subseteq \gZ$ such that $\sup_{\vx \in \gZ}f(\vx) = \sup_{\vx \in \gZ_0}f(\vx)$ almost surely.

\subsection{Covering Numbers and Proper Covering Numbers}
\label{sec:covering}

In several places, we will make use of $\varepsilon$-covers of subsets of $\sR^{d_s+d_a}$. In general, it is preferable that we do not restrict the elements of a cover of some set to themselves be elements of the set being covered. On one occasion however, it will be important that the elements of the cover of some set are also elements of the set being covered. We therefore distinguish between two types of covers.

\begin{definition}[$\varepsilon$-covers and covering numbers]
\label{def:cover}
Consider the metric space $(\sR^{d_s+d_a}, d)$. A set $M \subseteq \sR^{d_s + d_a}$ is an $\varepsilon$-cover for $\gZ \subseteq \sR^{d_s+d_a}$ (w.r.t. the metric $d$) if for all $\vx \in \gZ$, there exists $\vy \in M$ such that $d(\vx, \vy) \leq \varepsilon$. The $\varepsilon$-covering number $\mathsf{N}(\gZ, d, \varepsilon)$ of $\gZ$ w.r.t. the metric $d$ is the cardinality of the smallest $\varepsilon$-cover for $\gZ$.
\end{definition}

Note that an $\varepsilon$-cover of $\gZ$ is not required to be a subset of $\sR^{d_s+d_a}$. This has the advantage that the $\varepsilon$-covering number satisfies a certain monotonicity property. In particular, for two sets $\gZ_1 \subseteq \gZ_2 \subseteq \sR^{d_s+d_a}$ and any metric $d$, any $\varepsilon$-cover for $\gZ_2$ is also an $\varepsilon$-cover for $\gZ_1$, which means
\begin{equation*}
\mathsf{N}(\gZ_1, d, \varepsilon) \leq \mathsf{N}(\gZ_2, d, \varepsilon)\,.
\end{equation*}

Next, we define proper $\varepsilon$-covers and proper covering numbers.

\begin{definition}[Proper $\varepsilon$-covers and proper covering numbers]
\label{def:cover_proper}
Consider the metric space $(\sR^{d_s+d_a}, d)$ and a subset $\gZ \subseteq \sR^{d_s+d_a}$. A set $M \subseteq \gZ$ is a proper $\varepsilon$-cover for $\gZ$ if for all $\vx \in \gZ$, there exists $\vy \in M$ such that $d(\vx, \vy) \leq \varepsilon$. The proper $\varepsilon$-covering number $\mathsf{N}_{\mathrm{pr}}(\gZ, d, \varepsilon)$ of $\gZ$ w.r.t. the metric $d$ is the cardinality of the smallest proper $\varepsilon$-cover for $\gZ$.
\end{definition}

The proper $\varepsilon$-covering number does not satisfy the same monotonicity property. For example, $\mathsf{N}_{\mathrm{pr}}([0,1], d_2, 1/2) = 1$, whereas $\mathsf{N}_{\mathrm{pr}}([0,1/2)\cup (1/2,1], d_2, 1/2) = 2$.

\section{Tail Bounds for Suprema of Gaussian Processes}

We prove the results given in Section \ref{sec:gp_supremum_tail}. In Appendix \ref{sec:expected_suprema}, we prove Lemma \ref{lem:chaining} and we use bounds on the covering number and diameter of $\sB^{d_s+d_a}(R)$ w.r.t $d_c$ to obtain an explicit upper bound for the expected supremum. In Appendix \ref{sec:gp_concentration}, we prove Lemma \ref{lem:borell_tis_vec}. Finally, in Appendix \ref{sec:gp_tail_bound_norm}, we prove Lemma \ref{lem:gp_norm_tail_bound}.

\subsection{Expected Suprema of Gaussian Processes}
\label{sec:expected_suprema}

To apply the chaining method to GPs with kernels that satisfy our smoothness and boundedness assumptions, we need upper bounds on both the covering numbers and the diameter (w.r.t $d_c$) of a Euclidean ball with a given radius. We will use the following bound on the covering number of a Euclidean ball (not necessarily centered at the origin) with respect to the Euclidean metric.

\begin{lemma}[Lemma 5.7 in \citet{wainwright2019high}]
\label{lem:ball_covering}
For any $R > 0$, $\varepsilon > 0$ and $\vx \in \sR^{d_s+d_a}$,
\begin{equation*}
\mathsf{N}(\sB_{\vx}^{d_s+d_a}(R), d_2, \varepsilon) \leq \bigg(1 + \frac{2R}{\varepsilon}\bigg)^{d_s+d_a}\,.
\end{equation*}
\end{lemma}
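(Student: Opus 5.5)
The plan is a standard volumetric argument. Let $d := d_s+d_a$ and write $\sB := \sB_{\vx}^{d}(R)$. First take a maximal $\varepsilon$-packing $\{\vy_1,\dots,\vy_M\}\subseteq \sB$ with respect to $d_2$, meaning the points are pairwise more than $\varepsilon$ apart. By maximality, every point of $\sB$ lies within distance $\varepsilon$ of some $\vy_j$; otherwise that point could be added to the packing. So the packing is an $\varepsilon$-cover, in fact a proper one, and therefore $\mathsf{N}(\sB, d_2, \varepsilon) \leq M$. It remains to bound $M$.

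To bound $M$, compare volumes. The balls $\sB_{\vy_j}^{d}(\varepsilon/2)$ are pairwise disjoint, since their centres are more than $\varepsilon$ apart. Because each $\vy_j \in \sB$, the triangle inequality shows that all of these small balls lie inside the enlarged ball $\sB_{\vx}^{d}(R+\varepsilon/2)$. Lebesgue measure scales as $\mathrm{vol}(\sB^{d}_{\vz}(r)) = r^{d}\,\mathrm{vol}(\sB^{d}(1))$ and is translation invariant. Summing the disjoint volumes therefore gives
\begin{equation*}
M\Big(\frac{\varepsilon}{2}\Big)^{d} \leq \Big(R+\frac{\varepsilon}{2}\Big)^{d}\,,
\end{equation*}
that is, $M \leq (1+2R/\varepsilon)^{d}$, which is the claim.

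There is one technical point to check: a maximal packing must be finite, so that it exists and the volume argument applies. The volume inequality itself settles this. Any $\varepsilon$-separated family in $\sB$ has at most $(1+2R/\varepsilon)^{d}$ elements, so we can build the packing greedily and the process must stop after finitely many steps. No step is a real obstacle. The result is the textbook statement (Lemma 5.7 in Wainwright), so the argument could simply be cited. Since the cover constructed is proper, the same bound also holds for $\mathsf{N}_{\mathrm{pr}}$.
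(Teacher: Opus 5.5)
Your maximal-packing and volume-comparison argument is correct, and it is essentially the proof of the cited Lemma 5.7 in \citet{wainwright2019high}, which the paper invokes without reproducing; you specialize it to the Euclidean norm and handle the rescaling and translation to $\sB_{\vx}^{d_s+d_a}(R)$ directly. Your remarks on the existence of a maximal packing and on the cover being proper are also sound. In fact, the same argument gives $\mathsf{N}_{\mathrm{pr}}(\gZ, d_2, \varepsilon) \leq (1 + 2R/\varepsilon)^{d_s+d_a}$ for any $\gZ \subseteq \sB_{\vx}^{d_s+d_a}(R)$, which is slightly sharper than what the paper obtains via Lemma \ref{lem:covering_proper_covering}.
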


If, $\varepsilon \leq R$, then this upper bound can be simplified slightly to
\begin{equation*}
\mathsf{N}(\sB_{\vx}^{d_s+d_a}(R), d_2, \varepsilon) \leq \bigg(\frac{3R}{\varepsilon}\bigg)^{d_s+d_a}\,.
\end{equation*}

For any kernel that satisfies Assumption \ref{ass:kernel_smoothness}, we can turn this into an upper bound for the covering number of a Euclidean ball with respect to the natural distance associated with the kernel.

\begin{lemma}
\label{lem:dc_ball_covering}
Suppose that a kernel $c$ satisfies Assumption \ref{ass:kernel_smoothness}, and let $d_c$ denote the natural distance associated with $c$. Then for every $R \geq 0$,
\begin{equation*}
\mathsf{N}(\sB^{d_s+d_a}(R), d_c, \varepsilon) \leq \bigg(1 + \frac{2R(2L)^{1/\alpha}}{\varepsilon^{2/\alpha}}\bigg)^{d_s+d_a}\,.
\end{equation*}
\end{lemma}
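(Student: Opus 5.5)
The plan is to compare the natural distance $d_c$ with the Euclidean distance and then use Lemma \ref{lem:ball_covering}. First I will bound $d_c^2$ in terms of $d_2$. Expanding the definition in \eqref{eqn:natty_dist} gives
\begin{equation*}
d_c^2(\vx,\vy) = \big(c(\vx,\vx) - c(\vx,\vy)\big) + \big(c(\vy,\vy) - c(\vy,\vx)\big)\,.
\end{equation*}
Each bracket has the form $c(\vz,\vz') - c(\vz,\vz'')$ with a fixed first argument. By Assumption \ref{ass:kernel_smoothness}, each is at most $L\|\vx-\vy\|_2^{\alpha}$. Symmetry of the kernel lets me write $c(\vy,\vy)-c(\vy,\vx)$ with $\vy$ as the fixed first argument. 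Hence
\begin{equation*}
d_c(\vx,\vy) \leq \sqrt{2L}\,\|\vx-\vy\|_2^{\alpha/2}\,.
\end{equation*}

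Next I will convert Euclidean covers into $d_c$-covers. Set $\delta := (\varepsilon^2/(2L))^{1/\alpha} = \varepsilon^{2/\alpha}/(2L)^{1/\alpha}$. If $\|\vx-\vy\|_2\le\delta$, the display above gives $d_c(\vx,\vy)\le\sqrt{2L}\,\delta^{\alpha/2}=\varepsilon$. So every $\delta$-cover of $\sB^{d_s+d_a}(R)$ with respect to $d_2$ is also an $\varepsilon$-cover with respect to $d_c$. Covers need not lie inside the set (Definition \ref{def:cover}), so the elements of the Euclidean cover can be used directly. This gives
\begin{equation*}
\mathsf{N}(\sB^{d_s+d_a}(R),d_c,\varepsilon)\le \mathsf{N}(\sB^{d_s+d_a}(R),d_2,\delta)\,.
\end{equation*}

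Finally, Lemma \ref{lem:ball_covering} with this $\delta$ yields the bound $(1+2R/\delta)^{d_s+d_a}$, and $2R/\delta = 2R(2L)^{1/\alpha}/\varepsilon^{2/\alpha}$, which is exactly the claimed expression. Lemma \ref{lem:ball_covering} is stated for $R>0$. The edge case $R=0$ is trivial: the ball is a single point, so the covering number is $1$, which is at most the right-hand side.

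I do not expect a real obstacle here. The only point needing care is applying the one-sided Hölder assumption to both differences, which requires the symmetry of $c$. Since $c$ is a covariance kernel, symmetry is automatic.
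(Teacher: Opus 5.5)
Your proposal is correct and follows essentially the same route as the paper: bound $d_c^2(\vx,\vy)\le 2L\|\vx-\vy\|_2^{\alpha}$ via the Hölder assumption, take a Euclidean cover at scale $(\varepsilon^2/(2L))^{1/\alpha}$, and apply Lemma \ref{lem:ball_covering}. Your explicit remarks on kernel symmetry and the trivial $R=0$ case are small additions that the paper leaves implicit.
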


\begin{proof}
Lemma \ref{lem:ball_covering} tells us that
\begin{equation*}
\mathsf{N}(\sB^{d_s+d_a}(R), d_2, \varepsilon) \leq \bigg(1 + \frac{2R}{\varepsilon}\bigg)^{d_s+d_a}\,.
\end{equation*}
Thus for any $\varepsilon_0 > 0$, we can construct an $\varepsilon_0$-cover $M$ of $\sB^{d_s+d_a}(R)$ in the metric $d_2$, such that $|M| \leq (1 + \frac{2R}{\varepsilon_0})^{d_s+d_a}$. By Assumption \ref{ass:kernel_smoothness}, we have
\begin{align*}
d_c^2(\vx, \vy) &= c(\vx, \vx) - 2c(\vx, \vy) + c(\vy, \vy)\\
&\leq |c(\vx, \vx) - c(\vx, \vy)| + |c(\vy, \vx) - c(\vy, \vy)|\\
&\leq 2L\|\vx - \vy\|_2^{\alpha}\,.
\end{align*}
In particular, $d_c(\vx, \vy) \leq \sqrt{2Ld_2^{\alpha}(\vx, \vy)}$. Therefore, if we choose $\varepsilon_0 = (\frac{\varepsilon^2}{2L})^{1/\alpha}$, then $M$ is an $\varepsilon$-covering of $\sB^{d_s+d_a}(R)$ in the metric $d_c$.
\end{proof}

Recall that for any subset $\gZ \subseteq \sR^{d_s+d_a}$ and any metric $d$, the diameter of $\gZ$ is $\mathrm{diam}_{d}(\gZ) := \sup_{\vx, \vy \in \gZ}d(\vx, \vy)$. For any kernel that satisfies Assumption \ref{ass:kernel_boundedness} we can upper bound the diameter of $\sB^{d_s+d_a}(R)$ w.r.t. the natural distance $d_c$.

\begin{lemma}
\label{lem:dc_diameter}
Suppose that a kernel $c: \sR^{d_s+d_a} \times \sR^{d_s+d_a} \to \sR$ satisfies Assumption \ref{ass:kernel_boundedness}, and let $d_c$ denote the natural distance associated with $c$. Then for every $R \geq 0$,
\begin{equation*}
\mathrm{diam}_{d_c}(\sB^{d_s+d_a}(R)) \leq 2\sqrt{C}\,.
\end{equation*}
\end{lemma}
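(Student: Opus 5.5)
The bound follows from a pointwise estimate of the natural distance, using nothing beyond Assumption \ref{ass:kernel_boundedness} and the fact that $c$ is a covariance kernel. By the definition of the diameter, it suffices to show that $d_c(\vx, \vy) \leq 2\sqrt{C}$ for every pair $\vx, \vy \in \sB^{d_s+d_a}(R)$. In fact the bound will hold for all $\vx, \vy \in \sR^{d_s+d_a}$, which is why it does not depend on $R$.

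The first step is to control the cross term $c(\vx, \vy)$. Since $c$ is positive semi-definite, the $2\times 2$ Gram matrix of $\{\vx, \vy\}$ has non-negative determinant. This gives the Cauchy--Schwarz inequality
\begin{equation*}
|c(\vx, \vy)| \leq \sqrt{c(\vx, \vx)\,c(\vy, \vy)}\,.
\end{equation*}
Equivalently, if $f \sim \gG\gP(0, c(\vx, \vy))$, this is Cauchy--Schwarz for the random variables $f(\vx)$ and $f(\vy)$, since $c(\vx, \vy) = \E[f(\vx) f(\vy)]$.

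The second step is to substitute this into the definition \eqref{eqn:natty_dist}:
\begin{equation*}
d_c^2(\vx, \vy) \leq c(\vx, \vx) + 2\sqrt{c(\vx, \vx)\,c(\vy, \vy)} + c(\vy, \vy) = \big(\sqrt{c(\vx, \vx)} + \sqrt{c(\vy, \vy)}\big)^2\,.
\end{equation*}
Assumption \ref{ass:kernel_boundedness} gives $c(\vx, \vx) \leq C$ and $c(\vy, \vy) \leq C$, so $d_c^2(\vx, \vy) \leq 4C$, that is, $d_c(\vx, \vy) \leq 2\sqrt{C}$.

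The third step is to take the supremum over $\vx, \vy \in \sB^{d_s+d_a}(R)$, which yields the claim. I do not expect any real obstacle here. The only point to make explicit is that the Cauchy--Schwarz bound for the kernel comes from positive semi-definiteness, which holds because $c$ is the covariance function of a Gaussian process.
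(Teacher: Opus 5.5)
Your proof is correct and follows the same route as the paper, which bounds $d_c^2(\vx,\vy) = c(\vx,\vx) - 2c(\vx,\vy) + c(\vy,\vy)$ by $4C$ directly from Assumption \ref{ass:kernel_boundedness}. Your explicit use of positive semi-definiteness to get $|c(\vx,\vy)| \leq \sqrt{c(\vx,\vx)c(\vy,\vy)} \leq C$ supplies a step that the paper's one-line proof leaves implicit, since the assumption itself only bounds the diagonal.
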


\begin{proof}
By Assumption \ref{ass:kernel_boundedness}, we obtain
\begin{equation*}
d_c(\vx, \vy) = \sqrt{c(\vx, \vx) - 2c(\vx, \vy) + c(\vy, \vy)} \leq \sqrt{4C}\,.
\end{equation*}
This concludes the proof.
\end{proof}

The last tool we need before we can apply the chaining method is a concentration inequality for $\|f(\vx)\|_2$. Since the Euclidean norm is 1-Lipschitz w.r.t. the Euclidean metric, we can use Lemma \ref{lem:lip_gaussian} to show that $\|f(\vx)\|_2$ is sub-Gaussian.

\begin{corollary}
\label{cor:sub_gaussian_norm}
Let $f_1, \dots, f_{d_s} \sim \gG\gP(0, c(\vx, \vy))$ be independent, centered Gaussian processes and let $f = (f_1, \dots, f_{d_s})$. For any $\vx, \vy \in \sR^{d_s+d_a}$, $\|f(\vx) - f(\vy)\|_2$ is $d_{c}(\vx,\vy)$-sub-Gaussian. In particular, for all $\lambda \in \sR$,
\begin{equation}
\E\left[\exp\big(\lambda\big(\|f(\vx) - f(\vy)\|_2 - \E[\|f(\vx) - f(\vy)\|_2]\big)\big)\right] \leq \exp\bigg(\frac{\lambda^2d_{c}^2(\vx,\vy)}{2}\bigg)\,. \label{eqn:sub_gaussian_norm}
\end{equation}
\end{corollary}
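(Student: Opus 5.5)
We prove Corollary~\ref{cor:sub_gaussian_norm} by writing $\|f(\vx)-f(\vy)\|_2$ as a $d_c(\vx,\vy)$-Lipschitz function of a standard Gaussian vector in $\sR^{d_s}$ and then applying Lemma~\ref{lem:lip_gaussian}.

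Fix $\vx, \vy$ and set $g_i := f_i(\vx) - f_i(\vy)$ for $i \in [d_s]$. Each $f_i$ is a centered GP with kernel $c$, so $g_i$ is a centered Gaussian with variance
\begin{equation*}
\E[g_i^2] = c(\vx,\vx) - 2c(\vx,\vy) + c(\vy,\vy) = d_c^2(\vx,\vy).
\end{equation*}
The $f_i$ are independent, so $g_1,\dots,g_{d_s}$ are i.i.d.\ $\gN(0, d_c^2(\vx,\vy))$.

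Hence there exist i.i.d.\ standard Gaussians $X_1,\dots,X_{d_s}$ with $g_i = d_c(\vx,\vy) X_i$. If $d_c(\vx,\vy)=0$, each $g_i$ is almost surely zero; we take $X_i$ to be independent auxiliary standard Gaussians, and the identity still holds almost surely. This gives
\begin{equation*}
\|f(\vx)-f(\vy)\|_2 = G(X_1,\dots,X_{d_s}), \qquad G(\vz) := d_c(\vx,\vy)\|\vz\|_2 .
\end{equation*}
By the reverse triangle inequality, $|G(\vz)-G(\vz')| \le d_c(\vx,\vy)\|\vz-\vz'\|_2$, so $G$ is $d_c(\vx,\vy)$-Lipschitz with respect to the Euclidean metric.

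Lemma~\ref{lem:lip_gaussian} then says $G(X)$ is $d_c(\vx,\vy)$-sub-Gaussian. The statement there only gives the sub-Gaussian parameter, so we read it in the standard moment-generating-function sense. That is exactly what Theorem~2.26 in \citet{wainwright2019high} provides: for all $\lambda \in \sR$, $\E[\exp(\lambda(G(X)-\E G(X)))] \le \exp(\lambda^2 L^2/2)$ with Lipschitz constant $L = d_c(\vx,\vy)$. This is \eqref{eqn:sub_gaussian_norm}. The expectation $\E G(X)$ is finite because $\E\|X\|_2 \le \sqrt{d_s}$.

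The only delicate point is the degenerate case $d_c(\vx,\vy)=0$. There both sides of \eqref{eqn:sub_gaussian_norm} equal $1$, so the bound holds trivially.
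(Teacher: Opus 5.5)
Your proof is correct and follows essentially the same route as the paper. Both arguments observe that $f(\vx)-f(\vy)\sim\gN(0,d_c^2(\vx,\vy)\mI)$, apply Lemma~\ref{lem:lip_gaussian} to the Euclidean norm of the normalized vector, and treat the degenerate case $d_c(\vx,\vy)=0$ as trivial; the only cosmetic difference is that you fold $d_c(\vx,\vy)$ into the Lipschitz constant of $G$, whereas the paper applies the lemma to the $1$-Lipschitz norm and then substitutes $\lambda=\lambda_0/d_c(\vx,\vy)$.
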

\begin{proof}
Let $g: \sR^{d_s} \to \sR$ be the function $g(\vx) := \|\vx\|_2$. The reverse triangle inequality states that, for all $\vv, \vu \in \sR^{d_s}$
\begin{equation*}
g(\vv) - g(\vu) = \|\vv\|_2 - \|\vu\|_2 \leq \|\vv - \vu\|_2\,.
\end{equation*}
It follows that $g$ is 1-Lipschitz w.r.t. the Euclidean metric. If $\vx$ and $\vy$ are such that $\|f(\vx) - f(\vy)\|_2$ is identically equal to 0, then \eqref{eqn:sub_gaussian_norm} is trivially satisfied. Suppose that $\vx$ and $\vy$ are such that $\|f(\vx) - f(\vy)\|_2$ is not identically 0. In this case, one can verify that
\begin{equation*}
f(\vx) - f(\vy) \sim \gN(0, d_{c}^2(\vx, \vy)\mI)\,.
\end{equation*}
Thus $f(\vx) - f(\vy))/d_{c}(\vx, \vy)$ is a vector of i.i.d. standard Gaussian random variables. By Lemma \ref{lem:lip_gaussian}, for all $\lambda_0 \in \sR$,
\begin{equation*}
\E\left[\exp\bigg(\frac{\lambda_0}{d_{c}(\vx, \vy)}\big(\|f(\vx) - f(\vy)\|_2 - \E[\|f(\vx) - f(\vy)\|_2]\big)\bigg)\right] \leq \exp\bigg(\frac{\lambda_0^2}{2}\bigg)\,.
\end{equation*}
The claim follows by substituting $\lambda = \lambda_0/d_{c}(\vx, \vy)$.
\end{proof}

We can now prove the chaining argument in Lemma \ref{lem:chaining}.

\begin{proof}[Proof of Lemma \ref{lem:chaining}]
We first prove the result in the finite case, where $\sB^{d_s+d_a}(R)$ is replaced by any finite non-empty subset $\gZ \subset \sB^{d_s+d_a}(R)$. We then use separability (cf.\, Definition \ref{def:separable}) to remove this restriction.

By Lemma \ref{lem:dc_diameter}, we know that $\gZ$ has finite diameter. Let $k_0$ be the largest integer such that $2^{-k_0} \geq \mathrm{diam}_{d_{c}}(\gZ)$. For $k \geq k_0$, let $M_{k}$ be a $2^{-k}$-cover of $\gZ$ w.r.t. $d_{c}$ such that $|M_k| = \mathsf{N}(\gZ, d_{c}, 2^{-k})$. Also, for each $k \geq k_0$, define $\omega_{k}: \gZ \to M_k$ such that $d_{c}(\vx, \omega_k(\vx)) \leq 2^{-k}$ for all $\vx \in \gZ$. Since $2^{-k_0} \geq \mathrm{diam}_{d_{c}}(\gZ)$, we can take $M_{k_0} := \{\vx_0\}$, where $\vx_0$ is any point in $\gZ$. Fix any $m \geq k_0$. By introducing a telescoping sum and then using the triangle inequality, we obtain
\begin{align*}
\E\left[\sup_{\vx \in \gZ}\|f(\vx)\|_2\right] &= \E\left[\sup_{\vx \in \gZ}\|f(\vx)\|_2 - \sup_{\vx \in \gZ}\|f(\omega_m(\vx))\|_2\right] + \E\left[\|f(\vx_0)\|_2\right]\\
&+ \sum_{k=k_0+1}^{m}\E\left[\sup_{\vx \in \gZ}\|f(\omega_k(\vx))\|_2 - \sup_{\vx \in \gZ}\|f(\omega_{k-1}(\vx))\|_2\right]\\
&\leq \E\left[\sup_{\vx \in \gZ}\big\{\|f(\vx) - f(\omega_m(\vx))\|_2\big\}\right] + \E\left[\|f(\vx_0)\|_2\right]\\
&+ \sum_{k=k_0+1}^{m}\E\left[\sup_{\vx \in \gZ}\big\{\|f(\omega_k(\vx)) - f(\omega_{k-1}(\vx))\|_2\big\}\right]\,.
\end{align*}
Since $\gZ$ is finite, we can choose $m$ to be large enough such that $d_{c}(\vx, \omega_m(\vx)) = 0$ for all $\vx \in \gZ$, and so the first term disappears. Due to Assumption \ref{ass:kernel_boundedness}, the second term satisfies the upper bound
\begin{equation*}
\E\left[\|f(\vx_0)\|_2\right] \leq \sqrt{\sum_{i=1}^{d_s}\E[(f_i(\vx_0))^2]} \leq \sqrt{d_s c(\vx_0, \vx_0)} \leq \sqrt{Cd_s}\,.
\end{equation*}
All that remains is to control the sum. For all $\vx \in \gZ$,
\begin{equation*}
d_{c}(\omega_k(\vx), \omega_{k-1}(\vx)) \leq d_{c}(\vx, \omega_k(\vx)) + d_{c}(\vx, \omega_{k-1}(\vx)) \leq 3\cdot 2^{-k}\,.
\end{equation*}
By Corollary \ref{cor:sub_gaussian_norm}, it follows that $\|f(\omega_k(\vx)) - f(\omega_{k-1}(\vx))\|_2$ is $3\cdot 2^{-k}$-sub-Gaussian. We upper bound the $k$\textsuperscript{th} term by adding and subtracting $\sup_{\vx \in \gZ}\E[\|f(\omega_k(\vx)) - f(\omega_{k-1}(\vx))\|_2]$.
\begin{align*}
\E\left[\sup_{\vx \in \gZ}\big\{\|f(\omega_k(\vx)) - f(\omega_{k-1}(\vx))\|_2\big\}\right] &\leq \sup_{\vx \in \gZ}\E[\|f(\omega_k(\vx)) - f(\omega_{k-1}(\vx))\|_2]\\
+ &\E\left[\sup_{\vx \in \gZ}\big\{\|f(\omega_k(\vx)) - f(\omega_{k-1}(\vx))\|_2 - \E[\|f(\omega_k(\vx)) - f(\omega_{k-1}(\vx))\|_2]\big\}\right]\,.
\end{align*}
Since $\|f(\omega_k(\vx)) - f(\omega_{k-1}(\vx))\|_2$ is $3\cdot 2^{-k}$-sub-Gaussian, the first term satisfies
\begin{equation*}
\sup_{\vx \in \gZ}\E[\|f(\omega_k(\vx)) - f(\omega_{k-1}(\vx))\|_2] \leq \sup_{\vx \in \gZ}\big(\E[\|f(\omega_k(\vx)) - f(\omega_{k-1}(\vx))\|_2^2]\big)^{1/2} \leq 3\cdot 2^{-k}\,.
\end{equation*}
The second term can be upper bounded by using Jensen's inequality and the sub-Gaussian property. To save space, let us write
\begin{equation*}
g(\vy, \vz) := \|f(\vy) - f(\vz)\|_2 - \E[\|f(\vy) - f(\vz)\|_2]\big\}\,.
\end{equation*}
Also, let us define the set
\begin{equation*}
M := \{(\omega_k(\vx), \omega_{k-1}(\vx)): \vx \in \gZ\} \subseteq M_k \times M_{k-1}\,.
\end{equation*}
The cardinality of $M$ at most $|M_k||M_{k-1}| \leq |M_k|^2$. From our above discussion, for each $(\vy, \vz) \in M$, $g(\vy, \vz)$ is centered and $3\cdot 2^{-k}$-sub-Gaussian. By Jensen's inequality, for any $\lambda > 0$,
\begin{align*}
\E\left[\sup_{\vx \in \gZ}g(\omega_k(\vx), \omega_{k-1}(\vx))\right] &= \E\left[\frac{1}{\lambda}\log\exp\bigg(\lambda\sup_{\vx \in \gZ}g(\omega_k(\vx), \omega_{k-1}(\vx))\bigg)\right]\\
&\leq \frac{1}{\lambda}\log\E\left[\sup_{\vx \in \gZ}\exp\big(\lambda g(\omega_k(\vx), \omega_{k-1}(\vx))\big)\right]\\
&\leq \frac{1}{\lambda}\log\sum_{(\vy,\vz) \in M}\E\left[\exp\big(\lambda g(\vy, \vz)\big)\right]\\
&\leq \frac{2}{\lambda}\log|M_k| + \frac{\lambda(3\cdot 2^{-k})^2}{2}\,.
\end{align*}
If we choose $\lambda = \frac{2\sqrt{\log|M_k|}}{3\cdot 2^{-k}}$, we get the inequality
\begin{equation*}
\E\left[\sup_{\vx \in \gZ}g(\omega_k(\vx), \omega_{k-1}(\vx))\right] \leq 6\cdot 2^{-k}\sqrt{\log|M_k|}\,.
\end{equation*}
Combining everything so far, we have
\begin{equation*}
\E\left[\sup_{\vx \in \gZ}\|f(\vx)\|_2\right] \leq \sqrt{Cd_s} + \sum_{k=k_0+1}^{m}\{6\cdot 2^{-k}\sqrt{\log\mathsf{N}(\gZ, d_{c}, 2^{-k})} + 3\cdot 2^{-k}\}\,.
\end{equation*}
We separate the sum, and upper bound each part separately. For any $D \in \sR$ and any $k \in \mathbb{Z}$, we have
\begin{equation*}
2^{-k}D = 2(2^{-k} - 2^{-k-1})D = \int_{2^{-k-1}}^{2^{-k}}D\mathrm{d}\varepsilon\,.
\end{equation*}
Therefore, the first part of the sum satisfies
\begin{align*}
\sum_{k=k_0+1}^{m}6\cdot 2^{-k}\sqrt{\log\mathsf{N}(\gZ, d_{c}, 2^{-k})} &\leq \sum_{k\in \sZ}6\cdot 2^{-k}\sqrt{\log\mathsf{N}(\gZ, d_{c}, 2^{-k})}\\
&= \sum_{k \in \sZ}12\int_{2^{-k-1}}^{2^{-k}}\sqrt{\log\mathsf{N}(\gZ, d_{c}, 2^{-k})}\mathrm{d}\varepsilon\\
&\leq \sum_{k \in \sZ}12\int_{2^{-k-1}}^{2^{-k}}\sqrt{\log\mathsf{N}(\gZ, d_{c}, \varepsilon)}\mathrm{d}\varepsilon\\
&= 12\int_0^{\infty}\sqrt{\log\mathsf{N}(\gZ, d_{c}, \varepsilon)}\mathrm{d}\varepsilon\,.
\end{align*}
The second part of the sum satisfies
\begin{equation*}
\sum_{k=k_0+1}^{m}3\cdot 2^{-k} \leq \sum_{k=k_0+1}^{\infty}3\cdot 2^{-k} = 3\cdot 2^{-k_0}\sum_{k=1}^{\infty}2^{-k} = 3\cdot 2^{-k_0}\,.
\end{equation*}
By Lemma \ref{lem:dc_diameter} and the fact that $k_0$ is the largest integer such that $2^{-k_0} \geq \mathrm{diam}_{d_{c}}(\gZ)$,
\begin{equation*}
3\cdot 2^{-k_0} = 6\cdot 2^{-k_0-1} \leq 6\cdot\mathrm{diam}_{d_{c}}(\gZ) \leq 12\sqrt{C}\,.
\end{equation*}
Therefore, since the covering number is monotone w.r.t. $\gZ$, for any finite subset $\gZ \subset \sB^{d_s+d_a}(R)$,
\begin{equation*}
\E\left[\sup_{\vx \in \gZ}\|f(\vx)\|_2\right] \leq 12\int_{0}^{\infty}\sqrt{\log\mathsf{N}(\sB^{d_s+d_a}(R), d_{c}, \varepsilon)}\mathrm{d}\varepsilon + 12\sqrt{C} + \sqrt{Cd_s}\,.
\end{equation*}
Now suppose that $\gZ \subseteq \sB^{d_s+d_a}(R)$ does not necessarily have finite cardinality. By separability, there exists a countable subset $\gZ_0 \subseteq \gZ$ such that $\sup_{\vx \in \gZ}\|f(\vx)\|_2 = \sup_{\vx \in \gZ_0}\|f(\vx)\|_2$ almost surely. Let $\gZ_0^{(n)}$ be the first $n$ elements $\gZ_0$. By the monotone convergence theorem,
\begin{align*}
\E\left[\sup_{\vx \in \gZ}\|f(\vx)\|_2\right] &= \E\left[\sup_{\vx \in \gZ_0}\|f(\vx)\|_2\right]\\
&= \E\left[\lim_{n \to \infty}\sup_{\vx \in \gZ^{(n)}}\|f(\vx)\|_2\right]\\
&= \lim_{n \to \infty}\E\left[\sup_{\vx \in \gZ^{(n)}}\|f(\vx)\|_2\right]\\
&\leq 12\int_{0}^{\infty}\sqrt{\log\mathsf{N}(\sB^{d_s+d_a}(R), d_{c}, \varepsilon)}\mathrm{d}\varepsilon + 12\sqrt{C} + \sqrt{Cd_s}\,.
\end{align*}
This concludes the proof.
\end{proof}

We specialize this result to obtain the following bound on the expected supremum.

\begin{lemma}
\label{lem:exp_sup_bound}
Suppose that Assumption \ref{ass:kernel_boundedness} and Assumption \ref{ass:kernel_smoothness} are satisfied. Let $f_1, \dots, f_{d_s} \sim \gG\gP(0, c(\vx, \vy))$ be independent Gaussian processes and let $f = (f_1, \dots, f_{d_s})$. For any $R > 0$,
\begin{equation*}
\E\bigg[\sup_{\vx \in \sB^{d_s+d_a}(R)}\|f(\vx)\|_2\bigg] \leq 42\alpha^{-1/2}\sqrt{C(d_s+d_a)\log(5 + 5R^{\alpha}L/C)}\,.
\end{equation*}
\end{lemma}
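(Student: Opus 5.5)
The plan is to start from Lemma \ref{lem:chaining}, which gives
\begin{equation*}
\E\Big[\sup_{\vx \in \sB^{d_s+d_a}(R)}\|f(\vx)\|_2\Big] \leq 12\int_0^{\infty}\sqrt{\log\mathsf{N}(\sB^{d_s+d_a}(R), d_c, \varepsilon)}\,\mathrm{d}\varepsilon + (12+\sqrt{d_s})\sqrt{C}\,,
\end{equation*}
and then bound the entropy integral explicitly. By Lemma \ref{lem:dc_diameter}, the diameter of the ball w.r.t.\ $d_c$ is at most $2\sqrt{C}$, so the integrand vanishes for $\varepsilon \geq 2\sqrt{C}$ and the integral can be truncated at $2\sqrt{C}$. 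On $(0, 2\sqrt{C}]$, use Lemma \ref{lem:dc_ball_covering}:
\begin{equation*}
\log\mathsf{N} \leq (d_s+d_a)\log\big(1 + 2R(2L)^{1/\alpha}\varepsilon^{-2/\alpha}\big).
\end{equation*}

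Next substitute $\varepsilon = 2\sqrt{C}\,t$ with $t \in (0,1]$. The covering bound then becomes
\begin{equation*}
(d_s+d_a)\log\big(1 + 2R(L/(2C))^{1/\alpha}t^{-2/\alpha}\big).
\end{equation*}
Write $a := 2R(L/(2C))^{1/\alpha}$. For $t \leq 1$ we have $1 + a t^{-2/\alpha} \leq (1+a)t^{-2/\alpha}$, so
\begin{equation*}
\log(1 + a t^{-2/\alpha}) \leq \log(1+a) + \tfrac{2}{\alpha}\log(1/t).
\end{equation*}
Applying $\sqrt{x+y}\leq\sqrt{x}+\sqrt{y}$, the integral is at most
\begin{equation*}
2\sqrt{C(d_s+d_a)}\Big(\sqrt{\log(1+a)} + \sqrt{2/\alpha}\int_0^1\sqrt{\log(1/t)}\,\mathrm{d}t\Big),
\end{equation*}
and $\int_0^1\sqrt{\log(1/t)}\,\mathrm{d}t = \sqrt{\pi}/2$.

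The main obstacle is turning $\log(1+a)$, which involves $R$ to the power $1$ and $L/C$ to the power $1/\alpha$, into the form $\alpha^{-1}\log(5+5R^\alpha L/C)$. The intended route is $1+a \leq (1 + 2^{\alpha}R^{\alpha}L/(2C))^{1/\alpha}$. This holds because $(1+y)^{1/\alpha} \geq 1 + y^{1/\alpha}$ for $\alpha \leq 1$ (superadditivity of $y\mapsto y^{1/\alpha}$). It gives
\begin{equation*}
\log(1+a) \leq \alpha^{-1}\log(1 + R^{\alpha}L/C) \leq \alpha^{-1}\log(5+5R^{\alpha}L/C),
\end{equation*}
using $2^{\alpha}/2 \leq 1$.

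Finally collect all terms against the common factor $\alpha^{-1/2}\sqrt{C(d_s+d_a)\log(5+5R^\alpha L/C)}$, noting that $\log 5 > 1$ and $\alpha^{-1/2}\geq 1$:
\begin{itemize}
\item the chaining term $\sqrt{\log(1+a)}$ contributes $24$;
\item the $\sqrt{\pi}$ piece, $12\cdot 2\sqrt{C(d_s+d_a)}\sqrt{2/\alpha}\,\sqrt{\pi}/2 = 12\sqrt{2\pi}\,\alpha^{-1/2}\sqrt{C(d_s+d_a)}$, contributes $12\sqrt{2\pi}\approx 30$, which stays at about $30$ after using $\log 5 > 1$;
\item the term $12\sqrt{C}$ contributes at most $12$;
\item the term $\sqrt{d_s C}$ contributes at most $1$.
\end{itemize}
Summing and using $\sqrt{\log 5}$ to absorb part of the slack, the constants should fit under $42$ if the looser bounds are handled with care; $\sqrt{\log(1+a)}$ may also be bounded by $\sqrt{\log 5+\dots}$. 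If the arithmetic overshoots, the fix is to tighten the split, for example by not separating the square roots.
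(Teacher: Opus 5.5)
Your skeleton matches the paper's proof: Lemma~\ref{lem:chaining}, truncation at $2\sqrt{C}$, the covering bound, the substitution, the split $\log(1+a)+\frac{2}{\alpha}\log(1/t)$, the integral $\sqrt{\pi}/2$, and the superadditivity step giving $1+a\le(1+R^{\alpha}L/C)^{1/\alpha}$. There is a genuine gap in the last step, which is exactly where the constant $42$ is decided, and your stated plan fails there. Adding the per-term coefficients gives $24+12\sqrt{2\pi}+12+1\approx 67$. Dividing the three $R$-free terms by $\sqrt{\log 5}$ still gives about $24+23.7+9.5+0.8\approx 58$. More generally, if each term is bounded separately by a multiple of the full target $\sqrt{\log(5+5R^{\alpha}L/C)}$, the coefficient $24$ together with $12\sqrt{2\pi}/\sqrt{\log 5}\approx 23.7$ already exceeds $42$. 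The missing idea is that the $R$-free terms must be paid for by the additive slack in $\log(5+5x)=\log 5+\log(1+x)$, where $x=R^{\alpha}L/C$. This requires merging them with the $x$-dependent term under a single square root.

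The paper does this by recombining the two pieces of the entropy bound: $\sqrt{\log(1+x)}+\sqrt{\pi/2}\le\sqrt{2\log(e^{\pi/2}(1+x))}\le\sqrt{2\log(5+5x)}$, since $e^{\pi/2}\le 5$. So the entropy integral costs $24\sqrt{2}\approx 33.9$. It then uses $d_s+d_a\ge 2$ to get $12\sqrt{C}+\sqrt{Cd_s}\le(6\sqrt{2}+1)\sqrt{C(d_s+d_a)}$, and dividing by $\sqrt{\log 5}$ makes this cost about $7.5$, for a total of about $41.4$. On this route your crude $12+1$ would still overshoot ($24\sqrt{2}+13\approx 46.9$), so both $d_s+d_a\ge 2$ and the $\sqrt{\log 5}$ normalization are really needed.

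Alternatively, you can keep your split and finish with Cauchy--Schwarz. After dividing everything by $\alpha^{-1/2}\sqrt{C(d_s+d_a)}$, your bound reads $24\sqrt{X}+K$ with $X=\log(1+x)$ and $K=12\sqrt{2\pi}+13$. Then $24\sqrt{X}+K\le\sqrt{24^2+K^2/\log 5}\,\sqrt{X+\log 5}$, and $\sqrt{24^2+K^2/\log 5}\approx 41.6\le 42$, even with your crude bounds on the lower-order terms.
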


\begin{proof}
Due to Lemma \ref{lem:chaining}, we just need to bound the entropy integral $\int_0^{\infty}\sqrt{\log\mathsf{N}(\sB^{d_s+d_a}(R), d_c, \varepsilon)}\mathrm{d}\varepsilon$. Using the bounds on the covering numbers (cf.\, Lemma \ref{lem:dc_ball_covering}) and diameter (cf.\, Lemma \ref{lem:dc_diameter}) of $\sB^{d_s+d_a}(R)$ w.r.t. $d_c$, we obtain
\begin{equation*}
\int_0^{\infty}\sqrt{\log\mathsf{N}(\sB^{d_s+d_a}(R), d_c, \varepsilon)}\mathrm{d}\varepsilon \leq \int_0^{2\sqrt{C}}\sqrt{(d_s+d_a)\log(1 + 2R(2L)^{1/\alpha}/\varepsilon^{2/\alpha})}\mathrm{d}\varepsilon\,.
\end{equation*}
Using the substitution $\delta = \varepsilon/(2\sqrt{C})$, we can re-write this integral as
\begin{equation*}
\int_0^{2\sqrt{C}}\sqrt{(d_s+d_a)\log(1 + 2R(2L)^{1/\alpha}/\varepsilon^{2/\alpha})}\mathrm{d}\varepsilon = 2\sqrt{C}\int_0^{1}\sqrt{(d_s+d_a)\log(1 + 2R(2L)^{1/\alpha}/(2\sqrt{C}\delta)^{2/\alpha})}\mathrm{d}\delta\,.
\end{equation*}
For all $\delta \in (0, 1]$, $1 \leq 1/\delta^{2/\alpha}$, which means
\begin{equation*}
\sqrt{\log(1 + 2R(2L)^{1/\alpha}/(2\sqrt{C}\delta)^{2/\alpha})} \leq \sqrt{\log(1 + 2R(2L)^{1/\alpha}/(4C)^{1/\alpha})} + \sqrt{(2/\alpha)\log(1/\delta)}\,.
\end{equation*}
Using this inequality and the identity $\int_0^1\sqrt{\log(1/x)}\mathrm{d}x = \sqrt{\pi}/2$ (see e.g.\,, 4.215 in \citealp{gradshteyn2014table}), we obtain
\begin{equation*}
\int_0^{1}\sqrt{\log(1 + 2R(2L)^{1/\alpha}/(2\sqrt{C}\delta)^{2/\alpha})}\mathrm{d}\delta \leq \sqrt{\log(1 + 2R(2L)^{1/\alpha}/(4C)^{1/\alpha})} + \alpha^{-1/2}\sqrt{\pi/2}\,.
\end{equation*}
Combining everything so far, we have
\begin{equation}
\int_0^{\infty}\sqrt{\log\mathsf{N}(\sB^{d_s+d_a}(R), d_c, \varepsilon)}\mathrm{d}\varepsilon \leq 2\sqrt{C(d_s+d_a)}\Big(\sqrt{\log(1 + 2R(2L)^{1/\alpha}/(4C)^{1/\alpha})} + \alpha^{-1/2}\sqrt{\pi/2}\Big)\,.\label{eqn:ent_int_comp_bound}
\end{equation}
This upper bound on the entropy integral can be replaced by a simpler but looser bound. Since $x^{1/\alpha} + y^{1/\alpha} \leq (x + y)^{1/\alpha}$ for all $\alpha \in (0, 1]$ and $x, y \geq 0$, we have
\begin{equation*}
1 + \frac{2R(2L)^{1/\alpha}}{(4C)^{1/\alpha}} = 1^{1/\alpha} + \bigg(\frac{(2R)^{\alpha}2L}{4C}\bigg)^{1/\alpha} \leq (1 + R^{\alpha}L/C)^{1/\alpha}\,.
\end{equation*}
Using this inequality, and $\sqrt{a} + \sqrt{b} \leq \sqrt{2(a+b)}$ for all $a, b \geq 0$, we can replace the bound in \eqref{eqn:ent_int_comp_bound} by
\begin{align*}
\int_0^{\infty}\sqrt{\log\mathsf{N}(\sB^{d_s+d_a}(R), d_c, \varepsilon)}\mathrm{d}\varepsilon &\leq 2\alpha^{-1/2}\sqrt{C(d_s+d_a)}\Big(\sqrt{\log(1 + R^{\alpha}L/C)} + \sqrt{\pi/2}\Big)\\
&\leq 2\sqrt{2}\alpha^{-1/2}\sqrt{C(d_s+d_a)\log(\exp(\pi/2)(1 + R^{\alpha}L/C))}\\
&\leq 2\sqrt{2}\alpha^{-1/2}\sqrt{C(d_s+d_a)\log(5 + 5R^{\alpha}L/C)}\,,
\end{align*}
where in the last step we used the inequality $\exp(\pi/2) \leq 5$. Plugging this bound on the entropy integral into Lemma \ref{lem:chaining}, we obtain
\begin{equation*}
\E\left[\sup_{\vx \in \sB^{d_s+d_a}(R)}\|f(\vx)\|_2\right] \leq 24\sqrt{2}\alpha^{-1/2}\sqrt{C(d_s+d_a)\log(5 + 5R^{\alpha}L/C)} + 12\sqrt{C} + \sqrt{Cd_s}\,.
\end{equation*}
The sum of the last two terms satisfies
\begin{align*}
12\sqrt{C} + \sqrt{Cd_s} \leq (6\sqrt{2}+1)\sqrt{C(d_s + d_a)} \leq \frac{6\sqrt{2} + 1}{\sqrt{\log(5)}}\alpha^{-1/2}\sqrt{C(d_s+d_a)\log(5 + 5R^{\alpha}L/C)}\,.
\end{align*}
The result now follows from the inequality $24\sqrt{2} + (6\sqrt{2} + 1)\log^{-1/2}(5) \leq 42$.
\end{proof}

\subsection{Concentration of Suprema of Gaussian Processes}
\label{sec:gp_concentration}

We prove Lemma \ref{lem:borell_tis_vec}, which states that $\sup_{\vx \in \sB^{d_s+d_a}(R)}\|f(\vx)\|_2$ is a sub-Gaussian random variable. First, we state and prove a technical lemma.

\begin{lemma}
\label{lem:max_lip}
For each $i \in [n]$, let $g_i: \sR^{d_s+d_a} \to \sR$ be a function which is $L$-Lipschitz w.r.t. the Euclidean norm. Then the function $\max(g_1, \dots, g_n): \sR^{d_s+d_a} \to \sR$ given by $\max(g_1, \dots, g_n)(\vx) := \max_{i\in[n]}g_i(\vx)$ is also $L$-Lipschitz w.r.t. the Euclidean norm.
\end{lemma}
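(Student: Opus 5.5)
The plan is to verify the Lipschitz inequality directly, pointwise, by a one-sided argument that is then symmetrized. Fix $\vx, \vy \in \sR^{d_s+d_a}$ and write $G := \max(g_1, \dots, g_n)$. Since the index set $[n]$ is finite, the maximum defining $G(\vx)$ is attained, so I would choose an index $i^\star \in [n]$ with $G(\vx) = g_{i^\star}(\vx)$.

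The key chain of inequalities uses two facts: $G(\vy) \geq g_{i^\star}(\vy)$ by definition of the maximum, and $g_{i^\star}$ is $L$-Lipschitz. Together they give
\begin{equation*}
G(\vx) - G(\vy) \leq g_{i^\star}(\vx) - g_{i^\star}(\vy) \leq L\|\vx - \vy\|_2\,.
\end{equation*}

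Exchanging the roles of $\vx$ and $\vy$, now choosing an index that attains the maximum at $\vy$, yields $G(\vy) - G(\vx) \leq L\|\vx - \vy\|_2$. Combining the two one-sided bounds gives $|G(\vx) - G(\vy)| \leq L\|\vx - \vy\|_2$, which is the claim.

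There is no real obstacle. The only point that needs care is that the maximum is attained, and this is immediate because $n$ is finite. For an infinite family one would instead pass through a near-maximizer with slack $\eta$ and then let $\eta \to 0$, but that is not needed here.
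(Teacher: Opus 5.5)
Your proposal is correct and is essentially the paper's proof: fix an index $i^{\star}$ attaining the maximum at $\vx$, use $G(\vy) \geq g_{i^{\star}}(\vy)$ together with the Lipschitz property of $g_{i^{\star}}$ to get the one-sided bound, and obtain the reverse direction by symmetry. Your remark that the maximum is attained because $[n]$ is finite is the only detail needed, and the paper uses it implicitly as well.
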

\begin{proof}
Fix $\vx, \vy \in \sR^{d_s+d_a}$ and let $i^{\star} := \argmax_{i \in [n]}g_i(\vx)$. Since $g_{i^{\star}}$ is $L$-Lipschitz, we have
\begin{equation*}
\max_{i\in[n]}g_i(\vx) - \max_{i\in[n]}g_i(\vy) \leq g_{i^{\star}}(\vx) - g_{i^{\star}}(\vy) \leq L\|\vx - \vy\|_2\,.
\end{equation*}
A similar argument shows that $-(\max_{i\in[n]}g_i(\vx) - \max_{i\in[n]}g_i(\vy)) \leq L\|\vx - \vy\|_2$.
\end{proof}

We can now prove Lemma \ref{lem:borell_tis_vec}.

\begin{proof}[Proof of Lemma \ref{lem:borell_tis_vec}]
We first prove the result in the finite case, where $\sB^{d_s+d_a}(R)$ is replaced by any finite subset $\gZ \subset \sB^{d_s+d_a}(R)$. We then use separability to remove this assumption (cf. Definition \ref{def:separable}).

Let $\gZ = \{\vx_1, \dots, \vx_n\} \subset \sB^{d_s+d_a}$ and for all $i \in [d_s]$ and $j \in [n]$, let $Z_{i,j} \sim \gN(0, 1)$ be a standard Gaussian random variable. We would like to show that $\sup_{\vx \in \gZ}\|f(\vx)\|_2$ is equal in distribution to a Lipschitz function of $Z_{1,1}, \dots, Z_{d_s,n}$. By Lemma \ref{lem:lip_gaussian}, we would then have that $\sup_{\vx \in \gZ}\|f(\vx)\|_2$ is sub-Gaussian. Since each component $f_i$ of $f$ is a zero-mean GP with covariance kernel $c$, we have $[f_i(\vx_1), \dots, f_i(\vx_n)]^{\top} \sim \gN(0, \mC)$, where the $(i,j)$\textsuperscript{th} entry of $\mC$ is $C_{i,j} = c(\vx_i, \vx_j)$. Since the components $f_1, \dots, f_{d_s}$ are independent, we therefore have
\begin{equation}
\begin{bmatrix}
f_1(\vx_1) \\ \vdots \\ f_1(\vx_n) \\ \vdots \\ f_{d_s}(\vx_1) \\ \vdots \\ f_{d_s}(\vx_n)
\end{bmatrix}
\sim
\gN\left(
\begin{bmatrix}
0 \\ \vdots \\ 0 \\ \vdots \\ 0 \\ \vdots \\ 0
\end{bmatrix}\,,
\begin{bmatrix}
C_{1,1} & \cdots & C_{1,n} & \cdots & 0 & \cdots & 0 \\
\vdots & \ddots & \vdots & \-\ & \vdots & \ddots & \vdots\\
C_{n,1} & \cdots & C_{n,n} & \cdots & 0 & \cdots & 0\\
\vdots & \-\ & \vdots & \ddots & \vdots & \-\ & \vdots\\
0 & \cdots & 0 & \cdots & C_{1,1} & \cdots & C_{1,n} \\
\vdots & \ddots & \vdots & \-\ & \vdots & \ddots & \vdots\\
0 & \cdots & 0 & \cdots & C_{n,1} & \cdots & C_{n,n}
\end{bmatrix}
\right)\,.\label{eqn:big_gaussian}
\end{equation}
For each $i \in [d_s]$ and $j \in [n]$, let us define the function $g_{i,j}: \sR^{d_sn} \to \sR$ by
\begin{equation*}
g_{i,j}(z_{1,1}, \dots, z_{d_s,n}) := \sum_{k=1}^{n}C_{j,k}^{1/2}z_{i,k}\,,
\end{equation*}
where $C_{j,k}^{1/2}$ is the $(j,k)$\textsuperscript{th} entry of $\mC^{1/2}$, and $\mC^{1/2}$ is a symmetric square root of $\mC$ (so $\mC^{1/2}\mC^{1/2} = \mC$). In light of \eqref{eqn:big_gaussian}, we have $f_i(\vx_j) \eqd g_{i,j}(Z_{1,1}, \dots, Z_{d_s,n})$, where $\eqd$ means equality in distribution. For each $j \in [n]$, let us define $g_j: \sR^{d_sn} \to \sR$ by $g_j(z_{1,1}, \dots, z_{d_s,n}) := (\sum_{i=1}^{d_s}g_{i,j}(z_{1,1}, \dots, z_{d_s,n}))^{1/2}$. We will show that $g_j$ is $\sqrt{C}$-Lipschitz. We introduce the shorthand $\vz := [z_{1,1}, \dots, z_{d_s,n}]^{\top}$. Using the chain rule, we see that for any $\vz \neq 0$, $l \in [d_s]$ and $m \in [n]$, the partial derivative of $g_j$ w.r.t. $z_{m,l}$ at $\vz$ is
\begin{align*}
\pp{}{g_j}{z_{l,m}}(\vz) &= \pp{}{}{z_{l,m}}\bigg(\sum_{i=1}^{d_s}\bigg(\sum_{k=1}^{n}C_{j,k}^{1/2}z_{i,k}\bigg)^2\bigg)^{1/2}\\
&= \frac{1}{2}\bigg(\pp{}{}{z_{l,m}}\sum_{i=1}^{d_s}\bigg(\sum_{k=1}^{n}C_{j,k}^{1/2}z_{i,k}\bigg)^2\bigg)\bigg(\sum_{i=1}^{d_s}\bigg(\sum_{k=1}^{n}C_{j,k}^{1/2}z_{i,k}\bigg)^2\bigg)^{-1/2}\\
&= C_{j,m}^{1/2}\sum_{k=1}^{n}C_{j,k}^{1/2}z_{l,k}\bigg(\sum_{i=1}^{d_s}\bigg(\sum_{k=1}^{n}C_{j,k}^{1/2}z_{i,k}\bigg)^2\bigg)^{-1/2}\,.
\end{align*}
Therefore, for any $\vz \neq 0$, we have
\begin{align*}
\|\nabla g_j(\vz)\|_2 &= \left(\sum_{l=1}^{d_s}\sum_{m=1}^{n}C_{j,m}^{1/2}C_{j,m}^{1/2}\bigg(\sum_{k=1}^{n}C_{j,k}^{1/2}z_{l,k}\bigg)^2\bigg(\sum_{i=1}^{d_s}\bigg(\sum_{k=1}^{n}C_{j,k}^{1/2}z_{i,k}\bigg)^2\bigg)^{-1}\right)^{1/2}\\
&= \left(\sum_{m=1}^{n}C_{j,m}^{1/2}C_{j,m}^{1/2}\sum_{l=1}^{d_s}\bigg(\sum_{k=1}^{n}C_{j,k}^{1/2}z_{l,k}\bigg)^2\bigg(\sum_{i=1}^{d_s}\bigg(\sum_{k=1}^{n}C_{j,k}^{1/2}z_{i,k}\bigg)^2\bigg)^{-1}\right)^{1/2}\\
&= \left(\sum_{m=1}^{n}C_{j,m}^{1/2}C_{j,m}^{1/2}\right)^{1/2}\\
&= \sqrt{C_{j,j}}\,,
\end{align*}
where the identity $\sum_{m=1}^{n}C_{j,m}^{1/2}C_{j,m}^{1/2} = C_{j,j}$ follows from $\mC^{1/2}\mC^{1/2} = \mC$. Since (by Assumption \ref{ass:kernel_boundedness}) $C_{j,j} \leq C$, we have $\|\nabla g_j(\vz)\|_2 \leq \sqrt{C}$ for all $z \in \sR^{d_sn}\setminus\{0\}$. Thus we can conclude that $g_j$ is $\sqrt{C}$-Lipschitz on $\sR^{d_sn}$. Let us now define the function $g: \sR^{d_sn} \to \sR$ by $g(\vz) = \max_{j \in [n]}g_j(\vz)$. By Lemma \ref{lem:max_lip}, $g$ is also $\sqrt{C}$-Lipschitz on $\sR^{d_sn}$. Together with Lemma \ref{lem:lip_gaussian}, this tells us that the random variable $g(Z_{1,1}, \dots, Z_{d_s,n})$ is $\sqrt{C}$-sub-Gaussian. Since
\begin{equation*}
\sup_{\vx \in \gZ}\|f(\vx)\|_2 \eqd \sup_{j \in [n]}\sqrt{\sum_{i=1}^{d_s}g_{i,j}^2(Z_{1,1}, \dots, Z_{d_s,n})} = g(Z_{1,1}, \dots, Z_{d_s,n})\,.
\end{equation*}
the random variable $\sup_{\vx \in \gZ}\|f(\vx)\|_2$ is also $\sqrt{C}$-sub-Gaussian. In particular, for any $u > 0$.
\begin{equation}
\sP\bigg(\sup_{\vx \in \gZ}\|f(\vx)\|_2 - \E\bigg[\sup_{\vx \in \gZ}\|f(\vx)\|_2\bigg] \geq u\bigg) \leq \exp\bigg(-\frac{u^2}{2C}\bigg)\,.\label{eqn:finite_btis}
\end{equation}
Now suppose that $\gZ \subseteq \sB^{d_s+d_a}(R)$ does not necessarily have finite cardinality. By separability, there exists a countable subset $\gZ_0 \subseteq \gZ$ such that $\sup_{\vx \in \gZ}\|f(\vx)\|_2 = \sup_{\vx \in \gZ_0}\|f(\vx)\|_2$ almost surely. Let us define $\gZ_0^{(n)}$ to be the set that contains only the first $n$ elements of $\gZ$, and let us define the random variable $E_n$ as
\begin{equation*}
E_n := \sI\Bigg\{\sup_{\vx \in \gZ_0^{(n)}}\|f(\vx)\|_2 - \E\bigg[\sup_{\vx \in \gZ_0}\|f(\vx)\|_2\bigg] \geq u\Bigg\}\,.
\end{equation*}
Since $\E[\sup_{\vx \in \gZ_0}\|f(\vx)\|_2] \geq \E[\sup_{\vx \in \gZ_0^{(n)}}\|f(\vx)\|_2]$ for all $n \geq 1$, \eqref{eqn:finite_btis} tells us that $\E[E_n] \leq \exp(-u^2/(2C))$ for all $n \geq 1$. Whenever $n \leq n^{\prime}$, $\sup_{\vx \in \gZ_0^{(n)}}\|f(\vx)\|_2 \leq \sup_{\vx \in \gZ_0^{(n^{\prime})}}\|f(\vx)\|_2$ almost surely. This means that $E_n \leq E_{n^{\prime}}$ almost surely. By the monotone convergence theorem,
\begin{align*}
\sP\bigg(\sup_{\vx \in \gZ}\|f(\vx)\|_2 - \E\bigg[\sup_{\vx \in \gZ}\|f(\vx)\|_2\bigg] \geq u\bigg) &= \sP\bigg(\sup_{\vx \in \gZ_0}\|f(\vx)\|_2 - \E\bigg[\sup_{\vx \in \gZ_0}\|f(\vx)\|_2\bigg] \geq u\bigg)\\
&= \E\Big[\lim_{n\to \infty}E_n\Big]\\
&= \lim_{n \to \infty}\E[E_n]\\
&\leq \exp\bigg(-\frac{u^2}{2C}\bigg)\,.
\end{align*}
This concludes the proof.
\end{proof}

\subsection{Proof of Lemma \ref{lem:gp_norm_tail_bound}}
\label{sec:gp_tail_bound_norm}

Using the results from Appendix \ref{sec:expected_suprema} and Appendix \ref{sec:gp_concentration}, we can now prove Lemma \ref{lem:gp_norm_tail_bound}.

\begin{proof}[Proof of Lemma \ref{lem:gp_norm_tail_bound}]
Let us define $H := \E[\sup_{\vx \in \sB^{d_s+d_a}(R)}\|f(\vx)\|_2]$. By Lemma \ref{lem:exp_sup_bound}, we have
\begin{equation*}
H \leq 42\alpha^{-1/2}\sqrt{C(d_s+d_a)\log(5 + 5R^{\alpha}L/C)} < \infty\,.
\end{equation*}
By Lemma \ref{lem:borell_tis_vec}, for any $v \geq 0$,
\begin{equation*}
\sP\bigg(\sup_{\vx \in \sB^{d_s+d_a}(R)}\|f(\vx)\|_2 \geq v + H\bigg) \leq \exp\bigg(-\frac{v^2}{2C}\bigg)\,.
\end{equation*}
Letting $u = v + H$, we have that for any $u \geq H$,
\begin{equation*}
\sP\bigg(\sup_{\vx \in \sB^{d_s+d_a}(R)}\|f(\vx)\|_2 \geq u\bigg) \leq \exp\bigg(-\frac{(u-H)^2}{2C}\bigg)\,.
\end{equation*}
Whenever $u \geq 2H$, we have $u - H \geq u/2$, and so
\begin{equation*}
\exp\bigg(-\frac{(u-H)^2}{2C}\bigg) \leq \exp\bigg(-\frac{u^2}{8C}\bigg)\,.
\end{equation*}
The inequality $u \geq 2H$ is satisfied whenever $u \geq 84\alpha^{-1/2}\sqrt{C(d_s+d_a)\log(5 + 5R^{\alpha}L/C)}$ is satisfied.
\end{proof}

\section{Tail Bounds for the Norm of the Largest State}
\label{sec:tail_bound_for_states}

In Appendix \ref{sec:indicator_tricks}, we state and prove some lemmas involving indicator functions that allow us to formalize the argument that the tails of the norm of each state decay rapidly as long as the previous state is bounded. In Appendix \ref{sec:bad_event_tail}, we upper bound the probability that at least one state has norm greater than some pre-specified threshold, and we then derive explicit values for the thresholds that ensure that this probability is of the order $1/T$. In Appendix \ref{sec:bounded_states_proof}, we prove Lemma \ref{lem:bounded_states}.

\subsection{A Trick With Indicator Functions}
\label{sec:indicator_tricks}

For $h = 1$, we use the convention $\sI\{\cap_{j=1}^{h-1}A_j\} := 1$.

\begin{lemma}
\label{lem:indicator_stuff}
For any finite collection of events $(A_h)_{h=1}^{H}$,
\begin{equation*}
\sI\{\cup_{h=1}^{H}A_h^{\mathsf{c}}\} = \sum_{h=1}^{H}\sI\{A_h^{\mathsf{c}}\}\sI\{\cap_{j=1}^{h-1}A_j\}\,.
\end{equation*}
\end{lemma}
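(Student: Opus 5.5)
The plan is to show that the events $B_h := A_h^{\mathsf{c}} \cap \bigcap_{j=1}^{h-1} A_j$, for $h \in [H]$, are pairwise disjoint and that their union is exactly $\cup_{h=1}^{H} A_h^{\mathsf{c}}$. This is the standard ``first failure time'' decomposition. Once it is established, the identity follows immediately: the indicator of a disjoint union equals the sum of the indicators, and $\sI\{B_h\} = \sI\{A_h^{\mathsf{c}}\}\sI\{\cap_{j=1}^{h-1}A_j\}$ because the indicator of an intersection is the product of the indicators. Under the stated convention, $B_1 = A_1^{\mathsf{c}}$.

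\emph{Disjointness.} Take $h < h'$. The event $B_{h'}$ requires $A_h$ to occur, since $h \leq h'-1$. The event $B_h$ requires $A_h^{\mathsf{c}}$ to occur. So $B_h \cap B_{h'} = \emptyset$.

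\emph{Union.} Each $B_h \subseteq A_h^{\mathsf{c}}$, which gives one inclusion. For the reverse inclusion, take an outcome $\omega \in \cup_h A_h^{\mathsf{c}}$. Let $h^\star$ be the smallest index with $\omega \in A_{h^\star}^{\mathsf{c}}$; it exists because the collection is finite and nonempty. By minimality, $\omega \in A_j$ for all $j < h^\star$, so $\omega \in B_{h^\star}$.

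An equivalent pointwise check is also available. If $\omega$ lies in every $A_h$, both sides equal $0$. Otherwise, exactly one term on the right-hand side equals $1$, namely the one at $h^\star$: terms with $h < h^\star$ vanish because $\omega \in A_h$, and terms with $h > h^\star$ vanish because $\omega \notin A_{h^\star}$.

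There is no real obstacle here. The only point needing care is the $h=1$ convention, where the empty intersection is taken to be the whole space. The inequality used in \eqref{eqn:sequential_prob_bound} then follows by dropping all factors except $\sI\{A_{h-1}\}$, which can only increase each term, and then summing over $n$ with a union bound.
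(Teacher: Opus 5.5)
Your proof is correct. It reaches the result by a more direct route than the paper's.

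\textbf{The paper's route.} The paper argues by induction on $H$. It writes $\sI\{\cup_{h=1}^{H+1}A_h^{\mathsf{c}}\} = 1 - \sI\{A_{H+1}\}\sI\{\cap_{h=1}^{H}A_h\}$ and expands $1-(1-a)(1-b) = a(1-b) + b$, with $a = \sI\{A_{H+1}^{\mathsf{c}}\}$ and $b = \sI\{\cup_{h=1}^{H}A_h^{\mathsf{c}}\}$. It then applies the inductive hypothesis to $b$. Each step implicitly splits $\cup_{h\le H+1}A_h^{\mathsf{c}}$ into the disjoint pieces $\cup_{h\le H}A_h^{\mathsf{c}}$ and $A_{H+1}^{\mathsf{c}}\cap\cap_{h\le H}A_h$.

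\textbf{Comparison.} The underlying decomposition is therefore the same as yours. You exhibit it all at once through the first-failure index $h^\star$, while the paper builds it up one event at a time by algebra on indicators. Your version makes the mechanism transparent: exactly one summand fires, namely the first failure. It also carries over verbatim to countably infinite collections, since a least failing index still exists. The paper's version never reasons about individual outcomes, but as written it is tied to the finite case.

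\textbf{Your closing remark.} The remark about \eqref{eqn:sequential_prob_bound} is also fine. You apply the lemma within each episode and union-bound over $n$. The paper instead applies the lemma once to all pairs $(n,h)$ in lexicographic order and then drops the factors from earlier episodes. Both give the same bound.
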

\begin{proof}
We use induction on $H$. For the base case $H = 1$, we have
\begin{equation*}
\sI\{\cup_{h=1}^{H}A_h^{\mathsf{c}}\} = \sI\{A_1^{\mathsf{c}}\} = \sum_{h=1}^{H}\sI\{A_h^{\mathsf{c}}\}\sI\{\cap_{j=1}^{h-1}A_j\}\,.
\end{equation*}
Suppose inductively that for some $H \geq 1$,
\begin{equation*}
\sI\{\cup_{h=1}^{H}A_h^{\mathsf{c}}\} = \sum_{h=1}^{H}\sI\{A_h^{\mathsf{c}}\}\sI\{\cap_{j=1}^{h-1}A_j\}\,.
\end{equation*}
From this, it follows that
\begin{align*}
\sI\{\cup_{h=1}^{H+1}A_h^{\mathsf{c}}\} &= 1 - \sI\{\cap_{h=1}^{H+1}A_h\}\\
&= 1 - \sI\{A_{H+1}\}\sI\{\cap_{h=1}^{H}A_h\}\\
&= 1 - (1 - \sI\{A_{H+1}^{\mathsf{c}}\})(1 - \sI\{\cup_{h=1}^{H}A_h^{\mathsf{c}}\})\\
&= \sI\{A_{H+1}^{\mathsf{c}}\}(1 - \sI\{\cup_{h=1}^{H}A_h^{\mathsf{c}}\}) + \sI\{\cup_{h=1}^{H}A_h^{\mathsf{c}}\}\\
&= \sI\{A_{H+1}^{\mathsf{c}}\}\sI\{\cap_{h=1}^{H}A_h\} + \sum_{h=1}^{H}\sI\{A_h^{\mathsf{c}}\}\sI\{\cap_{j=1}^{h-1}A_j\}\\
&= \sum_{h=1}^{H+1}\sI\{A_h^{\mathsf{c}}\}\sI\{\cap_{j=1}^{h-1}A_j\}\,.
\end{align*}
This closes the induction.
\end{proof}

The next lemma is basically the same as the union bound.
\begin{lemma}
\label{lem:ind_union}
For any $x_1, \dots, x_n \in \sR$ and $R \in \sR$,
\begin{equation*}
\sI\left\{\sum_{i=1}^{n}x_i > R\right\} \leq \sum_{i=1}^{n}\sI\{x_i > R/n\}\,.
\end{equation*}
\end{lemma}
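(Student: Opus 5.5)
The plan is to prove the contrapositive at the level of indicators. Since every indicator takes values in $\{0,1\}$ and the right-hand side is a sum of nonnegative terms, it suffices to show the following: whenever the left-hand side equals $1$, at least one term on the right-hand side equals $1$. When the left-hand side is $0$ there is nothing to prove, because the right-hand side is always nonnegative.

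Suppose $\sum_{i=1}^{n}x_i > R$, and assume for contradiction that $\sI\{x_i > R/n\} = 0$ for every $i \in [n]$. Then $x_i \leq R/n$ for all $i$. Summing over $i$ gives
\begin{equation*}
\sum_{i=1}^{n}x_i \leq n\cdot\frac{R}{n} = R\,,
\end{equation*}
which contradicts $\sum_{i=1}^{n}x_i > R$. Hence some $x_{i^\star} > R/n$, so the right-hand side is at least $1$, which equals the left-hand side.

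Two remarks on validity. First, no sign assumption on $R$ or on the $x_i$ is needed: the averaging step $\sum_i x_i \leq nR/n$ holds for arbitrary reals. Second, the claim is stated for $n \geq 1$ implicitly, since $R/n$ must be defined.

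There is no real obstacle here. The only care needed is to argue pointwise, one realization at a time. That makes the inequality of indicators deterministic, so later it can be combined with Lemma \ref{lem:indicator_stuff} and integrated to yield union-type probability bounds.
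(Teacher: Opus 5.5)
Your proof is correct and is essentially the paper's argument. The paper bounds $\sum_i x_i \leq n\max_i x_i$, so the event forces some $x_i > R/n$, and then it bounds the indicator of that union by the sum of indicators; your contrapositive step is the same reasoning phrased by contradiction.
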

\begin{proof}
We upper bound the sum by the max. In particular,
\begin{align*}
\sI\left\{\sum_{i=1}^{n}x_i > R\right\} &\leq \sI\{n\cdot\max_i(x_i) > R\} = \sI\{\cup_{i=1}^{n}(x_i > R/n)\} \leq \sum_{i=1}^{n}\sI\{x_i > R/n\}\,.
\end{align*}
This is the bound that we wanted.
\end{proof}

\subsection{Tail Bound for the Bad Event}
\label{sec:bad_event_tail}

We use the following bound on the tail probability of the norm of a Gaussian vector.
\begin{lemma}
\label{lem:noise_norm}
For any $n \in [N]$, $h \in [H]$ and $u \geq 2\sigma\sqrt{d_s}$,
\begin{equation*}
\sP\big(\|\vepsilon_{n,h}\|_2 > u\big) \leq \exp\bigg(-\frac{u^2}{8\sigma^2}\bigg)\,.
\end{equation*}
\end{lemma}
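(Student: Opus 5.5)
The plan is to deduce this from the Lipschitz Gaussian concentration inequality (Lemma \ref{lem:lip_gaussian}) plus a crude bound on the mean. Since $\vepsilon_{n,h} \sim \gN(0, \sigma^2\mI)$ in $\sR^{d_s}$, we write $\vepsilon_{n,h} = \sigma \vz$ with $\vz$ a vector of $d_s$ i.i.d. standard Gaussians. Then $\|\vepsilon_{n,h}\|_2 = g(\vz)$ with $g(\vz) := \sigma\|\vz\|_2$. By the reverse triangle inequality (as in the proof of Corollary \ref{cor:sub_gaussian_norm}), $g$ is $\sigma$-Lipschitz with respect to the Euclidean metric. Lemma \ref{lem:lip_gaussian} therefore says $\|\vepsilon_{n,h}\|_2$ is $\sigma$-sub-Gaussian, and the standard Chernoff argument gives
\begin{equation*}
\sP\big(\|\vepsilon_{n,h}\|_2 - \E[\|\vepsilon_{n,h}\|_2] \geq v\big) \leq \exp\bigg(-\frac{v^2}{2\sigma^2}\bigg)
\end{equation*}
for all $v \geq 0$.

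Next I bound the mean by Jensen's inequality: $\E[\|\vepsilon_{n,h}\|_2] \leq \sqrt{\E[\|\vepsilon_{n,h}\|_2^2]} = \sigma\sqrt{d_s} =: m$. For any $u \geq m$, take $v = u - m$. This gives $\sP(\|\vepsilon_{n,h}\|_2 > u) \leq \exp(-(u-m)^2/(2\sigma^2))$, exactly mirroring the proof of Lemma \ref{lem:gp_norm_tail_bound}.

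Finally, the hypothesis $u \geq 2\sigma\sqrt{d_s} = 2m$ implies $u - m \geq u/2$. Hence $(u-m)^2/(2\sigma^2) \geq u^2/(8\sigma^2)$, which yields the claimed bound.

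No step is a real obstacle. The only care needed is using the sub-Gaussian tail about the true mean and then replacing the mean with the upper bound $\sigma\sqrt{d_s}$, which is legitimate because the event only gets larger when the threshold is lowered. Alternatively, Lemma \ref{lem:borell_tis_vec} could be applied with the constant kernel $\sigma^2$ on a one-point set, but the direct route is cleaner.
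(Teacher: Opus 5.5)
Your proposal is correct and follows essentially the same route as the paper: Lemma \ref{lem:lip_gaussian} gives that $\|\vepsilon_{n,h}\|_2$ is $\sigma$-sub-Gaussian, Jensen bounds the mean by $\sigma\sqrt{d_s}$, and $u \geq 2\sigma\sqrt{d_s}$ ensures the deviation above the mean is at least $u/2$. Your explicit rescaling $\vepsilon_{n,h} = \sigma\vz$, with $\vz \mapsto \sigma\|\vz\|_2$ being $\sigma$-Lipschitz, is in fact slightly more careful than the paper's one-line justification of that step.
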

\begin{proof}
Since the Euclidean norm is 1-Lipschitz, Lemma \ref{lem:lip_gaussian} tells us that $\|\vepsilon_{n,h}\|_2$ is $\sigma$-sub-Gaussian. By Jensen's inequality,
\begin{equation*}
\E[\|\vepsilon_{n,h}\|_2] \leq \big(\E[\|\vepsilon_{n,h}\|_2^2]\big)^{1/2} = \sigma\sqrt{d_s}\,.
\end{equation*}
Since $u \geq 2\sigma\sqrt{d_s}$, we have
\begin{equation*}
\sP\big(\|\vepsilon_{n,h}\|_2 > u\big) \leq \sP\big(\|\vepsilon_{n,h}\|_2 - \E[\|\vepsilon_{n,h}\|_2] > u/2\big) \leq \exp\bigg(-\frac{u^2}{8\sigma^2}\bigg)\,.
\end{equation*}
This concludes the proof.
\end{proof}

Let $R_1, R_2, \dots, R_H$ be a sequence of radii, whose values we will choose later. For each $n \in [N]$ and $h \in [H]$, let us define the event
\begin{equation*}
A_{n,h} := \{\|\vs_{n,h}\|_2 \leq R_h\}\,.
\end{equation*}

In this section, we will be interested in the event
\begin{equation*}
A := \cap_{n=1}^{N}\cap_{h=1}^{H}A_{n,h}\,.
\end{equation*}
This is the event that the Euclidean norm of each state $\vs_{n,h}$ is at most $R_h$. We want to show that if the radii $R_1, \dots, R_H$ are large enough, then the complement $A^{\mathsf{c}}$ can be made to occur with arbitrarily small probability. For each $h \in [H]$, let us define $\widetilde R_{h} := \sqrt{R_h^2 + R_a^2}$. For certain choices of $R_1, \dots, R_H$, the following lemma provides an upper bound on the probability that $A^{\mathsf{c}}$ occurs.

\begin{lemma}
\label{lem:fail_prob_bound}
For any sequence of radii $R_1, \dots, R_H$ such that $R_1 > 2\sigma\sqrt{d_s}$ and for all $h \geq 1$,
\begin{equation}
R_h \geq 168\alpha^{-1/2}\sqrt{\max(C, \sigma^2)(d_s+d_a)\log(5 + 5\widetilde{R}_{h-1}^{\alpha}L/C)}\,,\label{eqn:min_rad_h}
\end{equation}
we have
\begin{equation*}
\sP(A^{\mathsf{c}}) \leq N\exp\bigg(-\frac{R_1^2}{8\sigma^2}\bigg) + N\sum_{h=2}^{H}\bigg[\exp\bigg(-\frac{R_h^2}{32C}\bigg) + \exp\bigg(-\frac{R_{h}^2}{32\sigma^2}\bigg)\bigg]\,.
\end{equation*}
\end{lemma}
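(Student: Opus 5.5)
We start from Lemma \ref{lem:indicator_stuff}, applied within each episode, together with a union bound over episodes. Concretely,
\begin{equation*}
\sI\{A^{\mathsf{c}}\} \leq \sum_{n=1}^{N}\sI\{\cup_{h=1}^{H}A_{n,h}^{\mathsf{c}}\} = \sum_{n=1}^{N}\sum_{h=1}^{H}\sI\{A_{n,h}^{\mathsf{c}}\}\sI\{\cap_{j<h}A_{n,j}\} \leq \sum_{n=1}^{N}\sum_{h=1}^{H}\sI\{A_{n,h}^{\mathsf{c}}\}\sI\{A_{n,h-1}\}\,.
\end{equation*}
Taking expectations, it suffices to bound each term $\sP(\|\vs_{n,h}\|_2 > R_h,\ \|\vs_{n,h-1}\|_2 \leq R_{h-1})$ uniformly in $n$.

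\textbf{The case $h=1$.} Here $\vs_{n,1} = \vepsilon_{n,1}$. Since $R_1 > 2\sigma\sqrt{d_s}$, Lemma \ref{lem:noise_norm} gives the bound $\exp(-R_1^2/(8\sigma^2))$ directly.

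\textbf{The case $h \geq 2$.} We write $\vs_{n,h} = f^{\star}(\vx_{n,h-1}) + \vepsilon_{n,h}$. On $A_{n,h-1}$, Assumption \ref{ass:bounded_actions} gives $\|\vx_{n,h-1}\|_2 \leq \widetilde{R}_{h-1}$. Therefore, on $A_{n,h-1}$,
\begin{equation*}
\|f^{\star}(\vx_{n,h-1})\|_2 \leq \sup_{\vx \in \sB^{d_s+d_a}(\widetilde{R}_{h-1})}\|f^{\star}(\vx)\|_2\,.
\end{equation*}
This is the key point. The supremum is a random variable depending only on the prior draw $f^{\star}$, not on the (data-dependent) location $\vx_{n,h-1}$. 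This sidesteps the dependence between $f^{\star}$ and the visited states, which would otherwise break any attempt to treat $f^{\star}(\vx_{n,h-1})$ as Gaussian.

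Next, using the triangle inequality and Lemma \ref{lem:ind_union} with $n=2$, we obtain
\begin{equation*}
\sI\{A_{n,h}^{\mathsf{c}}\}\sI\{A_{n,h-1}\} \leq \sI\Big\{\sup_{\vx \in \sB^{d_s+d_a}(\widetilde{R}_{h-1})}\|f^{\star}(\vx)\|_2 > R_h/2\Big\} + \sI\{\|\vepsilon_{n,h}\|_2 > R_h/2\}\,.
\end{equation*}
The indicator $\sI\{A_{n,h-1}\}$ is dropped after using it to localise.

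\textbf{Bounding the two terms.} For the first term, we apply Lemma \ref{lem:gp_norm_tail_bound} with $R = \widetilde{R}_{h-1}$ and $u = R_h/2$. Its threshold condition $u \geq 84\alpha^{-1/2}\sqrt{C(d_s+d_a)\log(5+5\widetilde{R}_{h-1}^{\alpha}L/C)}$ follows from \eqref{eqn:min_rad_h}, since $168/2 = 84$ and $\max(C,\sigma^2) \geq C$. This yields the bound $\exp(-(R_h/2)^2/(8C)) = \exp(-R_h^2/(32C))$.

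For the second term, we apply Lemma \ref{lem:noise_norm} with $u = R_h/2$. This needs $R_h/2 \geq 2\sigma\sqrt{d_s}$, which also follows from \eqref{eqn:min_rad_h}, because $84\alpha^{-1/2}\sqrt{\sigma^2(d_s+d_a)\log 5} \geq 2\sigma\sqrt{d_s}$. This yields $\exp(-R_h^2/(32\sigma^2))$.

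\textbf{Conclusion.} Summing over $h \in \{2,\dots,H\}$ and over the $N$ episodes gives the claimed bound.

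The main step needing care is the localisation of the previous state. It is valid because the event $\{\sup_{\vx}\|f^{\star}(\vx)\|_2 > R_h/2\}$ is a fixed event whose probability is controlled by the prior alone, regardless of how the algorithm chose $\vx_{n,h-1}$. The bound on the noise term likewise holds unconditionally, since $\vepsilon_{n,h}$ is marginally $\gN(0,\sigma^2\mI)$.
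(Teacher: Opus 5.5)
Your proposal is correct and follows the paper's proof essentially step for step. The steps are: the indicator decomposition reducing to $\sI\{A_{n,h}^{\mathsf{c}}\}\sI\{A_{n,h-1}\}$; localisation to $\sup_{\vx \in \sB^{d_s+d_a}(\widetilde{R}_{h-1})}\|f^{\star}(\vx)\|_2$; the split at $R_h/2$ via Lemma \ref{lem:ind_union}; and Lemmas \ref{lem:gp_norm_tail_bound} and \ref{lem:noise_norm}. The only differences are that you union-bound over episodes before applying Lemma \ref{lem:indicator_stuff} (the paper applies it to all $NH$ events at once, reaching the same inequality), and that you explicitly check $R_h/2 \geq 2\sigma\sqrt{d_s}$ for Lemma \ref{lem:noise_norm} at $h \geq 2$, which the paper leaves implicit.
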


\begin{proof}
Using Lemma \ref{lem:indicator_stuff},
\begin{align*}
\sP(A^{\mathsf{c}}) &= \E\left[\sI\{\cup_{n=1}^{N}\cup_{h=1}^{H}A_{n,h}^{\mathsf{c}}\}\right]\\
&= \sum_{n=1}^{N}\sum_{h=1}^{H}\E\left[\sI\{A_{n,h}^{\mathsf{c}}\}\sI\{(\cap_{i=1}^{n-1}\cap_{j=1}^{H}A_{i,j})\cap(\cap_{j=1}^{h-1}A_{n,j})\}\right]\\
&\leq \sum_{n=1}^{N}\sum_{h=1}^{H}\E\left[\sI\{A_{n,h}^{\mathsf{c}}\}\sI\{A_{n,h-1}\}\right]\,.
\end{align*}
Here, we use the convention that $\sI\{A_{n,0}\} := 1$. For any $n \in [N]$ and $h=1$,
\begin{equation*}
\E\left[\sI\{A_{n,1}^{\mathsf{c}}\}\sI\{A_{n,0}\}\right] = \sP\left(\|\vs_{n,1}\|_2 > R_{1}\right) = \sP\left(\|\vepsilon_{n,1}\|_2 > R_{1}\right)\,.
\end{equation*}
By Lemma \ref{lem:noise_norm}, we have
\begin{equation*}
\sP\left(\|\vepsilon_{n,1}\|_2 > R_{1}\right) \leq \exp\bigg(-\frac{R_1^2}{8\sigma^2}\bigg)\,.
\end{equation*}
We turn our attention to the case where $h > 1$. Since $\|\vx_{n,h}\|_2^2 = \|\vs_{n,h}\|_2^2 + \|\va_{n,h}\|_2^2$, we have $\|\vx_{n,h}\|_2 \leq \widetilde R_h$ whenever $\|\vs_{n,h}\|_2 \leq R_h$. For any $n \in [N]$ and $h > 1$,
\begin{align}
\sI\{A_{n,h}^{\mathsf{c}}\}\sI\{A_{n,h-1}\} &= \sI\{\|f^{\star}(\vs_{n,h-1}, \va_{n,h-1}) + \vepsilon_{n,h}\|_2 > R_h\}\sI\{\|\vs_{n,h-1}\|_2 \leq R_{h-1}\}\label{eqn:ind_1}\\
&\leq \sI\left\{\sup_{\vx \in \sB_2^{d_s+d_a}(\widetilde R_{h-1})}\|f^{\star}(\vx)\|_2 + \|\vepsilon_{n,h}\|_2 > R_h\right\}\,.\nonumber
\end{align}
Using Lemma \ref{lem:ind_union}, we obtain
\begin{equation*}
\sI\bigg\{\sup_{\vx \in \sB_2^{d_s+d_a}(\widetilde R_{h-1})}\|f^{\star}(\vx)\|_2 + \|\vepsilon_{n,h}\|_2 > R_h\bigg\} \leq \sI\bigg\{\sup_{\vx \in \sB_2^{d_s+d_a}(\widetilde R_{h-1})}\|f^{\star}(\vx)\|_2 > \frac{R_h}{2}\bigg\} + \sI\bigg\{\|\vepsilon_{n,h}\|_2 > \frac{R_h}{2}\bigg\}\,.
\end{equation*}
Thus for any $n \in [N]$ and $h > 1$,
\begin{equation*}
\E[\sI\{A_{n,h}^{\mathsf{c}}\}\sI\{A_{n,h-1}\}] \leq \sP\bigg(\sup_{\vx \in \sB_2^{d_s+d_a}(\widetilde R_{h-1})}\|f^{\star}(\vx)\|_2 > \frac{R_h}{2}\bigg) + \sP\bigg(\|\vepsilon_{n,h}\|_2 > \frac{R_h}{2}\bigg)\,.
\end{equation*}
Following the same reasoning as in the $h=1$ case, we have
\begin{equation*}
\sP\bigg(\|\vepsilon_{n,h}\|_2 > \frac{R_h}{2}\bigg) \leq \exp\bigg(-\frac{R_{h}^2}{32\sigma^2}\bigg)\,.
\end{equation*}
Since $R_h$ satisfies the inequality in \eqref{eqn:min_rad_h}, by Lemma \ref{lem:gp_norm_tail_bound},
\begin{equation*}
\sP\bigg(\sup_{\vx \in \sB_2^{d_s+d_a}(\widetilde R_{h-1})}\|f^{\star}(\vx)\|_2 > \frac{R_h}{2}\bigg) \leq \exp\bigg(-\frac{R_h^2}{32C}\bigg)\,.
\end{equation*}
This means that
\begin{equation*}
\E[\sI\{A_{n,h}^{\mathsf{c}}\}\sI\{A_{n,h-1}\}] \leq \exp\bigg(-\frac{R_h^2}{32C}\bigg) + \exp\bigg(-\frac{R_{h}^2}{32\sigma^2}\bigg)\,.
\end{equation*}
Combining everything so far results in the statement that we wanted.
\end{proof}

Next, we determine how large $R_1, \dots, R_H$ need to be to ensure that $\sP(A^{\mathsf{c}})$ is of the order $1/T$.

\begin{lemma}
\label{lem:min_radii}
If we set $R_1 = \max(2\sigma\sqrt{d_s}, \sqrt{16\sigma^2\log(T)})$, and for each $h \in \{2, \dots, H\}$, we set $R_h = \max(Z_1, Z_2)$, where
\begin{align*}
Z_1 &= \sqrt{64\max(C, \sigma^2)\log(T)}\,,\\
Z_2 &= 168\alpha^{-1/2}\sqrt{\max(C, \sigma^2)(d_s+d_a)\log(5 + 5\widetilde{R}_{h-1}^{\alpha}L/C)}\,,
\end{align*}
then $\sP(A^{\mathsf{c}}) \leq 2/T$.
\end{lemma}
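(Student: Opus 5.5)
The plan is to apply Lemma \ref{lem:fail_prob_bound} directly and then bound each of the resulting exponential terms by plugging in the chosen radii. First I will check that the proposed radii meet the hypotheses of Lemma \ref{lem:fail_prob_bound}.

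For $h=1$ the lemma asks for $R_1 > 2\sigma\sqrt{d_s}$; our $R_1$ is a maximum that includes $2\sigma\sqrt{d_s}$, so this holds. If the strict inequality matters, note that Lemma \ref{lem:noise_norm} only needs $u \geq 2\sigma\sqrt{d_s}$, so $\geq$ is enough. For $h \geq 2$, the condition \eqref{eqn:min_rad_h} is exactly $R_h \geq Z_2$, and this holds because $R_h = \max(Z_1, Z_2)$. The radii are defined recursively through $\widetilde R_{h-1} = \sqrt{R_{h-1}^2 + R_a^2}$, so they are well defined in order $h = 1, 2, \dots, H$. I read \eqref{eqn:min_rad_h} as required only for $h \geq 2$, since that is how it is used in the proof of Lemma \ref{lem:fail_prob_bound}.

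Next I bound the terms. Since $R_1^2 \geq 16\sigma^2\log T$, the first term is at most
\begin{equation*}
N\exp(-2\log T) = N/T^2.
\end{equation*}
For $h\geq 2$, we have $R_h^2 \geq Z_1^2 = 64\max(C,\sigma^2)\log T$. This gives $R_h^2/(32C) \geq 2\log T$ and $R_h^2/(32\sigma^2) \geq 2\log T$, so each bracket is at most $2/T^2$. Summing over $h$ then gives
\begin{equation*}
\sP(A^{\mathsf{c}}) \leq \frac{N}{T^2} + \frac{2N(H-1)}{T^2}.
\end{equation*}

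Finally I use $T = NH$. The total is at most
\begin{equation*}
\frac{N(2H-1)}{T^2} \leq \frac{2NH}{T^2} = \frac{2}{T},
\end{equation*}
which is the claim.

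The only delicate point is the bookkeeping in the exponents: the constant $64$ in $Z_1$ has to beat the $32$ in the denominators of Lemma \ref{lem:fail_prob_bound} by a factor of $2$ in $\log T$, so that each term is $T^{-2}$ rather than $T^{-1}$ and the factor $NH$ is absorbed. With the stated constants this works, so I do not expect a real obstacle.
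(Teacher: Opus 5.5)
Your proposal is correct and follows the paper's proof essentially step for step: you check that the chosen radii satisfy the hypotheses of Lemma~\ref{lem:fail_prob_bound}, use $R_1^2 \geq 16\sigma^2\log T$ and $R_h^2 \geq 64\max(C,\sigma^2)\log T$ to make each exponential term at most $1/T^2$, and then sum using $T = NH$. Your bookkeeping is slightly more careful than the paper's in two places: you write $N(2H-1)/T^2 \leq 2NH/T^2$ where the paper writes an equality, and you note that the non-strict condition $R_1 \geq 2\sigma\sqrt{d_s}$ is enough for Lemma~\ref{lem:noise_norm}.
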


\begin{proof}
The idea is to verify that these choices of $R_1, \dots, R_H$ ensure that each term that appears in the upper bound in Lemma \ref{lem:fail_prob_bound} is of the order $1/T^2$. By rearranging the following inequality, we can ensure that
\begin{equation*}
\exp\bigg(-\frac{R_1^2}{8\sigma^2}\bigg) \leq \frac{1}{T^2}\,,
\end{equation*}
if
\begin{equation*}
R_1 \geq \sqrt{16\sigma^2\log(T)}\,.
\end{equation*}
Suppose now that the value of $R_{h-1}$ has already been fixed. We require that $R_h$ is large enough to ensure that
\begin{equation*}
\exp\bigg(-\frac{R_h^2}{32\sigma^2}\bigg) \leq \frac{1}{T^2}\,.
\end{equation*}
By rearranging this inequality, we see that it is satisfied whenever
\begin{equation*}
R_h \geq \sqrt{64\sigma^2\log(T)}\,.
\end{equation*}
Next, we require that
\begin{equation*}
\exp\bigg(-\frac{R_h^2}{32C}\bigg) \leq \frac{1}{T^2}\,.
\end{equation*}
By rearranging this inequality, we see that it is satisfied whenever
\begin{equation*}
R_h \geq \sqrt{64C\log(T)}\,.
\end{equation*}
Finally, we also require that
\begin{equation*}
R_h \geq 168\alpha^{-1/2}\sqrt{\max(C, \sigma^2)(d_s+d_a)\log(5 + 5\widetilde{R}_{h-1}^{\alpha}L/C)}\,.
\end{equation*}
By Lemma \ref{lem:fail_prob_bound}, we have
\begin{equation*}
\sP(A^{\mathsf{c}}) \leq \frac{N}{T^2} + N\sum_{h=2}^{H}\frac{2}{T^2} = \frac{2NH}{T^2} = \frac{2}{T}\,.
\end{equation*}
This concludes the proof.
\end{proof}

To derive explicit bounds on how large $R_1, \dots, R_H$ are, we assume that $T$ is sufficiently large. The following lemma will help us to determine how large is sufficiently large.

\begin{lemma}
\label{lem:T_condition}
If, for some constants $D \geq 1$, $L \geq 1$, $R \geq 0$, we have $T \geq \max(D\sqrt{2\log(DL(R+1))}, 1)$, then $T$ satisfies
\begin{equation*}
T \geq D\sqrt{\log((T+R)L)}\,.
\end{equation*}
\end{lemma}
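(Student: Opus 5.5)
The plan is to square the target inequality and split the logarithm into a part controlled by the hypothesis and a part of size $\log(T/D)$. Since $T \geq 1$, $D \geq 1$ and $L \geq 1$, both sides of the target are nonnegative. So it suffices to show
\begin{equation*}
T^2 \geq D^2\log((T+R)L)\,.
\end{equation*}

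First I remove the $T$ from inside the logarithm in a way that separates it from $R$. Because $T \geq 1$ and $R \geq 0$, we have $T + R \leq T + TR = T(R+1)$. Hence
\begin{equation*}
\log((T+R)L) \leq \log T + \log(L(R+1))\,.
\end{equation*}
Define $a := \log(DL(R+1))$, which is nonnegative because $D, L \geq 1$ and $R \geq 0$. Then $\log(L(R+1)) = a - \log D$, and the right-hand side above equals $\log(T/D) + a$.

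Next I change variables to $t := T/D > 0$. It then suffices to prove
\begin{equation*}
t^2 \geq \log t + a\,.
\end{equation*}
The hypothesis $T \geq D\sqrt{2\log(DL(R+1))}$ translates exactly into $t^2 \geq 2a$, that is, $a \leq t^2/2$.

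The one step with any content is controlling $\log t$, and I would use the elementary bound $\log t \leq t - 1 \leq t^2/2$ for all $t > 0$. The second inequality holds because $t^2 - 2t + 2 = (t-1)^2 + 1 > 0$. Combining this with $a \leq t^2/2$ gives $\log t + a \leq t^2$. No case split is needed when $t < 1$, since then $\log t < 0$ only helps. Multiplying through by $D^2$ and taking square roots yields $T \geq D\sqrt{\log((T+R)L)}$. The condition $T \geq 1$ is used only in the bound $T + R \leq T(R+1)$ and for the nonnegativity of both sides.
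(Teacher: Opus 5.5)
Your proof is correct and follows essentially the same route as the paper's. Both arguments bound $T+R \le T(R+1)$, use the hypothesis to cover $D^2\log(DL(R+1))$ with half of $T^2$, and cover $D^2\log(T/D)$ with the other half. The paper does the last step via $\log x \le x$ at $x = T^2/D^2$, while you use $\log t \le t-1 \le t^2/2$, which is the same bound.
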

\begin{proof}
Since $D\sqrt{2\log(DL(R+1))} \geq 0$, we have
\begin{equation*}
\frac{1}{2}T^2 \geq D^2\log(DL(R+1))\,.
\end{equation*}
We add $T^2/2$ to both sides to obtain
\begin{align*}
T^2 &\geq \frac{1}{2}T^2 + D^2\log(DL(R+1))\\
&= \frac{D^2}{2}\frac{T^2}{D^2} + D^2\log(DL(R+1))\\
&\geq \frac{D^2}{2}\log(T^2/D^2) + D^2\log(DL(R+1))\\
&= D^2\log(TL(R+1))\\
&\geq D^2\log((T+R)L)\,.
\end{align*}
Since $D^2\log((T+R)L) \geq 0$, and $T \geq 0$, we can conclude that $T \geq D\sqrt{\log((T+R)L)}$, which is the inequality that we wanted to prove.
\end{proof}

Finally, we derive some explicit bounds on how large $R_1, \dots, R_H$ are if we set them according to Lemma \ref{lem:min_radii}.

\begin{lemma}
\label{lem:rad_bounds}
Suppose that we set $R_1 = \max(2\sigma\sqrt{d_s}, \sqrt{16\sigma^2\log(T)})$, and for each $h \in \{2, \dots, H\}$, we set $R_h = \max(Z_1, Z_2)$, where
\begin{align*}
Z_1 &= \sqrt{64\max(C, \sigma^2)\log(T)}\,,\\
Z_2 &= 168\alpha^{-1/2}\sqrt{\max(C, \sigma^2)(d_s+d_a)\log(5 + 5\widetilde{R}_{h-1}^{\alpha}L/C)}\,.
\end{align*}
Let $D = 168\alpha^{-1/2}\sqrt{\max(C, \sigma^2)(d_s+d_a)}$. If $T$ is a positive integer that satisfies
\begin{equation}
T \geq D\sqrt{2\log(10D\max(1,L/C)(R_a+1))}\,,\label{eqn:rad_bound_min_T}
\end{equation}
then for all $h \in [H]$,
\begin{equation*}
R_h \leq 168\alpha^{-1/2}\sqrt{\max(C, \sigma^2)(d_s+d_a)\log(10(T + R_a)\max(1, L/C))}\,.
\end{equation*}
\end{lemma}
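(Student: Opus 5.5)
Write $\bar R := D\sqrt{\log(10(T+R_a)\max(1,L/C))}$ for the claimed bound and $K := \max(1,L/C)$. The plan is an induction on $h$ showing $R_h \leq \bar R$. The recursion for $R_h$ involves $\widetilde R_{h-1}$ only through a logarithm. So as long as $R_{h-1}\leq \bar R$, the condition \eqref{eqn:rad_bound_min_T} on $T$, via Lemma \ref{lem:T_condition}, makes $\bar R \leq T$ (up to constants). That keeps the argument of the logarithm at most $10(T+R_a)K$, which closes the induction.

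\textbf{Base case.} Since $D \geq 168\sqrt{\sigma^2 d_s}$, we have $2\sigma\sqrt{d_s}\leq D$. Also $16\sigma^2\log T \leq D^2\log(10(T+R_a)K)$, because $D^2\geq 168^2\sigma^2$ and $\log(10(T+R_a)K)\geq \log 10 >1$. Hence $R_1 \leq \bar R$. For every $h\geq 2$, the term $Z_1=\sqrt{64\max(C,\sigma^2)\log T}$ is handled in the same way, so $Z_1 \leq \bar R$.

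\textbf{Inductive step for $Z_2$.} Assume $R_{h-1}\leq \bar R$. It suffices to show
\[
5+5\widetilde R_{h-1}^{\alpha}L/C \leq 10(T+R_a)K,
\]
since then $Z_2\leq \bar R$ directly from the definition of $D$. Using $\alpha\le 1$, we get $\widetilde R_{h-1}^\alpha \leq \max(1,\widetilde R_{h-1}) \leq 1+R_{h-1}+R_a$. Therefore
\[
5+5\widetilde R_{h-1}^{\alpha}L/C \leq 5K(2+R_{h-1}+R_a),
\]
and it remains to show $R_{h-1}+2\leq 2T+R_a$. A sufficient condition is $\bar R\leq T$ (together with $T\geq 2$, or absorbing the constant $2$ into slack from the factor $10$).

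\textbf{Main obstacle: showing $\bar R \leq T$.} This is where Lemma \ref{lem:T_condition} is used, with constants $D$, $L'=10K$ and $R=R_a$. The hypothesis of that lemma is $T\geq D\sqrt{2\log(10DK(R_a+1))}$, which is exactly \eqref{eqn:rad_bound_min_T}. We also need $D\geq1$, $L'\geq1$ and $T\geq 1$. Here $L'\geq 1$ is clear, and $T\ge1$ holds because $T$ is a positive integer. The point $D\geq 1$ may require the normalization $\max(C,\sigma^2)\ge 1/168^2$, or a slightly adapted version of the lemma; I would check this carefully. The conclusion of the lemma is then $T\geq D\sqrt{\log((T+R_a)10K)}=\bar R$.

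\textbf{Fiddly constants.} The delicate part is making the chain of inequalities in the inductive step fit inside the factor $10$. If it does not, I would use the cruder bound $\widetilde R_{h-1}\le \bar R+R_a\le T+R_a$ and $5+5xK\le 10K\max(1,x)$, which gives $\log(5+5\widetilde R^\alpha_{h-1}L/C)\le\log(10K(T+R_a))$ directly when $T+R_a\ge1$. Either way, $R_h=\max(Z_1,Z_2)\le\bar R$, which completes the induction.
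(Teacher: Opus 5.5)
Your proposal is correct and follows the paper's proof. Both argue by induction on $h$ with crude bounds for $R_1$ and $Z_1$, and for $Z_2$ use Lemma~\ref{lem:T_condition} to get $\bar R\le T$, hence $\widetilde R_{h-1}\le \bar R+R_a\le T+R_a$ and $\log(5+5\widetilde R_{h-1}^{\alpha}L/C)\le\log(10(T+R_a)\max(1,L/C))$. That last step is exactly your ``cruder'' fallback, and it is the better choice, since your primary chain needs $T+R_a\ge 2$.

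The paper also applies Lemma~\ref{lem:T_condition} without checking $D\ge 1$, so your worry is fair but harmless. That lemma's proof only needs $T^2/2\ge D^2\log(D L'(R_a+1))$ with $L'=10\max(1,L/C)$. This holds trivially when the logarithm is negative, and every other step of that proof works for any $D>0$.
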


\begin{proof}
We prove by induction on $h$ that if $T$ satisfies \eqref{eqn:rad_bound_min_T}, then for all $h \in [H]$,
\begin{equation*}
R_h \leq 168\alpha^{-1/2}\sqrt{\max(C, \sigma^2)(d_s+d_a)\log(10(T + R_a)\max(1, L/C))}\,.
\end{equation*}
For the base case $h = 1$, we use some extremely decadent inequalities to upper bound $R_1$. First, we have
\begin{equation*}
2\sigma\sqrt{d_s} \leq 2\alpha^{-1/2}\sqrt{\max(C, \sigma^2)(d_s+d_a)\log(10(T + R_a)\max(1, L/C))}
\end{equation*}
Next, we have
\begin{equation*}
\sqrt{16\sigma^2\log(T)} \leq 4\alpha^{-1/2}\sqrt{\max(C, \sigma^2)(d_s+d_a)\log(10(T + R_a)\max(1, L/C))}\,.
\end{equation*}
Thus we have
\begin{equation*}
R_1 = \max(2\sigma\sqrt{d_s}, \sqrt{16\sigma^2\log(T)}) \leq 168\alpha^{-1/2}\sqrt{\max(C, \sigma^2)(d_s+d_a)\log(10(T + R_a)\max(1, L/C))}\,.
\end{equation*}
Now suppose inductively that for some $h \geq 1$,
\begin{equation*}
R_h \leq 168\alpha^{-1/2}\sqrt{\max(C, \sigma^2)(d_s+d_a)\log(10(T + R_a)\max(1, L/C))}\,.
\end{equation*}
We need to show that $R_{h+1}$ satisfies the same upper bound. There are two cases. In the first case,
\begin{equation*}
R_{h+1} = \sqrt{64\max(C, \sigma^2)\log(T)} \leq 8\alpha^{-1/2}\sqrt{\max(C, \sigma^2)(d_s+d_a)\log(10(T + R_a)\max(1, L/C))}\,.
\end{equation*}
and the induction is closed. In the second case,
\begin{equation*}
R_{h+1} = 168\alpha^{-1/2}\sqrt{\max(C, \sigma^2)(d_s+d_a)\log(5 + 5\widetilde{R}_{h}^{\alpha}L/C)}\,.
\end{equation*}
We bound the logarithmic term first. By Lemma \ref{lem:T_condition}, since $T$ satisfies the lower bound in \eqref{eqn:rad_bound_min_T}, $T$ also satsifies the inequality
\begin{equation*}
T \geq 168\alpha^{-1/2}\sqrt{\max(C, \sigma^2)(d_s+d_a)\log(10(T + R_a)\max(1, L/C))}\,.
\end{equation*}
From our inductive hypothesis and this condition on $T$, it follows that
\begin{equation*}
\widetilde{R}_h = \sqrt{R_h^2 + R_a^2} \leq 168\alpha^{-1/2}\sqrt{\max(C, \sigma^2)(d_s+d_a)\log(10(T + R_a)\max(1, L/C))} + R_a \leq T + R_a\,.
\end{equation*}
Therefore,
\begin{equation*}
\log(5 + 5\widetilde{R}_{h}^{\alpha}L/C) \leq \log(5 + 5(T + R_a)^{\alpha}L/C) \leq \log(10(T + R_a)\max(1, L/C))\,.
\end{equation*}
With this, we have
\begin{equation*}
R_{h+1} \leq 168\alpha^{-1/2}\sqrt{\max(C, \sigma^2)(d_s+d_a)\log(10(T + R_a)\max(1, L/C))}\,.
\end{equation*}
This closes the induction.
\end{proof}

\subsection{Proof of Lemma \ref{lem:bounded_states}}
\label{sec:bounded_states_proof}

With access to the results in Appendix \ref{sec:bad_event_tail}, it is straightforward to prove Lemma \ref{lem:bounded_states}.

\begin{proof}[Proof of Lemma \ref{lem:bounded_states}]
Let us define
\begin{equation*}
R := 168\alpha^{-1/2}\sqrt{\max(C, \sigma^2)(d_s+d_a)\log(10(T + R_a)\max(1, L/C))}\,.
\end{equation*}
If we set $R_1, \dots, R_H$ according to Lemma \ref{lem:min_radii}, then by Lemma \ref{lem:min_radii},
\begin{equation*}
\sP(A^{\mathsf{c}}) \leq \frac{2}{T}\,.
\end{equation*}
By Lemma \ref{lem:rad_bounds}, for all $h \in [H]$, $R_h \leq R$. Therefore,
\begin{align*}
\sP(A^{\mathsf{c}}) &= \sP(\cup_{n=1}^{N}\cup_{h=1}^{H}\{\|\vs_{n,h}\|_2 > R_h\})\\
&\geq \sP(\cup_{n=1}^{N}\cup_{h=1}^{H}\{\|\vs_{n,h}\|_2 > R\})\\
&= \sP\bigg(\sup_{n \in [N], h \in [H]}\|\vs_{n,h}\|_2 > R\bigg)\,.
\end{align*}
\end{proof}

\section{Bounding the Regret by the Estimation Error}
\label{sec:regret_to_model_error_proofs}

We prove a slight extension of a result from a Section 5.1 of \citet{osband2013more} which expresses the value estimation error as a sum of Bellman errors. We then give a shorter (but less general) proof of an inequality that is essentially the same as Lemma 1 in \citet{fan2021model}, and which allows us upper bound the Bellman errors by the model estimation errors. In Section \ref{sec:value_est_to_f_est}, we use these results to prove Lemma \ref{lem:value_est_to_f_est}.

For brevity, we define $P_{n,h}^{\star} := P^{\star}(\vx_{n,h})$ and $P_{n,h}^{(n)} := P^{(n)}(\vx_{n,h})$. The following lemma allows us to upper bound the cumulative value estimation error by a sum of Bellman errors. The main difference between Lemma \ref{lem:simulation_event} and the result from Section 5.1 of \citet{osband2013more} is that both the value estimation errors and the Bellman errors are multiplied by an indicator variable. Without this, we would have an identity instead of an inequality. Repeated application of the Bellman operator introduces a martingale difference sequence, which has expectation 0. However, the product of the indicator variable and this martingale difference sequence may not have expectation 0. Fortunately, one can still upper bound this expected value by a quantity that turns out to be negligible compared to the dominant term of the final regret bound.
\begin{lemma}
\label{lem:simulation_event}
For any event $A$,
\begin{align*}
\E\left[\sI\{A\}\sum_{n=1}^{N}\big(V_{\pi_n,1}^{\gM_n}(\vs_{n,1}) - V_{\pi_n,1}^{\gM_{\star}}(\vs_{n,1})\big)\right] \leq \E\left[\sI\{A\}\sum_{n=1}^{N}\sum_{h=1}^{H-1}\inner{P_{n,h}^{(n)} - P_{n,h}^{\star}}{V_{\pi_n,h+1}^{\gM_n}}\right] + 2R_{\max}H\sqrt{2\pi T}\,.
\end{align*}
\end{lemma}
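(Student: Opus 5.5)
The plan is to write each episode's value estimation error as a telescoping sum of one-step Bellman errors plus a martingale difference sequence, as in Section 5.1 of \citet{osband2013more}. The indicator $\sI\{A\}$ breaks the zero-mean property of that martingale, so I will instead bound the expected absolute value of the martingale term with an Azuma--Hoeffding tail integral. That step is what produces the $2R_{\max}H\sqrt{2\pi T}$ term.

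\textbf{Step 1: per-step decomposition.} Fix an episode $n$ and write $D_h(\vs) := V_{\pi_n,h}^{\gM_n}(\vs) - V_{\pi_n,h}^{\gM_{\star}}(\vs)$. By \eqref{eqn:bellman} applied in both $\gM_n$ and $\gM_{\star}$ (same reward, same policy), we have
\begin{equation*}
D_h(\vs) = \int_{\gA}\Big(\inner{P^{(n)}(\vs,\va) - P^{\star}(\vs,\va)}{V_{\pi_n,h+1}^{\gM_n}} + \inner{P^{\star}(\vs,\va)}{D_{h+1}}\Big)\mathrm{d}\pi_n(\va|\vs,h)\,.
\end{equation*}
At $h = H$ both values equal $\int r\,\mathrm{d}\pi_n$, so $D_H \equiv 0$. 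Define
\begin{equation*}
\xi_{n,h} := D_h(\vs_{n,h}) - \inner{P_{n,h}^{(n)} - P_{n,h}^{\star}}{V_{\pi_n,h+1}^{\gM_n}} - D_{h+1}(\vs_{n,h+1})\,.
\end{equation*}
Summing over $h = 1, \dots, H-1$ telescopes to
\begin{equation*}
D_1(\vs_{n,1}) = \sum_{h=1}^{H-1}\inner{P_{n,h}^{(n)} - P_{n,h}^{\star}}{V_{\pi_n,h+1}^{\gM_n}} + \sum_{h=1}^{H-1}\xi_{n,h}\,.
\end{equation*}

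\textbf{Step 2: martingale structure.} Let $\gG_{n,h}$ be the $\sigma$-algebra generated by $f^{\star}$, all samples $f^{(1)},\dots,f^{(n)}$, and the history up to and including $\vs_{n,h}$. Both $\va_{n,h} \sim \pi_n(\vs_{n,h},h)$ and $\vs_{n,h+1} \sim P^{\star}(\vx_{n,h})$ are drawn conditionally on $\gG_{n,h}$, and $\pi_n, V^{\gM_n}, D_{h+1}$ are $\gG_{n,h}$-measurable. The displayed identity for $D_h$ therefore gives $\E[\xi_{n,h}\mid\gG_{n,h}] = 0$. Ordering the pairs $(n,h)$ lexicographically, $(\xi_{n,h})$ is a martingale difference sequence with $T' \le T$ terms.

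Each $\xi_{n,h}$ takes values in an interval of length at most $2HR_{\max}$, conditionally on $\gG_{n,h}$. This holds because it is a mean-zero quantity determined by values of $D_{h+1}$ and $\inner{P^{(n)}_{n,h}}{V^{\gM_n}_{h+1}}$ at the realized action and next state. All value functions are bounded in magnitude by $HR_{\max}$ by Assumption~\ref{ass:bounded_rewards}, and in the favourable pairing of the terms the spread is at most $2HR_{\max}$. Pinning down this range constant carefully is the only place I expect any fiddliness.

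\textbf{Step 3: handling the indicator.} Multiply the episode identity by $\sI\{A\}$, sum over $n$ and take expectations. The Bellman-error part is exactly the first term of the claim, and the remainder is controlled by
\begin{equation*}
\E\Big[\sI\{A\}\textstyle\sum_{n,h}\xi_{n,h}\Big] \le \E\Big[\big|\textstyle\sum_{n,h}\xi_{n,h}\big|\Big]\,.
\end{equation*}
Azuma--Hoeffding with increments of range $2HR_{\max}$ gives
\begin{equation*}
\sP\Big(\big|\textstyle\sum\xi\big| > u\Big) \le 2\exp\big(-u^2/(2\cdot(2HR_{\max})^2T)\big)\,.
\end{equation*}
Integrating this tail yields
\begin{equation*}
\E\Big[\big|\textstyle\sum\xi\big|\Big] \le \int_0^\infty 2e^{-u^2/(8H^2R_{\max}^2T)}\,\mathrm{d}u = 2R_{\max}H\sqrt{2\pi T}\,,
\end{equation*}
which is the claimed additive term.

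The main obstacle is not the telescoping but justifying that the martingale increments are genuinely mean-zero under a filtration that includes the sampled model $f^{(n)}$ and the true $f^{\star}$, together with the bounded-range constant. If the range constant comes out worse than $2HR_{\max}$, I would split $\xi_{n,h}$ into an action-sampling part and a next-state part and apply Azuma to each. That would only change the constant.
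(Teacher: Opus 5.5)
Your telescoping identity and the martingale property of $\xi_{n,h}$ are correct and follow the paper's route. Your $\xi_{n,h}$ is exactly the sum of the paper's two martingale differences: the next-state value gap $D_{h+1}(\vs_{n,h+1})$ and the realized Bellman error $\inner{P^{(n)}_{n,h}-P^{\star}_{n,h}}{V^{\gM_n}_{\pi_n,h+1}}$, each centred given the history up to $\vs_{n,h}$ and $f^{\star},f^{(1)},\dots,f^{(n)}$.

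The gap is in the explicit constant. Your claim that $\xi_{n,h}$ lies in an interval of length $2HR_{\max}$ is false in general. Already for a deterministic $\pi_n$ you get $\xi_{n,h}=\inner{P^{\star}_{n,h}}{D_{h+1}}-D_{h+1}(\vs_{n,h+1})$. Only the step-$(h+1)$ rewards cancel in $D_{h+1}$, so it can take values close to both $\pm 2(H-h-1)R_{\max}$. Because $P^{\star}_{n,h}$ has full support, the conditional range (and even $|\xi_{n,h}|$) can approach $4(H-h-1)R_{\max}$, which exceeds $2HR_{\max}$ for $h=1$, $H\geq 5$.

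What you can justify in general is $\xi_{n,h}=D_h(\vs_{n,h})-Z_{n,h}$ with $Z_{n,h}:=D_{h+1}(\vs_{n,h+1})+\inner{P^{(n)}_{n,h}-P^{\star}_{n,h}}{V^{\gM_n}_{\pi_n,h+1}}\in[-4HR_{\max},4HR_{\max}]$. So $\xi_{n,h}$ is conditionally $4HR_{\max}$-sub-Gaussian, and your two-sided route gives $\E[|\sum_{n,h}\xi_{n,h}|]\leq 4R_{\max}H\sqrt{2\pi T}$, twice the claimed term. Your fallback of splitting into the two martingales while still bounding $\E[|\cdot|]$ gives the same $4R_{\max}H\sqrt{2\pi T}$, so it does not recover the statement either.

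The missing idea is that the indicator allows a \emph{one-sided} bound. Since $\sI\{A\}\in\{0,1\}$, for $Y:=\sum_{n,h}\xi_{n,h}$ you have $\E[\sI\{A\}Y]\leq\E[Y\sI\{Y\geq 0\}]=\int_0^\infty\sP(Y\geq x)\,\mathrm{d}x$. Only the upper tail enters, so the factor $2$ in the two-sided Azuma bound is saved. With $\xi_{n,h}$ conditionally $4HR_{\max}$-sub-Gaussian, $\sP(Y\geq x)\leq e^{-x^2/(32TH^2R_{\max}^2)}$, and the integral is exactly $2R_{\max}H\sqrt{2\pi T}$. The paper applies the same one-sided bound to its two martingales separately; each is conditionally $2HR_{\max}$-sub-Gaussian and costs $R_{\max}H\sqrt{2\pi T}$. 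With this change your argument proves the lemma as stated. As written it proves it only with the constant doubled, which is harmless for the regret theorem but is not the stated inequality.
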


\begin{proof}
First, we use the Bellman equation (cf.\, \eqref{eqn:bellman}) to write
\begin{align*}
V_{\pi_n,1}^{\gM_n}(\vs_{n,1}) - V_{\pi_n,1}^{\gM_{\star}}(\vs_{n,1}) &= \mathsf{T}_{\pi_n,1}^{\gM_n}V_{\pi_n,2}^{\gM_n}(\vs_{n,1}) - \mathsf{T}_{\pi_n,1}^{\gM_{\star}}V_{\pi_n,2}^{\gM_{\star}}(\vs_{n,1})\\
&= \int_{\gA}\big(\inner{P^{(n)}(\vs_{n,1}, \va)}{V_{\pi_n,2}^{\gM_n}} - \inner{P^{\star}(\vs_{n,1}, \va)}{V_{\pi_n,2}^{\gM_{\star}}}\big)\pi_n(\va|\vs_{n,1},1)\mathrm{d}\va\\
&= \int_{\gA}\inner{P^{(n)}(\vs_{n,1}, \va) - P^{\star}(\vs_{n,1}, \va)}{V_{\pi_n,2}^{\gM_n}}\pi_n(\va|\vs_{n,1},1)\mathrm{d}\va\\
&+ \int_{\gA}\inner{P^{\star}(\vs_{n,1}, \va)}{V_{\pi_n,2}^{\gM_n} - V_{\pi_n,2}^{\gM_{\star}}}\pi_n(\va|\vs_{n,1},1)\mathrm{d}\va\\
&= \int_{\gA}\inner{P^{(n)}(\vs_{n,1}, \va) - P^{\star}(\vs_{n,1}, \va)}{V_{\pi_n,2}^{\gM_n}}\pi_n(\va|\vs_{n,1},1)\mathrm{d}\va\\
&+ V_{\pi_n,2}^{\gM_n}(\vs_{n,2}) - V_{\pi_n,2}^{\gM_{\star}}(\vs_{n,2}) + D_{n,1}\,,
\end{align*}
where $D_{n,h} := \int_{\gA}\inner{P^{\star}(\vs_{n,h}, \va)}{V_{\pi_n,h+1}^{\gM_n} - V_{\pi_n,h+1}^{\gM_{\star}}}\pi_n(\va|\vs_{n,h},h)\mathrm{d}\va - V_{\pi_n,h+1}^{\gM_n}(\vs_{n,h+1}) + V_{\pi_n,h+1}^{\gM_{\star}}(\vs_{n,h+1})$. From this identity, it follows that
\begin{align}
\E\left[\sI\{A\}\sum_{n=1}^{N}\big(V_{\pi_n,1}^{\gM_n}(\vs_{n,1}) - V_{\pi_n,1}^{\gM_{\star}}(\vs_{n,1})\big)\right] &= \E\left[\sI\{A\}\sum_{n=1}^{N}\sum_{h=1}^{H-1}D_{n,h}\right]\label{eqn:intermediate_simulation}\\
+ \E&\left[\sI\{A\}\sum_{n=1}^{N}\sum_{h=1}^{H-1}\int_{\gA}\inner{P^{(n)}(\vs_{n,h}, \va) - P^{\star}(\vs_{n,h}, \va)}{V_{\pi_n,h+1}^{\gM_n}}\pi_n(\va|\vs_{n,h},h)\mathrm{d}\va\right]\,.\nonumber
\end{align}
To upper bound the first term, we notice that $\sum_{n=1}^{N}\sum_{h=1}^{H-1}D_{n,h}$ is the sum of a martingale difference sequence with bounded increments. Let us define the random variable
\begin{equation*}
Z_{n,h} := V_{\pi_n,h+1}^{\gM_n}(\vs_{n,h+1}) - V_{\pi_n,h+1}^{\gM_{\star}}(\vs_{n,h+1})\,,
\end{equation*}
and the $\sigma$-algebra
\begin{equation*}
\gG_{n,h-1} := \sigma(\vs_{1,1},\va_{1,1}, \dots, \vs_{n,h-1},\va_{n,h-1}, \vs_{n,h}, f^{\star}, f^{(1)}, \dots, f^{(n)})\,.
\end{equation*}
We notice that
\begin{equation*}
\E\left[\sI\{A\}\sum_{n=1}^{N}\sum_{h=1}^{H-1}D_{n,h}\right] = \E\left[\sI\{A\}\sum_{n=1}^{N}\sum_{h=1}^{H-1}\E[Z_{n,h}|\gG_{n,h-1}] - Z_{n,h}\right]\,.
\end{equation*}
Moreover, given any $\gG_{n,h-1}$, $Z_{n,h}$ is bounded between $-2R_{\max}H$ and $2R_{\max}H$, and is therefore conditionally $2R_{\max}H$-sub-Gaussian. This means that, for any $x > 0$,
\begin{align*}
\sP\left(\sum_{n=1}^{N}\sum_{h=1}^{H-1}\E[Z_{n,h}|\gG_{n,h-1}] - Z_{n,h} \geq x\right) &= \sP\left(\exp\bigg(\lambda\sum_{n=1}^{N}\sum_{h=1}^{H-1}\E[Z_{n,h}|\gG_{n,h-1}] - Z_{n,h}\bigg) \geq \exp(\lambda x)\right)\\
&\leq \E\left[\exp\bigg(\lambda\sum_{n=1}^{N}\sum_{h=1}^{H-1}\E[Z_{n,h}|\gG_{n,h-1}] - Z_{n,h}\bigg)\right]\exp(-\lambda x)\\
&= \E\left[\prod_{n=1}^{N}\prod_{h=1}^{H-1}\E[\exp(\lambda(\E[Z_{n,h}|\gG_{n,h-1}] - Z_{n,h}))|\gG_{n,h-1}]\right]\exp(-\lambda x)\\
&\leq \exp(2TR_{\max}^2H^2\lambda^2 - \lambda x)\,.
\end{align*}
Choosing $\lambda = \frac{x}{4TR_{\max}^2H^2}$, this becomes
\begin{equation*}
\sP\left(\sum_{n=1}^{N}\sum_{h=1}^{H-1}D_{n,h} \geq x\right) \leq \exp\bigg(-\frac{x^2}{8TR_{\max}^2H^2}\bigg)\,.
\end{equation*}
To save space, let us write $Y:= \sum_{n=1}^{N}\sum_{h=1}^{H-1}D_{n,h}$.
\begin{align*}
\E[\sI\{A\}Y] &= \E[\sI\{A\}Y\sI\{Y \geq 0\}] + \E[\sI\{A\}Y\sI\{Y < 0\}]\\
&\leq \E[\sI\{A\}Y\sI\{Y \geq 0\}]\\
&\leq \int_0^{\infty}\sP(Y\sI\{Y \geq 0\} \geq x)\mathrm{d}x\\
&= \int_0^{\infty}\sP(Y \geq x)\mathrm{d}x\\
&\leq \int_0^{\infty}\exp\bigg(-\frac{x^2}{8TR_{\max}^2H^2}\bigg)\mathrm{d}x\\
&= R_{\max}H\sqrt{2\pi T}\,.
\end{align*}
In the second term on the right-hand side of \eqref{eqn:intermediate_simulation}, we would like to take the conditional expectations w.r.t. $\pi_n(\cdot|\vs_{n,h},h)$ outside the sum. It turns out that this can be done by introducing another martingale difference sequence. This time we define the random variable $Z_{n,h}$ by
\begin{equation*}
Z_{n,h} = \inner{P^{(n)}_{n,h} - P^{\star}_{n,h}}{V_{\pi_n, h+1}^{\gM_n}}\,.
\end{equation*}
With the same $\sigma$-algebra $\gG_{n,h-1}$, we notice that
\begin{equation*}
\E\left[\sI\{A\}\sum_{n=1}^{N}\sum_{h=1}^{H-1}\int_{\gA}\inner{P^{(n)}(\vs_{n,h}, \va) - P^{\star}(\vs_{n,h}, \va)}{V_{\pi_n,h+1}^{\gM_n}}\pi_n(\va|\vs_{n,h},h)\mathrm{d}\va\right] = \E\left[\sI\{A\}\sum_{n=1}^{N}\sum_{h=1}^{H-1}\E[Z_{n,h}|\gG_{n,h-1}]\right]\,.
\end{equation*}
Note also that given any $\gG_{n,h-1}$, $Z_{n,h}$ is bounded between $-2R_{\max}H$ and $2R_{\max}H$. Therefore, using the same argument as before, we have
\begin{align*}
\E\left[\sI\{A\}\sum_{n=1}^{N}\sum_{h=1}^{H-1}\E[Z_{n,h}|\gG_{n,h-1}]\right] &= \E\left[\sI\{A\}\sum_{n=1}^{N}\sum_{h=1}^{H-1}Z_{n,h}\right] + \E\left[\sI\{A\}\sum_{n=1}^{N}\sum_{h=1}^{H-1}\E[Z_{n,h}|\gG_{n,h-1}] - Z_{n,h}\right]\\
&\leq \E\left[\sI\{A\}\sum_{n=1}^{N}\sum_{h=1}^{H-1}Z_{n,h}\right] + R_{\max}H\sqrt{2\pi T}\,.
\end{align*}
This concludes the proof.
\end{proof}

The following lemma is essentially the same as Lemma 1 in \citet{fan2021model}, which is referred to as ``Lemma on the property derived from noises with a symmetric probability distribution''. Instead, we prefer to use Pinsker's inequality to prove this result.

\begin{lemma}
\label{lem:pinsker}
For any $n \in [N]$ and $h \in [H-1]$, with probability 1,
\begin{equation*}
\inner{P_{n,h}^{(n)} - P_{n,h}^{\star}}{V_{\pi_n, h+1}^{\gM_n}} \leq \frac{HR_{\max}}{\sigma}\|f^{(n)}(\vx_{n,h}) - f^{\star}(\vx_{n,h})\|_2\,.
\end{equation*}
\end{lemma}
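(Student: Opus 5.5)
The plan is to recenter the value function and then bound the difference of two Gaussian expectations through total variation, with Pinsker's inequality supplying the KL term. Write $\mu_1 := f^{(n)}(\vx_{n,h})$ and $\mu_2 := f^{\star}(\vx_{n,h})$, so that $P_{n,h}^{(n)} = \gN(\mu_1, \sigma^2\mI)$ and $P_{n,h}^{\star} = \gN(\mu_2, \sigma^2\mI)$. Let $V := V_{\pi_n,h+1}^{\gM_n}$. Both measures are probability measures, so subtracting a constant from $V$ does not change the inner product:
\begin{equation*}
\inner{P_{n,h}^{(n)} - P_{n,h}^{\star}}{V} = \inner{P_{n,h}^{(n)} - P_{n,h}^{\star}}{V - c_0}
\end{equation*}
for any $c_0 \in \sR$.

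By Assumption \ref{ass:bounded_rewards}, $V$ is a sum of at most $H-h \le H$ rewards, each in $[-R_{\max}, R_{\max}]$. Hence $V$ takes values in $[-HR_{\max}, HR_{\max}]$. Choosing $c_0 = 0$ gives $\|V - c_0\|_\infty \le HR_{\max}$. The standard inequality $|\inner{P - Q}{g}| \le 2\|g\|_\infty \mathrm{TV}(P,Q)$, where $\mathrm{TV}(P,Q) = \sup_B |P(B) - Q(B)|$, then yields
\begin{equation*}
\inner{P_{n,h}^{(n)} - P_{n,h}^{\star}}{V} \le 2HR_{\max}\,\mathrm{TV}(P_{n,h}^{(n)}, P_{n,h}^{\star}).
\end{equation*}

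Next we apply Pinsker's inequality, $\mathrm{TV} \le \sqrt{\mathrm{KL}/2}$. The KL divergence between two isotropic Gaussians with a common covariance is $\|\mu_1 - \mu_2\|_2^2/(2\sigma^2)$. Therefore
\begin{equation*}
\mathrm{TV} \le \frac{\|\mu_1 - \mu_2\|_2}{2\sigma},
\end{equation*}
and combining the two displays gives exactly $\frac{HR_{\max}}{\sigma}\|f^{(n)}(\vx_{n,h}) - f^{\star}(\vx_{n,h})\|_2$. The argument is pathwise: for each realization of $f^{(n)}$, $f^{\star}$ and $\vx_{n,h}$ the means are fixed, so the inequality holds with probability 1.

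The main care point is the constant. The factor $2$ in the TV bound must cancel against the $1/2$ coming from Pinsker's inequality with the Gaussian KL. A sharper centering, $c_0 =$ midpoint of the range of $V$, would improve the bound by another factor of $2$, but it is not needed for the stated result.

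A minor measurability caveat is that $V$ must be a measurable bounded function. This holds because it is the value function of a fixed policy in the sampled MDP, and it is implicit in the paper's definitions.
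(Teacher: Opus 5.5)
Your proof is correct and follows the paper's argument: bound the inner product by $\|V\|_\infty$ times the $L^1$ distance (which equals $2\,\mathrm{TV}$), apply Pinsker, and substitute the Gaussian KL $\|\mu_1-\mu_2\|_2^2/(2\sigma^2)$, with the same constant bookkeeping. One small correction to your side remark: the a priori range of $V$ is the symmetric interval $[-HR_{\max}, HR_{\max}]$, so its oscillation can reach $2HR_{\max}$ and recentering at a midpoint still only gives $\|V-c_0\|_\infty \le HR_{\max}$; there is no factor-of-$2$ gain here (that would need, e.g., nonnegative rewards), though this does not affect your proof.
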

\begin{proof}
Using H\"{o}lder's inequality and then Pinsker's inequality, we obtain
\begin{align*}
\inner{P_{n,h}^{(n)} - P_{n,h}^{\star}}{V_{\pi_n, h+1}^{\gM_n}} &= \int_{\gS}V_{\pi_n, h+1}^{\gM_n}(\vs)(P_{n,h}^{(n)}(\vs) - P_{n,h}^{\star}(\vs))\mathrm{d}\vs\\
&\leq \|V_{\pi_n, h+1}^{\gM_n}\|_{\infty}\int|P_{n,h}^{(n)}(\vs) - P_{n,h}^{\star}(\vs)|\mathrm{d}\vs\\
&= 2\|V_{\pi_n, h+1}^{\gM_n}\|_{\infty}D_{\mathrm{TV}}(P_{n,h}^{(n)}||P_{n,h}^{\star})\\
&\leq 2HR_{\max}\sqrt{\tfrac{1}{2}D_{\mathrm{KL}}(P_{n,h}^{(n)}||P_{n,h}^{\star})}\\
&= \frac{HR_{\max}}{\sigma}\|f^{(n)}(\vx_{n,h}) - f^{\star}(\vx_{n,h})\|_2\,.
\end{align*}
This concludes the proof.
\end{proof}

\subsection{Proof of Lemma \ref{lem:value_est_to_f_est}}
\label{sec:value_est_to_f_est}

Lemma \ref{lem:value_est_to_f_est} is a straightforward consequence of Lemma \ref{lem:simulation_event} and \ref{lem:pinsker}.

\begin{proof}[Proof of Lemma \ref{lem:value_est_to_f_est}]
By Lemma \ref{lem:simulation_event},
\begin{equation*}
\E\left[\sI\{A\}\sum_{n=1}^{N}\big(V_{\pi_n,1}^{\gM_n}(\vs_{n,1}) - V_{\pi_n,1}^{\gM_{\star}}(\vs_{n,1})\big)\right] \leq \E\left[\sI\{A\}\sum_{n=1}^{N}\sum_{h=1}^{H-1}\inner{P_{n,h}^{(n)} - P_{n,h}^{\star}}{V_{\pi_n,h+1}^{\gM_n}}\right] + 2R_{\max}H\sqrt{2\pi T}\,.
\end{equation*}
By Lemma \ref{lem:pinsker},
\begin{equation*}
\E\left[\sI\{A\}\sum_{n=1}^{N}\sum_{h=1}^{H-1}\inner{P_{n,h}^{(n)} - P_{n,h}^{\star}}{V_{\pi_n,h+1}^{\gM_n}}\right] \leq \frac{HR_{\max}}{\sigma}\E\left[\sI\{A\}\sum_{n=1}^{N}\sum_{h=1}^{H-1}\|f^{(n)}(\vx_{n,h}) - f^{\star}(\vx_{n,h})\|_2\right]\,.
\end{equation*}
This concludes the proof.
\end{proof}

\section{Upper Bounds for the Cumulative Estimation Error}
\label{sec:estimation_error_bounds}

We prove upper bounds on the cumulative estimation error, which is
\begin{equation*}
\E\left[\sI\{A\}\sum_{n=1}^{N}\sum_{h=1}^{H-1}\|f^{(n)}(\vx_{n,h}) - f^{\star}(\vx_{n,h})\|_2\right]\,,
\end{equation*}
where $A := \{\sup_{n \in [N], h \in [H]}\|\vs_{n,h}\|_2 \leq R\}$ for some $R > 0$. The challenge is to exploit the fact $f^{(n)}$ concentrates around $f^{\star}$, without having to work with the posterior covariance kernel. The general idea is to separate the estimation error into a discretized estimation error and two discretization error terms via a single-step discretization. Let $\widetilde{R} = \sqrt{R^2 + R_a^2}$. For some $\varepsilon \in (0, (7/(e\sqrt{8}))^{\alpha/2}\widetilde{R}]$, let $B_{\varepsilon}$ be a minimal $\varepsilon$-cover of $\sB^{d_s+d_a}(\widetilde{R})$ w.r.t. the Euclidean metric $d_2$ and let $\omega: \sB^{d_s + d_a}(\widetilde{R}) \to B_{\varepsilon}$ defined by $\omega(\vx) := \argmin_{\vy \in B_{\varepsilon}}d_2(\vx, \vy)$ be a function that maps each $\vx \in \sB^{d_s+d_a}(\widetilde{R})$ to the closest point in $B_{\varepsilon}$ (with ties broken arbitrarily). Using the triangle inequality, we obtain
\begin{align}
\E\left[\sI\{A\}\sum_{n=1}^{N}\sum_{h=1}^{H-1}\|f^{(n)}(\vx_{n,h}) - f^{\star}(\vx_{n,h})\|_2\right] &\leq \E\left[\sI\{A\}\sum_{n=1}^{N}\sum_{h=1}^{H-1}\|f^{(n)}(\omega(\vx_{n,h})) - f^{\star}(\omega(\vx_{n,h}))\|_2\right]\label{eqn:est_error_decomposition}\\
&+ \E\left[\sI\{A\}\sum_{n=1}^{N}\sum_{h=1}^{H-1}\|f^{(n)}(\vx_{n,h}) - f^{(n)}(\omega(\vx_{n,h}))\|_2\right]\nonumber\\
&+ \E\left[\sI\{A\}\sum_{n=1}^{N}\sum_{h=1}^{H-1}\|f^{\star}(\vx_{n,h}) - f^{\star}(\omega(\vx_{n,h}))\|_2\right]\,.\nonumber
\end{align}

We call the first term on the right-hand side the discretized estimation error. The remaining two terms are the discretization errors. In Appendix \ref{sec:disc_est_error}, we use an elliptical potential lemma and an exponential moment inequality for chi-squared random variables to upper bound the discretized estimation error. In Appendix \ref{sec:disc_error}, we the chaining argument in Lemma \ref{lem:chaining} to upper bound the discretization errors. Finally, in Appendix \ref{sec:est_error_bound}, we prove Lemma \ref{lem:est_error_bound}.

\subsection{Bounding the Discretized Estimation Error}
\label{sec:disc_est_error}

We upper bound the discretized estimation error. First, we show that the supremum of the normalized estimation error is conditionally sub-Gaussian.
\begin{lemma}
\label{lem:cond_sub_gaussian}
The random variable $\sup_{\vx \in B_{\varepsilon}}\frac{\|f^{(n)}(\vx) - f^{\star}(\vx)\|_2}{\sigma_{n-1}(\vx)}$ is conditionally (on $\gF_{n-1}$) $\sqrt{2}$-sub-Gaussian.
\end{lemma}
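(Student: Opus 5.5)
The plan is to reduce the claim to Lemma \ref{lem:lip_gaussian}, exactly as in the proof of Lemma \ref{lem:borell_tis_vec}, by exhibiting the supremum as a $\sqrt{2}$-Lipschitz function of a finite standard Gaussian vector, conditionally on $\gF_{n-1}$. The key observation is that, conditioned on $\gF_{n-1}$, $f^{(n)}$ and $f^{\star}$ are independent, and each component is distributed as $\gG\gP(\mu_{n-1,i}, c_{n-1})$. Independence holds because the sample $f^{(n)}$ is drawn using fresh randomness from the posterior. Hence the posterior means cancel in the difference, and each component $f^{(n)}_i - f^{\star}_i$ is, conditionally, a zero-mean GP with covariance $2c_{n-1}$. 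The components are independent across $i$ because the prior components are independent and the posterior factorizes over $i$. Since $B_{\varepsilon}$ is finite, no separability step is needed.

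Concretely, write $B_{\varepsilon} = \{\vx_1,\dots,\vx_m\}$ and let $\mC^{(n-1)}$ be the $m\times m$ matrix with entries $c_{n-1}(\vx_j,\vx_k)$. Normalize it by the posterior standard deviations to get the correlation matrix $\mathbf{K} := \mathrm{diag}(\sigma_{n-1}(\vx_j))^{-1}\mC^{(n-1)}\mathrm{diag}(\sigma_{n-1}(\vx_j))^{-1}$. This is well defined since $\sigma_{n-1}(\vx)>0$: the prior variance is positive by Assumption \ref{ass:kernel_boundedness} and the noise $\sigma^2>0$. The matrix $\mathbf{K}$ has unit diagonal. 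Then, conditionally on $\gF_{n-1}$, the vector with entries $(f^{(n)}_i(\vx_j) - f^{\star}_i(\vx_j))/\sigma_{n-1}(\vx_j)$ equals in distribution $\sqrt{2}\,\sum_k K^{1/2}_{j,k}Z_{i,k}$, with $Z_{i,k}$ i.i.d.\ standard Gaussians.

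Define $g_j(\vz) := \sqrt{2}\,(\sum_{i}(\sum_k K^{1/2}_{j,k}z_{i,k})^2)^{1/2}$. Repeat the gradient computation from the proof of Lemma \ref{lem:borell_tis_vec} verbatim, with $\mC$ replaced by $\mathbf{K}$. It gives $\|\nabla g_j\|_2 = \sqrt{2}\sqrt{K_{j,j}} = \sqrt{2}$, so each $g_j$ is $\sqrt{2}$-Lipschitz. A cleaner way to see this: $g_j(\vz)=\sqrt 2\|\mathbf{Z}\mathbf{k}_j\|_2$, where $\mathbf{k}_j$ is the $j$th row of $\mathbf{K}^{1/2}$ and has unit norm. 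By Lemma \ref{lem:max_lip}, $\max_j g_j$ is $\sqrt{2}$-Lipschitz, and Lemma \ref{lem:lip_gaussian} then gives conditional $\sqrt{2}$-sub-Gaussianity. All quantities here are $\gF_{n-1}$-measurable, so the argument holds for almost every realization of the history.

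The main obstacle is justifying the conditional distributional claims rigorously rather than the Lipschitz computation. Two points need care. First, $\gF_{n-1}$ is generated by adaptively chosen state-action pairs, yet the posterior is still the GP given by \eqref{eqn:post_mean}--\eqref{eqn:post_covar}; this is standard for Gaussian noise with adaptive designs, since the policies depend only on past data and independent sampling randomness. Second, $f^{(n)}$ must be conditionally independent of $f^{\star}$, which holds because the PSRL sampling randomness is independent of $f^{\star}$. Given these, the rest is the routine argument above.
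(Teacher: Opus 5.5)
Your proposal is correct and follows essentially the same route as the paper. Conditionally on $\gF_{n-1}$, you write the normalized errors as a Gaussian vector with covariance $2\mathbf{K}$ (the paper's $\mSigma$), express the supremum as a maximum of $\sqrt{2}$-Lipschitz functions of i.i.d.\ standard Gaussians as in Lemma \ref{lem:borell_tis_vec}, and conclude via Lemma \ref{lem:max_lip} and Lemma \ref{lem:lip_gaussian}. Your explicit justification of the conditional independence of $f^{(n)}$ and $f^{\star}$, and your norm-based Lipschitz argument $g_j(\vz)=\sqrt{2}\|\mathbf{Z}\mathbf{k}_j\|_2$, spell out details the paper leaves implicit and avoid its gradient computation.
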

\begin{proof}
Since $B_{\varepsilon}$ is finite, we can write it as $B_{\varepsilon} = \{\vx_1, \dots, \vx_m\}$. For any $i \in [d_s]$ and $j \in [m]$, the condtional distribution of $f_i^{(n)}(\vx_j) - f_i^{\star}(\vx_j)$ is Gaussian with mean 0 and variance $2\sigma_{n-1}^2(\vx_j)$. Therefore, $(f_i^{(n)}(\vx_j) - f_i^{\star}(\vx_j))/\sigma_{n-1}(\vx_j)$ is a (conditionally) Gaussian random variable with mean 0 and variance 2. Since the noise variables added to each dimension of the states are independent, and the components $f_1, \dots, f_{d_s}$ are independent under the prior, the random variables $(f_i^{(n)}(\vx_j) - f_i^{\star}(\vx_j))/\sigma_{n-1}(\vx_j)$ for $i = 1, \dots, d_s$ are conditionally mutually independent. In addition, one can check that the joint distribution of the random variables
\begin{equation*}
\frac{f_i^{(n)}(\vx_1) - f_i^{\star}(\vx_1)}{\sigma_{n-1}(\vx_1)}, \dots, \frac{f_i^{(n)}(\vx_m) - f_i^{\star}(\vx_m)}{\sigma_{n-1}(\vx_m)}
\end{equation*}
does not depend on the choice of $i \in [d_s]$. Let $\mSigma \in \sR^{m \times m}$ be the covariance matrix with $(j,k)$\textsuperscript{th} element $\Sigma_{j,k}$ given by
\begin{equation*}
\Sigma_{j,k} := \E\bigg[\frac{f_1^{(n)}(\vx_j) - f_1^{\star}(\vx_j)}{\sigma_{n-1}(\vx_j)}\frac{f_1^{(n)}(\vx_k) - f_1^{\star}(\vx_k)}{\sigma_{n-1}(\vx_k)}~\Big| \gF_{n-1}\bigg]\,.
\end{equation*}
We have already verified that for every $j \in [m]$, $\Sigma_{j,j} = 2$. Summarising everything so far, conditioned on $\gF_{n-1}$, we have
\begin{equation*}
\begin{bmatrix}
(f_1^{(n)}(\vx_1) - f_1^{\star}(\vx_1))/\sigma_{n-1}(\vx_1)\\ \vdots \\ (f_1^{(n)}(\vx_m) - f_1^{\star}(\vx_m))/\sigma_{n-1}(\vx_m) \\ \vdots \\ (f_{d_s}^{(n)}(\vx_1) - f_{d_s}^{\star}(\vx_1))/\sigma_{n-1}(\vx_1) \\ \vdots \\ (f_{d_s}^{(n)}(\vx_m) - f_{d_s}^{\star}(\vx_m))/\sigma_{n-1}(\vx_m)
\end{bmatrix}
\sim
\gN\left(
\begin{bmatrix}
0 \\ \vdots \\ 0 \\ \vdots \\ 0 \\ \vdots \\ 0
\end{bmatrix}\,,
\begin{bmatrix}
\Sigma_{1,1} & \cdots & \Sigma_{1,m} & \cdots & 0 & \cdots & 0 \\
\vdots & \ddots & \vdots & \-\ & \vdots & \ddots & \vdots\\
\Sigma_{m,1} & \cdots & \Sigma_{m,m} & \cdots & 0 & \cdots & 0\\
\vdots & \-\ & \vdots & \ddots & \vdots & \-\ & \vdots\\
0 & \cdots & 0 & \cdots & \Sigma_{1,1} & \cdots & \Sigma_{1,m} \\
\vdots & \ddots & \vdots & \-\ & \vdots & \ddots & \vdots\\
0 & \cdots & 0 & \cdots & \Sigma_{m,1} & \cdots & \Sigma_{m,m}
\end{bmatrix}
\right)\,.
\end{equation*}
From here, we can follow the steps taken in the proof of Lemma \ref{lem:borell_tis_vec}. In particular, we define the function $g: \sR^{md_s} \to \sR$ by
\begin{equation*}
g(z_{1,1}, \dots, z_{d_s, m}) := \max_{j \in [m]}\bigg(\sum_{i=1}^{d_s}\sum_{k=1}^{m}\Sigma_{j,k}^{1/2}z_{i,k}\bigg)^{1/2}\,,
\end{equation*}
where $\Sigma_{j,k}^{1/2}$ is the $(j,k)$\textsuperscript{th} entry of $\mSigma^{1/2}$, and $\mSigma^{1/2}$ is a symmetric square root of $\mSigma$. Since $\max_{j \in [m]}\sqrt{\Sigma_{j,j}} = \sqrt{2}$, the same proof as before can be used to show that $g$ is $\sqrt{2}$-Lipschitz with respect to the Euclidean metric. Let $Z_{1,1}, \dots, Z_{d_s,m}$ be i.i.d. standard Gaussian random variables. Conditioned on $\gF_{n-1}$, we have
\begin{equation*}
\sup_{\vx \in B_{\varepsilon}}\frac{\|f^{(n)}(\vx) - f^{\star}(\vx)\|_2}{\sigma_{n-1}(\vx)} \eqd g(Z_{1,1}, \dots, Z_{d_s, m})\,.
\end{equation*}
By Lemma \ref{lem:lip_gaussian}, we can conclude that $\sup_{\vx \in B_{\varepsilon}}\frac{\|f^{(n)}(\vx) - f^{\star}(\vx)\|_2}{\sigma_{n-1}(\vx)}$ is conditionally $\sqrt{2}$-sub-Gaussian.
\end{proof}

Using the previous lemma, we prove a tail bound for the supremum of the normalized estimation error.

\begin{lemma}
\label{lem:cond_tail}
For any $\varepsilon \leq \widetilde{R}$ and $u \geq (32(d_s+d_a)\log(5\widetilde{R}/\varepsilon))^{1/2}$,
\begin{equation*}
\sP\bigg(\sup_{\vx \in B_{\varepsilon}}\frac{\|f^{(n)}(\vx) - f^{\star}(\vx)\|_2}{\sigma_{n-1}(\vx)} \geq u ~\Big| \gF_{n-1}\bigg) \leq \exp(-u^2/16)\,.
\end{equation*}
\end{lemma}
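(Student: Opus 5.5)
We will follow the template of the proof of Lemma \ref{lem:gp_norm_tail_bound}: combine a conditional sub-Gaussian concentration bound around the conditional mean with an upper bound on that conditional mean. Write
\begin{equation*}
S := \sup_{\vx \in B_{\varepsilon}}\frac{\|f^{(n)}(\vx) - f^{\star}(\vx)\|_2}{\sigma_{n-1}(\vx)}\,.
\end{equation*}
By Lemma \ref{lem:cond_sub_gaussian}, $S$ is conditionally (on $\gF_{n-1}$) $\sqrt{2}$-sub-Gaussian. Hence for every $v \geq 0$,
\begin{equation*}
\sP\big(S - \E[S\,|\,\gF_{n-1}] \geq v \,\big|\, \gF_{n-1}\big) \leq \exp(-v^2/4)\,.
\end{equation*}
It then suffices to show that $\E[S\,|\,\gF_{n-1}] \leq u/2$ whenever $u$ satisfies the stated lower bound. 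Indeed, we can then take $v = u - \E[S\,|\,\gF_{n-1}] \geq u/2$, which gives the tail $\exp(-(u/2)^2/4) = \exp(-u^2/16)$.

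To bound the conditional mean, we use the finite maximal inequality, i.e.\ the same Jensen/log-sum-exp argument used inside the proof of Lemma \ref{lem:chaining}. For each fixed $\vx \in B_{\varepsilon}$, the conditional law of $(f^{(n)}(\vx) - f^{\star}(\vx))/\sigma_{n-1}(\vx)$ is $\gN(0, 2\mI_{d_s})$. By Jensen, $\E[\|\cdot\|_2\,|\,\gF_{n-1}] \leq \sqrt{2d_s}$. By Corollary \ref{cor:sub_gaussian_norm}-type reasoning (Lemma \ref{lem:lip_gaussian} applied to the norm), this norm is conditionally $\sqrt{2}$-sub-Gaussian about its mean. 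The log-sum-exp bound over the $|B_{\varepsilon}|$ points, optimised in $\lambda$, then gives
\begin{equation*}
\E[S\,|\,\gF_{n-1}] \leq \sqrt{2d_s} + \sqrt{4\log|B_{\varepsilon}|}\,.
\end{equation*}

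We next bound the cardinality. Since $B_{\varepsilon}$ is a minimal $\varepsilon$-cover of $\sB^{d_s+d_a}(\widetilde R)$ and $\varepsilon \leq \widetilde R$, Lemma \ref{lem:ball_covering} (in its simplified form) gives $|B_{\varepsilon}| \leq (3\widetilde R/\varepsilon)^{d_s+d_a}$. Therefore
\begin{equation*}
\E[S\,|\,\gF_{n-1}] \leq \sqrt{2(d_s+d_a)} + 2\sqrt{(d_s+d_a)\log(3\widetilde R/\varepsilon)}\,.
\end{equation*}

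The only step requiring care is absorbing the constant $\sqrt{2(d_s+d_a)}$ into a logarithmic term. We use $\log(5\widetilde R/\varepsilon) \geq \log 5 > 1$, so $\sqrt{2(d_s+d_a)} \leq \sqrt{2(d_s+d_a)\log(5\widetilde R/\varepsilon)}$. Combined with $\log(3\widetilde R/\varepsilon) \leq \log(5\widetilde R/\varepsilon)$, this yields
\begin{equation*}
\E[S\,|\,\gF_{n-1}] \leq (\sqrt{2}+2)\sqrt{(d_s+d_a)\log(5\widetilde R/\varepsilon)}\,.
\end{equation*}
Since $\sqrt 2 + 2 \leq 2\sqrt 8$, we have
\begin{equation*}
(\sqrt{2}+2)\sqrt{(d_s+d_a)\log(5\widetilde R/\varepsilon)} \leq \tfrac{1}{2}\big(32(d_s+d_a)\log(5\widetilde R/\varepsilon)\big)^{1/2} \leq u/2\,,
\end{equation*}
which closes the argument.
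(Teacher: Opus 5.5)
Your route is the same as the paper's, and every step is correct except the last one. You use conditional $\sqrt2$-sub-Gaussianity of $S$ from Lemma~\ref{lem:cond_sub_gaussian}, then the finite maximal inequality plus Jensen to get $\E[S\mid\gF_{n-1}]\le\sqrt{2d_s}+2\sqrt{\log|B_\varepsilon|}$, then the covering bound $|B_\varepsilon|\le(3\widetilde R/\varepsilon)^{d_s+d_a}$, and finally $u\ge2\,\E[S\mid\gF_{n-1}]$ to turn $\exp(-v^2/4)$ into $\exp(-u^2/16)$.

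The final arithmetic step fails. At the smallest admissible $u$ you need
\[
\E[S\mid\gF_{n-1}]\le\tfrac12\sqrt{32}\,\sqrt{(d_s+d_a)\log(5\widetilde R/\varepsilon)}=2\sqrt2\,\sqrt{(d_s+d_a)\log(5\widetilde R/\varepsilon)}.
\]
You only have the constant $2+\sqrt2\approx3.41$, which exceeds $2\sqrt2\approx2.83$. The inequality you cite, $\sqrt2+2\le2\sqrt8$, compares against $2\sqrt8=\sqrt{32}$ and so drops the factor $\tfrac12$. As written, your argument gives the tail bound only for $u\ge2(2+\sqrt2)\sqrt{(d_s+d_a)\log(5\widetilde R/\varepsilon)}$, not for the threshold in the statement.

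The loss comes from absorbing the two summands into the logarithm separately, which pays for the cross term. Combine them first with $\sqrt a+\sqrt b\le\sqrt{2(a+b)}$, as the paper does:
\begin{equation*}
\sqrt{2d_s}+2\sqrt{(d_s+d_a)\log(3\widetilde R/\varepsilon)}\le\sqrt{4d_s+8(d_s+d_a)\log(3\widetilde R/\varepsilon)}\le\sqrt{8(d_s+d_a)\log(3\sqrt{e}\,\widetilde R/\varepsilon)}\,.
\end{equation*}
Since $3\sqrt e\le5$, this gives $\E[S\mid\gF_{n-1}]\le\sqrt{8(d_s+d_a)\log(5\widetilde R/\varepsilon)}\le u/2$, which is exactly what your final step requires. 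With that replacement the proof is complete.
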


\begin{proof}
By Lemma \ref{lem:cond_sub_gaussian}, $\sup_{\vx \in B_{\varepsilon}}\|f^{(n)}(\vx) - f^{\star}(\vx)\|_2/\sigma_{n-1}(\vx)$ is conditionally $\sqrt{2}$-sub-Gaussian. Therefore, for every $u \geq 0$,
\begin{equation*}
\sP\bigg(\sup_{\vx \in B_{\varepsilon}}\frac{\|f^{(n)}(\vx) - f^{\star}(\vx)\|_2}{\sigma_{n-1}(\vx)} - \E\bigg[\sup_{\vx \in B_{\varepsilon}}\frac{\|f^{(n)}(\vx) - f^{\star}(\vx)\|_2}{\sigma_{n-1}(\vx)}~\Big| \gF_{n-1}\bigg] \geq u ~\Big| \gF_{n-1}\bigg) \leq \exp(-u^2/4)\,.
\end{equation*}
From here, we take the same approach used in the proof of Lemma \ref{lem:gp_norm_tail_bound}. In particular, we upper bound the conditional expectation of the supremum and then show that when $u$ is large enough relative to this expected supremum, we get the tail bound that we wanted.

Following the reasoning in the proof of Lemma \ref{lem:cond_sub_gaussian}, we see that the conditional distribution of the random vector $\frac{f^{(n)}(\vx) - f^{\star}(\vx)}{\sigma_{n-1}(\vx)}$ is an isotropic Gaussian with mean 0 and covariance $2\mI$. Since the Euclidean norm is 1-Lipschitz with respect to the Euclidean metric, Lemma \ref{lem:lip_gaussian} tells us that $\frac{\|f^{(n)}(\vx) - f^{\star}(\vx)\|_2}{\sigma_{n-1}(\vx)}$ is conditionally $\sqrt{2}$-sub-Gaussian. Thus we need to find the expected supremum of a finite collection of sub-Gaussian random variables. First, we use the inequality
\begin{align*}
\E\bigg[\sup_{\vx \in B_{\varepsilon}}\frac{\|f^{(n)}(\vx) - f^{\star}(\vx)\|_2}{\sigma_{n-1}(\vx)}~\Big| \gF_{n-1}\bigg] &\leq \E\bigg[\sup_{\vx \in B_{\varepsilon}}\bigg\{\frac{\|f^{(n)}(\vx) - f^{\star}(\vx)\|_2}{\sigma_{n-1}(\vx)} - \E\bigg[\frac{\|f^{(n)}(\vx) - f^{\star}(\vx)\|_2}{\sigma_{n-1}(\vx)}~\Big| \gF_{n-1}\bigg]\bigg\}~\Big| \gF_{n-1}\bigg]\\
&+ \sup_{\vx \in B_{\varepsilon}}\E\bigg[\frac{\|f^{(n)}(\vx) - f^{\star}(\vx)\|_2}{\sigma_{n-1}(\vx)}~\Big| \gF_{n-1}\bigg]\,.
\end{align*}
By the maximal inequality in Lemma 5.1 of \citet{handel2016probability}, the first term satisfies the upper bound
\begin{equation*}
\E\bigg[\sup_{\vx \in B_{\varepsilon}}\bigg\{\frac{\|f^{(n)}(\vx) - f^{\star}(\vx)\|_2}{\sigma_{n-1}(\vx)} - \E\bigg[\frac{\|f^{(n)}(\vx) - f^{\star}(\vx)\|_2}{\sigma_{n-1}(\vx)}~\Big| \gF_{n-1}\bigg]\bigg\}~\Big| \gF_{n-1}\bigg] \leq 2\sqrt{\log|B_{\varepsilon}|}\,.
\end{equation*}
By Jensen's inequality, the second term satisfies the upper bound
\begin{equation*}
\sup_{\vx \in B_{\varepsilon}}\E\bigg[\frac{\|f^{(n)}(\vx) - f^{\star}(\vx)\|_2}{\sigma_{n-1}(\vx)}~\Big| \gF_{n-1}\bigg] \leq \sup_{\vx \in B_{\varepsilon}}\bigg(\E\bigg[\frac{\|f^{(n)}(\vx) - f^{\star}(\vx)\|_2^2}{\sigma_{n-1}^2(\vx)}~\Big| \gF_{n-1}\bigg]\bigg)^{1/2} = \sqrt{2d_s}\,.
\end{equation*}
Since $\varepsilon \leq \widetilde{R}$, Lemma \ref{lem:ball_covering} tells us that
\begin{equation*}
\log|B_{\varepsilon}| \leq (d_s + d_a)\log(3\widetilde{R}/\varepsilon)\,.
\end{equation*}
Using the inequality $\sqrt{a} + \sqrt{b} \leq \sqrt{2(a+b)}$, we obtain
\begin{align*}
\E\bigg[\sup_{\vx \in B_{\varepsilon}}\frac{\|f^{(n)}(\vx) - f^{\star}(\vx)\|_2}{\sigma_{n-1}(\vx)}~\Big| \gF_{n-1}\bigg] &\leq 2\sqrt{(d_s + d_a)\log(3\widetilde{R}/\varepsilon)} + \sqrt{2d_s}\\
&\leq \sqrt{8(d_s + d_a)\log(3\widetilde{R}/\varepsilon) + 4d_s}\\
&\leq \sqrt{8(d_s + d_a)\log(3\sqrt{e}\widetilde{R}/\varepsilon)}\,.
\end{align*}
Since $3\sqrt{e} \leq 5$, we can replace this with the slightly prettier upper bound
\begin{equation*}
\E\bigg[\sup_{\vx \in B_{\varepsilon}}\frac{\|f^{(n)}(\vx) - f^{\star}(\vx)\|_2}{\sigma_{n-1}(\vx)}~\Big| \gF_{n-1}\bigg] \leq \sqrt{8(d_s+d_a)\log(5\widetilde{R}/\varepsilon)}\,.
\end{equation*}
Let us define $H$ to be the RHS of this inequality. For any $v \geq 0$, we know that
\begin{equation*}
\sP\bigg(\sup_{\vx \in B_{\varepsilon}}\frac{\|f^{(n)}(\vx) - f^{\star}(\vx)\|_2}{\sigma_{n-1}(\vx)} \geq v + H ~\Big| \gF_{n-1}\bigg) \leq \exp(-u^2/4)\,.
\end{equation*}
Letting $u = v + H$, we have that for any $u \geq H$,
\begin{equation*}
\sP\bigg(\sup_{\vx \in B_{\varepsilon}}\frac{\|f^{(n)}(\vx) - f^{\star}(\vx)\|_2}{\sigma_{n-1}(\vx)} \geq u ~\Big| \gF_{n-1}\bigg) \leq \exp(-(u-H)^2/4)\,.
\end{equation*}
Whenever $u \geq 2H$, we have $u - H \geq u/2$, which means $\exp(-(u-H)^2/4) \leq \exp(-u^2/16)$.
\end{proof}

We will use the fact that whenever the kernel satisfies Assumption \ref{ass:kernel_boundedness} and Assumption \ref{ass:kernel_smoothness}, the posterior standard deviation is H\"{o}lder contininuous.

\begin{lemma}
\label{lem:smooth_var}
Suppose that Assumption \ref{ass:kernel_boundedness} and Assumption \ref{ass:kernel_smoothness} are satisfied. Then for any $n \geq 0$,
\begin{equation*}
|\sigma_{n}(\vx) - \sigma_{n}(\vy)| \leq \sqrt{2L}\|\vx - \vy\|_2^{\alpha/2}\,.
\end{equation*}
\end{lemma}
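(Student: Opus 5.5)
The plan is to realise the posterior standard deviation as a norm difference in a Hilbert space. Then the reverse triangle inequality reduces the claim to the natural distance $d_c$, which Assumption \ref{ass:kernel_smoothness} already controls (as in the proof of Lemma \ref{lem:dc_ball_covering}).

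\textbf{Step 1: a Gaussian representation.} Let $g \sim \gG\gP(0, c_n(\vx,\vy))$, i.e.\ the posterior process (centered) given $\gF_n$. Since $c_n$ is positive semidefinite (being a conditional covariance), such a process exists, and $\sigma_n(\vx) = \|g(\vx)\|_{L^2}$ for every $\vx$. By the reverse triangle inequality in $L^2$,
\begin{equation*}
|\sigma_n(\vx) - \sigma_n(\vy)| \leq \|g(\vx) - g(\vy)\|_{L^2} = d_{c_n}(\vx,\vy)\,,
\end{equation*}
where $d_{c_n}^2(\vx,\vy) = c_n(\vx,\vx) - 2c_n(\vx,\vy) + c_n(\vy,\vy)$.

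\textbf{Step 2: the posterior distance is dominated by the prior distance.} Write $f$ for a prior draw. Then $d_{c_n}^2(\vx,\vy) = \mathrm{Var}(f(\vx) - f(\vy)\mid \text{data})$. Conditioning a Gaussian vector on linear observations with Gaussian noise can only decrease the variance of any linear functional. Explicitly, with $\vv := \vc_n(\vx) - \vc_n(\vy)$,
\begin{equation*}
d_{c_n}^2(\vx,\vy) = d_c^2(\vx,\vy) - \vv^{\top}(\mC_n + \sigma^2\mI)^{-1}\vv \leq d_c^2(\vx,\vy)\,,
\end{equation*}
using that $(\mC_n+\sigma^2\mI)^{-1}$ is positive definite. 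This identity follows by expanding \eqref{eqn:post_covar} bilinearly. It is the only step with real content, though it is routine.

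\textbf{Step 3: apply the H\"{o}lder bound.} As in the proof of Lemma \ref{lem:dc_ball_covering}, Assumption \ref{ass:kernel_smoothness} gives $d_c^2(\vx,\vy) \leq |c(\vx,\vx)-c(\vx,\vy)| + |c(\vy,\vx)-c(\vy,\vy)| \leq 2L\|\vx-\vy\|_2^{\alpha}$, using the symmetry of $c$. Chaining Steps 1--3 yields $|\sigma_n(\vx)-\sigma_n(\vy)| \leq \sqrt{2L}\|\vx-\vy\|_2^{\alpha/2}$, uniformly in $n$ and in the (possibly data-dependent) design points. For $n=0$ the bound is just the prior case.

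\textbf{Role of the assumptions and main obstacle.} Assumption \ref{ass:kernel_boundedness} is used only to guarantee that everything is finite and well defined. The main point to get right is Step 2, and the cleanest route is the explicit quadratic-form identity rather than a probabilistic argument. If one prefers, one can instead avoid $L^2$ altogether. Write $c_n(\vx,\vy) = \inner{\phi_n(\vx)}{\phi_n(\vy)}$ in the RKHS of $c_n$; then Step 1 is the reverse triangle inequality $|\|\phi_n(\vx)\|-\|\phi_n(\vy)\|| \leq \|\phi_n(\vx)-\phi_n(\vy)\|$, and Step 2 is unchanged.
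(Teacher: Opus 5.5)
Your proof is correct and follows essentially the same route as the paper. The paper writes $\sigma_n^2(\vx) = \inner{c_{\vx}}{P c_{\vx}}_{\gH}$ with the contraction $P = \mI_{\gH} - \vPhi_n^*(\vPhi_n\vPhi_n^* + \sigma^2\mI)^{-1}\vPhi_n$, and applies the reverse triangle inequality for the seminorm $h \mapsto \inner{h}{Ph}_{\gH}^{1/2}$, which yields exactly your $d_{c_n}(\vx,\vy)$. It then uses $\|P\|_{\mathrm{op}} \leq 1$ to bound this by $\|c_{\vx} - c_{\vy}\|_{\gH} = d_c(\vx,\vy) \leq \sqrt{2L}\|\vx-\vy\|_2^{\alpha/2}$. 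Your identity $d_{c_n}^2(\vx,\vy) = d_c^2(\vx,\vy) - \vv^{\top}(\mC_n + \sigma^2\mI)^{-1}\vv$ is the finite-dimensional form of that contraction step, and is slightly more elementary because it avoids the RKHS operator machinery.
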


\begin{proof}
We begin by re-writing $\sigma_{n}^2(\vx)$. Let $\gH$ be the reproducing kernel Hilbert space associated with the kernel $c$. For any $\vx$, let us define $c_{\vx} := c(\cdot, \vx) \in \gH$. We define the linear operator $\vPhi_n: \gH \to \sR^{n(H-1)}$ by
\begin{equation*}
\vPhi_nh := [\inner{c_{\vx_{1,1}}}{h}_{\gH}, \dots, \inner{c_{\vx_{n,H-1}}}{h}_{\gH}]^{\top} = [h(\vx_{1,1}), \dots, h(\vx_{n,H-1})]^{\top}\,.
\end{equation*}
We can re-write $\sigma_n^2(\vx)$ as the quadratic form
\begin{equation*}
\sigma_n^2(\vx) = c(\vx, \vx) - \vc_n^{\top}(\mC_n + \sigma^2\mI)^{-1}\vc_n(\vx) = \inner{c_{\vx}}{(\mI_{\gH} - \vPhi_n^*(\vPhi_n\vPhi_n^* + \sigma^2\mI)^{-1}\vPhi_n)c_{\vx}}_{\gH}\,,
\end{equation*}
where $\mI_{\gH}$ is the identity function on $\gH$. By the matrix inversion lemma, we have
\begin{equation*}
\vPhi_n^*(\vPhi_n\vPhi_n^* + \sigma^2\mI)^{-1}\vPhi_n = \vPhi_n^*\vPhi_n(\vPhi_n^*\vPhi_n + \sigma^2\mI_{\gH})^{-1}\,.
\end{equation*}
Since $\vPhi_n^*\vPhi_n$ is positive semi-definite, we have $\|\vPhi_n^*(\vPhi_n\vPhi_n^* + \sigma^2\mI)^{-1}\vPhi_n\|_{\mathrm{op}} \leq 1$. Since $\vPhi_n^*\vPhi_n(\vPhi_n^*\vPhi_n + \sigma^2\mI_{\gH})^{-1}$ is also positive semi-definite, we also have $\|\mI - \vPhi_n^*(\vPhi_n\vPhi_n^* + \sigma^2\mI)^{-1}\vPhi_n\|_{\mathrm{op}} \leq 1$. For any $\vx$ and $\vy$, by the reverse triangle inequality and then Cauchy-Schwarz,
\begin{align*}
|\sigma_n(\vx) - \sigma_n(\vy)| &= |\sqrt{\inner{c_{\vx}}{(\mI_{\gH} - \vPhi_n^*(\vPhi_n\vPhi_n^* + \sigma^2\mI)^{-1}\vPhi_n)c_{\vx}}_{\gH}} - \sqrt{\inner{c_{\vy}}{(\mI_{\gH} - \vPhi_n^*(\vPhi_n\vPhi_n^* + \sigma^2\mI)^{-1}\vPhi_n)c_{\vy}}_{\gH}}|\\
&\leq \sqrt{\inner{c_{\vx} - c_{\vy}}{(\mI_{\gH} - \vPhi_n^*(\vPhi_n\vPhi_n^* + \sigma^2\mI)^{-1}\vPhi_n)(c_{\vx} - c_{\vy})}_{\gH}}\\
&\leq \sqrt{\|c_{\vx} - c_{\vy}\|_{\gH}\|(\mI_{\gH} - \vPhi_n^*(\vPhi_n\vPhi_n^* + \sigma^2\mI)^{-1}\vPhi_n)(c_{\vx} - c_{\vy})\|_{\gH}}\\
&\leq \|c_{\vx} - c_{\vy}\|_{\gH}\\
&\leq \sqrt{2L\|\vx - \vy\|_2^{\alpha}}\,.
\end{align*}
This concludes the proof.
\end{proof}

We can now prove an upper bound for the discretized estimation error at a single step. To do so, we combine Lemma \ref{lem:cond_tail} and Lemma \ref{lem:smooth_var} with a trick from the proof of Lemma 18 in \citet{schwartz2025optimistic}. The trick allows us to upper bound the expectation of the product of a bounded, non-negative random variable and another random variable for which we have a suitable tail bound (which comes from Lemma \ref{lem:cond_tail}).
\begin{lemma}
\label{lem:est_to_var}
For any $\varepsilon \leq \widetilde{R}$,
\begin{align*}
\E[\|f^{(n)}(\omega(\vx_{n,h})) - f^{\star}(\omega(\vx_{n,h}))\|_2| \gF_{n-1}] &\leq \max(\sqrt{32(d_s+d_a)\log(5\widetilde{R}/\varepsilon)}, 8\sqrt{\log(T)})\E[\sigma_{n-1}(\vx_{n,h})| \gF_{n-1}]\\
&+ \sqrt{2L\varepsilon^{\alpha}}\max(\sqrt{32(d_s+d_a)\log(5\widetilde{R}/\varepsilon)}, 8\sqrt{\log(T)}) + C/T\,.
\end{align*}
\end{lemma}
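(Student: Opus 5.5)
Write $u_\star := \max(\sqrt{32(d_s+d_a)\log(5\widetilde{R}/\varepsilon)}, 8\sqrt{\log(T)})$ and let $S := \sup_{\vx \in B_{\varepsilon}}\|f^{(n)}(\vx) - f^{\star}(\vx)\|_2/\sigma_{n-1}(\vx)$ be the normalized supremum from Lemma \ref{lem:cond_sub_gaussian}. The plan is to factor the discretized error at the (random) point $\omega(\vx_{n,h}) \in B_{\varepsilon}$ as
\begin{equation*}
\|f^{(n)}(\omega(\vx_{n,h})) - f^{\star}(\omega(\vx_{n,h}))\|_2 \leq S\,\sigma_{n-1}(\omega(\vx_{n,h}))\,.
\end{equation*}
This decoupling is essential, because $\vx_{n,h}$ depends on $f^{\star}$ (and on $f^{(n)}$ through $\pi_n$), so the error at $\vx_{n,h}$ cannot be treated as conditionally Gaussian. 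Taking the supremum over the fixed finite set $B_{\varepsilon}$ removes this dependence. Next, Lemma \ref{lem:smooth_var} together with $d_2(\vx_{n,h},\omega(\vx_{n,h})) \leq \varepsilon$ gives $\sigma_{n-1}(\omega(\vx_{n,h})) \leq \sigma_{n-1}(\vx_{n,h}) + \sqrt{2L\varepsilon^{\alpha}}$. For this step I would implicitly work on the event that $\vx_{n,h}$ lies in the ball where $\omega$ is defined; this is harmless, since the lemma is later used multiplied by $\sI\{A\}$.

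Then I would apply the Schwartz et al.\ trick by splitting on the event $\{S \leq u_\star\}$. Write $V := \sigma_{n-1}(\omega(\vx_{n,h}))$, a nonnegative random variable bounded by $\sqrt{C}$ by Assumption \ref{ass:kernel_boundedness}. Then
\begin{equation*}
S V \leq u_\star V + S V\,\sI\{S > u_\star\} \leq u_\star\big(\sigma_{n-1}(\vx_{n,h}) + \sqrt{2L\varepsilon^{\alpha}}\big) + \sqrt{C}\,S\,\sI\{S > u_\star\}\,.
\end{equation*}
Taking $\E[\cdot \mid \gF_{n-1}]$ yields the first two terms of the claim directly. The deterministic term $\sqrt{2L\varepsilon^\alpha}\,u_\star$ is unaffected by the conditional expectation, and $\sigma_{n-1}$ is $\gF_{n-1}$-measurable as a function, so $\E[\sigma_{n-1}(\vx_{n,h})\mid\gF_{n-1}]$ stays as is.

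The remaining task is the tail term $\sqrt{C}\,\E[S\,\sI\{S>u_\star\}\mid\gF_{n-1}]$, which must be at most $C/T$; I expect this to be the main, though computational, obstacle. Since $u_\star$ meets the threshold of Lemma \ref{lem:cond_tail}, every $u \geq u_\star$ satisfies $\sP(S \geq u\mid\gF_{n-1}) \leq e^{-u^2/16}$. Using
\begin{equation*}
\E[S\sI\{S>u_\star\}\mid\gF_{n-1}] = u_\star\sP(S>u_\star\mid\gF_{n-1}) + \int_{u_\star}^{\infty}\sP(S>u\mid\gF_{n-1})\,\mathrm{d}u\,,
\end{equation*}
this is at most $u_\star e^{-u_\star^2/16} + \int_{u_\star}^\infty e^{-u^2/16}\,\mathrm{d}u \leq (u_\star + 8/u_\star)e^{-u_\star^2/16}$. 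Because $u_\star \geq 8\sqrt{\log T}$, we get $e^{-u_\star^2/16} \leq T^{-4}$. Since $u_\star e^{-u_\star^2/32}$ is bounded, the whole term is $O(T^{-2})$, hence at most $1/T$ after a crude constant check (the prefactor $C$ versus $\sqrt{C}$ may need $\sqrt{C}\le C$-type bookkeeping or an absorbed constant). If the constants do not close exactly, I would enlarge the $8\sqrt{\log T}$ threshold slightly, which only affects the statement through constants.
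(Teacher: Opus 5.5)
Your proposal is correct and follows the paper's proof essentially step for step. Both arguments factor through the normalized supremum over the fixed finite set $B_{\varepsilon}$, split on $\{S \geq u\}$, bound the tail by $(u+8/u)e^{-u^2/16}$ via Lemma~\ref{lem:cond_tail}, pass from $\omega(\vx_{n,h})$ to $\vx_{n,h}$ with Lemma~\ref{lem:smooth_var}, and take $u = u_\star$.

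Your bound $V \leq \sqrt{C}$ is the correct one; the paper's ``$Y \leq C$'' is only justified when $C \geq 1$. So both arguments actually deliver a last term of order $\sqrt{C}/T$, and your computation even gives order $\sqrt{C}/T^2$. That implies the stated $C/T$ except when $C \lesssim T^{-2}$, and it is all that is needed downstream. Your fallback of enlarging the $8\sqrt{\log T}$ threshold by a $C$-independent constant cannot remove that corner case, so the clean fix is to state the term as $\sqrt{C}/T$.
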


\begin{proof}
We begin with the inequality
\begin{equation*}
\E[\|f^{(n)}(\omega(\vx_{n,h})) - f^{\star}(\omega(\vx_{n,h}))\|_2~| \gF_{n-1}] \leq \E\bigg[\sup_{\vx \in B_{\varepsilon}}\frac{\|f^{(n)}(\vx) - f^{\star}(\vx)\|_2}{\sigma_{n-1}(\vx)} \sigma_{n-1}(\omega(\vx_{n,h}))~\Big| \gF_{n-1}\bigg]\,.
\end{equation*}
To save space, let us introduce the shorthand $X := \sup_{\vx \in B_{\varepsilon}}\frac{\|f^{(n)}(\vx) - f^{\star}(\vx)\|_2}{\sigma_{n-1}(\vx)}$ and $Y := \sigma_{n-1}(\omega(\vx_{n,h}))$. Fix any
\begin{equation}
u \geq (32(d_s+d_a)\log(5\widetilde{R}/\varepsilon))^{1/2}\,.\label{eqn:min_u}
\end{equation}
Since $X$ and $Y$ are both non-negative we can upper bound the RHS as
\begin{equation*}
\E[XY|\gF_{n-1}] = \E[XY\sI\{X < u\}|\gF_{n-1}] + \E[XY\sI\{X \geq u\}|\gF_{n-1}] \leq u\E[Y| \gF_{n-1}] + \E[XY\sI\{X \geq u\}|\gF_{n-1}]\,.
\end{equation*}
Since $\sup_{\vx \in \sR^{d_s+d_a}}c(\vx, \vx) \leq C$, $Y$ is upper bounded a.s. by $C$. Therefore, the second term satisfies the upper bound
\begin{equation*}
\E[XY\sI\{X \geq u\}|\gF_{n-1}] \leq C\E[X\sI\{X \geq u\}|\gF_{n-1}]\,.
\end{equation*}
Depending on whether $x$ is above or below $u$, we have
\begin{equation*}
\sP(X\sI\{X \geq u\} \geq x|\gF_{n-1}) = \left\{\begin{array}{cl}
\sP(X \geq x|\gF_{n-1}) & x \geq u\\ \sP(X \geq u|\gF_{n-1}) & x < u
\end{array}\right..
\end{equation*}
Using this expression for the tail probability of $X\sI\{X \geq u\}$, we have
\begin{align*}
\E[X\sI\{X \geq u\}|\gF_{n-1}] &\leq \int_{0}^{u}\sP(X \geq u|\gF_{n-1})\mathrm{d}x + \int_{u}^{\infty}\sP(X \geq x|\gF_{n-1})\mathrm{d}x\\
&\leq u\exp(-u^2/16) + \int_{u}^{\infty}\exp(-x^2/16)\mathrm{d}x\,,
\end{align*}
where we used Lemma \ref{lem:cond_tail} to upper bound the tail probability. Let us define $\phi(x) := \exp(-x^2/16)$. Using the identity $\phi(x) = -8\phi^{\prime}(x)/x$ and then integrating by parts, we obtain
\begin{equation*}
\int_{u}^{\infty}\exp(-x^2/16)\mathrm{d}x = -8\int_{u}^{\infty}\frac{1}{x}\phi^{\prime}(x)\mathrm{d}x = -8\bigg[\frac{1}{x}\phi(x)\bigg]_u^{\infty} - 8\int_{u}^{\infty}\frac{1}{x^2}\phi(x)\mathrm{d}x \leq \frac{8}{u}\phi(u)\,.
\end{equation*}
Therefore, we have the upper bound
\begin{equation*}
\E[X\sI\{X \geq u\}|\gF_{n-1}] \leq (u + 8/u)\exp(-u^2/16)\,.
\end{equation*}
Combining everything so far, we have
\begin{equation*}
\E[\|f^{(n)}(\omega(\vx_{n,h})) - f^{\star}(\omega(\vx_{n,h}))\|_2~| \gF_{n-1}] \leq u\E[\sigma_{n-1}(\omega(\vx_{n,h}))~| \gF_{n-1}] + C(u + 8/u)\exp(-u^2/16)\,.
\end{equation*}
By Lemma \ref{lem:smooth_var} and the fact that $B_{\varepsilon}$ is an $\varepsilon$-cover,
\begin{equation*}
u\E[\sigma_{n-1}(\omega(\vx_{n,h}))~| \gF_{n-1}] \leq u\E[\sigma_{n-1}(\vx_{n,h})~| \gF_{n-1}] + \sqrt{2L}u\varepsilon^{\alpha/2}\,.
\end{equation*}
Finally, we need to find a value of $u$ such that $(u + 8/u)\exp(-u^2/16) \leq 1/T$. By \eqref{eqn:min_u}, we already have $u \geq 10$. For all $u \geq 10$, $u + 8/u \leq (54/5)u$. Therefore, it suffices to choose $u$ such that $54uT/5 \leq \exp(u^2/16)$. Taking logarithms, this becomes $\log(54u/5) + \log(T) \leq u^2/16$. For all $u \geq 10$, we have $\log(54u/5) \leq \frac{\log(108)}{10}u \leq \frac{\log(108)}{100}u^2$. Therefore, it suffices to choose $u$ such that
\begin{equation*}
u^2\bigg(\frac{1}{16} - \frac{\log 108}{100}\bigg) \geq \log T\,.
\end{equation*}
Since $\frac{1}{16} - \frac{\log 108}{100} \geq \frac{1}{64}$, this inequality is satisfied whenever $u \geq 8\sqrt{\log T}$ (and $u \geq 10$). If we take
\begin{equation*}
u = \max((32(d_s+d_a)\log(5\widetilde{R}/\varepsilon))^{1/2}, 8\sqrt{\log(T)})\,,
\end{equation*}
then we get the inequality that we wanted.
\end{proof}

So far, we have shown that we can upper bound the discretized estimation error by a sum of posterior standard deviations. To upper bound the sum of standard deviations, we need a version of the elliptical potential lemma that accounts for the fact that $f^{(n)}$ is only re-sampled at the end of each episode. First we re-label the observed state-action pairs $\vx_{1,1}, \dots, \vx_{1,H}, \dots, \vx_{N,1}, \dots, \vx_{N,H}$ with a single index $t \in [T]$. To try and avoid confusion, we will use $\vz_1, \dots, \vz_T$ to denote the re-labeled sequence of state-action pairs. For any $t \in [T]$, the $t$\textsuperscript{th} observation $\vz_t$ must occur in episode $\lfloor\frac{t-1}{H}\rfloor + 1$ at step $(t-1)~\mathrm{mod}~ H + 1$. Let us introduce the functions $n: [T] \to [N]$ and $h: [T] \to [H]$ given by
\begin{equation*}
n(t) := \Big\lfloor\frac{t-1}{H}\Big\rfloor + 1\,, \quad h(t) := (t-1)~\mathrm{mod}~ H + 1\,.
\end{equation*}
We can then define $\vz_t := \vx_{n(t), h(t)}$. We can also map each state action pair $\vx_{n,h}$ to the corresponding element $\vz_t$ in the sequence $(\vz_t)_{t \in [T]}$. We define the function $t: [N] \times [H] \to [T]$ given by $t(n,h) := (n-1)H + h$. We then have $\vx_{n,h} = \vz_{t(n,h)}$. Next, we define the covariance function $\widetilde{\sigma}_t: \sR^{d_s+d_a} \to \sR$ by
\begin{equation*}
\widetilde{\sigma}_{t}^2(\vz) := c(\vz, \vz) - \widetilde{\vc}_{t}^{\top}(\vz)(\widetilde{\mC}_t + \sigma^2\mI)^{-1}\widetilde{\vc}_{t}(\vz)\,,
\end{equation*}
where $\widetilde{c}_t(\vz) := [c(\vz, \vz_1), \dots, c(\vz, \vz_t)]^{\top} \in \sR^{t}$ and the matrix $\widetilde{\mC}_t \in \sR^{t \times t}$ has $(i,j)$\textsuperscript{th} element $c(\vz_i, \vz_j)$. With this notation in place, we recall the standard elliptical potential lemma (cf. e.g.\,, Lemma 5.3 and Lemma 5.4 in \citealp{srinivas2012information}).

\begin{lemma}
\label{lem:elliptical_potential}
For any sequence $\vz_1, \dots, \vz_T \in \sR^{d_s+d_a}$,
\begin{equation*}
\sum_{t=1}^{T}\widetilde{\sigma}_{t-1}^2(\vz_t) \leq \frac{C}{\log(1 + C/\sigma^2)}\log\det\bigg(\frac{1}{\sigma^2}\widetilde{\mC}_T + \mI\bigg)\,.
\end{equation*}
\end{lemma}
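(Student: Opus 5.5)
The plan is the standard Srinivas et al.\ argument: relate each posterior variance to the log-determinant increment via the chain rule for Gaussian entropies, then sum.

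\textbf{Step 1 (telescoping the log-determinant).} By the Schur complement (block determinant) formula applied to $\widetilde{\mC}_t + \sigma^2\mI$, with the last row and column corresponding to $\vz_t$, we obtain
\begin{equation*}
\det\big(\widetilde{\mC}_t + \sigma^2\mI\big) = \det\big(\widetilde{\mC}_{t-1} + \sigma^2\mI\big)\big(\sigma^2 + \widetilde{\sigma}_{t-1}^2(\vz_t)\big)\,.
\end{equation*}
Here the Schur complement is exactly $\sigma^2 + c(\vz_t,\vz_t) - \widetilde{\vc}_{t-1}^{\top}(\vz_t)(\widetilde{\mC}_{t-1}+\sigma^2\mI)^{-1}\widetilde{\vc}_{t-1}(\vz_t)$. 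Dividing by $\sigma^{2t}$ and iterating from $t=1$ (with the convention $\widetilde{\sigma}_0^2(\vz) = c(\vz,\vz)$) gives the identity
\begin{equation*}
\log\det\Big(\tfrac{1}{\sigma^2}\widetilde{\mC}_T + \mI\Big) = \sum_{t=1}^{T}\log\big(1 + \sigma^{-2}\widetilde{\sigma}_{t-1}^2(\vz_t)\big)\,.
\end{equation*}

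\textbf{Step 2 (comparing each term with its logarithm).} Since $\widetilde{\sigma}_{t-1}^2(\vz_t) \le c(\vz_t,\vz_t) \le C$ by Assumption~\ref{ass:kernel_boundedness}, each quantity $s_t := \sigma^{-2}\widetilde{\sigma}_{t-1}^2(\vz_t)$ lies in $[0, C/\sigma^2]$. The map $s \mapsto \log(1+s)$ is concave with value $0$ at $0$. Hence on $[0, a]$ with $a = C/\sigma^2$ it lies above its chord:
\begin{equation*}
s \le \frac{a}{\log(1+a)}\log(1+s)\,.
\end{equation*}
Multiplying by $\sigma^2$ gives
\begin{equation*}
\widetilde{\sigma}_{t-1}^2(\vz_t) \le \frac{C}{\log(1+C/\sigma^2)}\log\big(1+\sigma^{-2}\widetilde{\sigma}_{t-1}^2(\vz_t)\big)\,.
\end{equation*}

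\textbf{Step 3 (summing).} Summing the bound from Step 2 over $t$ and substituting the identity from Step 1 yields the claim.

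The only point requiring care is Step 1: checking that the Schur complement coincides exactly with $\sigma^2 + \widetilde{\sigma}_{t-1}^2(\vz_t)$, and handling the base case $t=1$. Both are routine.
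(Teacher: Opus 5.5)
Your proposal is correct and is the standard argument from Lemmas 5.3--5.4 of Srinivas et al., which the paper cites in place of writing out a proof. The two steps are the same as there: the Schur-complement telescoping identity $\log\det(\sigma^{-2}\widetilde{\mC}_T+\mI)=\sum_{t}\log(1+\sigma^{-2}\widetilde{\sigma}_{t-1}^2(\vz_t))$, then the chord bound for the concave map $s\mapsto\log(1+s)$ on $[0,C/\sigma^2]$. That chord bound is valid because $0\le\widetilde{\sigma}_{t-1}^2(\vz_t)\le c(\vz_t,\vz_t)\le C$.
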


Using Lemma \ref{lem:elliptical_potential}, one can prove a version of the elliptical potential lemma that accounts for the fact that $f^{(n)}$ is only re-sampled at the end of each episode. The result below is proved in the proof of Theorem 1 in \citet{fan2021model}.
\begin{lemma}
\label{lem:elliptical_potential_delay}
For the event $A$ defined at the beginning of Appendix \ref{sec:estimation_error_bounds},
\begin{equation*}
\E\bigg[\sI\{A\}\sum_{t=1}^{T}\widetilde{\sigma}_{\lfloor \frac{t-1}{H}\rfloor H}^2(\vz_t)\bigg] \leq \frac{2CH}{\log(1 + C/\sigma^2)}\gamma_N(\sigma^2, \widetilde{R})\,.
\end{equation*}
\end{lemma}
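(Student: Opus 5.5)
The plan is to reduce the delayed sum to the standard elliptical potential bound in Lemma \ref{lem:elliptical_potential}. We control how much a stale posterior variance, computed from data up to the end of the previous episode, can exceed the fresh one. Throughout, I read $\gamma_N(\sigma^2,\widetilde R)$ as the expectation, with the indicator of the good event, of $\tfrac12\log\det(\sigma^{-2}\widetilde{\mC}_T+\mI)$ on the collected pairs. The only role of $\sI\{A\}$ is to let us take expectations at the end and identify the result with the information gain, since all $\vz_t$ lie in $\sB^{d_s+d_a}(\widetilde R)$ on $A$.

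\textbf{Step 1 (variance ratio).} Fix $t$ and write $\tau := \lfloor\frac{t-1}{H}\rfloor H \le t-1$. I would show, via the Schur complement / determinant identity for Gaussian conditioning, that
\begin{equation*}
\frac{\widetilde\sigma_\tau^2(\vz)}{\widetilde\sigma_{t-1}^2(\vz)} \le \frac{\det(\sigma^{-2}\widetilde{\mC}_{t-1}+\mI)}{\det(\sigma^{-2}\widetilde{\mC}_{\tau}+\mI)} .
\end{equation*}
Equivalently, the ratio is at most $\exp\big(2(I_{t-1}-I_\tau)\big)$, where $I_s := \tfrac12\log\det(\sigma^{-2}\widetilde{\mC}_s+\mI)$. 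This is the standard ``batch'' argument: conditioning on the extra points $\vz_{\tau+1},\dots,\vz_{t-1}$ shrinks the variance at $\vz$ by at most the mutual information they carry. By the chain rule, $I_{t-1}-I_\tau=\sum_{s=\tau+1}^{t-1}\tfrac12\log(1+\sigma^{-2}\widetilde\sigma_{s-1}^2(\vz_s))$.

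\textbf{Step 2 (split episodes).} Call episode $n$ \emph{good} if the within-episode gain satisfies $I_{nH}-I_{(n-1)H}\le \tfrac12\log 2$, and \emph{bad} otherwise.
\begin{itemize}
\item For every $t$ in a good episode, Step 1 gives $\widetilde\sigma_\tau^2(\vz_t)\le 2\widetilde\sigma_{t-1}^2(\vz_t)$. Summing and applying Lemma \ref{lem:elliptical_potential} bounds the good part by $\frac{2C}{\log(1+C/\sigma^2)}\cdot 2I_T$.
\item Each bad episode contributes at most $CH$, because $\widetilde\sigma^2\le C$ by Assumption \ref{ass:kernel_boundedness}. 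Since the within-episode gains telescope to $I_T$, the number of bad episodes is at most $2I_T/\log 2$. The bad part is therefore at most $CH\cdot 2I_T/\log 2$.
\end{itemize}

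\textbf{Step 3 (combine).} Adding the two parts gives a bound of order $\frac{CH}{\log(1+C/\sigma^2)}I_T$. For the bad part this uses $\log(1+C/\sigma^2)\lesssim 1$ only up to constants, which is the step where the exact constant $2$ needs care. If necessary, I would handle it by tuning the good/bad threshold, or by instead using $\widetilde\sigma_\tau^2\le C\wedge(\text{ratio}\cdot\widetilde\sigma_{t-1}^2)$ together with $\min(C,x)\le \frac{C}{\log(1+C/\sigma^2)}\log(1+x/\sigma^2)$. Multiplying by $\sI\{A\}$ and taking expectations then replaces $I_T$ by the expected information gain.

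\textbf{Main obstacle.} I expect the main obstacle to be Step 1 together with matching the stated constant $2CH$. The determinant-ratio inequality for the posterior variance at an arbitrary point, used as stated above, should be fine. Getting precisely the factor $2H$ rather than $O(H)$ with a worse constant is the delicate part.
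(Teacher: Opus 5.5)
Your Step~1 is correct (it is the standard delayed-feedback ratio bound from batch GP bandits), but the proposal proves a weaker inequality than the lemma. You read $\gamma_N(\sigma^2,\widetilde R)$ as $\E[\sI\{A\}\tfrac12\log\det(\sigma^{-2}\widetilde{\mC}_T+\mI)]$, which is exactly $\gamma_T(\sigma^2,\widetilde R)$ from \eqref{eqn:def_info}.

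The lemma's $\gamma_N$ is an $N$-point information gain. In the paper it is the expected gain of one point per episode, $\vz_{n,\max}$: the step of episode $n$ at which the stale variance $\widetilde\sigma^2_{(n-1)H}$ is largest. The kernel matrix $\overline{\mC}_N$ of these points is a principal submatrix of $\widetilde{\mC}_T$. Hence $\log\det(\sigma^{-2}\overline{\mC}_N+\mI)\le\log\det(\sigma^{-2}\widetilde{\mC}_T+\mI)$, so a bound in terms of $I_T$ does not imply the stated one. If the visited points are nearly uncorrelated under $c$, $I_T$ can be about $H$ times the $N$-point gain. This is not cosmetic, because the lemma feeds the term $\frac{2CH^2}{\sigma^2\log(1+C/\sigma^2)}\gamma_N$ in Lemma~\ref{lem:elliptical_main}.

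Your main route also misses the constant. With $\ell:=\log(1+C/\sigma^2)$, the good/bad split gives $\big(\frac{4C}{\ell}+\frac{2CH}{\log 2}\big)I_T$. The bad-episode term alone exceeds $\frac{2CH}{\ell}I_T$ whenever $C>\sigma^2$. Retuning the threshold to $\frac12\log(1+\kappa)$ needs both $\kappa>C/\sigma^2$ and $1+\kappa<H$, so that accounting closes only if $H>1+C/\sigma^2$.

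Your second fix does work. Set $\rho_t:=e^{2(I_{t-1}-I_\tau)}$. Then $\widetilde\sigma^2_\tau(\vz_t)\le\min\big(C,\rho_t\widetilde\sigma^2_{t-1}(\vz_t)\big)\le\frac{C}{\ell}\big[\log\rho_t+\log\big(1+\sigma^{-2}\widetilde\sigma^2_{t-1}(\vz_t)\big)\big]$, and $\sum_t\log\rho_t\le 2(H-1)I_T$. Together these give exactly $\frac{2CH}{\ell}I_T$, but still with the $T$-point gain.

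The missing idea is simpler than any determinant-ratio argument. Within episode $n$ all $H$ summands use the same stale posterior, so their sum is at most $H\,\widetilde\sigma^2_{(n-1)H}(\vz_{n,\max})$. Now discard every conditioning point except $\vz_{1,\max},\dots,\vz_{n-1,\max}$. Conditioning on fewer points can only increase the posterior variance, so $\widetilde\sigma^2_{(n-1)H}(\vz_{n,\max})\le\overline\sigma^2_{n-1}(\vz_{n,\max})$, where $\overline\sigma^2_{n-1}$ is the variance given only the earlier episode maxima. Lemma~\ref{lem:elliptical_potential} holds for arbitrary, even non-adapted, sequences. Applying it to $\vz_{1,\max},\dots,\vz_{N,\max}$ gives $\sum_n\overline\sigma^2_{n-1}(\vz_{n,\max})\le\frac{C}{\ell}\log\det(\sigma^{-2}\overline{\mC}_N+\mI)$. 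Multiplying by $H\,\sI\{A\}$ and taking expectations yields $\frac{2CH}{\ell}\gamma_N$ with the exact constant. This bound involves only the $N$-point gain, and no within-episode information gain enters at all.
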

\begin{proof}
First, we can re-write the LHS of the desired inequality as
\begin{equation*}
\sI\{A\}\sum_{t=1}^{T}\widetilde{\sigma}_{\lfloor \frac{t-1}{H}\rfloor H}^2(\vz_t) = \sI\{A\}\sum_{n=1}^{N}\sum_{h=1}^{H}\widetilde{\sigma}_{(n-1)H}^2(\vz_{t(n,h)})\,.
\end{equation*}
Let $h^{\star}(n) := \argmax_{h \in [H]}\widetilde{\sigma}_{(n-1)H}^2(\vz_{t(n,h)})$ and let $\vz_{n,\max} := \vz_{t(n,h^{\star}(n))}$. For any $\vz$ and any $n$, we define the posterior variance at $\vz$ conditioned on only $\vz_{1, \max}, \dots, \vz_{n, \max}$ as
\begin{equation*}
\overline{\sigma}_{n}^2(\vz) := c(\vz, \vz) - \overline{\vc}_{n}(\vz)^{\top}\left(\overline{\mC}_{n} + \sigma^2\mI\right)^{-1}\overline{\vc}_{n}(\vz)\,,
\end{equation*}
where $\overline{\vc}_{n}(\vz) := [c(\vz, \vz_{1,\max}), \dots, c(\vz, \vz_{n,\max})]^{\top}$ and $\overline{\mC}_n$ is the $n \times n$ kernel matrix with $(i,j)$\textsuperscript{th} element $\{\overline{\mC}_n\}_{i,j} = c(\vz_{i,\max}, \vz_{j,\max})$. Since conditioning on more data can never increase the posterior variance, we have $\widetilde{\sigma}_{(n-1)H}^2(\vz) \leq \overline{\sigma}_{n-1}^2(\vz)$. Using Lemma \ref{lem:elliptical_potential}, we obtain
\begin{align*}
\E\bigg[\sI\{A\}\sum_{n=1}^{N}\sum_{h=1}^{H}\widetilde{\sigma}_{(n-1)H}^2(\vz_{t(n,h)})\bigg] &\leq \E\bigg[\sI\{A\}H\sum_{n=1}^{N}\widetilde{\sigma}_{(n-1)H}^2(\vz_{n,\max})\bigg]\\
&\leq \E\bigg[\sI\{A\}H\sum_{n=1}^{N}\overline{\sigma}_{n-1}^2(\vz_{n,\max})\bigg]\\
&\leq \E\bigg[\sI\{A\}\frac{CH}{\log(1 + C/\sigma^2)}\log\det\bigg(\frac{1}{\sigma^2}\overline{\mC}_N + \mI\bigg)\bigg]\\
&= \frac{2CH}{\log(1 + C/\sigma^2)}\gamma_N(\sigma^2, \widetilde{R})\,.
\end{align*}
This concludes the proof.
\end{proof}

Note that the quantity $\gamma_N(\sigma^2, \widetilde{R})$ that appears here is technically not the same as the quantity $\gamma_T(\sigma^2, \widetilde{R})$ that was defined in \eqref{eqn:def_info}, even when accounting for the different subscripts. In particular, each quantity depends on the joint distribution of $N$ or $T$ state action pairs, but in each case, the joint distribution is different. In \eqref{eqn:def_info} $\gamma_T(\sigma^2, \widetilde{R})$ depends on the joint distribution of the first $T$ state-action pairs, which we can equivalently write as either $\vx_{1,1}, \dots, \vx_{N,H}$ or $\vz_1, \dots, \vz_T$. In Lemma \ref{lem:elliptical_potential_delay}, $\gamma_N(\sigma^2, \widetilde{R})$ depends on the joint distribution of $\vz_{1,\max}, \dots, \vz_{N,\max}$, which are not the first $N$ state-action pairs. Strictly speaking, we should therefore change our notation to distinguish between these quantities. However, we elect not to do this because $\gamma_N(\sigma^2, \widetilde{R})$ will only end up contributing to a lower order term in our final regret bound. The reason is that (for any kernel that satisfies Assumption \ref{ass:kernel_boundedness} and Assumption \ref{ass:kernel_smoothness}), the expected information gain is sublinear in $T$ (or $N$) regardless of the joint distribution of the points $\vz_1, \dots, \vz_T$ or $\vz_{1.\max}, \dots, \vz_{N,\max}$ (cf. Lemma \ref{lem:sublinear_info}). In any case, both $\gamma_T(\sigma^2, \widetilde{R})$ in \eqref{eqn:def_info} and $\gamma_N(\sigma^2, \widetilde{R})$ in Lemma \ref{lem:elliptical_potential_delay} are the expected information gain for $T$ or $N$ points drawn from some distribution.

Lemma \ref{lem:elliptical_potential_delay} already allows us to upper bound the sum of posterior standard deviations by a quantity of the order $(H\gamma_N(\sigma^2, \widetilde{R})T)^{1/2}$. However, we can use an idea from \citet{vakili2024kernel} (cf. their Lemma 4) to derive a better upper bound of the order $(\gamma_T(\sigma^2, \widetilde{R})T)^{1/2}$. To do so, we will need the inequality in Lemma \ref{lem:elliptical_idx}, which controls the ratio of posterior variances at different sample sizes, and is a restatement of Lemma 4 from \citet{calandriello2020near}. Note that \citet{calandriello2020near} use a scaled version of the posterior variance. Accounting for this introduces the factor of $1/\sigma^2$ on the RHS of the inequality in Lemma \ref{lem:elliptical_idx}.
\begin{lemma}
\label{lem:elliptical_idx}
For any $t^{\prime} < t$ and any $\vz \in \sR^{d_s+d_a}$,
\begin{equation*}
\widetilde{\sigma}_{t^{\prime}}^2(\vz) \leq \widetilde{\sigma}_{t}^2(\vz)\bigg(1 + \frac{1}{\sigma^2}\sum_{s=t^{\prime}+1}^{t}\widetilde{\sigma}_{t^{\prime}}^2(\vz_s)\bigg)\,.
\end{equation*}
\end{lemma}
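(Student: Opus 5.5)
The plan is to work in the RKHS operator form already used in the proof of Lemma~\ref{lem:smooth_var}, where all the posterior variances $\widetilde{\sigma}_t^2$ become quadratic forms in one family of operators that grows monotonically in $t$. Let $\gH$ be the RKHS of $c$ and $c_{\vz} := c(\cdot,\vz)$. Let $\vPhi_t:\gH\to\sR^t$ be the evaluation operator at $\vz_1,\dots,\vz_t$. Define
\begin{equation*}
\mathsf{V}_t := \vPhi_t^*\vPhi_t + \sigma^2\mI_{\gH} = \sigma^2\mI_{\gH} + \textstyle\sum_{s=1}^{t} c_{\vz_s}\inner{c_{\vz_s}}{\cdot}_{\gH}.
\end{equation*}

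First I would establish the identity $\widetilde{\sigma}_t^2(\vz) = \sigma^2\inner{c_{\vz}}{\mathsf{V}_t^{-1}c_{\vz}}_{\gH}$. The proof of Lemma~\ref{lem:smooth_var} gives $\widetilde{\sigma}_t^2(\vz) = \inner{c_{\vz}}{(\mI_{\gH} - \vPhi_t^*\vPhi_t(\vPhi_t^*\vPhi_t+\sigma^2\mI_{\gH})^{-1})c_{\vz}}_{\gH}$, and since $\mI_{\gH} - \vPhi_t^*\vPhi_t\mathsf{V}_t^{-1} = (\mathsf{V}_t - \vPhi_t^*\vPhi_t)\mathsf{V}_t^{-1} = \sigma^2\mathsf{V}_t^{-1}$, the identity follows. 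Applying it at $\vz=\vz_s$ gives $\inner{c_{\vz_s}}{\mathsf{V}_{t'}^{-1}c_{\vz_s}}_{\gH} = \sigma^{-2}\widetilde{\sigma}_{t'}^2(\vz_s)$, which is exactly the quantity in the claimed bound.

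Next I would compare $\mathsf{V}_t$ with $\mathsf{V}_{t'}$. Write $\mathsf{M} := \sum_{s=t'+1}^{t} c_{\vz_s}\inner{c_{\vz_s}}{\cdot}_{\gH}$, so that $\mathsf{V}_t = \mathsf{V}_{t'} + \mathsf{M}$. Factor this as
\begin{equation*}
\mathsf{V}_t = \mathsf{V}_{t'}^{1/2}\big(\mI_{\gH} + \mathsf{V}_{t'}^{-1/2}\mathsf{M}\mathsf{V}_{t'}^{-1/2}\big)\mathsf{V}_{t'}^{1/2}.
\end{equation*}
The middle operator is positive semi-definite and has finite rank. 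Its operator norm $\lambda$ is therefore at most its trace:
\begin{equation*}
\lambda \le \mathrm{tr}\big(\mathsf{V}_{t'}^{-1/2}\mathsf{M}\mathsf{V}_{t'}^{-1/2}\big) = \textstyle\sum_{s=t'+1}^{t}\inner{c_{\vz_s}}{\mathsf{V}_{t'}^{-1}c_{\vz_s}}_{\gH} = \sigma^{-2}\textstyle\sum_{s=t'+1}^{t}\widetilde{\sigma}_{t'}^2(\vz_s).
\end{equation*}
Consequently $\mathsf{V}_t \preceq (1+\lambda)\mathsf{V}_{t'}$ in the Loewner order.

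Since operator inversion is order-reversing on positive definite operators, this gives $\mathsf{V}_{t'}^{-1} \preceq (1+\lambda)\mathsf{V}_t^{-1}$. Evaluating this quadratic form at $c_{\vz}$ and multiplying by $\sigma^2$ yields $\widetilde{\sigma}_{t'}^2(\vz) \le (1+\lambda)\widetilde{\sigma}_t^2(\vz)$. Substituting the trace bound on $\lambda$ gives the claim.

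The only delicate step is working with operators on a possibly infinite-dimensional $\gH$. Every operator involved is bounded, self-adjoint and bounded below by $\sigma^2\mI_{\gH}$, and $\mathsf{M}$ has finite rank, so the square roots, the trace and the Loewner-order inversion are all standard. If preferred, one can instead restrict everything to the finite-dimensional span of $c_{\vz}, c_{\vz_1},\dots,c_{\vz_t}$, on which the whole argument is linear algebra.
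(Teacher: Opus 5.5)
Your proof is correct. The paper does not prove this statement itself: it imports it as Lemma~4 of \citet{calandriello2020near} and only remarks that their rescaled posterior variance is the source of the factor $1/\sigma^2$.

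Your argument is a self-contained derivation of the standard kind for such ratio bounds, in three steps:
\begin{itemize}
\item write $\widetilde{\sigma}_t^2(\vz) = \sigma^2\inner{c_{\vz}}{\mathsf{V}_t^{-1}c_{\vz}}_{\gH}$;
\item compare $\mathsf{V}_t \preceq (1+\lambda)\mathsf{V}_{t'}$ in the Loewner order;
\item bound the top eigenvalue $\lambda$ of the whitened finite-rank update by its trace.
\end{itemize}

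Compared with the citation, this buys transparency in three ways. The identity shows exactly where the $\sigma^{-2}$ factor comes from, which is the rescaling the paper alludes to. The argument reuses the operator setup already introduced in the proof of Lemma~\ref{lem:smooth_var}. It also covers the case $t'=0$, which is needed in Lemma~\ref{lem:elliptical_main}, with no extra work, since $\mathsf{V}_0 = \sigma^2\mI_{\gH}$.

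As a minor bonus, your intermediate inequality $\widetilde{\sigma}_{t'}^2(\vz) \leq (1+\lambda)\widetilde{\sigma}_t^2(\vz)$, with $\lambda$ the operator norm rather than the trace, is slightly sharper than the stated bound.
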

We can now prove the version of the elliptical potential lemma that we actually use. As elluded to earlier, Lemma \ref{lem:elliptical_main} is more or less the same as Lemma 4 from \citet{vakili2024kernel}. The only (small) difference is that we are after an inequality in expectation.
\begin{lemma}
\label{lem:elliptical_main}
For the event $A$ defined at the beginning of Appendix \ref{sec:estimation_error_bounds},
\begin{equation*}
\E\bigg[\sI\{A\}\sum_{n=1}^{N}\sum_{h=1}^{H-1}\sigma_{n-1}(\vx_{n,h})\bigg] \leq \sqrt{\frac{2C}{\log(1 + C/\sigma^2)}\gamma_T(\sigma^2, \widetilde{R})\bigg(T + \frac{2CH^2}{\sigma^2\log(1 + C/\sigma^2)}\gamma_{N}(\sigma^2, \widetilde{R})\bigg)}\,.
\end{equation*}
\end{lemma}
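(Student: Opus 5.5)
Translate into the single-index notation: $\sigma_{n-1}(\vx_{n,h}) = \widetilde{\sigma}_{(n-1)H}(\vz_{t(n,h)})$. This holds up to the harmless fact that $\sigma_{n-1}$ conditions only on steps $1,\dots,H-1$ of each past episode. Conditioning on fewer points can only increase the variance, so the cleanest route is to bound $\sigma_{n-1}$ directly by the variance given the first $H-1$ steps. We then run the argument below with the corresponding $\widetilde{\sigma}$, treating the dropped step-$H$ points as absent. Extending the sum to all $h\in[H]$ is also harmless since every term is nonnegative. By Cauchy--Schwarz, first pathwise and then in expectation,
\begin{equation*}
\E\Big[\sI\{A\}\sum_{t}\widetilde{\sigma}_{\lfloor\frac{t-1}{H}\rfloor H}(\vz_t)\Big] \leq \sqrt{T\,\E\Big[\sI\{A\}\sum_{t}\widetilde{\sigma}_{\lfloor\frac{t-1}{H}\rfloor H}^2(\vz_t)\Big]}\,.
\end{equation*}
This alone would give the weaker $\sqrt{HT\gamma_N}$ bound via Lemma \ref{lem:elliptical_potential_delay}. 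To improve it, we will instead compare the stale variance $\widetilde{\sigma}_{t'}^2$, with $t' = \lfloor\frac{t-1}{H}\rfloor H$, to the fresh variance $\widetilde{\sigma}_{t-1}^2$.

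For each $t$ with $t' < t-1$, apply Lemma \ref{lem:elliptical_idx} with the pair $(t', t-1)$:
\begin{equation*}
\widetilde{\sigma}_{t'}^2(\vz_t)\leq \widetilde{\sigma}_{t-1}^2(\vz_t)\Big(1+\tfrac{1}{\sigma^2}\textstyle\sum_{s=t'+1}^{t-1}\widetilde{\sigma}_{t'}^2(\vz_s)\Big).
\end{equation*}
The inner sum ranges over at most $H$ points of the same episode. I will bound it crudely by $S_n := \sum_{h=1}^{H}\widetilde{\sigma}^2_{(n-1)H}(\vz_{t(n,h)})$, where $n=n(t)$, or by $H$ times the episode maximum. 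The case $t'=t-1$ is trivial. Since $\widetilde{\sigma}_{t-1}^2\le C$, summing gives
\begin{equation*}
\sum_t \widetilde{\sigma}_{t'}^2(\vz_t) \le \sum_t \widetilde{\sigma}_{t-1}^2(\vz_t) + \frac{C}{\sigma^2}\sum_{n} H\, S_n .
\end{equation*}

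Multiply by $\sI\{A\}$ and take expectations. The first term is controlled pathwise by Lemma \ref{lem:elliptical_potential}. On $A$, the log-determinant is exactly the quantity inside $\gamma_T(\sigma^2,\widetilde{R})$, which gives at most $\frac{2C}{\log(1+C/\sigma^2)}\gamma_T$. The second term is $\frac{CH}{\sigma^2}\E[\sI\{A\}\sum_t\widetilde{\sigma}^2_{t'}(\vz_t)]$, and Lemma \ref{lem:elliptical_potential_delay} bounds it by $\frac{CH}{\sigma^2}\cdot\frac{2CH}{\log(1+C/\sigma^2)}\gamma_N$.

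However, naively plugging into the single Cauchy--Schwarz above would give $\sqrt{T(\gamma_T + H^2\gamma_N)}$, not the stated product form $\sqrt{\gamma_T(T+H^2\gamma_N)}$. The main obstacle is therefore arranging Cauchy--Schwarz to produce the product. Following \citet{vakili2024kernel}, I would write
\begin{equation*}
\widetilde{\sigma}_{t'}(\vz_t)\le \widetilde{\sigma}_{t-1}(\vz_t)\sqrt{1+\tfrac{1}{\sigma^2}\textstyle\sum_{s}\widetilde{\sigma}_{t'}^2(\vz_s)}
\end{equation*}
and apply Cauchy--Schwarz to the product. The first factor gives $\sqrt{\sum_t\widetilde{\sigma}_{t-1}^2(\vz_t)}\lesssim\sqrt{\gamma_T}$. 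The second gives $\sqrt{\sum_t(1+\frac{1}{\sigma^2}\sum_s\widetilde{\sigma}_{t'}^2(\vz_s))}\le\sqrt{T+\frac{H}{\sigma^2}\sum_n S_n}$. Then I apply Cauchy--Schwarz once more in expectation, $\E[\sqrt{XY}]\le\sqrt{\E X\,\E Y}$, keeping $\sI\{A\}$ in both factors. Finally, Lemmas \ref{lem:elliptical_potential} and \ref{lem:elliptical_potential_delay} bound $\E X$ and $\E Y$ respectively, which gives exactly the stated constants.
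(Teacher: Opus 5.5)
Your proposal is correct and follows essentially the same route as the paper: Lemma~\ref{lem:elliptical_idx} compares the stale variance with a fresh one, then a pathwise Cauchy--Schwarz and $\E[\sqrt{XY}]\le\sqrt{\E[X]\,\E[Y]}$ (with $\sI\{A\}$ kept in both factors) reduce everything to Lemmas~\ref{lem:elliptical_potential} and~\ref{lem:elliptical_potential_delay}; using the pair $(t',t-1)$ instead of the paper's $(t',t)$ only skips the step $\widetilde{\sigma}_t(\vz_t)\le\widetilde{\sigma}_{t-1}(\vz_t)$, and your squared detour is unnecessary. Your re-indexing that drops the unobserved step-$H$ points is more careful than the paper, whose opening inequality bounds $\sigma_{n-1}(\vx_{n,h})$ by $\widetilde{\sigma}_{(n-1)H}(\vz_{t(n,h)})$ even though the latter conditions on more points and is therefore smaller; note only that after re-indexing the log-determinant is that of a principal submatrix of $\widetilde{\mC}_T$, so it is at most (rather than exactly) the quantity inside $\gamma_T(\sigma^2,\widetilde{R})$.
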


\begin{proof}
We begin with the inequality
\begin{align*}
\sI\{A\}\sum_{n=1}^{N}\sum_{h=1}^{H-1}\sigma_{n-1}(\vx_{n,h}) &\leq \sI\{A\}\sum_{n=1}^{N}\sum_{h=1}^{H}\widetilde{\sigma}_{(n-1)H}(\vz_{t(n,h)}) = \sI\{A\}\sum_{t=1}^{T}\widetilde{\sigma}_{\lfloor \frac{t-1}{H}\rfloor H}(\vz_t)\,.
\end{align*}
We apply Lemma \ref{lem:elliptical_idx} to each summand, with $t^{\prime} = \lfloor \frac{t-1}{H}\rfloor H$ for the $t$\textsuperscript{th} summand, to obtain
\begin{equation*}
\sI\{A\}\sum_{t=1}^{T}\widetilde{\sigma}_{\lfloor \frac{t-1}{H}\rfloor H}(\vz_t) \leq \sI\{A\}\sum_{t=1}^{T}\widetilde{\sigma}_{t}(\vz_t)\bigg(1 + \frac{1}{\sigma^2}\sum_{s=\lfloor \frac{t-1}{H}\rfloor H+1}^{t}\widetilde{\sigma}_{\lfloor \frac{t-1}{H}\rfloor H}^2(\vz_s)\bigg)^{1/2}\,.
\end{equation*}
By the Cauchy-Schwarz inequality,
\begin{align*}
\sI\{A\}\sum_{t=1}^{T}\widetilde{\sigma}_{t}(\vz_t)\bigg(1 + \frac{1}{\sigma^2}\sum_{s=\lfloor \frac{t-1}{H}\rfloor H+1}^{t}\widetilde{\sigma}_{\lfloor \frac{t-1}{H}\rfloor H}^2(\vz_s)\bigg)^{1/2} &\leq \sI\{A\}\sqrt{\sum_{t=1}^{T}\widetilde{\sigma}_{t}^2(\vz_t)}\sqrt{T + \frac{1}{\sigma^2}\sum_{t=1}^{T}\sum_{s=\lfloor \frac{t-1}{H}\rfloor H+1}^{t}\widetilde{\sigma}_{\lfloor \frac{t-1}{H}\rfloor H}^2(\vz_s)}\\
&\leq \sI\{A\}\sqrt{\sum_{t=1}^{T}\widetilde{\sigma}_{t}^2(\vz_t)}\sqrt{T + \frac{H}{\sigma^2}\sum_{t=1}^{T}\widetilde{\sigma}_{\lfloor \frac{t-1}{H}\rfloor H}^2(\vz_t)}\,.
\end{align*}
Using Cauchy-Schwarz once more (in the form $\E[\sqrt{X}\sqrt{Y}] \leq \sqrt{\E[X]\E[Y]}$), we get
\begin{align*}
\E\bigg[\sI\{A\}\sqrt{\sum_{t=1}^{T}\widetilde{\sigma}_{t}^2(\vz_t)}\sqrt{T + \frac{H}{\sigma^2}\sum_{t=1}^{T}\widetilde{\sigma}_{\lfloor \frac{t-1}{H}\rfloor H}^2(\vz_t)}\bigg] &\leq \sqrt{\E\bigg[\sI\{A\}\sum_{t=1}^{T}\widetilde{\sigma}_{t}^2(\vz_t)\bigg]\E\bigg[\sI\{A\}\bigg(T + \frac{H}{\sigma^2}\sum_{t=1}^{T}\widetilde{\sigma}_{\lfloor \frac{t-1}{H}\rfloor H}^2(\vz_t)\bigg)\bigg]}\\
&\leq \sqrt{\frac{2C}{\log(1 + C/\sigma^2)}\gamma_T(\sigma^2, \widetilde{R})\bigg(T + \frac{2CH^2\sigma^{-2}}{\log(1 + C/\sigma^2)}\gamma_{N}(\sigma^2, \widetilde{R})\bigg)}\,,
\end{align*}
where the final inequality follows from Lemma \ref{lem:elliptical_potential_delay} and Lemma \ref{lem:elliptical_potential}, since
\begin{equation*}
\E\bigg[\sI\{A\}\sum_{t=1}^{T}\widetilde{\sigma}_{t}^2(\vz_t)\bigg] \leq \E\bigg[\sI\{A\}\sum_{t=1}^{T}\widetilde{\sigma}_{t-1}^2(\vz_t)\bigg] \leq \frac{2C}{\log(1 + C/\sigma^2)}\gamma_T(\sigma^2, \widetilde{R})\,.
\end{equation*}
This is the inequality that we wanted.
\end{proof}

Before combining everything from this subsection to prove a bound on the discretized estimation error, we show that for any kernel that satisfies Assumption \ref{ass:kernel_boundedness} and Assumption \ref{ass:kernel_smoothness}, the expected information gain has a sublinear growth rate in $T$, which means that the upper bound in Lemma \ref{lem:elliptical_main} is of the order $(\gamma_T(\sigma^2, \widetilde{R})T)^{1/2}$.

\begin{lemma}
\label{lem:sublinear_info}
If Assumption \ref{ass:kernel_boundedness} and Assumption \ref{ass:kernel_smoothness} are satisfied, then for any $\sigma \in (0, \infty)$ and any $R \in [0, \infty)$,
\begin{equation*}
\gamma_T(\sigma^2, R) = o(T)\,.
\end{equation*}
\end{lemma}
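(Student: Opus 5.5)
Since $\gamma_T(\sigma^2,R) \le \Gamma_T(\sigma^2,R)$, I will show that the maximum information gain over the ball $\sB^{d_s+d_a}(R)$ is $o(T)$, uniformly over all sequences of points in the ball. Any algorithm and any joint distribution of the points are then covered at once. I will use a discretization argument with the cover from Lemma \ref{lem:ball_covering}, combined with the H\"older continuity of the kernel (Assumption \ref{ass:kernel_smoothness}). The statement only asks for $o(T)$, so crude constants suffice.

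Fix $\eta>0$ and let $M$ be an $\eta$-cover of $\sB^{d_s+d_a}(R)$ w.r.t. $d_2$, with $|M|\le(1+2R/\eta)^{d_s+d_a}$; this size does not depend on $T$. Take any points $\vz_1,\dots,\vz_T$ in the ball and let $\omega(\vz_t)\in M$ be a nearest cover point. By Lemma \ref{lem:elliptical_potential},
\begin{equation*}
\log\det(\sigma^{-2}\widetilde{\mC}_T+\mI)=\sum_{t=1}^T\log\bigl(1+\sigma^{-2}\widetilde{\sigma}_{t-1}^2(\vz_t)\bigr)\le \sigma^{-2}\sum_{t=1}^T\widetilde{\sigma}_{t-1}^2(\vz_t).
\end{equation*}
The first equality is the standard chain-rule identity behind that lemma. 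By Lemma \ref{lem:smooth_var}, which applies equally to $\widetilde\sigma_{t-1}$,
\begin{equation*}
\widetilde{\sigma}_{t-1}(\vz_t)\le \widetilde{\sigma}_{t-1}(\omega(\vz_t))+\sqrt{2L}\eta^{\alpha/2},
\end{equation*}
and hence $\widetilde{\sigma}_{t-1}^2(\vz_t)\le 2\widetilde{\sigma}_{t-1}^2(\omega(\vz_t))+4L\eta^\alpha$.

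Next I control $\sum_t\widetilde\sigma_{t-1}^2(\omega(\vz_t))$ by grouping the indices according to the cover point $\vy=\omega(\vz_t)\in M$. Suppose $\vy$ has been hit $k$ times before step $t$ at points $\vz_s$ with $\omega(\vz_s)=\vy$. Conditioning on more data only lowers the variance, so $\widetilde\sigma_{t-1}^2(\vy)$ is at most the posterior variance at $\vy$ given noisy observations at those $k$ points. Each such point is within $\eta$ of $\vy$, so its natural distance satisfies $d_c\le\sqrt{2L\eta^\alpha}$. Comparing with $k$ noisy observations taken exactly at $\vy$, for which the variance is $\le \sigma^2 C/(\sigma^2+kC)\le\sigma^2/k$, I expect to obtain
\begin{equation*}
\widetilde\sigma_{t-1}^2(\vy)\le \frac{\sigma^2}{k}+O(L\eta^\alpha).
\end{equation*}
This comparison is the main obstacle. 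The cleanest route is to use the posterior variance of the linear estimator that averages the $k$ observations: $\mathrm{Var}(f(\vy)-\bar y)\le 2\bigl(\max_s d_c(\vy,\vz_s)\bigr)^2+2\sigma^2/k$, and the posterior variance is at most the mean-squared error of any linear predictor. Summing the $\sigma^2/k$ terms gives at most $|M|\,\sigma^2(1+\log T)$ plus $|M|C$ for the first visits. Therefore
\begin{equation*}
\frac1T\,\frac12\log\det(\sigma^{-2}\widetilde{\mC}_T+\mI)\le \frac{c_1|M|(1+\log T)}{T}+c_2L\eta^\alpha
\end{equation*}
for constants $c_1,c_2$ depending on $C$ and $\sigma$.

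Taking the supremum over sequences and then $\limsup_{T\to\infty}$ gives $\limsup_T\Gamma_T(\sigma^2,R)/T\le c_2L\eta^\alpha$. Since $\eta$ is arbitrary, $\Gamma_T(\sigma^2,R)=o(T)$, and therefore $\gamma_T(\sigma^2,R)\le\Gamma_T(\sigma^2,R)=o(T)$.
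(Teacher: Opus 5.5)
Your proof is correct, and it follows a genuinely different route from the paper's. The paper works on the realized (random) sequence. It applies Mercer's theorem on the compact ball $\sB^{d_s+d_a}(R)$ and bounds $\log\det(\sigma^{-2}\widetilde{\mC}_T+\mI)\le\sum_{m}\log(1+a_m)$ with $a_m=\sigma^{-2}\lambda_m\sum_{t}\phi_m^2(\vz_t)$. It then truncates the expansion at an index $M$ and moves the almost-sure limit inside the expectation by dominated convergence. You instead bound the worst-case quantity $\Gamma_T(\sigma^2,R)$ deterministically, using only tools already in the paper: the chain-rule identity with $\log(1+x)\le x$; Lemma~\ref{lem:smooth_var}, whose proof indeed holds for any conditioning set, to move each query to its cover point; and a per-cell harmonic bound obtained by viewing the posterior variance as the smallest mean-squared error of a linear predictor and testing it on the averaging predictor. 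Your route buys three things. First, it is uniform over sequences, so it proves the stronger $\Gamma_T(\sigma^2,R)=o(T)$ and does not care that $\gamma_T$ and the $\gamma_N$ in Lemma~\ref{lem:elliptical_potential_delay} refer to different, $T$-dependent distributions of points. Second, it is quantitative: optimizing $\eta$ gives roughly $\Gamma_T(\sigma^2,R)=\widetilde{\gO}\bigl(T^{d/(d+\alpha)}R^{d\alpha/(d+\alpha)}\bigr)$ with $d=d_s+d_a$. This stays sublinear when $R=\widetilde{R}$ grows like $\sqrt{\log T}$, which is the regime where the lemma is actually invoked, whereas the paper's statement and proof are for fixed $R$. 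Third, it has no delicate limit step. In the paper's argument, $M$ is chosen after $T$ and the sequence are fixed, so concluding $\limsup_T M\log(1+CT/\sigma^2)/T=0$ needs $M$ chosen independently of $T$. That requires the uniform convergence of the Mercer series on the ball, which gives a tail bound of $\varepsilon T$ rather than $\varepsilon$. The spectral route, for its part, ties the information gain to the kernel's eigendecay, which is the standard way to get sharp kernel-specific rates such as those for Mat\'ern kernels. Your H\"older-based covering bound is much cruder than those rates, but that does not matter for an $o(T)$ statement.
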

\begin{proof}
For the sake of clarity, we will re-write the event $A$ as $A = \{{\textstyle\sup_{t \in [T]}}\|\vz_t\|_2 \leq R\}$. First, we show that with probability 1,
\begin{equation*}
\lim_{T \to \infty}\frac{1}{T}\sI\{{\textstyle\sup_{t \in [T]}}\|\vz_t\|_2 \leq R\}\frac{1}{2}\log\det\bigg(\frac{1}{\sigma^2}\widetilde{\mC}_T + \mI\bigg) = 0\,.
\end{equation*}
Fix any $\varepsilon > 0$ and any sequence $(\vz_t)_{t=1}^{\infty}$ of elements in $\sR^{d_s+d_a}$. If for any $t^{\prime} \geq 1$, $\vz_{t^{\prime}} \notin \sB^{d_s+d_a}(R)$, then for all $T \geq t^{\prime}$,
\begin{equation*}
\sI\{{\textstyle\sup_{t \in [T]}}\|\vz_t\|_2 \leq R\} = 0\,.
\end{equation*}
Therefore, we may assume that $\vz_t \in \sB^{d_s+d_a}(R)$ for all $t \geq 1$. By assumption, the kernel function $c$ is continuous, positive semi-definite and satisfies the Hilbert-Schmidt condition
\begin{equation*}
\int_{\sB^{d_s+d_a}(R)}\int_{\sB^{d_s+d_a}(R)}c^2(\vx, \vy)\mathrm{d}\nu(\vx)\mathrm{d}\nu(\vy) < \infty\,,
\end{equation*}
where $\nu$ is (for instance) the Lebesgue measure on $\sR^{d_s+d_a}$. Since the closed ball $\sB^{d_s+d_a}(R)$ is compact, Mercer's theorem (see e.g. Theorem 12.20 in \citealp{wainwright2019high}) ensures that the kernel function has the expansion
\begin{equation*}
c(\vx, \vy) = \sum_{m=1}^{\infty}\lambda_m\phi_m(\vx)\phi_m(\vy)\,,
\end{equation*}
where $(\lambda_m)_{m=1}^{\infty}$ is a sequence of non-negative real numbers. Since the kernel function is bounded by $C$,
\begin{equation*}
\sum_{m=1}^{\infty}\lambda_m\sum_{t=1}^{T}\phi_m^2(\vz_t) = \sum_{t=1}^{T}\sum_{m=1}^{\infty}\lambda_m\phi_m^2(\vz_t) = \mathrm{tr}(\widetilde{C}_T) \leq CT\,.
\end{equation*}
Let us define the sequence $(a_m)_{m=1}^{\infty}$ by $a_m := \frac{\lambda_m}{\sigma^2}\sum_{t=1}^{T}\phi_m^2(\vz_t)$. From the inequality above and the fact that $a_m \geq 0$, it follows that the sequence $(\sum_{m=1}^{M}a_m)_{M=1}^{\infty}$ of partial sums is monotone increasing and upper bounded by $CT/\sigma^2$. Therefore, the series $\sum_{m=1}^{\infty}a_m$ converges to some limit $L \leq CT/\sigma^2$. This means that there exists $M \geq 1$ such that $|\sum_{m=1}^{M}a_m - L| < \varepsilon$. This being so,
\begin{equation*}
\sum_{m=M+1}^{\infty}a_m = \sum_{m=1}^{\infty}a_m - \sum_{m=1}^{M}a_m < \sum_{m=1}^{\infty}a_m - L + \varepsilon = \varepsilon\,.
\end{equation*}
By Lemma 1 from \citet{seeger2008information},
\begin{equation*}
\log\det\bigg(\frac{1}{\sigma^2}\widetilde{\mC}_T + \mI\bigg) \leq \sum_{m=1}^{\infty}\log\bigg(1 + \frac{\lambda_m}{\sigma^2}\sum_{t=1}^{T}\phi_m^2(\vz_t)\bigg) = \sum_{m=1}^{\infty}\log(1 + a_m)\,.
\end{equation*}
By splitting the sum and then using the inequality $\log(1 + x) \leq x$, we get
\begin{align*}
\sum_{m=1}^{\infty}\log(1 + a_m) &= \sum_{m=1}^{M}\log(1 + a_m) + \sum_{m=M+1}^{\infty}\log(1 + a_m)\\
&\leq M\log(1 + CT/\sigma^2) + \sum_{m=M+1}^{\infty}a_m\\
&\leq M\log(1 + CT/\sigma^2) + \varepsilon\,.
\end{align*}
Combining everything so far,
\begin{equation*}
\limsup_{T \to \infty}\frac{1}{T}\sI\{{\textstyle\sup_{t \in [T]}}\|\vz_t\|_2 \leq R\}\frac{1}{2}\log\det\bigg(\frac{1}{\sigma^2}\widetilde{\mC}_T + \mI\bigg) \leq \limsup_{T \to \infty}\frac{M\log(1 + CT/\sigma^2) + \varepsilon}{2T} = 0\,.
\end{equation*}
Since $\widetilde{\mC}_T$ is positive semi-definite, we can conclude that the sequence converges and that the limit is 0. By the AM-GM inequality, for any positive semi-definite $T \times T$ matrix $\mA$,
\begin{equation*}
\det(\mA) = \prod_{t=1}^{T}\lambda_t \leq \bigg(\frac{1}{T}\sum_{t=1}^{T}\lambda_t\bigg)^T = (\mathrm{tr}(\mA)/T)^T\,.
\end{equation*}
Therefore, for every $T \geq 1$,
\begin{equation*}
\bigg|\frac{1}{T}\sI\{{\textstyle\sup_{t \in [T]}}\|\vz_t\|_2 \leq R\}\frac{1}{2}\log\det\bigg(\frac{1}{\sigma^2}\widetilde{\mC}_T + \mI\bigg)\bigg| \leq \frac{1}{2}\log\frac{\mathrm{tr}(\frac{1}{\sigma^2}\widetilde{\mC}_T + \mI)}{T} \leq \frac{1}{2}\log(1 + C/\sigma^2)\,.
\end{equation*}
Finally, by the dominated convergence theorem,
\begin{align*}
\lim_{T \to \infty}\frac{\gamma_T(\sigma^2, R)}{T} &= \E\bigg[\lim_{T \to \infty}\frac{1}{T}\sI\{{\textstyle\sup_{t \in [T]}}\|\vz_t\|_2 \leq R\}\frac{1}{2}\log\det\bigg(\frac{1}{\sigma^2}\widetilde{\mC}_T + \mI\bigg)\bigg] = 0\,.
\end{align*}
This concludes the proof.
\end{proof}

Using Lemma \ref{lem:est_to_var} and \ref{lem:elliptical_main} we can prove the following upper bound on the discretized estimation error.

\begin{lemma}
\label{lem:disc_est_error_bound}
For every $\varepsilon \leq \widetilde{R}$,
\begin{align*}
\E\bigg[&\sI\{A\}\sum_{n=1}^{N}\sum_{h=1}^{H-1}\|f^{(n)}(\omega(\vx_{n,h})) - f^{\star}(\omega(\vx_{n,h}))\|_2\bigg] \leq T\sqrt{2L\varepsilon^{\alpha}}\sqrt{\max(32(d_s+d_a)\log(5\widetilde{R}/\varepsilon), 64\log(T))} + C\\
&+ \sqrt{\max(32(d_s+d_a)\log(5\widetilde{R}/\varepsilon), 64\log(T))}\sqrt{\frac{2C}{\log(1 + C/\sigma^2)}\gamma_T(\sigma^2, \widetilde{R})\bigg(T + \frac{2CH^2}{\sigma^2\log(1 + C/\sigma^2)}\gamma_{N}(\sigma^2, \widetilde{R})\bigg)}\,.
\end{align*}
\end{lemma}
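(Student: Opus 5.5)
The plan is to combine Lemma \ref{lem:est_to_var} (per-step, conditional on $\gF_{n-1}$) with Lemma \ref{lem:elliptical_main} (sum of posterior standard deviations under the good event). Write $u := \max(\sqrt{32(d_s+d_a)\log(5\widetilde{R}/\varepsilon)}, 8\sqrt{\log(T)})$, so that $u = \sqrt{\max(32(d_s+d_a)\log(5\widetilde{R}/\varepsilon), 64\log(T))}$, which is exactly the factor appearing in the statement.

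\textbf{Removing the indicator.} The one subtlety is that Lemma \ref{lem:est_to_var} is a conditional expectation bound without the indicator $\sI\{A\}$, while the target quantity contains it. For the left-hand side I would simply drop the indicator, using $\sI\{A\} \leq 1$ and non-negativity of the norms. I would then apply the tower rule, and Lemma \ref{lem:est_to_var} for each $(n,h)$ with $h \leq H-1$. This gives
\begin{equation*}
\E\bigg[\sum_{n,h}\|f^{(n)}(\omega(\vx_{n,h})) - f^{\star}(\omega(\vx_{n,h}))\|_2\bigg] \leq u\,\E\bigg[\sum_{n,h}\sigma_{n-1}(\vx_{n,h})\bigg] + T u\sqrt{2L\varepsilon^{\alpha}} + T\cdot C/T\,.
\end{equation*}
The last two terms are already exactly the first line of the claimed bound, since there are at most $T$ summands.

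\textbf{The main obstacle.} The difficulty is that Lemma \ref{lem:elliptical_main} bounds $\E[\sI\{A\}\sum\sigma_{n-1}(\vx_{n,h})]$, which carries the indicator, whereas the step above has produced the sum without it. Note also that $\omega$ is only defined on $\sB^{d_s+d_a}(\widetilde{R})$, so on $A^{\mathsf{c}}$ the discretized error is not even defined. The indicator therefore must be kept, and the place to handle it is inside the proof of Lemma \ref{lem:est_to_var}, not afterwards.

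\textbf{Keeping the indicator.} Redo the split $XY = XY\sI\{X<u\} + XY\sI\{X\geq u\}$ with $Y := \sI\{A\}\sigma_{n-1}(\omega(\vx_{n,h}))$. The variable $X = \sup_{\vx \in B_{\varepsilon}}\|f^{(n)}(\vx)-f^{\star}(\vx)\|_2/\sigma_{n-1}(\vx)$ is a supremum over the fixed finite set $B_{\varepsilon}$, so it satisfies the tail bound of Lemma \ref{lem:cond_tail} regardless of $A$. The variable $Y$ is still bounded and non-negative, so the same argument gives
\begin{equation*}
\E[\sI\{A\}\|\cdot\|_2 \mid \gF_{n-1}] \leq u\,\E[\sI\{A\}\sigma_{n-1}(\omega(\vx_{n,h}))\mid\gF_{n-1}] + C/T\,.
\end{equation*}
Then use Lemma \ref{lem:smooth_var} to replace $\sigma_{n-1}(\omega(\vx_{n,h}))$ by $\sigma_{n-1}(\vx_{n,h}) + \sqrt{2L}\varepsilon^{\alpha/2}$, and bound $\sI\{A\} \leq 1$ on the second piece only.

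\textbf{Conclusion.} Sum over $n \in [N]$, $h \in [H-1]$ and take full expectations. The remaining term $u\,\E[\sI\{A\}\sum_{n,h}\sigma_{n-1}(\vx_{n,h})]$ is bounded by $u$ times the right-hand side of Lemma \ref{lem:elliptical_main}, which is exactly the second line of the claimed inequality. The condition $\varepsilon \leq \widetilde{R}$ is needed so that Lemmas \ref{lem:cond_tail} and \ref{lem:est_to_var} apply.
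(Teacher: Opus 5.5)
Your argument is correct and uses the same two ingredients as the paper: Lemma \ref{lem:est_to_var} through the tower rule, then Lemma \ref{lem:elliptical_main}. Where it differs is the treatment of $\sI\{A\}$, and there your route is the sounder one.

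Writing $Z_{n,h}$ for the discretized error, the paper simply sets $\E[\sI\{A\}\sum_{n,h}Z_{n,h}]=\E[\sI\{A\}\sum_{n,h}\E[Z_{n,h}\mid\gF_{n-1}]]$ and applies Lemma \ref{lem:est_to_var} inside. This has two problems.
\begin{itemize}
\item It implicitly treats $A$ as $\gF_{n-1}$-measurable. It is not, because $A$ constrains the states of episodes $n,\dots,N$.
\item Lemma \ref{lem:est_to_var} itself uses $\|\vx_{n,h}-\omega(\vx_{n,h})\|_2\le\varepsilon$, which requires $\vx_{n,h}\in\sB^{d_s+d_a}(\widetilde R)$.
\end{itemize}
Absorbing the indicator into the bounded factor $Y=\sI\{A\}\sigma_{n-1}(\omega(\vx_{n,h}))$ repairs both points at no cost. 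The conditional tail bound for $X$ in Lemma \ref{lem:cond_tail} never involves $A$. The argument then lands on exactly the quantity $\E[\sI\{A\}\sum_{n,h}\sigma_{n-1}(\vx_{n,h})]$ that Lemma \ref{lem:elliptical_main} controls.

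In a write-up, delete the opening ``Removing the indicator'' paragraph, which is the dead end you yourself diagnose.

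One slip is inherited from the paper's proof of Lemma \ref{lem:est_to_var}, not introduced by your modification. Since $\sigma_{n-1}\le\sqrt{C}$, the bounded factor is at most $\sqrt{C}$, not $C$. The argument therefore yields an additive $\sqrt{C}$, which is at most $C$ only when $C\ge 1$.
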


For a suitable choice of $\varepsilon$, the RHS is of the order $\sqrt{(d_s+d_a)\gamma_T(\sigma^2, \widetilde{R})T\log(T)}$.

\begin{proof}
By the tower rule and Lemma \ref{lem:est_to_var},
\begin{align*}
\E\bigg[\sI\{A\}&\sum_{n=1}^{N}\sum_{h=1}^{H-1}\|f^{(n)}(\omega(\vx_{n,h})) - f^{\star}(\omega(\vx_{n,h}))\|_2\bigg] = \E\bigg[\sI\{A\}\sum_{n=1}^{N}\sum_{h=1}^{H-1}\E[\|f^{(n)}(\omega(\vx_{n,h})) - f^{\star}(\omega(\vx_{n,h}))\|_2|\gF_{n-1}]\bigg]\\
&\leq \max(\sqrt{32(d_s+d_a)\log(5\widetilde{R}/\varepsilon)}, 8\sqrt{\log(T)})\E\bigg[\sI\{A\}\sum_{n=1}^{N}\sum_{h=1}^{H-1}\sigma_{n-1}(\vx_{n,h})\bigg]\\
&+ T\sqrt{2L\varepsilon^{\alpha}}\max(\sqrt{32(d_s+d_a)\log(5\widetilde{R}/\varepsilon)}, 8\sqrt{\log(T)}) + C\,.
\end{align*}
By Lemma \ref{lem:elliptical_main},
\begin{equation*}
\E\bigg[\sI\{A\}\sum_{n=1}^{N}\sum_{h=1}^{H-1}\sigma_{n-1}(\vx_{n,h})\bigg] \leq \sqrt{\frac{2C}{\log(1 + C/\sigma^2)}\gamma_T(\sigma^2, \widetilde{R})\bigg(T + \frac{2CH^2}{\sigma^2\log(1 + C/\sigma^2)}\gamma_{N}(\sigma^2, \widetilde{R})\bigg)}\,.
\end{equation*}
This concludes the proof.
\end{proof}

\subsection{Bounding the Discretization Errors}
\label{sec:disc_error}

Let $f = (f_1, \dots, f_{d_s})$, where $f_1, \dots, f_{d_s} \sim \gG\gP(0, c(\vx, \vy))$, and consider the random process $\widetilde{f}$ given by $\widetilde{f}(\vx) := f(\vx) - f(\omega(\vx))$. It can be seen that each component $\widetilde{f}_i$ of $\widetilde{f}$ is a centered Gaussian princess with the covariance kernel $\widetilde{c}: \sR^{d_s+d_a} \times \sR^{d_s+d_a} \to \sR$ given by
\begin{equation*}
\widetilde{c}(\vx, \vy) := c(\omega(\vx), \omega(\vy)) - c(\omega(\vx), \vy) - c(\vx, \omega(\vy)) + c(\vx, \vy)\,.
\end{equation*}

Since $f^{\star}$ and $f^{(n)}$ (and $f$) have the same marginal distribution, both of the discretisation error terms are equal to
\begin{equation*}
\E\left[\sI\{A\}\sum_{n=1}^{N}\sum_{h=1}^{H-1}\|\widetilde{f}(\vx_{n,h})\|_2\right] \leq \sum_{n=1}^{N}\sum_{h=1}^{H-1}\E\left[\sup_{\vx \in \sB^{d_s+d_a}(\widetilde{R})}\|\widetilde{f}(\vx)\|_2\right]\,.
\end{equation*}
We upper bound the expected supremum on the right-hand side using the chaining method. We will first need to establish some properties of $\widetilde{c}$ and the corresponding natural distance $d_{\widetilde{c}}$. We would like to have an analogue of Lemma \ref{lem:dc_ball_covering}, which gives an upper bound on the covering number of $\sB^{d_s + d_a}(\widetilde{R})$ with respect to the distance $d_{\widetilde{c}}$. However, since $\omega$ is in general not continuous, neither is $d_{\widetilde{c}}$. This means we that cannot use exactly the same argument as we did in the proof of Lemma \ref{lem:dc_ball_covering}. Fortunately, $d_{\widetilde{c}}$ is still piecewise H\"{o}lder continuous, and this fact can be exploited in a similar manner to before. To do so, we will use the following lemma, which establishes a relationship between covering numbers and proper covering numbers (see Appendix \ref{sec:covering} for the distinction between covering numbers and proper covering numbers).

\begin{lemma}
\label{lem:covering_proper_covering}
For any set $\gZ \subset \sR^{d_s+d_a}$ and any $\varepsilon > 0$ such that $\mathsf{N}(\gZ, d_2, \varepsilon/2) < \infty$,
\begin{equation*}
\mathsf{N}_{\mathrm{pr}}(\gZ, d_2, \varepsilon) \leq \mathsf{N}(\gZ, d_2, \varepsilon/2)\,.
\end{equation*}
\end{lemma}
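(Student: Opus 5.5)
The plan is the standard argument that a non-proper cover at half the radius can be converted into a proper cover. Let $M$ be a minimal $(\varepsilon/2)$-cover of $\gZ$ with respect to $d_2$ (in the sense of Definition \ref{def:cover}). By assumption it is finite, with $|M| = \mathsf{N}(\gZ, d_2, \varepsilon/2)$. Its points need not lie in $\gZ$, so the task is to replace each of them by a point of $\gZ$ without increasing the cardinality.

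For each $\vy \in M$, consider the set $\gZ_{\vy} := \{\vx \in \gZ : d_2(\vx, \vy) \leq \varepsilon/2\}$.

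\begin{itemize}
\item If $\gZ_{\vy}$ is nonempty, choose an arbitrary representative $\vx_{\vy} \in \gZ_{\vy}$. Only finitely many choices are needed, so no form of the axiom of choice beyond finite choice is required.
\item If $\gZ_{\vy}$ is empty, discard $\vy$.
\end{itemize}

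Define $M' := \{\vx_{\vy} : \vy \in M,\ \gZ_{\vy} \neq \emptyset\}$. Then $M' \subseteq \gZ$ and $|M'| \leq |M|$.

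It remains to check that $M'$ is an $\varepsilon$-cover of $\gZ$. Fix any $\vx \in \gZ$. Since $M$ is an $(\varepsilon/2)$-cover, there is some $\vy \in M$ with $d_2(\vx, \vy) \leq \varepsilon/2$. Then $\vx \in \gZ_{\vy}$, so $\gZ_{\vy}$ is nonempty and $\vx_{\vy}$ was defined. By the triangle inequality,
\begin{equation*}
d_2(\vx, \vx_{\vy}) \leq d_2(\vx, \vy) + d_2(\vy, \vx_{\vy}) \leq \varepsilon/2 + \varepsilon/2 = \varepsilon\,.
\end{equation*}
Hence $M'$ is a proper $\varepsilon$-cover, and
\begin{equation*}
\mathsf{N}_{\mathrm{pr}}(\gZ, d_2, \varepsilon) \leq |M'| \leq \mathsf{N}(\gZ, d_2, \varepsilon/2)\,.
\end{equation*}

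No step is a real obstacle. The only points needing care are that the finiteness hypothesis guarantees a minimal cover exists, and that $M'$ is well defined. The argument never uses anything specific to the Euclidean metric beyond the triangle inequality, so it would hold for any metric.
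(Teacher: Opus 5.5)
Your proof is correct and is essentially the paper's argument. The paper first disjointifies the $\varepsilon/2$-balls (intersected with $\gZ$) into a partition $(B_i)$ and picks one point of $\gZ$ from each nonempty piece, then uses $\mathrm{diam}_{d_2}(B_j) \leq \varepsilon$, which is exactly your triangle-inequality step. Skipping the disjointification, as you do, loses nothing.
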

\begin{proof}
Let $\{\vx_1, \dots, \vx_M\} \subset \sR^{d_s+d_a}$ be an $\varepsilon/2$-cover of $\gZ$. We partition $\gZ$ into at most $M$ sets as follows. We define
\begin{equation*}
B_1 := \{\vx \in \gZ: d_2(\vx, \vx_1) \leq \varepsilon/2\}\,.
\end{equation*}
Then, for $i = 2, 3, \dots, M$, we define
\begin{equation*}
B_i := \{\vx \in \gZ: d_2(\vx, \vx_i) \leq \varepsilon/2\} \setminus \big(\cup_{j=1}^{i-1}B_j\big)\,.
\end{equation*}
If for any $i \in [M]$, $B_i = \emptyset$, we can remove it from the partition. In any case, the resulting partition will have at most $M$ elements. Assuming, as we may, that each $B_i \neq \emptyset$ for all $i \in [M]$, we can choose a set of points $\{\vy_1, \dots, \vy_M\} \subset \gZ$ such that for all $i \in [M]$, $\vy_i \in B_i$. Fix a point $\vx \in \gZ$. Since $(B_i)_{i=1}^{M}$ is a partition of $\gZ$, there exists a unique $j \in [M]$ such that $\vx \in B_j$. Since $\vy_j \in B_j$ and $\mathrm{diam}_{d_2}(B_j) \leq \varepsilon$, $d_2(\vx, \vy_j) \leq \varepsilon$. Therefore, $\{\vy_1, \dots, \vy_M\}$ is a proper $\varepsilon$-cover of $\gZ$, and the claim follows.
\end{proof}

We can now prove an analogue of Lemma \ref{lem:dc_ball_covering}.

\begin{lemma}
\label{lem:dct_ball_cover}
For any $\varepsilon \in (0, \widetilde{R}]$ and $\delta \in (0, \sqrt{8L\varepsilon^{\alpha}}]$,
\begin{equation*}
\mathsf{N}(\sB^{d_s+d_a}(\widetilde{R}), d_{\widetilde{c}}, \delta) \leq \bigg(\frac{7L^{1/2}\widetilde{R}^{\alpha/2}}{\delta}\bigg)^{2(d_s+d_a)/\alpha}\,.
\end{equation*}
\end{lemma}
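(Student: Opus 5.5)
The plan is to exploit the fact that $d_{\widetilde{c}}$ is H\"{o}lder continuous on each cell of the partition induced by $\omega$. Write $d := d_s+d_a$. For $\vz \in B_{\varepsilon}$ let $V_{\vz} := \omega^{-1}(\vz) \subseteq \sB^{d}(\widetilde{R})$. These cells partition the ball, there are at most $|B_{\varepsilon}| \leq (3\widetilde{R}/\varepsilon)^{d}$ of them (Lemma \ref{lem:ball_covering}, using $\varepsilon \leq \widetilde{R}$), and each satisfies $V_{\vz} \subseteq \sB^{d}_{\vz}(\varepsilon)$.

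The first step is to show that $d_{\widetilde{c}}$ reduces to $d_c$ inside a cell. Since $d_{\widetilde{c}}^2(\vx,\vy) = \E[(\widetilde{f}_1(\vx) - \widetilde{f}_1(\vy))^2]$, taking $\vx,\vy$ in the same cell $V_{\vz}$ gives $\omega(\vx)=\omega(\vy)=\vz$. The $f_1(\vz)$ terms then cancel, so $d_{\widetilde{c}}(\vx,\vy) = d_c(\vx,\vy) \leq \sqrt{2L\|\vx-\vy\|_2^{\alpha}}$, as in the proof of Lemma \ref{lem:dc_ball_covering}. Across different cells no such control is available, which is exactly why the cover must be built cell by cell. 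It is also why the cover points must lie inside the cell itself: a point outside $V_{\vz}$ has a different $\omega$-value and its $d_{\widetilde{c}}$-distance to points of $V_{\vz}$ is not controlled. This is where proper covers enter.

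The second step builds the cover. Set $\rho := (\delta^2/(2L))^{1/\alpha}$. In each cell take a proper $\rho$-cover of $V_{\vz}$ with respect to $d_2$. By the first step, this is a $\delta$-cover of $V_{\vz}$ with respect to $d_{\widetilde{c}}$. Lemma \ref{lem:covering_proper_covering} and monotonicity of ordinary covering numbers give
\begin{equation*}
\mathsf{N}_{\mathrm{pr}}(V_{\vz}, d_2, \rho) \leq \mathsf{N}(V_{\vz}, d_2, \rho/2) \leq \mathsf{N}(\sB^{d}_{\vz}(\varepsilon), d_2, \rho/2) \leq (1 + 4\varepsilon/\rho)^{d}.
\end{equation*}
The union over cells is then a $\delta$-cover of $\sB^{d}(\widetilde{R})$ of size at most $(3\widetilde{R}/\varepsilon)^{d}(1+4\varepsilon/\rho)^{d}$.

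The final step is bookkeeping, and I expect this to be the only fiddly part. The hypothesis $\delta \leq \sqrt{8L\varepsilon^{\alpha}}$ gives $\rho \leq 4^{1/\alpha}\varepsilon$. This keeps $\rho$ from being large compared with $\varepsilon$, so $1 + 4\varepsilon/\rho \leq c_{\alpha}\varepsilon/\rho$. The factors of $\varepsilon$ then cancel and the size becomes $(c\,\widetilde{R}/\rho)^{d} = \big(c\,\widetilde{R}(2L)^{1/\alpha}/\delta^{2/\alpha}\big)^{d} = \big(c^{\alpha/2}\sqrt{2L}\,\widetilde{R}^{\alpha/2}/\delta\big)^{2d/\alpha}$. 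It remains to check that $c^{\alpha/2}\sqrt{2} \leq 7$ for all $\alpha \in (0,1]$. Since $c^{\alpha/2} \leq \max(1, c^{1/2})$, this requires the effective constant to satisfy $c \lesssim 24$. If the crude bound on $1 + 4\varepsilon/\rho$ is too lossy, I would split into two cases. When $\rho \geq 2\varepsilon$, a single point of each cell already covers it, since $\mathrm{diam}_{d_2}(V_{\vz}) \leq 2\varepsilon$; this case contributes only $(3\widetilde{R}/\varepsilon)^{d} \leq (6\widetilde{R}/\rho)^{d}$. When $\rho < 2\varepsilon$, we have $1 + 4\varepsilon/\rho \leq 6\varepsilon/\rho$, giving $(18\widetilde{R}/\rho)^{d}$. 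In both cases $c \leq 18$, and $18^{1/2}\sqrt{2} = 6 \leq 7$, which yields the stated bound.
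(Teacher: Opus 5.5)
Your construction is the paper's: partition $\sB^{d}(\widetilde{R})$ into the cells $\omega^{-1}(\vz)$, take a proper $d_2$-cover of each cell via Lemma \ref{lem:covering_proper_covering}, and take the union. Your observation that $d_{\widetilde{c}}(\vx,\vy)=d_c(\vx,\vy)$ whenever $\omega(\vx)=\omega(\vy)$ is correct, and it is sharper than the paper's within-cell estimate $d_{\widetilde{c}}^2(\vx,\vy)\le 4L\|\vx-\vy\|_2^{\alpha}$.

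The problem is in the bookkeeping you finally commit to. In the case $\rho\ge 2\varepsilon$ you write $(3\widetilde{R}/\varepsilon)^{d}\le(6\widetilde{R}/\rho)^{d}$. But $\rho\ge 2\varepsilon$ means $1/\varepsilon\ge 2/\rho$, so the inequality goes the other way. A sanity check confirms something is wrong. Your two-case argument never invokes $\delta\le\sqrt{8L\varepsilon^{\alpha}}$, so it would prove the bound for every $\delta>0$. Yet the right-hand side tends to $0$ as $\delta\to\infty$, while a covering number is at least $1$. In this case each cell needs only one point, so the count is the number of cells. This is exactly where the hypothesis on $\delta$ is indispensable: it is the only thing preventing $\rho$ from being much larger than $\varepsilon$.

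The repair is short. For $\rho\ge 2\varepsilon$, use $\rho\le 4^{1/\alpha}\varepsilon$ to get $(3\widetilde{R}/\varepsilon)^{d}\le(3\cdot 4^{1/\alpha}\widetilde{R}/\rho)^{d}$. Its constant is $\sqrt{2}\,(3\cdot 4^{1/\alpha})^{\alpha/2}=2\sqrt{2}\,3^{\alpha/2}\le 2\sqrt{6}<7$. Your other case is fine.

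Alternatively, the single-case computation you set aside already works. With $c=3(4+4^{1/\alpha})$ one has $(4+4^{1/\alpha})^{\alpha/2}\le(2\cdot 4^{1/\alpha})^{\alpha/2}=2\cdot 2^{\alpha/2}$. Hence $\sqrt{2}\,c^{\alpha/2}\le 2\sqrt{2}\,6^{\alpha/2}\le 4\sqrt{3}<7$. Your worry based on $c^{\alpha/2}\le\max(1,c^{1/2})$ was misplaced, because $c$ depends on $\alpha$ and the power $\alpha/2$ cancels the $4^{1/\alpha}$ growth.

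Your factor-of-two gain inside cells is what makes the constant $7$ reachable. The paper uses its weaker bound with $\delta_0=(\delta^2/(4L))^{1/\alpha}$, and the same bookkeeping then produces $[3(4\cdot 4^{1/\alpha}+8^{1/\alpha})]^{\alpha/2}$. At $\alpha=1$ this equals $\sqrt{72}>7$, so the paper's final simplification to $7$ does not actually go through.
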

\begin{proof}
Recall that $B_{\varepsilon}$ is a minimal $\varepsilon$-cover of $\sB^{d_s+d_a}(\widetilde{R})$ w.r.t. the Euclidean metric $d_2$, and let $\{\vx_1, \dots, \vx_M\}$ be the points in $B_{\varepsilon}$. We partition $\sB^{d_s+d_a}(\widetilde{R})$ into $M$ sets $(B_i)_{i=1}^{M}$ as follows. For each $i \in [M]$, we define $B_i := \{\vx \in \sB^{d_s+d_a}(\widetilde{R}): \omega(\vx) = i\}$. We may assume that each set $B_i$ is non-empty, since if this was not the case, then $B_{\varepsilon}$ would not be a minimal $\varepsilon$-cover. Since $B_{\varepsilon}$ is an $\varepsilon$-cover of $\sB^{d_s+d_a}(\widetilde{R})$ w.r.t. $d_2$, for each $i$ we have $B_i \subseteq \sB_{\vx_i}^{d_s+d_a}(\varepsilon)$. For each $i$, we construct a proper $\delta_0$-cover $C_i \subseteq B_i$ of $B_i$ (w.r.t. the metric $d_2$). Fix a point $\vx \in \sB^{d_s+d_a}(\widetilde{R})$. Since $(B_i)_{i=1}^{M}$ is a partition of $\sB^{d_s+d_a}(\widetilde{R})$, there exists $j \in [M]$ such that $\vx \in B_j$, which means $\omega(\vx) = \vx_j$. Since $C_j$ is a proper $\delta_0$-cover of $B_j$, there exists $\vy \in C_j \subseteq B_j$ such that $\omega(\vy) = \vx_j = \omega(\vx)$ and $\|\vx - \vy\|_2 \leq \delta_0$. By Assumption \ref{ass:kernel_smoothness}, we have
\begin{align*}
|\widetilde{c}(\vx, \vx) - \widetilde{c}(\vx, \vy)| &\leq |c(\omega(\vx), \omega(\vx)) - c(\omega(\vx), \omega(\vy))| + |c(\omega(\vx), \vx) - c(\omega(\vx), \vy)|\\
&+ |c(\vx, \omega(\vx)) - c(\vx, \omega(\vy))| + |c(\vx, \vx) - c(\vx, \vy)|\\
&\leq 2L\|\omega(\vx) - \omega(\vy)\|_2^{\alpha} + 2L\|\vx - \vy\|_2^{\alpha} \leq 2L\delta_0^{\alpha}\,.
\end{align*}
Therefore, if we choose $\delta_0 = (\delta^2/(4L))^{1/\alpha}$, then
\begin{equation*}
d_{\widetilde{c}}(\vx, \vy) \leq \sqrt{|\widetilde{c}(\vx, \vx) - \widetilde{c}(\vx, \vy)| + |\widetilde{c}(\vy, \vx) - \widetilde{c}(\vy, \vy)|} \leq \sqrt{4L\delta_0^{\alpha}} = \delta\,.
\end{equation*}
Therefore, $\cup_{i=1}^{M}C_i$ is a $\delta$-cover of $\sB^{d_s+d_a}(\widetilde{R})$ w.r.t. the metric $d_{\widetilde{c}}$. By combining Lemma \ref{lem:covering_proper_covering} and Lemma \ref{lem:ball_covering}, we see that the cardinality of each $C_i$ satisfies
\begin{equation*}
\bigg(1 + \frac{4\varepsilon}{\delta_0}\bigg)^{d_s+d_a}\,.
\end{equation*}
Therefore, using Lemma \ref{lem:ball_covering} again,
\begin{equation*}
|\cup_{i=1}^{M}C_i| \leq M\bigg(1 + \frac{2\varepsilon}{\delta_0}\bigg)^{d_s+d_a} \leq \bigg(1 + \frac{2\widetilde{R}}{\varepsilon}\bigg)^{d_s+d_a}\bigg(1 + \frac{4(4L)^{1/\alpha}\varepsilon}{\delta^{2/\alpha}}\bigg)^{d_s+d_a}\,.
\end{equation*}
Since $\varepsilon \leq \widetilde{R}$, we have $1 \leq \widetilde{R}/\varepsilon$. Similarly, since $\delta \leq \sqrt{8L\varepsilon^{\alpha}}$, we have $1 \leq 8^{1/\alpha}L^{1/\alpha}\varepsilon/\delta^{2/\alpha}$. Therefore,
\begin{align*}
|\cup_{i=1}^{M}C_i| &\leq \bigg(\frac{3\widetilde{R}}{\varepsilon}\bigg)^{d_s+d_a}\bigg(\frac{(4\cdot 4^{1/\alpha} + 8^{1/\alpha})L^{1/\alpha}\varepsilon}{\delta^{2/\alpha}}\bigg)^{d_s+d_a}\\
&= \bigg(\frac{3(4\cdot 4^{1/\alpha} + 8^{1/\alpha})L^{1/\alpha}\widetilde{R}}{\delta^{2/\alpha}}\bigg)^{d_s+d_a}\\
&= \bigg(\frac{3^{\alpha/2}(4\cdot 4^{1/\alpha} + 8^{1/\alpha})^{\alpha/2}L^{1/2}\widetilde{R}^{\alpha/2}}{\delta}\bigg)^{2(d_s+d_a)/\alpha}\,.
\end{align*}
The factor of $3^{\alpha/2}(4\cdot 4^{1/\alpha} + 8^{1/\alpha})^{\alpha/2}$ can be simplified. Since $\alpha \leq 1$,
\begin{align*}
3^{\alpha/2}(4\cdot 4^{1/\alpha} + 8^{1/\alpha})^{\alpha/2} \leq \sqrt{3}(2\cdot 2^{1/\alpha} + 8^{1/(2\alpha)})^{\alpha} \leq \sqrt{3}(4 + \sqrt{8}) \leq 7\,.
\end{align*}
This concludes the proof.
\end{proof}

We would also like to have an analogue of Lemma \ref{lem:dc_diameter}, which controls the diameter $\mathrm{diam}_{d_{\widetilde{c}}}(\sB^{d_s+d_a}(\widetilde{R}))$.
\begin{lemma}
\label{lem:dct_diameter}
For any $\varepsilon > 0$,
\begin{equation*}
\mathrm{diam}_{d_{\widetilde{c}}}(\sB^{d_s+d_a}(\widetilde{R})) \leq \sqrt{8L\varepsilon^{\alpha}}\,.
\end{equation*}
\end{lemma}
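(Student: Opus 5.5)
We bound the natural distance $d_{\widetilde{c}}(\vx,\vy)$ directly, for arbitrary $\vx,\vy \in \sB^{d_s+d_a}(\widetilde{R})$, and then take the supremum. By definition, $d_{\widetilde{c}}^2(\vx,\vy) = \E[(\widetilde{f}_1(\vx) - \widetilde{f}_1(\vy))^2]$, where $\widetilde{f}_1(\vx) = f_1(\vx) - f_1(\omega(\vx))$ and $f_1 \sim \gG\gP(0, c(\vx,\vy))$. Rather than expanding $\widetilde{c}$ term by term, we use the elementary inequality $(a-b)^2 \leq 2a^2 + 2b^2$:
\begin{equation*}
d_{\widetilde{c}}^2(\vx,\vy) \leq 2\E[\widetilde{f}_1(\vx)^2] + 2\E[\widetilde{f}_1(\vy)^2] = 2\widetilde{c}(\vx,\vx) + 2\widetilde{c}(\vy,\vy)\,.
\end{equation*}
This reduces the problem to bounding the diagonal of $\widetilde{c}$ uniformly.

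For the diagonal, note that $\widetilde{c}(\vx,\vx) = c(\omega(\vx),\omega(\vx)) - 2c(\omega(\vx),\vx) + c(\vx,\vx) = d_c^2(\vx,\omega(\vx))$. The computation in the proof of Lemma~\ref{lem:dc_ball_covering} uses Assumption~\ref{ass:kernel_smoothness} twice (once in each argument, together with the symmetry of $c$). It gives $d_c^2(\vx,\omega(\vx)) \leq 2L\|\vx - \omega(\vx)\|_2^{\alpha}$. Since $B_{\varepsilon}$ is an $\varepsilon$-cover of $\sB^{d_s+d_a}(\widetilde{R})$ and $\omega$ maps to the nearest cover point, we have $\|\vx - \omega(\vx)\|_2 \leq \varepsilon$. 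Hence $\widetilde{c}(\vx,\vx) \leq 2L\varepsilon^{\alpha}$, and the same bound holds for $\vy$.

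Combining the two steps gives $d_{\widetilde{c}}^2(\vx,\vy) \leq 4L\varepsilon^{\alpha} + 4L\varepsilon^{\alpha} = 8L\varepsilon^{\alpha}$. Taking square roots and then the supremum over $\vx,\vy$ yields the claim.

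No step is a real obstacle. The only point needing care is that $\omega$ is discontinuous, so we cannot argue through continuity of $\widetilde{c}$. The route above avoids this by never comparing $\omega(\vx)$ with $\omega(\vy)$; it compares each point only with its own image under $\omega$. If a direct expansion of $\widetilde{c}$ were used instead, the cross terms involving $\omega(\vx)$ and $\omega(\vy)$ would be uncontrolled, which is why the $(a-b)^2$ splitting is preferable.
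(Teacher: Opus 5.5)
Your proof is correct and reaches the same constant. It handles the cross term differently from the paper.

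The paper expands $d_{\widetilde c}^2(\vx,\vy)=\widetilde c(\vx,\vx)-2\widetilde c(\vx,\vy)+\widetilde c(\vy,\vy)$ directly. It then proves $|\widetilde c(\vx,\vy)|\le 2L\varepsilon^{\alpha}$ for \emph{all} pairs by grouping $\widetilde c(\vx,\vy)=[c(\omega(\vx),\omega(\vy))-c(\omega(\vx),\vy)]-[c(\vx,\omega(\vy))-c(\vx,\vy)]$. Each bracket keeps the first argument fixed and moves the second between $\vy$ and $\omega(\vy)$, so Assumption~\ref{ass:kernel_smoothness} bounds it by $L\varepsilon^{\alpha}$. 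The triangle inequality then gives $2L\varepsilon^{\alpha}+4L\varepsilon^{\alpha}+2L\varepsilon^{\alpha}$.

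You avoid the cross term with $(a-b)^2\le 2a^2+2b^2$, i.e.\ you use that $\widetilde c$ is a covariance (positive semidefinite). You then only need the diagonal bound $\widetilde c(\vx,\vx)=d_c^2(\vx,\omega(\vx))\le 2L\varepsilon^{\alpha}$. Your route is slightly more structural. The paper's is purely algebraic and gives a uniform bound on $|\widetilde c|$; its diagonal case is what Lemma~\ref{lem:disc_error_chain} later uses, and your diagonal step supplies that as well.

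Your closing remark, however, is wrong: the cross terms in a direct expansion are controlled. The paper never compares $\omega(\vx)$ with $\omega(\vy)$ either; it pairs terms that share a first argument. Alternatively, Cauchy--Schwarz for the positive semidefinite kernel $\widetilde c$ gives $|\widetilde c(\vx,\vy)|\le\sqrt{\widetilde c(\vx,\vx)\,\widetilde c(\vy,\vy)}\le 2L\varepsilon^{\alpha}$. So the $(a-b)^2$ splitting is a convenience, not a necessity.
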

\begin{proof}
Let $\vx, \vy$ be any pair of points in $\sB^{d_s+d_a}(\widetilde{R})$. Due to Assumption \ref{ass:kernel_smoothness} and the definition of $\omega$, it follows that
\begin{equation}
|\widetilde{c}(\vx, \vy)| \leq |c(\omega(\vx), \omega(\vy)) - c(\omega(\vx), \vy)| + |c(\vx, \omega(\vy)) - c(\vx, \vy)| \leq 2L\varepsilon^{\alpha}\,.\label{eqn:tilde_kernel_bound}
\end{equation}
Therefore,
\begin{equation*}
d_{\widetilde{c}}(\vx, \vy) = \sqrt{\widetilde{c}(\vx, \vx) - 2\widetilde{c}(\vx, \vy) + \widetilde{c}(\vy, \vy)} \leq \sqrt{8L\varepsilon^{\alpha}}\,.
\end{equation*}
Since $\vx$ and $\vy$ were arbitrary points in $\sB^{d_s+d_a}(\widetilde{R})$, we conclude that $\mathrm{diam}_{d_{\widetilde{c}}}(\sB^{d_s+d_a}(R)) \leq \sqrt{8L\varepsilon^{\alpha}}$.
\end{proof}

To apply the chaining method to the discretization error, we also need an upper bound for the entropy integral. For this purpose, we will use the following technical lemma.

\begin{lemma}
\label{lem:integral_1}
For any $y \in (0,1/e]$,
\begin{equation*}
\int_{0}^{y}\sqrt{\log(1/x)}\mathrm{d}x \leq 2y\sqrt{\log(1/y)}\,.
\end{equation*}
\end{lemma}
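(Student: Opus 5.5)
Write $G(y) := 2y\sqrt{\log(1/y)}$ and $F(y) := \int_0^y \sqrt{\log(1/x)}\,\mathrm{d}x$. We compare the two functions through their derivatives. Both vanish as $y \to 0^+$: for $F$ this is because the integrand is integrable near $0$ (indeed $\int_0^1\sqrt{\log(1/x)}\,\mathrm{d}x = \sqrt{\pi}/2 < \infty$), and for $G$ it is because $y\sqrt{\log(1/y)} \to 0$. It therefore suffices to show $F'(y) \le G'(y)$ for all $y \in (0, 1/e]$, and then integrate from $0$ to $y$.

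The derivatives are
\begin{equation*}
F'(y) = \sqrt{\log(1/y)}\,, \qquad G'(y) = 2\sqrt{\log(1/y)} - \frac{1}{\sqrt{\log(1/y)}}\,.
\end{equation*}
For $G'$, differentiate $\sqrt{\log(1/y)}$ to get $-\tfrac{1}{2y}(\log(1/y))^{-1/2}$, multiply by $2y$, and add $2\sqrt{\log(1/y)}$.

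Put $\ell := \log(1/y)$, so $y \le 1/e$ means $\ell \ge 1$. The inequality $F' \le G'$ becomes
\begin{equation*}
\sqrt{\ell} \le 2\sqrt{\ell} - \frac{1}{\sqrt{\ell}}\,.
\end{equation*}
Multiplying by $\sqrt{\ell} > 0$ turns this into $\ell \le 2\ell - 1$, i.e.\ $\ell \ge 1$, which holds on the whole range. So $G - F$ is nondecreasing on $(0, 1/e]$. Since $G - F$ tends to $0$ at $0^+$, it is nonnegative on this interval, which proves the claim.

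The only point needing care is the behaviour at $0$. The integrand $\sqrt{\log(1/x)}$ is unbounded there, so we pass to the limit $\delta \to 0^+$ on $[\delta, y]$ rather than evaluating at $0$. The fundamental theorem of calculus applies on each $[\delta, y]$ because both functions are $C^1$ on $(0, 1/e]$. No further obstacle is expected. The restriction $y \le 1/e$ is exactly what makes $\ell \ge 1$, and hence the derivative comparison, valid.
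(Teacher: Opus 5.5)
Your proof is correct, but it takes a different route from the paper.

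The paper substitutes $u = \log(y/x)$, which writes the integral exactly as $y\sqrt{\log(1/y)}\int_0^\infty \sqrt{1 + u/\log(1/y)}\,e^{-u}\,\mathrm{d}u$. It then uses $1 + t \le e^{t}$ together with $\log(1/y) \ge 1$ to bound the remaining integral by $\int_0^\infty e^{-u/2}\,\mathrm{d}u = 2$.

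You instead compare derivatives and integrate up from $0^+$. This needs only the fundamental theorem of calculus on $[\delta, y]$ and the vanishing of both sides at $0^+$, and your handling of that endpoint is adequate. The hypothesis $y \le 1/e$ enters precisely as the condition $\ell \ge 1$ that makes $F' \le G'$.

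The paper's substitution buys an exact representation of the integral. If you keep the intermediate step $\int_0^\infty e^{u/(2\ell) - u}\,\mathrm{d}u = \frac{2\ell}{2\ell - 1}$, you get the sharper bound $\frac{2\ell}{2\ell-1}\,y\sqrt{\ell}$, whose constant tends to $1$ as $y \to 0$. So the factor $2$ is only needed near $y = 1/e$.

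Your approach recovers the same refinement if you compare against $c\,y\sqrt{\ell}$ instead of $2y\sqrt{\ell}$. The derivative inequality then holds exactly when $\ell \ge c/(2(c-1))$. Otherwise your argument is a shorter and more self-contained verification that avoids the change of variables over an infinite range.
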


\begin{proof}
Using the substitution $u = \log(y/x)$, we have
\begin{equation*}
\int_{0}^{y}\sqrt{\log(1/x)}\mathrm{d}x = \int_{0}^{\infty}\sqrt{\log(1/y) + u}y\exp(-u)\mathrm{d}u = y\sqrt{\log(1/y)}\int_{0}^{\infty}\sqrt{1 + u/\log(1/y)}\exp(-u)\mathrm{d}u\,.
\end{equation*}
Since $u/\log(1/y) \geq 0$, we have $1 + u/\log(1/y) \leq \exp(u/\log(1/y))$. Also, since $y \leq 1/e$, we have $\log(1/y) \geq 1$. Using these inequalities, we obtain
\begin{equation*}
\int_{0}^{\infty}\sqrt{1 + u/\log(1/y)}\exp(-u)\mathrm{d}u \leq \int_{0}^{\infty}\exp\bigg(\frac{u}{2\log(1/y)} - u\bigg)\mathrm{d}u \leq \int_{0}^{\infty}\exp(-u/2)\mathrm{d}x = 2\,.
\end{equation*}
This concludes the proof.
\end{proof}

This lemma has the following consequence.

\begin{corollary}
\label{cor:integral_2}
For any $z > 0$ and $y \in (0, z/e]$,
\begin{equation*}
\int_{0}^{y}\sqrt{\log(z/x)}\mathrm{d}x \leq 2y\sqrt{\log(z/y)}\,.
\end{equation*}
\end{corollary}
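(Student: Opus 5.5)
The natural route is a change of variables $x = z w$ that reduces the claim to Lemma \ref{lem:integral_1}. Fix $z > 0$ and $y \in (0, z/e]$, and substitute $x = zw$, so that $\mathrm{d}x = z\,\mathrm{d}w$. The limits $x \in [0, y]$ become $w \in [0, y/z]$. Since $\log(z/x) = \log(1/w)$, this gives
\begin{equation*}
\int_0^{y}\sqrt{\log(z/x)}\,\mathrm{d}x = z\int_0^{y/z}\sqrt{\log(1/w)}\,\mathrm{d}w\,.
\end{equation*}

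Next we check that Lemma \ref{lem:integral_1} applies to the new upper limit $y/z$. Because $0 < y \leq z/e$ and $z > 0$, we have $y/z \in (0, 1/e]$, which is exactly the range the lemma requires. The lemma then bounds the inner integral by $2(y/z)\sqrt{\log(z/y)}$.

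Multiplying back by $z$ gives $2y\sqrt{\log(z/y)}$, which is the claimed bound. Nothing here is a real obstacle. The only point needing care is that the substitution is legitimate: the integrand is nonnegative on $(0, y]$ and integrable near $0$, and the map $w \mapsto zw$ is a positive linear change of variables.
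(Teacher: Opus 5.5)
Your proof is correct and is the same argument the paper uses. The paper also substitutes $u = x/z$ to get $z\int_0^{y/z}\sqrt{\log(1/u)}\,\mathrm{d}u$, then applies Lemma \ref{lem:integral_1} using $y/z \le 1/e$; your explicit step of multiplying $2(y/z)\sqrt{\log(z/y)}$ back by $z$ just spells out what the paper leaves implicit.
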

\begin{proof}
Using the substitution $u = x/z$, we have
\begin{equation*}
\int_{0}^{y}\sqrt{\log(z/x)}\mathrm{d}x = z\int_{0}^{y/z}\sqrt{\log(1/u)}\mathrm{d}u\,.
\end{equation*}
Since $y/z \leq 1/e$, the claim now follows from Lemma \ref{lem:integral_1}
\end{proof}

We can at last state and prove an upper bound for the entropy integral.

\begin{lemma}
\label{lem:dct_ent_int}
For any $\varepsilon \in (0, (7/(e\sqrt{8}))^{\alpha/2}\widetilde{R}]$,
\begin{equation*}
\int_{0}^{\infty}\sqrt{\log\mathsf{N}(\sB^{d_s+d_a}(\widetilde{R}), d_{\widetilde{c}}, \delta)}\mathrm{d}\delta \leq 8\alpha^{-1/2}\sqrt{(d_s+d_a)L\varepsilon^{\alpha}\log(3\widetilde{R}^{\alpha/2}/\varepsilon^{\alpha/2})}\,.
\end{equation*}
\end{lemma}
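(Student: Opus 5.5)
The plan is to combine the diameter bound of Lemma \ref{lem:dct_diameter} with the covering bound of Lemma \ref{lem:dct_ball_cover}, and then evaluate the resulting integral with Corollary \ref{cor:integral_2}. Write $d := d_s+d_a$ and $\Delta := \sqrt{8L\varepsilon^{\alpha}}$.

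\textbf{Step 1: truncate the integral.} By Lemma \ref{lem:dct_diameter}, $\mathrm{diam}_{d_{\widetilde{c}}}(\sB^{d}(\widetilde{R})) \leq \Delta$. Hence for every $\delta \geq \Delta$ a single point is a $\delta$-cover, so $\log\mathsf{N} = 0$ there. The entropy integral therefore reduces to $\int_0^{\Delta}$.

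\textbf{Step 2: apply the covering bound.} On $(0,\Delta]$ the hypotheses of Lemma \ref{lem:dct_ball_cover} hold, since $\varepsilon \leq \widetilde{R}$ follows from $(7/(e\sqrt{8}))^{\alpha/2} \leq 1$. This gives
\begin{equation*}
\sqrt{\log\mathsf{N}(\sB^{d}(\widetilde{R}), d_{\widetilde{c}}, \delta)} \leq \sqrt{2d/\alpha}\,\sqrt{\log(z/\delta)}, \qquad z := 7L^{1/2}\widetilde{R}^{\alpha/2}.
\end{equation*}

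\textbf{Step 3: evaluate the integral.} Apply Corollary \ref{cor:integral_2} with $y = \Delta$. Its hypothesis $\Delta \leq z/e$ reads $\sqrt{8}\,L^{1/2}\varepsilon^{\alpha/2} \leq 7L^{1/2}\widetilde{R}^{\alpha/2}/e$. This is equivalent to $\varepsilon^{\alpha/2} \leq (7/(e\sqrt{8}))\widetilde{R}^{\alpha/2}$, which is the stated range for $\varepsilon$, modulo how the exponent $\alpha/2$ is placed in the constraint. I expect this exponent matching, together with tracking constants, to be the main nuisance. The corollary gives
\begin{equation*}
\int_0^{\Delta}\sqrt{\log(z/\delta)}\,\mathrm{d}\delta \leq 2\Delta\sqrt{\log(z/\Delta)}.
\end{equation*}

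\textbf{Step 4: simplify.} The ratio is $z/\Delta = (7/\sqrt{8})(\widetilde{R}/\varepsilon)^{\alpha/2}$, and since $7/\sqrt{8} \leq 3$ we get $\log(z/\Delta) \leq \log(3\widetilde{R}^{\alpha/2}/\varepsilon^{\alpha/2})$. For the prefactor,
\begin{equation*}
\sqrt{2d/\alpha}\cdot 2\sqrt{8L\varepsilon^{\alpha}} = 8\alpha^{-1/2}\sqrt{dL\varepsilon^{\alpha}}.
\end{equation*}
Combining the two gives exactly the claimed bound $8\alpha^{-1/2}\sqrt{(d_s+d_a)L\varepsilon^{\alpha}\log(3\widetilde{R}^{\alpha/2}/\varepsilon^{\alpha/2})}$.
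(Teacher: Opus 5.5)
Your route is the paper's own: truncate at the $d_{\widetilde{c}}$-diameter, insert Lemma~\ref{lem:dct_ball_cover}, evaluate with Corollary~\ref{cor:integral_2}, and simplify. Steps 1, 2 and 4 are correct, including the constants. The problem is the point you flagged in Step 3, and it is a real gap rather than bookkeeping. With $\Delta = \sqrt{8L\varepsilon^{\alpha}}$ and $z = 7L^{1/2}\widetilde{R}^{\alpha/2}$, the hypothesis $\Delta \leq z/e$ of Corollary~\ref{cor:integral_2} is equivalent to $\varepsilon \leq (7/(e\sqrt{8}))^{2/\alpha}\widetilde{R}$. Because $7/(e\sqrt{8}) \approx 0.91 < 1$ and $2/\alpha > \alpha/2$, this is strictly stronger than the assumed $\varepsilon \leq (7/(e\sqrt{8}))^{\alpha/2}\widetilde{R}$. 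Concretely, $\alpha = 1$ and $\varepsilon = 0.9\widetilde{R}$ is allowed, since $(7/(e\sqrt{8}))^{1/2} \approx 0.95$. Yet $\Delta/z = (\sqrt{8}/7)\sqrt{0.9} \approx 0.383 > 1/e$. So the corollary cannot be invoked as you do. The paper's proof asserts the same implication and has the same hole.

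The fix is that the conclusion of Corollary~\ref{cor:integral_2} survives under a much weaker condition. Put $w := \Delta/z$ and substitute $x = \Delta e^{-t}$. Jensen's inequality for the concave square root under the $\mathrm{Exp}(1)$ density gives
\begin{equation*}
\int_0^{\Delta}\sqrt{\log(z/x)}\,\mathrm{d}x = \Delta\int_0^{\infty}\sqrt{\log(1/w) + t}\,e^{-t}\,\mathrm{d}t \leq \Delta\sqrt{\log(1/w) + 1}\,.
\end{equation*}
Moreover, $\sqrt{\log(1/w) + 1} \leq 2\sqrt{\log(1/w)}$ whenever $\log(1/w) \geq 1/3$. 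For every $\varepsilon \leq \widetilde{R}$ we have $1/w = (7/\sqrt{8})(\widetilde{R}/\varepsilon)^{\alpha/2} \geq 7/\sqrt{8} > e^{1/3}$. Hence $\int_0^{\Delta}\sqrt{\log(z/x)}\,\mathrm{d}x \leq 2\Delta\sqrt{\log(z/\Delta)}$ on the entire stated range, indeed for all $\varepsilon \in (0, \widetilde{R}]$. Your Step 4 then yields the claimed bound verbatim. Replacing the exponent $\alpha/2$ by $2/\alpha$ in the hypothesis would also make your argument valid, but it would prove a weaker statement than the one given.
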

\begin{proof}
By Lemma \ref{lem:dct_diameter}, $\mathrm{diam}_{d_{\widetilde{c}}}(\sB^{d_s+d_a}(\widetilde{R})) \leq \sqrt{8L\varepsilon^\alpha}$, so we can replace the upper limit of the integral by $\sqrt{8L\varepsilon^\alpha}$. Since $7/(e\sqrt{8}) \leq 1$, $\varepsilon \leq \widetilde{R}$. Thus by Lemma \ref{lem:dct_ball_cover},
\begin{equation*}
\int_{0}^{\sqrt{8L\varepsilon^\alpha}}\sqrt{\log\mathsf{N}(\sB^{d_s+d_a}(\widetilde{R}), d_{\widetilde{c}}, \delta)}\mathrm{d}\delta \leq \sqrt{2(d_s+d_a)/\alpha}\int_{0}^{\sqrt{8L\varepsilon^\alpha}}\sqrt{\log(7L^{1/2}\widetilde{R}^{\alpha/2}/\delta)}\mathrm{d}\delta\,.
\end{equation*}
Since $\varepsilon \leq (7/(e\sqrt{8}))^{\alpha/2}\widetilde{R}$, $\sqrt{8L\varepsilon^{\alpha}} \leq 7L^{1/2}\widetilde{R}^{\alpha/2}/e$. Thus by Corollary \ref{cor:integral_2},
\begin{equation*}
\int_{0}^{\sqrt{8L\varepsilon^\alpha}}\sqrt{\log(7L^{1/2}\widetilde{R}^{\alpha/2}/\delta)}\mathrm{d}\delta \leq 2\sqrt{8L\varepsilon^\alpha\log\bigg(\frac{7\widetilde{R}^{\alpha/2}}{8^{1/2}\varepsilon^{\alpha/2}}\bigg)} \leq 2\sqrt{8L\varepsilon^\alpha\log(3\widetilde{R}^{\alpha/2}/\varepsilon^{\alpha/2})}\,.
\end{equation*}
This concludes the proof.
\end{proof}

Finally, we are ready to upper bound each of the discretization errors.

\begin{lemma}
\label{lem:disc_error_chain}
Suppose that Assumption \ref{ass:kernel_boundedness} and Assumption \ref{ass:kernel_smoothness} are satisfied. Let $f_1, \dots, f_{d_s} \sim \gG\gP(0, c(\vx, \vy))$ be independent, centered Gaussian processes and let $f = (f_1, \dots, f_{d_s})$. Let $\widetilde{f}$ be the random process defined by $\widetilde{f}(\vx) := f(\vx) - f(\omega(\vx))$. For any $\varepsilon \in (0, (7/(e\sqrt{8}))^{\alpha/2}\widetilde{R}]$,
\begin{equation*}
\E\left[\sup_{\vx \in \sB^{d_s+d_a}(\widetilde{R})}\|\widetilde{f}(\vx)\|_2\right] \leq 96\alpha^{-1/2}\sqrt{(d_s+d_a)L\varepsilon^{\alpha}\log(3\widetilde{R}^{\alpha/2}/\varepsilon^{\alpha/2})} + 6\sqrt{8L\varepsilon^{\alpha}} + \sqrt{2d_sL\varepsilon^{\alpha}}\,.
\end{equation*}
\end{lemma}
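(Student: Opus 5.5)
The plan is to rerun the chaining argument from the proof of Lemma \ref{lem:chaining}, with $c$ replaced by $\widetilde{c}$ and $d_c$ by $d_{\widetilde{c}}$. Then we plug in the entropy-integral bound from Lemma \ref{lem:dct_ent_int} and the diameter bound from Lemma \ref{lem:dct_diameter}.

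Each component $\widetilde{f}_i$ is a centered Gaussian process with kernel $\widetilde{c}$, and the components are independent. Hence for any $\vx,\vy$ the vector $\widetilde{f}(\vx)-\widetilde{f}(\vy)$ is distributed as $\gN(0,d_{\widetilde{c}}^2(\vx,\vy)\mI)$. The proof of Corollary \ref{cor:sub_gaussian_norm} then applies verbatim, so $\|\widetilde{f}(\vx)-\widetilde{f}(\vy)\|_2$ is $d_{\widetilde{c}}(\vx,\vy)$-sub-Gaussian. The argument only used the Gaussian structure of the increments, not continuity of the kernel.

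Next I run the telescoping scheme on a finite $\gZ\subset\sB^{d_s+d_a}(\widetilde{R})$. Let $k_0$ be the largest integer with $2^{-k_0}\ge \mathrm{diam}_{d_{\widetilde{c}}}(\gZ)$, and take $2^{-k}$-covers $M_k$ w.r.t. $d_{\widetilde{c}}$ with $M_{k_0}=\{\vx_0\}$. Exactly as before this gives
\begin{equation*}
\E\Big[\sup_{\vx\in\gZ}\|\widetilde{f}(\vx)\|_2\Big]\le \E[\|\widetilde{f}(\vx_0)\|_2] + 12\int_0^\infty\sqrt{\log\mathsf{N}(\sB^{d_s+d_a}(\widetilde{R}),d_{\widetilde{c}},\delta)}\,\mathrm{d}\delta + 3\cdot 2^{-k_0}.
\end{equation*}
The three terms are bounded as follows.
\begin{itemize}
\item Base point: by Jensen and \eqref{eqn:tilde_kernel_bound}, $\E\|\widetilde{f}(\vx_0)\|_2\le\sqrt{d_s\widetilde{c}(\vx_0,\vx_0)}\le\sqrt{2d_sL\varepsilon^\alpha}$. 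This replaces the $\sqrt{Cd_s}$ term of Lemma \ref{lem:chaining}.
\item Entropy integral: Lemma \ref{lem:dct_ent_int} gives $8\alpha^{-1/2}\sqrt{(d_s+d_a)L\varepsilon^\alpha\log(3\widetilde{R}^{\alpha/2}/\varepsilon^{\alpha/2})}$, which after the factor $12$ yields the $96$ constant.
\item Geometric tail: $3\cdot 2^{-k_0}\le 6\,\mathrm{diam}_{d_{\widetilde{c}}}(\gZ)\le 6\sqrt{8L\varepsilon^\alpha}$ by Lemma \ref{lem:dct_diameter}.
\end{itemize}
Summing the three bounds gives the claimed inequality for finite $\gZ$.

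Two technical points remain. First, if the diameter is $0$ the chaining sum is empty and only the base-point term survives, which is consistent with the bound. Second, the cover sizes must satisfy the hypothesis $\delta\le\sqrt{8L\varepsilon^\alpha}$ of Lemma \ref{lem:dct_ball_cover}. This holds automatically, since only scales below the diameter contribute to the integral.

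The main obstacle is passing from finite $\gZ$ to the whole ball. $\widetilde{f}$ is not continuous, because $\omega$ is piecewise constant, so separability needs a short justification. I would take $\gZ_0$ to be the union of countable dense subsets of each partition cell $B_i=\omega^{-1}(\vx_i)$, with dense taken relative to $B_i$. On each cell, $\widetilde{f}(\vx)=f(\vx)-f(\vx_i)$ with $\vx_i$ fixed, so separability of $f$ (which is continuous in $L^2$ by Hölder continuity of $c$) transfers cellwise to $\widetilde{f}$. Then $\sup_{\gZ}=\sup_{\gZ_0}$ a.s., and monotone convergence over finite truncations of $\gZ_0$, as at the end of the proof of Lemma \ref{lem:chaining}, completes the argument.
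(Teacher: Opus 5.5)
Your proposal is correct and is essentially the paper's proof. The paper also reruns the chaining argument of Lemma \ref{lem:chaining} with $\widetilde{c}$ in place of $c$: it bounds the base-point term by $\sqrt{2d_sL\varepsilon^{\alpha}}$ via \eqref{eqn:tilde_kernel_bound}, the geometric tail by $6\sqrt{8L\varepsilon^{\alpha}}$ via Lemma \ref{lem:dct_diameter}, and the entropy integral via Lemma \ref{lem:dct_ent_int}.

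Your cellwise separability step is extra care that the paper skips, since it simply invokes Lemma \ref{lem:chaining}. It is cleanest to take a continuous version of $f$. One exists by Kolmogorov--Chentsov, because the Gaussian increments have all moments controlled by $d_c^2(\vx,\vy)\le 2L\|\vx-\vy\|_2^{\alpha}$. Then $\widetilde{f}$ is continuous on each cell $\omega^{-1}(\vx_i)$.
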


\begin{proof}
First, we apply the chaining argument from Lemma \ref{lem:chaining}, except with the kernel $\widetilde{c}$. The proof of Lemma \ref{lem:chaining} (with the kernel $c$) uses the fact that $\sup_{\vx \in \sR^{d_s+d_a}}c(\vx,\vx) \leq C$ and $\mathrm{diam}_{d_c}(\sB^{d_s+d_a}(R)) \leq 2\sqrt{C}$. Lemma \ref{lem:dct_diameter} gives us an upper bound for $\mathrm{diam}_{d_{\widetilde{c}}}(\sB^{d_s+d_a}(\widetilde{R}))$. From \eqref{eqn:tilde_kernel_bound}, we have $\sup_{\vx \in \sR^{d_s+d_a}}\widetilde{c}(\vx,\vx) \leq 2L\varepsilon^{\alpha}$. Therefore, Lemma \ref{lem:chaining} tells us that
\begin{equation*}
\E\left[\sup_{\vx \in \sB^{d_s+d_a}(\widetilde{R})}\|\widetilde{f}(\vx)\|_2\right] \leq 12\int_0^{\infty}\sqrt{\log\mathsf{N}(\sB^{d_s+d_a}(\widetilde{R}), d_{\widetilde{c}}, \delta)}\mathrm{d}\delta + 6\sqrt{8L\varepsilon^{\alpha}} + \sqrt{2d_sL\varepsilon^{\alpha}}\,.
\end{equation*}
Using Lemma \ref{lem:dct_ent_int} to upper bound the entropy integral, we obtain
\begin{equation*}
\E\left[\sup_{\vx \in \sB^{d_s+d_a}(\widetilde{R})}\|\widetilde{f}(\vx)\|_2\right] \leq 96\alpha^{-1/2}\sqrt{(d_s+d_a)L\varepsilon^{\alpha}\log(3\widetilde{R}^{\alpha/2}/\varepsilon^{\alpha/2})} + 6\sqrt{8L\varepsilon^{\alpha}} + \sqrt{2d_sL\varepsilon^{\alpha}}\,.
\end{equation*}
This concludes the proof.
\end{proof}

\subsection{Proof of Lemma \ref{lem:est_error_bound}}
\label{sec:est_error_bound}

We prove Lemma \ref{lem:est_error_bound} by combining the inequalities in Lemma \ref{lem:disc_est_error_bound} and Lemma \ref{lem:disc_error_chain}, and then finding a good value for $\varepsilon$.

\begin{proof}[Proof of Lemma \ref{lem:est_error_bound}]
Recall that, for some $\varepsilon \in (0, (7/(e\sqrt{8}))^{\alpha/2}\widetilde{R}]$, $B_{\varepsilon}$ is a minimal $\varepsilon$-cover of $\sB^{d_s+d_a}(\widetilde{R})$ w.r.t. the Euclidean metric $d_2$, and $\omega: \sB^{d_s + d_a}(\widetilde{R}) \to B_{\varepsilon}$ is given by $\omega(\vx) := \argmin_{\vy \in B_{\varepsilon}}d_2(\vx, \vy)$. Using the triangle inequality, we obtain
\begin{align*}
\E\left[\sI\{A\}\sum_{n=1}^{N}\sum_{h=1}^{H-1}\|f^{(n)}(\vx_{n,h}) - f^{\star}(\vx_{n,h})\|_2\right] &\leq \E\left[\sI\{A\}\sum_{n=1}^{N}\sum_{h=1}^{H-1}\|f^{(n)}(\omega(\vx_{n,h})) - f^{\star}(\omega(\vx_{n,h}))\|_2\right]\\
&+ \E\left[\sI\{A\}\sum_{n=1}^{N}\sum_{h=1}^{H-1}\|f^{(n)}(\vx_{n,h}) - f^{(n)}(\omega(\vx_{n,h}))\|_2\right]\\
&+ \E\left[\sI\{A\}\sum_{n=1}^{N}\sum_{h=1}^{H-1}\|f^{\star}(\vx_{n,h}) - f^{\star}(\omega(\vx_{n,h}))\|_2\right]\,.
\end{align*}
Using Lemma \ref{lem:disc_est_error_bound} and \ref{lem:disc_error_chain} to upper bound each term on the right-hand side, we obtain
\begin{align*}
\E\bigg[&\sI\{A\}\sum_{n=1}^{N}\sum_{h=1}^{H-1}\|f^{(n)}(\vx_{n,h}) - f^{\star}(\vx_{n,h})\|_2\bigg] \leq T\sqrt{2L\varepsilon^{\alpha}}\sqrt{\max(32(d_s+d_a)\log(5\widetilde{R}/\varepsilon), 64\log(T))} + C\\
&+ \sqrt{\max(32(d_s+d_a)\log(5\widetilde{R}/\varepsilon), 64\log(T))}\sqrt{\frac{2C}{\log(1 + C/\sigma^2)}\gamma_T(\sigma^2, \widetilde{R})\bigg(T + \frac{2CH^2}{\sigma^2\log(1 + C/\sigma^2)}\gamma_{N}(\sigma^2, \widetilde{R})\bigg)}\\
&+ 192T\alpha^{-1/2}\sqrt{(d_s+d_a)L\varepsilon^{\alpha}\log(3\widetilde{R}^{\alpha/2}/\varepsilon^{\alpha/2})}\\
&+ 12T\sqrt{8L\varepsilon^{\alpha}} + 2T\sqrt{2d_sL\varepsilon^{\alpha}}\,.
\end{align*}
We choose $\varepsilon = 1/(L^{1/\alpha}T^{2/\alpha})$. The upper bound on the estimation error becomes
\begin{align*}
\E\bigg[\sI\{A\}&\sum_{n=1}^{N}\sum_{h=1}^{H-1}\|f^{(n)}(\vx_{n,h}) - f^{\star}(\vx_{n,h})\|_2\bigg] \leq \alpha^{-1/2}\sqrt{64(d_s+d_a)\log(\max(1,5\widetilde{R}^{\alpha}L)T^2)} + C\\
&+ \alpha^{-1/2}\sqrt{\frac{64C(d_s+d_a)}{\log(1 + C/\sigma^2)}\gamma_T(\sigma^2, \widetilde{R})\bigg(T + \frac{2CH^2}{\sigma^2\log(1 + C/\sigma^2)}\gamma_{N}(\sigma^2, \widetilde{R})\bigg)\log(\max(1,5\widetilde{R}^{\alpha}L)T^2)}\\
&+ 192\alpha^{-1/2}\sqrt{(d_s+d_a)\log(3\widetilde{R}^{\alpha/2}L^{1/2}T)} + 12\sqrt{8} + 2\sqrt{2d_s}\,.
\end{align*}
This concludes the proof.
\end{proof}

\section{Proof of Theorem \ref{thm:regret_bound}}
\label{sec:main_proof}

\begin{proof}[Proof of Theorem \ref{thm:regret_bound}]
By Lemma \ref{lem:tower_rule}, we can re-write the Bayesian regret as a sum of value estimation errors. In particular,
\begin{align*}
\gR_T &= \E\left[\sum_{n=1}^{N}V_{\pi^{\star}, 1}^{\gM^{\star}}(\vs_{n,1}) - V_{\pi_n, 1}^{\gM^{\star}}(\vs_{n,1})\right]\\
&= \E\left[\sum_{n=1}^{N}V_{\pi^{\star}, 1}^{\gM^{\star}}(\vs_{n,1}) - V_{\pi_n, 1}^{\gM_n}(\vs_{n,1}) + V_{\pi_n, 1}^{\gM_n}(\vs_{n,1}) - V_{\pi_n, 1}^{\gM^{\star}}(\vs_{n,1})\right]\\
&= \E\left[\sum_{n=1}^{N}\E\left[V_{\pi^{\star}, 1}^{\gM^{\star}}(\vs_{n,1}) - V_{\pi_n, 1}^{\gM_n}(\vs_{n,1}) + V_{\pi_n, 1}^{\gM_n}(\vs_{n,1}) - V_{\pi_n, 1}^{\gM^{\star}}(\vs_{n,1})\big| \gF_{n-1}\right]\right]\\
&= \E\left[\sum_{n=1}^{N}V_{\pi_n, 1}^{\gM_n}(\vs_{n,1}) - V_{\pi_n, 1}^{\gM^{\star}}(\vs_{n,1})\right]\,.
\end{align*}
Let us define
\begin{equation*}
R := 168\alpha^{-1/2}\sqrt{\max(C, \sigma^2)(d_s+d_a)\log(10(T + R_a)\max(1, L/C))}\,.
\end{equation*}
Also, let $\widetilde{R} = \sqrt{R^2 + R_a^2}$. We define the event $A$ as
\begin{equation*}
A := \bigg\{\sup_{n \in [N], h \in [H]}\|\vs_{n,h}\|_2 \leq R\bigg\}\,.
\end{equation*}
Since the reward function is bounded between $-R_{\max}$ and $R_{\max}$, we have $|V_{\pi_n, 1}^{\gM_n}(\vs_{n,1})| \leq R_{\max}H$ and $|V_{\pi_n, 1}^{\gM^{\star}}(\vs_{n,1})| \leq R_{\max}H$ almost surely. Thus by Lemma \ref{lem:bounded_states}
\begin{align*}
\E\left[\sum_{n=1}^{N}V_{\pi_n, 1}^{\gM_n}(\vs_{n,1}) - V_{\pi_n, 1}^{\gM^{\star}}(\vs_{n,1})\right] &= \E\left[(\sI\{A\} + \sI\{A^{\mathsf{c}}\})\sum_{n=1}^{N}V_{\pi_n, 1}^{\gM_n}(\vs_{n,1}) - V_{\pi_n, 1}^{\gM^{\star}}(\vs_{n,1})\right]\\
&\leq \E\left[\sI\{A\}\sum_{n=1}^{N}V_{\pi_n, 1}^{\gM_n}(\vs_{n,1}) - V_{\pi_n, 1}^{\gM^{\star}}(\vs_{n,1})\right] + 2R_{\max}T\sP(A^{\mathsf{c}})\\
&\leq \E\left[\sI\{A\}\sum_{n=1}^{N}V_{\pi_n, 1}^{\gM_n}(\vs_{n,1}) - V_{\pi_n, 1}^{\gM^{\star}}(\vs_{n,1})\right] + 4R_{\max}\,.
\end{align*}
Next, by Lemma \ref{lem:value_est_to_f_est}, we have
\begin{equation*}
\E\left[\sI\{A\}\sum_{n=1}^{N}V_{\pi_n, 1}^{\gM_n}(\vs_{n,1}) - V_{\pi_n, 1}^{\gM^{\star}}(\vs_{n,1})\right] \leq \frac{R_{\max}H}{\sigma}\E\left[\sI\{A\}\sum_{n=1}^{N}\sum_{h=1}^{H-1}\|f^{(n)}(\vx_{n,h}) - f^{\star}(\vx_{n,h})\|_2\right] + 2R_{\max}H\sqrt{2\pi T}\,.
\end{equation*}
Finally, by Lemma \ref{lem:est_error_bound}, we have
\begin{align*}
\frac{R_{\max}H}{\sigma}\E\bigg[&\sI\{A\}\sum_{n=1}^{N}\sum_{h=1}^{H-1}\|f^{(n)}(\vx_{n,h}) - f^{\star}(\vx_{n,h})\|_2\bigg] \leq \frac{R_{\max}H}{\sigma\alpha^{1/2}}\sqrt{64(d_s+d_a)\log(\max(1,5\widetilde{R}^{\alpha}L)T^2)} + \frac{CR_{\max}H}{\sigma}\\
&+ \frac{R_{\max}H}{\sigma\alpha^{1/2}}\sqrt{\frac{64C(d_s+d_a)}{\log(1 + C/\sigma^2)}\gamma_T(\sigma^2, \widetilde{R})\bigg(T + \frac{2CH^2}{\sigma^2\log(1 + C/\sigma^2)}\gamma_{N}(\sigma^2, \widetilde{R})\bigg)\log(\max(1,5\widetilde{R}^{\alpha}L)T^2)}\\
&+ \frac{192R_{\max}H}{\sigma\alpha^{1/2}}\sqrt{(d_s+d_a)\log(3\widetilde{R}^{\alpha/2}L^{1/2}T)} + \frac{12\sqrt{8}R_{\max}H}{\sigma} + \frac{2\sqrt{2d_s}R_{\max}H}{\sigma}\,.
\end{align*}
Therefore, the Bayesian regret satisfies
\begin{align*}
\gR_T &\leq \frac{8R_{\max}H}{\sigma\alpha^{1/2}}\sqrt{\frac{C(d_s+d_a)}{\log(1 + C/\sigma^2)}\gamma_T(\sigma^2, \widetilde{R})\bigg(T + \frac{2CH^2}{\sigma^2\log(1 + C/\sigma^2)}\gamma_{N}(\sigma^2, \widetilde{R})\bigg)\log(\max(1,5\widetilde{R}^{\alpha}L)T^2)}\\
&+ \frac{8R_{\max}H}{\sigma\alpha^{1/2}}\sqrt{(d_s+d_a)\log(\max(1,5\widetilde{R}^{\alpha}L)T^2)} + \frac{CR_{\max}H}{\sigma}\\
&+ \frac{192R_{\max}H}{\sigma\alpha^{1/2}}\sqrt{(d_s+d_a)\log(3\widetilde{R}^{\alpha/2}L^{1/2}T)} + \frac{12\sqrt{8}R_{\max}H}{\sigma} + \frac{2\sqrt{2d_s}R_{\max}H}{\sigma} + 4R_{\max} + 2R_{\max}H\sqrt{2\pi T}\,.
\end{align*}
The dominant term on the right-hand side is the first one, and its growth-rate matches the one stated in Theorem \ref{thm:regret_bound}.

\end{proof}

\newpage

\section{Experimental Results}
\label{sec:experimental-results}
For the experimental study, we considered a 2D navigation task where $d_s=2$ and $d_a=2$.
The reward function contained potential functions for a goal state, a central circle `obstacle' and barrier functions inside a state boundary (Fig \ref{fig:gp_psrl_reward}). 
For the optimal control oracle and regret analysis, we discretized the system within this state boundary and performed value iteration to compute approximate optimal policies and compute the exact regret for the approximated MDP. 
For the function prior, we used stationary GPs of the form $\vs_{h+1} = \vs_h + \Delta\cdot f(\vs_h,\va_h)$, essentially a stationary velocity prior under Euler integration. 
To estimate the Bayesian regret, we sampled a GP from this prior to be the `ground truth', and then performed GP-PSRL, computing regret by evaluating the policy on the `ground truth' discretized function sample. 
To approximate the stationary GP with a parametric model and to benefit from explicit function samples, we used random Fourier features \citep{rahimi2007random} constructed from the spectral density of the covariance functions.
The GP prior had fixed hyperparameters, with a prior variance of $1$, a lengthscale of $0.5$ and aleatoric noise variance of $1 \times 10^{-6}$, and used 1000 random features. 
This prior was designed such that samples from the discretized action space sufficiently explored the state space across several function samples from the prior, and also such that the optimal policy solved the navigation task across function samples.
For the MDP, we used a uniform initial state distribution and a nominal task horizon of 20. 
These were chosen to ensure adequate exploration and optimal solutions, as shown in Figures \ref{fig:gp_psrl_episode_trajectories} and \ref{fig:gp_psrl_trajecories}.

To evaluate our theoretical rates empirically, Figure \ref{fig:gp_psrl_log_R_log_H} looks at the Bayesian regret as $H$ is increased from 20 to 160.
Figure \ref{fig:gp_psrl_log_R_log_infogain} looks at the Bayesian regret for different maximum information gains for the Mat\'ern kernels.

\begin{figure}[t]
    \centering
    \begin{minipage}[t]{0.32\linewidth}
        \centering
        \includegraphics[width=\linewidth]{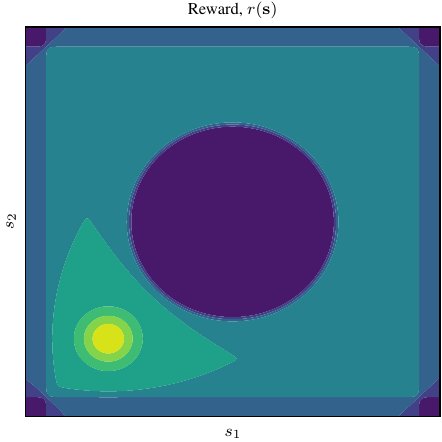}
        \caption{The reward function for our navigation-based experimental study, with a goal state, state limit boundary and central obstacle.}
        \label{fig:gp_psrl_reward}
    \end{minipage}
    \hfill
    \begin{minipage}[t]{0.32\linewidth}
        \centering
        \includegraphics[width=\linewidth]{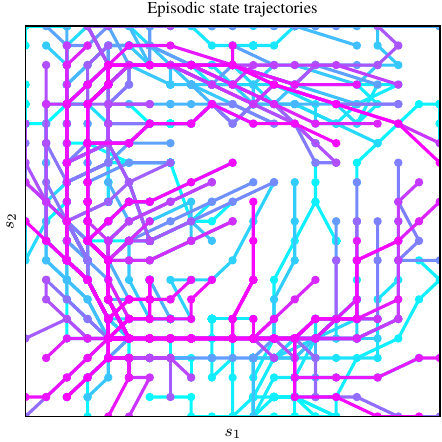}
        \caption{
        Exploration of GP-PSRL for one seed over 200 episodes with a SE kernel prior. 
        Episode progression is shown from cyan to magenta. 
        }
        \label{fig:gp_psrl_episode_trajectories}
    \end{minipage}
    \hfill
    \begin{minipage}[t]{0.32\linewidth}
        \centering
        \includegraphics[width=\linewidth]{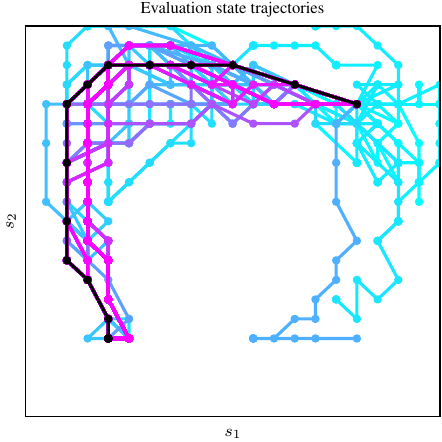}
        \caption{Illustrating the episodic performance improvement of GP-PSRL for one seed over 200 episodes with a SE kernel prior and a constant initial state. 
        Episode progression is shown from cyan to magenta.
        }
        \label{fig:gp_psrl_trajecories}
    \end{minipage}
\end{figure}

\begin{figure}[t]
    \centering
    \begin{minipage}[t]{0.48\linewidth}
        \centering
        \includegraphics[width=\linewidth]{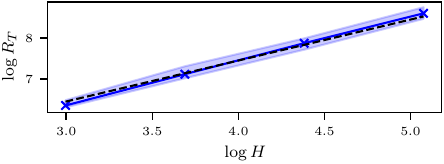}
        \caption{For the squared exponential kernel, we ran PSRL across horizons 20, 40, 80 and 160 for 20 seeds. Our predicted rate of $\gO(H)$ (dashed line) compares favorably to the actual rate.}
        \label{fig:gp_psrl_log_R_log_H}
    \end{minipage}
    \hfill
    \begin{minipage}[t]{0.48\linewidth}
        \centering
        \includegraphics[width=\linewidth]{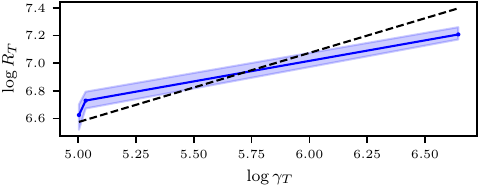}
        \caption{For the Mat\'ern kernels, we can use the approximation of $\Gamma_T$ in Lemma \ref{lem:matern_infogain} to estimate the growth rate of the Bayesian regret w.r.t. $\Gamma_T$.
        The empirical rate appears to be slightly better than $\sqrt{\Gamma_T}$ (dashed line).
        }
        \label{fig:gp_psrl_log_R_log_infogain}
    \end{minipage}
\end{figure}

\end{document}